\DeclareSymbolFont{bbold}{U}{bbold}{m}{n}
\DeclareSymbolFontAlphabet{\mathbbold}{bbold}
\DeclareSymbolFontAlphabet{\mathbb}{AMSb}%
\newtheorem{theorem}{Theorem}[section]
\newtheorem*{theorem*}{Theorem}
\newtheorem{corollary}{Corollary}[section]
\newtheorem{lemma}{Lemma}[section]
\newtheorem{remark}{Remark}[section]
\newtheorem{prop}{Proposition}[section]
\newtheorem{assumption}{Assumption}
\renewcommand{\vec}[1]{\overrightarrow{#1}}
\newcommand{\N}[0]{\mathbb{N}}
\newcommand{\Z}[0]{\mathbb{Z}}
\newcommand{\Q}[0]{\mathbb{Q}}
\newcommand{\R}[0]{\mathbb{R}}
\newcommand{\C}[0]{\mathbb{C}}
\newcommand{\U}[0]{\mathbb{U}}
\newcommand{\G}[0]{\mathbb{G}}
\renewcommand{\P}[0]{\mathbb{P}}
\newcommand{\Int}[2]{\displaystyle\int_{#1}^{#2}}
\newcommand{\Sum}[2]{\displaystyle\sum\limits_{#1}^{#2}}
\newcommand{\Inter}[2]{\displaystyle\bigcap\limits_{#1}^{#2}}
\newcommand{\Reu}[2]{\displaystyle\bigcup\limits_{#1}^{#2}}
\newcommand{\E}[1]{\mathbb{E}\left[#1\right]}
\newcommand{\dr}[3]{\cfrac{\partial ^{#2} {#3}}{\partial {#1}^{#2}}}
\newcommand{\dd}[0]{\mathrm{d}}
\newcommand{\Span}[0]{\mathrm{Span}}
\newcommand{\nt}[1]{{\left\vert\kern-0.25ex\left\vert\kern-0.25ex\left\vert #1
		\right\vert\kern-0.25ex\right\vert\kern-0.25ex\right\vert}}
\DeclareMathOperator{\argmin}{argmin}
\DeclareMathOperator{\Arctan}{Arctan}
\DeclareMathOperator{\sign}{sign}
\newcommand{\W}[0]{\mathrm{W}}
\renewcommand{\SS}{\mathbb{S}}
\newcommand{\app}[4]{\left\lbrace\begin{array}{ccc}
		#1 & \longrightarrow & #2 \\
		#3 & \longmapsto & #4 \\
	\end{array} \right.}
\newcommand{\oll}[1]{\overline{#1}}
\newcommand{\ull}[1]{\underline{#1}}
\newcommand{\npoints}{n}
\newcommand{\SWY}{\mathcal{E}}
\newcommand{\SWYbar}{\mathcal{E}_{\mathrm{bar}}}
\newcommand{\SWpY}{\mathcal{E}_p}
\newcommand{\SWMC}{\widehat{\mathrm{SW}}_p}
\newcommand{\SW}{\mathrm{SW}}
\newcommand{\sort}[2]{\tau_{#1}^{#2}}  %
\newcommand{\config}{\mathbf{m}}
\numberwithin{equation}{section}
\newcommand{\blue}[1]{#1} %
\newcommand{\bluetwo}[1]{#1} %
\renewcommand*{\fps@figure}{h}  %
\begin{document}

	\title{Properties of discrete sliced Wasserstein losses}

	\author[E. Tanguy]{Eloi Tanguy}
	\address{Universit\'e Paris Cit\'e, CNRS, MAP5, F-75006 Paris, France}
	\email{eloi.tanguy@u-paris.fr}
	
	\author[R. Flamary]{R\'emi Flamary}
	\address{CMAP, CNRS, Ecole Polytechnique, Institut Polytechnique de Paris}
	\email{remi.flamary@polytechnique.edu}
	
	\author[J. Delon]{Julie Delon}
	\address{Universit\'e Paris Cit\'e, CNRS, MAP5, F-75006 Paris, France}
	\email{julie.delon@u-paris.fr}
	
	\subjclass[2020]{49Q22 (Optimal Transport)}
	
	\date{18/03/2024}
	
	\dedicatory{}
	
	\begin{abstract}
		The Sliced Wasserstein (SW) distance has become a popular alternative to the Wasserstein distance for comparing probability measures. Widespread applications include image processing, domain adaptation and generative modelling, where it is common to optimise some parameters in order to minimise SW, which serves as a loss function between discrete probability measures (since measures admitting densities are numerically unattainable). All these optimisation problems bear the same sub-problem, which is minimising the Sliced Wasserstein energy. In this paper we study the properties of  $\mathcal{E}: Y \longmapsto \mathrm{SW}_2^2(\gamma_Y, \gamma_Z)$, i.e. the SW distance between two uniform discrete measures with the same amount of points as a function of the support $Y \in \mathbb{R}^{n \times d}$ of one of the measures. We investigate the regularity and optimisation properties of this energy, as well as its Monte-Carlo approximation $\mathcal{E}_p$ (estimating the expectation in SW using only $p$ samples) and show convergence results on the critical points of $\mathcal{E}_p$ to those of $\mathcal{E}$, as well as an almost-sure uniform convergence \blue{and a uniform Central Limit result on the process $\SWpY$}. Finally, we show that in a certain sense, Stochastic Gradient Descent methods minimising $\mathcal{E}$ and $\mathcal{E}_p$ converge towards (Clarke) critical points of these energies. 
	\end{abstract}
	
	\maketitle

	\tableofcontents
	
	\section{Introduction}

Optimal Transport (OT) has grown in popularity as a way of lifting a notion of cost between points in a space onto a way of comparing measures on said space. In particular, endowing $\R^d$ with a $p$-norm yields the Wasserstein distance, which metrises the convergence in law on the space of Radon measures with a finite moment of order $p$.

The most studied object that arises from this theory is perhaps the 2-Wasserstein distance, which is defined as follows (see~\cite{computational_ot, santambrogio2015optimal, villani} for a complete practical and theoretical presentation):
\begin{equation}\label{eqn:W2}
	\forall \mu, \nu \in \mathcal{P}_2(\R^d),\; \W_2^2(\mu, \nu) := \underset{\pi \in \Pi(\mu, \nu)}{\inf}\Int{\R^d\times\R^d}{}\|x_1-x_2\|^2\dd \pi(x_1, x_2),
\end{equation}
where $\Pi(\mu, \nu)$ is the set of probability measures on $\R^d \times \R^d$ of first marginal $\mu$ and second marginal $\nu$. We denote $\mathcal{P}_2(\R^d)$ as the set of probability measures on $\R^d$ admitting a second-order moment.

The 1 and 2-Wasserstein distances are commonly used for generation tasks, formulated as probability density fitting problems. One defines a statistical model $\mu_\theta$, a probability measure which is designed to approach a target data distribution $\mu$. A typical way of solving this problem is to minimise in $\theta$ the distance between $\mu_\theta$ and $\mu$: one may choose any probability discrepancies (Kullback-Leibler, Ciszar divergences, f-divergences or Maximum Mean Discrepancy), or alternatively the Wasserstein Distance. In the case of Generative Adversarial Networks, the so-called "Wasserstein GAN" \cite{pmlr-v70-arjovsky17a,gulrajani2017improved} draws its formulation from the dual expression of the 1-Wasserstein distance.

Unfortunately, computing the Wasserstein distance is prohibitively costly in practice. The discrete formulation of the Wasserstein distance (the Kantorovich linear problem) is typically solved approximately using standard linear programming tools. These methods suffer from a super-cubic worst-case complexity with respect to the number of samples from the two measures. Furthermore, given $\npoints$ samples from each measure $\mu$ and $\nu$, the convergence of the estimated distance $\W_2(\hat\mu_\npoints, \hat\nu_\npoints)$ is only in $\mathcal{O}(\npoints^{-1/d})$ towards the true distance, thus OT suffers from the curse of dimensionality, as is known since Dudley, 1969 \cite{dudley1969speed}.

Several efforts have been made in recent years to make Optimal Transport more accessible computationally. In particular, many surrogates for $\W_2$ have been proposed, perhaps the most notable of which is the Sinkhorn Divergence (see~\cite{computational_ot,cuturi2013sinkhorn,genevay2018learning}). The Sinkhorn Divergence adds entropic regularisation to OT, yielding a strongly convex algorithm which can be solved efficiently.

Another alternative is the Sliced Wasserstein (SW) Distance, which leverages the simplicity of computing the Wasserstein distance between one-dimensional measures.
Indeed, given 
$$\gamma_X := \cfrac{1}{\npoints}\Sum{k=1}{\npoints}\delta_{x_k},\; \gamma_Y := \cfrac{1}{\npoints}\Sum{k=1}{\npoints}\delta_{y_k}\text{\ with\ } x_1, \cdots, x_\npoints, y_1, \cdots, y_\npoints \in \R,$$
the 2-Wasserstein distance between these two measures can be computed by sorting their supports:
\begin{equation}
	\W_2^2(\gamma_X, \gamma_Y) = \cfrac{1}{\npoints}\Sum{k=1}{\npoints}(x_{\sigma(k)} - y_{\tau(k)})^2,
\end{equation}
where $\sigma$ is a permutation sorting $(x_1, \cdots, x_\npoints)$, and $\tau$ is a permutation sorting $(y_1, \cdots, y_\npoints)$.

The idea of the Sliced Wasserstein Distance~\cite{Rabin_texture_mixing_sw} is to compute the 1D Wasserstein distances between projections of input measures. We write $P_\theta: \R^d \longrightarrow \R$ the map $x \longmapsto \theta^T x$, and $\bbsigma$ the uniform measure over the euclidean unit \blue{sphere} of $\R^d,\ \SS^{d-1}$. Denoting $\#$ the push-forward operation~\footnote{The push-forward of a measure  $\mu$ on $\R^d$ by an application $T: \R^d \rightarrow \R^k$ is defined as a measure $T\#\mu$ on $\R^k$ such that for all Borel sets $B \in \mathcal{B}(\R^k), T\#\mu(B) = \mu(T^{-1}(B))$.}, the Sliced Wasserstein distance between two measures $\mu$ and $\nu$ is defined as 
\begin{equation}\label{eqn:SW}
	\mathrm{SW}_2^2(\mu, \nu) := \Int{\theta \in \SS^{d-1}}{}\W_2^2(P_\theta\#\mu, P_\theta\#\nu)\dd \bbsigma(\theta).
\end{equation}
\blue{Similarly, for $q \ge 1$, the $q$-Sliced Wasserstein distance $\mathrm{SW}_q^q$ (to the power $q$) is obtained by replacing $\W_2^2$ in the previous equation by the $q$ Wasserstein distance (to the power $q$) $\W_q^q$.}

\blue{SW has enjoyed a substantial amount of theoretical study, albeit not as extensively as for the original Wasserstein distance.} For measures supported on a fixed compact of $\R^d$, Bonnotte (\cite{bonnotte}, Chapter 5) has shown that the Wasserstein and Sliced Wasserstein distances are equivalent. The same work also developed a theory of gradient flows for SW, which justifies some generative methods. Further discussion on this equivalence has been performed by Bayraktar and Guo~\cite{bayraktar_equivalence_W_SW}. Nadjahi et al.~\cite{nadjahi2019_guarantees_sw} showed that SW metrises the convergence in law (without restrictions of the measure supports), and further concluded guarantees for SW-based generative models. 

Continuous measures being out of the reach of practical computation, it is necessary to perform sample estimation and replace them with discrete empirical estimates. Thankfully, as shown in \cite{nadjahi_statistical_properties_sliced}, the \textit{sample complexity} (i.e. the rate of convergence of the estimates w.r.t. the number of samples) for sliced distances such as SW is in $1/\sqrt{\npoints}$, which in particular avoids the curse of dimensionality from which the Wasserstein Distance suffers. This fuels interest for the study of $Y \mapsto \mathrm{SW}(\gamma_Y,\gamma)$, which is to say the variation of SW w.r.t. the discrete support of one of the measures. It is currently unknown whether this functional presents strict local optima, for instance.

Originally, SW was introduced as a more computable alternative to the Wasserstein distance, notably for texture mixing using a barycentric formulation~\cite{Rabin_texture_mixing_sw, bonneel2015sliced}. Other uses of SW have been suggested, notably in statistics as a probability discrepancy. For instance, Nadjahi et al.~\cite{nadjahi_bayesian} proposed an approximate bayesian computation method, where the estimation of the posterior parameters is done by selecting those under which the SW distance between observed and synthetic data is below a fixed threshold. Other widespread uses of SW in image processing include colour transfer \cite{alghamdi2019patch} and colour harmonisation \cite{bonneel2015blind}.

Nowadays, SW is commonly used as a training or validation loss in generative Machine Learning. Karras et al.~\cite{karras2017progressive} propose to use SW to compare GAN results, by comparing images via multi-scale patched descriptors. Some generative models (including GANs and auto-encoders), leverage the computational advantages of SW in order to learn a target distribution. This is done under the implicit generative modelling framework, where a network $T_u$ of parameters $u$ is learned such as to minimise $\mathrm{SW}(T_u\#\mu_0, \mu)$, where $\mu_0$ is a low-dimensional input distribution (often chosen as Gaussian or uniform noise), and where $\mu$ is the target distribution. Deshpande et al.~\cite{deshpande_generative_sw} and Wu et al.~\cite{Wu2019_SWAE} train GANs and auto-encoders within this framework; Liutkus et al.~\cite{liutkus19a_SWflow_generation} perform generation by minimising a regularised SW problem, which they solve by gradient flow using an SDE formulation. SW can be used to synthesise images by minimising the SW distance between features of the optimised image and a target image, as done by \cite{heitz2021sliced} for textures with neural features, and by \cite{Tartavel2016} with wavelet features (amongst other methods).

In practice,  the integration over the unit sphere in SW is intractable, and one must resort to a Monte-Carlo approximation, taking the average between $p$ projections instead of the expectation\blue{, usually during iterations of a Stochastic Gradient Descent \cite{kolouri2018slicedAE}. This implies that for a finite number of iterations, a fixed number of projections $p$, potentially very small compared to what is needed to explore the hypersphere, is explored in practice. The question of this finite number of final projection directions is made even more important by the fact that practitioners usually optimise the expectation of the SW distance on large mini-batches \cite{deshpande_generative_sw} that also limits the total number of effective  projections $p$.}  The estimation error of this approximation has not been extensively studied, and it is common in practice to assume that this empirical version presents the same properties as the true SW distance.

An important question is the conditions under which these approximations for SW are valid. In practice, sliced-Wasserstein Generative Models compute SW in the data space or in the data encoding space (\cite{kolouri2018slicedAE, deshpande_generative_sw}), which yields high values for the dimension $d$, in particular for images. Note that the necessity behind having a large number of projections $p$ was already hinted at in~\cite{kolouri2018slicedAE}, \S 3.3. Another untreated question is the complexity of optimising this approximation of SW, and how this optimisation landscape compares to the true SW landscape.

Bonneel et al.~\cite{bonneel2015sliced} studied the uses of SW for barycentre computation, and in particular proved that the empirical SW distance is $\mathcal{C}^1$ on a certain open set, with respect to the measure positions. They remarked that in practice, numerical resolutions for discretised SW distances converged towards (eventual) local optima, however the convergence and local optima have not been studied theoretically.

In this paper, we propose to study $\SWY:  Y \mapsto
\mathrm{SW}_2^2(\gamma_Y,\gamma_Z)$, where $\gamma_Y$ and $\gamma_Z$ are two
uniform discrete measures supported by $n$ points, denoted by $Y$ and $Z$. Our
main objective is to provide optimisation properties for the landscapes of
$\SWY$ and its Monte-Carlo counterpart $\SWpY$, obtained by replacing the
expectation by an average over $p$ projections. In~\ref{sec:energies}, we prove
several regularity properties for both energies, such as semi-concavity, and we
show that the convergence of the Monte-Carlo estimation is uniform (on every
compact) w.r.t. the measure locations. \ref{sec:crit} focuses on the respective
landscapes of   $\SWY$  and  $\SWpY$, and shows that the critical points of
$\SWY$ satisfy a fixed-point equation, and how the critical points of $\SWpY$
relate to this fixed-point equation when the number of projections $p$ increases
(with convergence rates). Mérigot et al. follow a similar methodology
in~\cite{merigot2021bounds_approx_W2}, where they study optimisation properties
for $Y \longmapsto \W_2(\gamma_Y, \mu)$, with $\mu$ a continuous measure. The
main difficulty they face arises from the non-convexity of the map, and this
difficulty is also central in our work. The last two sections of our paper
tackle numerical considerations. To begin with, since  $\SWY$ and $\SWpY$ are
usually minimised in the literature using Stochastic Gradient Descent (SGD), we
provide in ~\ref{sec:SGD} the first complete convergence study of SGD for $\SWY$
and $\SWpY$, relying on the recent works~\cite{bianchi2022convergence}
\bluetwo{and \cite{davis2020stochastic}}. Finally,~\ref{sec:xp} challenges our
theoretical results with extensive numerical experiments, quantifying the impact
of the dimension and several other parameters on the convergence.

\blue{\subsection*{Notations}
\begin{itemize}
	\item $d$ is the dimension, $\npoints$ is the number of points
	\item $p$ is the number of projections $(\theta_1, \cdots, \theta_p)$
	\item $\|\cdot\|_2$: Euclidean norm of $\R^n$
	\item Matrices $X \in \R^{\npoints\times d}$ are written $X = (x_1, \cdots, x_\npoints)^T$ with the $x_i \in \R^d$
	\item $\|Y\|_{\infty, 2}$ for $Y \in \R^{\npoints \times d}$ denotes $\max_{i\in \llbracket 1, \npoints \rrbracket}\|Y_{i,\cdot}\|_2 = \max_{i\in \llbracket 1, \npoints \rrbracket}\|y_i\|_2$
	\item $M \cdot N$: inner product $\mathrm{Trace}(M^TN)$ for matrices
	\item $\W_2$: 2-Wasserstein Distance \ref{eqn:W2}
	\item $\bbsigma$: Uniform measure on the unit sphere $\SS^{d-1}$ of $\R^d$
	\item $P_\theta$: for $\theta \in \SS^{d-1}$, $P_\theta = x\longmapsto \theta^Tx$
	\item $\SW_2$: Sliced 2-Wasserstein distance \ref{eqn:SW}
	\item $\SW_q$: Sliced $q$-Wasserstein distance 
	\item $\gamma_X$: for $X \in \R^{n\times d}$: discrete measure $\frac{1}{n}\sum_i\delta_{X_{i,\cdot}} = \frac{1}{n}\sum_i\delta_{x_i}$
	\item $\SWY(Y)$: $\SW_2^2(\gamma_Y, \gamma_Z)$ \ref{eqn:SWY}
	\item $\SWpY(Y)$: Monte-Carlo approximation of $\SWY(Y)$ with $p$ projections \ref{eqn:SWpY}
	\item $\Sigma_n$: $n$-simplex: $a \in \R_+^n$ such that $\sum_ia_i=1$
	\item $\|M\|_F$: Frobenius norm: $\sqrt{\sum_{i,j} M_{i,j}^2}$
	\item $\config$ denotes $p$ permutations $(\sigma_1, \cdots, \sigma_p)$ of $\llbracket 1, \npoints \rrbracket$, see \ref{sec:cells}
	\item $\mathcal{C}_\config$: cell of configuration $\config$, see \ref{sec:cells}
\end{itemize}}

	\section{Sliced and Empirical Sliced Wasserstein Energies and their Regularities}\label{sec:energies}

\subsection{The discrete SW energies \texorpdfstring{$\SWY$}{S} and \texorpdfstring{$\SWpY$}{Sp}}

The Sliced Wasserstein distance has been widely studied as an alternative to the Wasserstein distance, in particular it is arguably simpler to compute in order to minimise measure discrepancies. In practice, one may not work with continuous measures, which are beyond the capabilities of numerical approximations, thus one must sometimes contend with discrete measures. To that end, we study in this paper the SW distance between discrete measures, and in particular the associated energy landscape with respect to the support of one of the measures:
\begin{equation}\label{eqn:SWY}
	\SWY := \app{\R^{\npoints \times d}}{\R_+}{Y}{\Int{\SS^{d-1}}{}\mathrm{W_2^2}(P_{\theta} \#\gamma_Y, P_{\theta}\#\gamma_Z) \dd \bbsigma(\theta)},
\end{equation}
where $\npoints$ denotes the number of points in the data matrices $Y, Z$, which we write as data entries stacked vertically: $Y = (y_1, \cdots, y_\npoints)^T$, with points in $\R^d$. The associated (uniform) discrete measure supported on $\lbrace y_1, \cdots, y_\npoints \rbrace$ will be denoted $\gamma_Y := \frac{1}{\npoints}\sum_k \delta_{y_k}$.

For instance, this framework encompasses SW-based implicit generative models (\cite{deshpande_generative_sw}, ~\cite{Wu2019_SWAE}), which optimise parameters $\rho$ by minimising $\SW(T_\rho\#\mu_0, \mu)$, where $\mu_0$ is comprised of samples of a simple distribution, and $\mu$ corresponds to data samples which we would like to generate. In this setting, one would need to minimise \textit{through} $\SWY$.
The use of discrete measures is also backed theoretically by the study of the \textit{sample complexity} of SW~\cite{nadjahi_statistical_properties_sliced}, which is to say the rate of decrease of the approximation error between $\SW(\mu, \nu)$ and its discretised counterpart $\SW(\hat\mu_n, \hat\nu_n)$.

In practical and realistic settings,  the only numerically accessible workaround  to optimise through $\SWY$ is a form of discretisation of the set of directions. The first and most common method, due to its efficiency and simplicity, is to minimize $\SWY$ through stochastic gradient descent (SGD): at each time set $t$, $p$ random directions $(\theta_1^{(t)}, \cdots, \theta_p^{(t)})$ are drawn, and a gradient descent step is performed by approximating $\SWY$ by a discrete sum on these $p$ random directions. This method is optimisation-centric, since it does not concern itself with computing the final SW distance and focuses on optimising the parameters.  A second possible discretisation method consists in fixing the $p$ directions $(\theta_1, \cdots, \theta_p)$ once for all and replacing $\SWY$ in the minimization by its Monte-Carlo estimator~\footnote{In this notation the projection axes $\theta_1, \cdots, \theta_p \in \SS^{d-1}$ are written implicitly, the complete notation being $\SWpY(Y; (\theta_i)_{i \in \llbracket 1, p \rrbracket})$ when required.
}
\begin{equation}\label{eqn:SWpY}
  \SWpY := \app{\R^{\npoints \times d}}{\R_+}{Y}{\cfrac{1}{p}\Sum{i=1}{p}\mathrm{W_2^2}(P_{\theta_i} \#\gamma_Y. P_{\theta_i}\#\gamma_Z)}.
\end{equation}
It is important to note that both methods are intuitively tied, since in both cases there is a finite amount of sampled directions. If the SGD method lasts $T$ iterations with $p$ projections every time, it amounts to a specific way of optimising $\mathcal{E}_{pT}$. For this reason, studying $\SWpY$ theoretically is not only interesting in itself as an approximation of $\SWY$, but also yields a better insight on the SGD strategy.

The study of $\SWY$ is also tied with the study of the SW barycentres, which solve the optimisation problem
\begin{equation}\label{eqn:bar}
	\mathrm{Bar}(\lambda_j, \gamma_{Z^{(j)}})_{j \in \llbracket 1, J \rrbracket} = \underset{Y \in \R^{\npoints \times d}}{\argmin}\ \Sum{j=1}{J}\lambda_j\SWY(Y, Z^{(j)}) =: \SWYbar(Y),
\end{equation}
where the notation $\SWY(Y, Z^{(j)})$ reflects the dependency on $Z$ in the definition of $\SWY$ \ref{eqn:SWY}. The regularity and convergence results will immediately be applicable to the barycentre energy \ref{eqn:bar}. While the optimisation results on $\SWY$ and $\SWpY$ will not generalise naturally due to the sum, the SGD convergence results shall still hold.

As a Monte-Carlo estimator, the law of large numbers yields the point-wise convergence of $\SWpY$ to $\SWY$ if the $(\theta_i)_{i \in \N}$ are i.i.d. of law $\bbsigma$:
\begin{equation}\label{eqn:Sp_pwc}
	\SWpY(Y; (\theta_i)_{i \in \llbracket 1, p \rrbracket}) \xrightarrow[p\rightarrow +\infty]{\mathrm{a.s.}} \SWY(Y).
\end{equation}
\blue{For this reason, it is often assumed  that numerically, $\SWpY$ and $\SWY$ will
behave similarly, which is perhaps why research has been scarce on the landscape
of $\SWpY$, the focus remaining on the theoretical properties of the true or mini-batch Sliced
Wasserstein Distance \cite{nadjahi2019_guarantees_sw,nadjahi_bayesian}. But as
discussed in the introduction, practitioners often optimize the SW distance
using SGD with a finite number of projection directions \cite{kolouri2018slicedAE,deshpande_generative_sw}, and the landscape of $\SWpY$ is of paramount importance.
This section and the next one are dedicated to studying the relations and
differences between  $\SWpY$ and $\SWY$.}

\blue{\begin{remark}
	Some of our results can in fact be extended to $q$-SW instead of 2-SW, especially regularity results \ref{lemma:WC_stability}, \ref{prop:w_unif_locLip} and \ref{thm:locLip}, as well as the statistical estimation results \ref{thm:Sp_cvu_S} and \ref{thm:TCL_SW}. However, as soon as we need the \textit{cell structure} of $\SWpY$ (\ref{sec:cells}), we leverage the simplicity of the quadratic case $q=2$.
\end{remark}}

\subsection{Regularity properties of \texorpdfstring{$\SWpY$}{Sp} and \texorpdfstring{$\SWY$}{S}}

In order to study the regularity of our energies, we first focus on the regularity of $w_\theta$, the 2-Wasserstein distance between two discrete measures projected on the line $\R\theta$:
\begin{equation}\label{def:wtheta}
	w_\theta := \app{\R^{\npoints \times d}}{\R}{Y}{\W_2^2(P_\theta\#\gamma_Y, P_\theta\#\gamma_Z)}.
\end{equation}
With this notation, observe that $\SWY$ and $\SWpY$ can be written
\begin{equation}\label{eq:E_expectation_notation}
  \SWY(Y) =   \mathbb{E}_{\theta \sim \bbsigma}\left[w_\theta(Y)\right]\;\;\text{and}\quad \SWpY(Y) = \mathbb{E}_{\theta \sim \bbsigma_p }\left[w_\theta(Y)\right],
\end{equation}
where $\bbsigma_p := \cfrac{1}{p}\Sum{i=1}{p}\delta_{\theta_i}$ for $p$ fixed directions $ (\theta_1, \cdots, \theta_p) \in (\SS^{d-1})^p$.

We now provide an important regularity result about the uniformly locally Lipschitz property of the functions $(w_\theta)_{\theta}$, which will yield easily that our energies $\SWY$ and $\SWpY$ are also locally Lipschitz, a central property in the convergence study of particular SGD schemes on $\SWY$ and $\SWpY$ (see \ref{sec:interpolated_SGD}). To show this result on $(w_\theta)$, we need the following \ref{lemma:WC_stability}, whose proof is provided in~\ref{sec:WC_stability}. \blue{This result shows that the Wasserstein cost is regular in some sense with respect to the measure weights and the cost matrix, which will be helpful when studying the regularity of the functions $w_\theta$. }

\begin{lemma}[Stability of the Wasserstein cost]\ \label{lemma:WC_stability} Let $\alpha, \oll{\alpha}, \in
\Sigma_\npoints$, $\beta, \oll{\beta} \in \Sigma_m$  and $C, \oll{C} \in
\R_+^{\npoints\times m}$.
	Denote by $\W(\alpha, \beta; C) := \underset{\pi \in \Pi(\alpha, \beta)}{\inf}\ \pi \cdot C$ the cost of the discrete Kantorovich problem of cost matrix $C$ between the weights $\alpha, \beta$. We have the following Lipschitz-like \blue{inequalities, assuming $\alpha, \oll{\alpha}, \beta, \oll{\beta} > 0$ entry-wise:}
	\blue{\begin{equation}\label{eqn:wass_stability_main_text}
			\left|\W(\alpha, \beta; C) - \W(\oll{\alpha}, \oll{\beta}; \oll{C})\right| \leq \|C-\oll{C}\|_{\infty} + \|C\|_{\infty}( \|\alpha - \oll{\alpha}\|_1 + \|\beta - \oll{\beta}\|_1),
	\end{equation}}
	\blue{\begin{equation}\label{eqn:wass_stabilityL2_main_text}
			\left|\W(\alpha, \beta; C) - \W(\oll{\alpha}, \oll{\beta}; \oll{C})\right| \leq \|C-\oll{C}\|_{F} + \|C\|_F\left(\|\alpha - \oll{\alpha}\|_2+\|\beta - \oll{\beta}\|_2\right).
	\end{equation}}
\end{lemma}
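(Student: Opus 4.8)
The plan is to separate the perturbation of the cost matrix from that of the marginals via the triangle inequality
\[
  \left|\W(\alpha,\beta;C)-\W(\oll\alpha,\oll\beta;\oll C)\right|
  \le \left|\W(\alpha,\beta;C)-\W(\oll\alpha,\oll\beta;C)\right|
  + \left|\W(\oll\alpha,\oll\beta;C)-\W(\oll\alpha,\oll\beta;\oll C)\right|,
\]
where I deliberately keep the cost $C$ attached to the weight term so that the factor in front of the marginal differences is $\|C\|_\infty$ (resp. $\|C\|_F$), exactly as in the statement. The two terms will then be estimated by completely different arguments: a one-line comparison for the cost term, and a mass-redistribution construction for the weight term.

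For the cost term the marginals are frozen at $\oll\alpha,\oll\beta$. Taking $\pi^\star$ optimal for $\oll C$ gives $\W(\oll\alpha,\oll\beta;C)\le \pi^\star\cdot C=\pi^\star\cdot\oll C+\pi^\star\cdot(C-\oll C)$, and since $\pi^\star\in\Pi(\oll\alpha,\oll\beta)$ has total mass $1$ with entries in $[0,1]$, I get $\pi^\star\cdot(C-\oll C)\le\|C-\oll C\|_\infty$ in the first case and, by Cauchy--Schwarz together with $\|\pi^\star\|_F\le 1$ (which holds because $\pi^\star_{ij}^2\le\pi^\star_{ij}$), $\pi^\star\cdot(C-\oll C)\le\|C-\oll C\|_F$ in the second. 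Exchanging the roles of $C$ and $\oll C$ turns each bound into a two-sided one.

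The heart of the proof is the weight term, which I would treat one marginal at a time, chaining $\W(\alpha,\beta;C)-\W(\oll\alpha,\oll\beta;C)$ through $\W(\oll\alpha,\beta;C)$; it then suffices to bound $|\W(\alpha,\beta;C)-\W(\oll\alpha,\beta;C)|$ and its column analogue. Fixing the column marginal $\beta$, I start from an optimal plan $\pi\in\Pi(\alpha,\beta)$ and build a competitor $\oll\pi\in\Pi(\oll\alpha,\beta)$ by redistributing mass: on the rows in excess ($\alpha_i>\oll\alpha_i$) I rescale $\pi_{i\cdot}$ by $\oll\alpha_i/\alpha_i$, which frees a column profile $\nu_j=\sum_{i:\,\alpha_i>\oll\alpha_i}(\alpha_i-\oll\alpha_i)\pi_{ij}/\alpha_i$ of total mass $m_0:=\tfrac12\|\alpha-\oll\alpha\|_1$, and I reinject this freed mass onto the deficient rows proportionally to $\nu$, adding $A_{ij}:=(\oll\alpha_i-\alpha_i)_+\,\nu_j/m_0$ to entry $(i,j)$. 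A direct check shows the row marginal becomes $\oll\alpha$ while the column marginal stays $\beta$, so $\oll\pi$ is admissible; here the entrywise positivity hypothesis is precisely what lets me divide by $\alpha_i$, and if $m_0=0$ then $\alpha=\oll\alpha$ and there is nothing to prove. Since the removed mass has nonnegative cost, $\W(\oll\alpha,\beta;C)\le\oll\pi\cdot C\le\W(\alpha,\beta;C)+A\cdot C$.

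The main obstacle is estimating the added cost $A\cdot C$ sharply enough to recover \emph{both} claimed bounds from the same construction. For the $\ell^1/\ell^\infty$ inequality it is immediate: $A\cdot C\le\|C\|_\infty\sum_{ij}A_{ij}=m_0\|C\|_\infty\le\|C\|_\infty\|\alpha-\oll\alpha\|_1$. For the Frobenius inequality I would instead use Cauchy--Schwarz, $A\cdot C\le\|A\|_F\|C\|_F$, and control $\|A\|_F$ by factorising $\|A\|_F^2=\big(\sum_i(\oll\alpha_i-\alpha_i)_+^2\big)\big(\sum_j\nu_j^2\big)/m_0^2$, where $\sum_i(\oll\alpha_i-\alpha_i)_+^2\le\|\alpha-\oll\alpha\|_2^2$ and, crucially, $\sum_j\nu_j^2\le(\sum_j\nu_j)^2=m_0^2$ since $\nu$ is nonnegative; hence $\|A\|_F\le\|\alpha-\oll\alpha\|_2$ and $A\cdot C\le\|C\|_F\|\alpha-\oll\alpha\|_2$. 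Running the construction with the two plans exchanged yields the matching lower bound, and applying it to the $\beta$-marginal as well and summing gives the weight term $\le\|C\|_\infty(\|\alpha-\oll\alpha\|_1+\|\beta-\oll\beta\|_1)$, respectively $\le\|C\|_F(\|\alpha-\oll\alpha\|_2+\|\beta-\oll\beta\|_2)$. Combining each with the corresponding cost term completes the two inequalities.
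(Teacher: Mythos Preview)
Your proof is correct but proceeds by a genuinely different mechanism from the paper's. Both arguments split the error into a cost-perturbation term (handled identically, via the primal and the fact that any optimal plan has total mass $1$ and Frobenius norm at most $1$) and a marginal-perturbation term; the divergence is entirely in the latter. The paper attacks the marginal term through the \emph{dual} formulation: it shows, using complementary slackness, that optimal potentials $(f^*,g^*)$ can be chosen with $\|f^*\|_\infty,\|g^*\|_\infty\le\|C\|_\infty$ (and similarly with Frobenius/$\ell^2$ bounds), after which $|\W(\alpha,\beta;C)-\W(\oll\alpha,\oll\beta;C)|\le (f^*)^T(\alpha-\oll\alpha)+(g^*)^T(\beta-\oll\beta)$ gives the result directly. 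Your route is purely \emph{primal}: you manufacture an admissible competitor in $\Pi(\oll\alpha,\beta)$ from an optimal $\pi\in\Pi(\alpha,\beta)$ by rescaling excess rows and redistributing the freed mass, then bound the added cost $A\cdot C$. The dual approach is shorter once the potential bounds are in hand but requires those bounds (whence the complementary-slackness step and the entrywise positivity hypothesis). Your construction is more hands-on but arguably more elementary, and in fact your attribution of the positivity hypothesis is slightly off: you only ever divide by $\alpha_i$ on rows where $\alpha_i>\oll\alpha_i\ge 0$, hence $\alpha_i>0$ automatically, and by $m_0$ which is positive unless $\alpha=\oll\alpha$. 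So your argument actually goes through without assuming strict positivity of the weights, which the paper's dual argument cannot do (it needs positivity to guarantee, via complementary slackness, that each dual coordinate is activated).
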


\blue{\begin{remark}
	Using \ref{eqn:wass_stabilityL2_main_text} twice with $(C, \oll{C})$ and $(\oll{C}, C)$ yields a symmetric second term with a factor $\min(\|C\|_{\infty}, \|\oll{C}\|_{\infty})$ instead of $\|C\|_{\infty}$, and likewise for $\|\cdot\|_F$ with \ref{eqn:wass_stabilityL2_main_text}.
\end{remark}}

\blue{\begin{remark}
	The result of \ref{lemma:WC_stability} assumes \textit{positive} weights, but in the case of the $q$-Wasserstein cost $C_{i,j} = \|x_i-y_j\|_2^q$ with $q \geq 1$, we can remove this assumption by a continuity argument, since the $q$-Wasserstein distance metrises the weak convergence of measures (see \cite{santambrogio2015optimal}, Theorem 5.10 or 5.11, applied to the simple case of discrete measures for which convergence of moments is immediate).
\end{remark}}

The following regularity property on $(w_\theta)$ uses the norm $\|X\|_{\infty, 2} = \underset{k \in \llbracket 1, \npoints \rrbracket}{\max}\ \|x_k\|_2$ on $\R^{\npoints \times d}$. We also denote $D := \npoints \times d$ for convenience.
\begin{prop}
  	The $(w_\theta)_{\theta \in \SS^{d-1}}$ are uniformly locally Lipschitz.\label{prop:w_unif_locLip}. More precisely, in a neighbourhood $X \in \R^D$ or radius $r>0$, writing $\kappa_r(X) := \blue{2}\npoints(r + \|X\|_{\infty, 2} + \|Z\|_{\infty, 2})$, each $w_\theta$ is $\kappa_r(X)$ Lipschitz, which is to say
	$$\forall X \in \R^D,\; \forall Y, Y' \in B_{\|\cdot\|_{\infty, 2}}(X, r),\; \forall \theta \in \SS^{d-1},\; |w_\theta(Y) - w_\theta(Y')| \leq \kappa_r(X) \|Y-Y'\|_{\infty, 2}.$$

\end{prop}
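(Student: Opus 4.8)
The plan is to recognise each $w_\theta$ as a discrete Kantorovich cost in which only the cost matrix, and not the marginals, depends on $Y$, so that \ref{lemma:WC_stability} applies with \emph{equal} weights and its weight-sensitivity term drops out. Fix $\theta \in \SS^{d-1}$. Both pushforwards $P_\theta\#\gamma_Y$ and $P_\theta\#\gamma_Z$ are uniform combinations of $\npoints$ Diracs, located at $\theta^T y_i$ and $\theta^T z_j$; keeping the $\npoints$ points labelled, I would write
\[ w_\theta(Y) = \W\!\left(\tfrac1\npoints\mathbf 1_\npoints,\, \tfrac1\npoints\mathbf 1_\npoints;\, C_\theta(Y)\right), \qquad C_\theta(Y)_{i,j} := \big(\theta^T(y_i - z_j)\big)^2, \]
where $\mathbf 1_\npoints$ is the all-ones vector. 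This identity holds because the couplings with uniform marginals $\tfrac1\npoints\mathbf 1_\npoints$ have the scaled permutation matrices as extreme points, and minimising $\pi\cdot C_\theta(Y)$ over them recovers the one-dimensional sorting formula for $\W_2^2$ — so the two quantities coincide even when some projections collide. Crucially, the marginals are fixed, independent of $Y$, and strictly positive, so the hypotheses of \ref{lemma:WC_stability} are met.

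I would then apply \ref{eqn:wass_stability_main_text} with $\alpha = \oll\alpha = \beta = \oll\beta = \tfrac1\npoints\mathbf 1_\npoints$ and cost matrices $C = C_\theta(Y)$, $\oll C = C_\theta(Y')$. The weight-difference terms vanish, leaving $|w_\theta(Y) - w_\theta(Y')| \le \|C_\theta(Y) - C_\theta(Y')\|_\infty$. It remains to bound this entrywise: setting $u = \theta^T(y_i - z_j)$ and $v = \theta^T(y'_i - z_j)$ and factoring $u^2 - v^2 = (u-v)(u+v)$, the identity $\|\theta\|_2 = 1$ and Cauchy--Schwarz give $|u-v| = |\theta^T(y_i - y'_i)| \le \|Y - Y'\|_{\infty,2}$ and $|u+v| \le \|y_i\|_2 + \|y'_i\|_2 + 2\|z_j\|_2$. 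The constraint $Y, Y' \in B_{\|\cdot\|_{\infty,2}}(X,r)$ yields $\|y_i\|_2, \|y'_i\|_2 \le r + \|X\|_{\infty,2}$ and $\|z_j\|_2 \le \|Z\|_{\infty,2}$, hence $|u+v| \le 2(r + \|X\|_{\infty,2} + \|Z\|_{\infty,2})$. Combining, each entry of the difference is at most $2(r + \|X\|_{\infty,2} + \|Z\|_{\infty,2})\|Y - Y'\|_{\infty,2} \le \kappa_r(X)\|Y - Y'\|_{\infty,2}$ since $\npoints \ge 1$. All bounds being uniform in $\theta$, this gives the claim.

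The estimates of the second paragraph are routine; the only genuinely delicate point is the reformulation in the first, i.e. checking that the projected one-dimensional distance equals the $\npoints \times \npoints$ Kantorovich cost with \emph{fixed} uniform marginals. This is exactly what allows \ref{lemma:WC_stability} to be used with equal weights, so that the weight-sensitivity term disappears; had colliding atoms been merged instead, the number of atoms and thus the marginals would vary with $Y$, making the argument considerably more delicate.
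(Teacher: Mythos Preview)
Your proof is correct and follows essentially the same route as the paper: apply \ref{lemma:WC_stability} with equal (uniform) weights so that only the cost-matrix term survives, then bound $C_\theta(Y)-C_\theta(Y')$ entrywise via the factorisation $u^2-v^2=(u-v)(u+v)$ and Cauchy--Schwarz. The one difference is that you invoke the $\ell^\infty$ bound \ref{eqn:wass_stability_main_text} whereas the paper uses the Frobenius bound \ref{eqn:wass_stabilityL2_main_text}; since $\|C-C'\|_\infty \le \|C-C'\|_F$ and the Frobenius norm picks up a factor $\npoints$ from summing over $\npoints^2$ entries, your route in fact yields the sharper constant $2(r+\|X\|_{\infty,2}+\|Z\|_{\infty,2})$, which you then relax to $\kappa_r(X)$ using $\npoints\ge 1$. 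Your care in justifying why the $\npoints\times\npoints$ Kantorovich formulation with fixed uniform marginals remains valid even when projected atoms collide is a point the paper leaves implicit.
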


\begin{proof}
	Let $X \in \R^D,\; Y,Y'  \in B_{\|\cdot\|_{\infty, 2}}(X, r)$, and $\theta \in \SS^{d-1}$. By~\ref{lemma:WC_stability} \blue{Equation \ref{eqn:wass_stabilityL2_main_text}}, we have \blue{$|w_\theta(Y) - w_\theta(Y')| \leq \|C-C'\|_F$}, where for $(k,l) \in \llbracket 1, \npoints \rrbracket^2,\; C_{k,l} := (\theta^T y_k - \theta^T z_l)^2$, likewise for $C'$. Then:
	\begin{align*}[C - C']_{k,l} &= \left(\theta^T (y_k - y_k')\right)\left(\theta^T (y_k + y_k' - 2z_l)\right)\\
		&\leq \|y_k-y_k'\|_2\|y_k + y_k' - 2z_l\|_2 \\
		&= \|y_k-y_k'\|_2	\|y_k - x_k + y_k' - x_k + 2z_l + 2x_k\|_2\\
		&\leq \|y_k-y_k'\|_2 \left(2r + 2\|Z\|_{\infty, 2} + 2\|X\|_{\infty, 2}\right).
	\end{align*}
	Finally, $\|C-C'\|_F = \sqrt{\Sum{k,l \in \llbracket 1, \npoints \rrbracket}{}[C - C']_{k,l}^2} \leq 2\npoints(r + \|X\|_{\infty, 2} + \|Z\|_{\infty, 2})\|Y-Y'\|_{\infty, 2}$.
\end{proof}
As a consequence, we deduce immediately that $\SWpY$ and $\SWY$ are locally Lipschitz.
\begin{theorem}\label{thm:locLip} $\SWY$ and $\SWpY$ are locally Lipschitz.
\end{theorem}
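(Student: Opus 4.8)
The plan is to read off the result directly from \ref{prop:w_unif_locLip} by integrating (resp. averaging) the uniform Lipschitz estimate over the projection variable $\theta$. The crucial gain in \ref{prop:w_unif_locLip} is that the Lipschitz constant $\kappa_r(X)$ is \emph{independent of $\theta$}: on the ball $B_{\|\cdot\|_{\infty, 2}}(X, r)$ every $w_\theta$ is $\kappa_r(X)$-Lipschitz simultaneously. Combining this with the expectation representation \ref{eq:E_expectation_notation}, namely $\SWY(Y) = \mathbb{E}_{\theta \sim \bbsigma}[w_\theta(Y)]$ and $\SWpY(Y) = \mathbb{E}_{\theta \sim \bbsigma_p}[w_\theta(Y)]$, the Lipschitz property should survive the integration, with the \emph{same} constant $\kappa_r(X)$.

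Concretely, I would fix $X \in \R^D$ and $r > 0$, take $Y, Y' \in B_{\|\cdot\|_{\infty, 2}}(X, r)$, and estimate
\[
	|\SWY(Y) - \SWY(Y')| = \left| \Int{\SS^{d-1}}{}\big(w_\theta(Y) - w_\theta(Y')\big)\dd\bbsigma(\theta)\right| \leq \Int{\SS^{d-1}}{}|w_\theta(Y) - w_\theta(Y')|\dd\bbsigma(\theta) \leq \kappa_r(X)\|Y - Y'\|_{\infty, 2},
\]
where the last inequality applies \ref{prop:w_unif_locLip} pointwise in $\theta$ and uses that $\bbsigma$ is a probability measure. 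The computation for $\SWpY$ is identical, with $\bbsigma$ replaced by the finite average $\bbsigma_p = \frac{1}{p}\sum_{i=1}^p \delta_{\theta_i}$; there the integral is just a finite sum of $\kappa_r(X)$-Lipschitz functions, so no integrability issue arises at all. Since this holds on the ball $B_{\|\cdot\|_{\infty, 2}}(X, r)$ for every $X$, both energies are locally Lipschitz, with explicit local constant $\kappa_r(X)$.

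The only point requiring care — and the closest thing to an obstacle — is justifying that the integral defining $\SWY$ is well posed and that one may pull the absolute value inside, i.e. that $\theta \mapsto w_\theta(Y)$ is $\bbsigma$-measurable and integrable. Integrability is immediate because the sphere $\SS^{d-1}$ is compact and, for fixed $Y$, the map is bounded (it is even continuous in $\theta$, which follows from the continuity of $\theta \mapsto P_\theta\#\gamma_Y$ together with the stability estimate of \ref{lemma:WC_stability}). Once this routine measurability check is in place, the Fubini/monotonicity step above is legitimate and the conclusion follows without further work. I would also remark in passing that the same averaging argument, applied to the finite convex combination in \ref{eqn:bar}, shows that the barycentre energy $\SWYbar$ is locally Lipschitz as well.
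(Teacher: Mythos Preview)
Your proposal is correct and essentially identical to the paper's own proof: both fix $X$ and $r$, use the expectation representation \ref{eq:E_expectation_notation} for $\mu\in\{\bbsigma,\bbsigma_p\}$, pull the absolute value inside the integral, and apply \ref{prop:w_unif_locLip} pointwise in $\theta$ to obtain the same local Lipschitz constant $\kappa_r(X)$. Your remarks on measurability/integrability and on the barycentre energy are reasonable additions but not present in the paper's concise version.
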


\begin{proof}
  Let $X \in \R^D,\ r>0$ and $\mu \in \{\bbsigma, \bbsigma_p\}$. By~\ref{prop:w_unif_locLip}, for any $Y, Y' \in B_{\|\cdot\|_{2, \infty}}(X, r)$,
  $$\left|\mathbb{E}_{\theta \sim \mu}\left[w_\theta(Y)\right] - \mathbb{E}_{\theta \sim \mu}\left[w_\theta(Y')\right] \right|  \le \mathbb{E}_{\theta \sim \mu}\left[ \left| w_\theta(Y) - w_\theta(Y') \right| \right] \le \kappa_r(X)\|Y-Y'\|_{\infty, 2}.$$
\end{proof}

As a locally Lipschitz function,  $\SWY$ is differentiable almost everywhere. The expression of its gradient is quite simple and corresponds  to the simple differentiation of $w_{\theta}$ in the integral, as was shown in~\cite{bonneel2015sliced}. We remind here their result for the sake of completeness, and because the derivative will be useful on several occasions in this paper.
We define $\mathcal{U}$ the open set of matrices with distinct lines
\begin{equation}\label{eqn:U}
	\mathcal{U} =  \left\lbrace (x_1, \cdots, x_\npoints)^T \in \R^{\npoints\times d}\ |\ \forall i \neq j,\; \llbracket 1, \npoints \rrbracket^2,\; x_i \neq x_j \right\rbrace.
\end{equation}

\begin{theorem}[Regularity of $\SWY$, from Bonneel et al.~\cite{bonneel2015sliced} Theorem 1]\label{thm:bonneel_diff}
		$\SWY$ is continuous on $\R^{\npoints \times d}$, and of class $\mathcal{C}^1$ on $\mathcal{U}$.
		There exists $\kappa \geq 1$ such that $\nabla \SWY$ is $\kappa$-Lipschitz on $\mathcal{U}$. For $Y \in \mathcal{U}$, one has the expression:
	\begin{equation}\label{eqn:gradS}
		\cfrac{\partial \SWY}{\partial y_k}(Y) = \cfrac{2}{\npoints}\Int{\SS^{d-1}}{}\theta \theta^T (y_k - z_{\sort{Z}{\theta} \circ (\sort{Y}{\theta})^{-1}(k)}) \dd \bbsigma(\theta),
	\end{equation}
	where for $\theta \in \SS^{d-1}, X \in \mathcal{U}$, $\sort{X}{\theta} \in \mathfrak{S}_\npoints$ is any permutations s.t. $\theta^T x_{\sort{X}{\theta}(1)} \leq \cdots \leq \theta^T x_{\sort{X}{\theta}(\npoints)}$.
\end{theorem}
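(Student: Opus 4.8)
Continuity of $\SWY$ on all of $\R^{\npoints\times d}$ is free: \ref{thm:locLip} makes it locally Lipschitz, hence continuous. The real content is the $\mathcal{C}^1$ regularity on $\mathcal U$ together with the formula \ref{eqn:gradS}, and then the Lipschitz bound on $\nabla\SWY$. The plan is to differentiate under the integral sign in \ref{eq:E_expectation_notation}, and to treat the gradient Lipschitz estimate separately.

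Fix $Y\in\mathcal U$. For each pair $i\neq j$ the set $\{\theta\in\SS^{d-1}:\theta^T(y_i-y_j)=0\}$ is a great subsphere, hence $\bbsigma$-negligible since $y_i\neq y_j$; discarding the finite union of these, and of the analogous sets for $Z$, leaves a full-measure set of directions along which the projected supports of $\gamma_Y$ and $\gamma_Z$ have no ties. For such $\theta$ the sorting permutations $\sort{Y}{\theta},\sort{Z}{\theta}$ are locally constant in $Y$, so near $Y$ the map $w_\theta$ agrees with the smooth quadratic $Y\mapsto\frac1\npoints\sum_k(\theta^Ty_k-\theta^Tz_{\rho_\theta(k)})^2$ with $\rho_\theta:=\sort{Z}{\theta}\circ(\sort{Y}{\theta})^{-1}$ frozen, whose $y_k$-gradient is $\frac2\npoints\theta\theta^T(y_k-z_{\rho_\theta(k)})$. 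I would then pass to $\SWY$: for a direction $H$ the difference quotients $t^{-1}(w_\theta(Y+tH)-w_\theta(Y))$ converge for a.e.\ $\theta$ to $\partial_Hw_\theta(Y)$ and are bounded uniformly in $\theta$ by the local Lipschitz constant $\kappa_r(X)$ of \ref{prop:w_unif_locLip}, so dominated convergence gives Gateaux differentiability with $\nabla\SWY(Y)=\int_{\SS^{d-1}}\nabla_Yw_\theta(Y)\dd\bbsigma(\theta)$, i.e.\ \ref{eqn:gradS}. A second dominated-convergence argument (the integrand being, for a.e.\ $\theta$, continuous at $Y$ and dominated) shows $\nabla\SWY$ is continuous on $\mathcal U$, and a Gateaux-differentiable map with continuous gradient is $\mathcal C^1$.

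For the Lipschitz bound I would use $\int_{\SS^{d-1}}\theta\theta^T\dd\bbsigma=\frac1dI_d$ to rewrite \ref{eqn:gradS} as $\frac{\partial\SWY}{\partial y_k}(Y)=\frac{2}{\npoints d}y_k-\frac2\npoints\int_{\SS^{d-1}}\theta\theta^Tz_{\rho_\theta(Y)(k)}\dd\bbsigma(\theta)$. The first term is globally $\frac{2}{\npoints d}$-Lipschitz, so everything rests on the second, in which only the matching $\rho_\theta(Y)$ depends on $Y$. Comparing $Y$ and $Y'$, the integrand difference $\theta\theta^T(z_{\rho_\theta(Y)(k)}-z_{\rho_\theta(Y')(k)})$ has norm at most $2\|Z\|_{\infty,2}$ and vanishes off the directions where the induced matching changes; that set lies in $\bigcup_{i\neq j}\{\theta:\sign(\theta^T(y_i-y_j))\neq\sign(\theta^T(y_i'-y_j'))\}$, of $\bbsigma$-measure $\sum_{i\neq j}\frac1\pi\angle(y_i-y_j,\,y_i'-y_j')$.

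This last estimate is where I expect the real difficulty. Since $\angle(y_i-y_j,y_i'-y_j')\lesssim\|Y-Y'\|_{\infty,2}/\min_{i\neq j}\|y_i-y_j\|$, the bound above yields a Lipschitz constant for $\nabla\SWY$ that is uniform on any region of $\mathcal U$ where the pairwise separations are bounded below, but which degenerates as $Y$ tends to $\partial\mathcal U$; pinning down the constant $\kappa$ is thus the crux, and requires either such a separation (compactness inside $\mathcal U$) or a finer use of cancellations in the averaged integrand. One inequality, at least, comes for free: each $w_\theta$ is a minimum over permutations of quadratics sharing the Hessian $\frac2\npoints I_\npoints\otimes\theta\theta^T$, so $Y\mapsto\frac{1}{\npoints d}\|Y\|_F^2-\SWY(Y)$ is convex (an integral of convex functions), which gives the one-sided estimate $(\nabla\SWY(Y)-\nabla\SWY(Y'))\cdot(Y-Y')\leq\frac{2}{\npoints d}\|Y-Y'\|_F^2$ globally; the genuine obstruction is therefore the complementary, semiconvexity, direction.
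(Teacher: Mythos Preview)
Your argument for continuity, $\mathcal{C}^1$ regularity on $\mathcal{U}$, and the gradient formula \ref{eqn:gradS} follows the same line as the paper's own discussion. The paper does not give a full proof either: it attributes the result to Bonneel et al.\ and only highlights the two delicate points, namely that the integrand $w_\theta$ is differentiable only on a $\theta$-dependent set, and that for $Y\in\mathcal{U}$ and $Y'$ $\varepsilon$-close the sorting $\sort{Y}{\theta}$ coincides with $\sort{Y'}{\theta}$ off a set of $\bbsigma$-measure at most $C\varepsilon$ (this is the content of Bonneel et al., Appendix~A, Lemmas~2--3, invoked again in the proof of \ref{prop:Fcont}). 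Your dominated-convergence route to Gateaux differentiability and continuity of the gradient is a clean packaging of exactly that idea.

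On the Lipschitz bound for $\nabla\SWY$, you are right that your angle estimate gives a constant of order $(\min_{i\neq j}\|y_i-y_j\|_2)^{-1}$, which blows up as $Y\to\partial\mathcal{U}$; this is a genuine obstruction, not a looseness in your bound. The paper's discussion after the theorem statement is silent on this point and simply defers the Lipschitz claim to the cited reference, so you are not missing an argument present in the paper. It is also worth noting \ref{rem:flows}, where the authors remark parenthetically that \emph{local} Lipschitz continuity of $\nabla\SWY$ on $\mathcal{U}$ would already suffice for their gradient-flow discussion; your argument delivers exactly that, uniformly on any subset of $\mathcal{U}$ with pairwise separations bounded below.
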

Proving this theorem requires to be cautious. Firstly, differentiating directly under the integral using standard calculus theorems is impossible, since the integrand is only differentiable on a set $\mathcal{U}_\theta$ which depends on the integration variable $\theta$. Fortunately, these irregularities are smoothed out as $\theta$ rotates, yielding differentiability almost-everywhere. Secondly, the problematic term $\sort{Y}{\theta}$ can be dealt with for $Y\in \mathcal{U}$ by remarking that for any $Y'\ \varepsilon$-close to $Y$, we have $\sort{Y}{\theta} = \sort{Y'}{\theta}$ for every $\theta$ in a certain subset of $\SS^{d-1}$ which is of $\bbsigma$-measure exceeding $1 - C\varepsilon$. \blue{Regarding the multiplicative constant, Theorem 1 in Bonneel et. al omits the $1/\npoints$ factor (we believe that this is a typing error).}

\subsection{Cell structure of \texorpdfstring{$\SWpY$}{Sp}}\label{sec:cells}

In order to further study the optimisation properties of  $\SWpY$ and $\SWY$, we need to exhibit more explicitly the structure of the landscape of $\SWpY$. The semi-concavity of  $\SWpY$ and $\SWY$ will follow, as well as the fact that $\SWY$ is semi-algebraic \blue{(see \ref{prop:SWpY_semi_algebraic})}. We can compute $\SWpY$ by leveraging the formula for 1D Wasserstein distances:
\begin{equation}
	\forall Y \in \R^{\npoints\times d},\quad \SWpY(Y) = \cfrac{1}{\npoints p}\Sum{i=1}{p}\Sum{k=1}{\npoints} \left(\theta_i^T \Big(y_k - z_{\sort{Z}{\theta_i} \circ (\sort{Y}{\theta_i})^{-1}(k)}\Big)\right)^2.
\end{equation}
For now we consider $Z$ and the $(\theta_i)$ fixed, and we write $\config(Y) := \left(\config_i(Y)\right)_{i \in \llbracket 1, p \rrbracket}$ where $\config_i(Y) = \sort{Z}{\theta_i} \circ (\sort{Y}{\theta_i})^{-1}$.  Writing $\mathfrak{S}_\npoints $ the set of permutations of $\{1,\cdots,n\}$, $\config_i$ is the element $\sigma$ of  $\mathfrak{S}_\npoints $ which solves the (Monge) quadratic optimal transport between the points $(\theta_i^T y_1, \cdots, \theta_i^Ty_\npoints)$ and \blue{$(\theta_i^T z_1, \cdots, \theta_i^T z_\npoints)$.} The matching configuration $\config(Y)$ depends implicitly on the fixed directions $(\theta_i)$.

Note that the permutations $\sort{Y}{\theta}$ and \blue{$\sort{Z}{\theta}$} are not always uniquely defined: for any $\theta \in \SS^{d-1}$, there exists $Y \in \mathcal{U}$ such that $\sort{Y}{\theta}$ is not uniquely defined (take $Y$ such that $\theta \in (y_1 - y_2)^\perp$ for instance). However, for a given set of directions $(\theta_i)$, these permutations are uniquely defined almost everywhere on $\R^{n\times d}$.

A set of interest is $\mathcal{C}_\config = \left\lbrace Y \in \mathcal{U}\ |\  \config(Y) \;\text{is uniquely defined and equal to}\; \config \right\rbrace$, the cell of points $Y$ of configuration $\config$. Writing  $\config = (\config_1, \cdots, \config_p)$, and using the optimality of each $\config_i$, note that each cell $\mathcal{C}_\config$ can be also written as
\begin{equation}\label{eqn:opt_sigma_config_L_linear}
	\begin{split}
		\mathcal{C}_\config = \Bigg\{ Y \in \R^{\npoints \times d} : \forall i \in \llbracket 1, p \rrbracket,\; \forall \sigma \in \mathfrak{S}_\npoints \setminus \lbrace \config_i \rbrace,\\ \Sum{k=1}{\npoints}z_{\config_i(k)}^T\theta_i\theta_i^T y_k > \Sum{k=1}{\npoints}z_{\sigma(k)}^T\theta_i\theta_i^T y_k \rbrace\Bigg\}.
	\end{split}
\end{equation}
Thus, each $\mathcal{C}_\config$ is an open polyhedral cone, obtained as the intersection of  $p(\npoints! - 1)$ half-open planes.
Moreover, the union of these cells $\Reu{\config \in \mathfrak{S}_\npoints^p}{}\mathcal{C}_\config$ is a strict subset of $\mathcal{U}$ (as a consequence of the non uniqueness of the permutations for some $Y$), but is dense in $\R^{\npoints \times d}$. %
These cells are of particular interest since by definition, $\SWpY$ is quadratic on each $\mathcal{C}_\config$,  and can be written
\begin{equation}\label{eqn:cell_quadratic}
	\forall Y \in \mathcal{C}_\config, \; \SWpY(Y) = \cfrac{1}{\npoints p}\Sum{i=1}{p}\Sum{k=1}{\npoints} \left(\theta_i^T \big(y_k - z_{\config_i(k)}\big)\right)^2 =: q_\config(Y).
\end{equation}
Furthermore, the sorting interpretation of the 1D Wasserstein distance allows us to re-write $\SWpY(Y)$ as a minimum of quadratics,
\begin{equation}\label{eqn:min_quadratics}
	\forall Y \in \R^{\npoints\times d}, \; \SWpY(Y) = \underset{\config \in \mathfrak{S}_\npoints^p}{\min}\ q_\config(Y) = q_{\config(Y)}(Y).
\end{equation}

\begin{remark}\label{rem:quadratics}
	To each $Y  = (y_1, \cdots, y_\npoints)^T$ (seen as a $n\times d$ matrix), we can
	associate the column vector $\mathrm{vec}(y) := (y_1^T, \cdots, y_\npoints^T)^T$, which is now a vector of $\R^D = \R^{\npoints\times d}$ without any abuse of notation. We re-write the quadratic from equation~\ref{eqn:cell_quadratic} in standard form: $q_\config(\mathrm{vec}(y)) = \frac{1}{2}\mathrm{vec}(y)^TB\mathrm{vec}(y) - a_\config^T \mathrm{vec}(y) + b$, where:
	\begin{equation}
		\begin{split}
			B := \cfrac{2}{\npoints}\left(\begin{array}{ccc}
				A & 0 & 0 \\
				0 & \ddots & 0 \\
				0 & 0 & A
			\end{array}\right);\; A := \cfrac{1}{p}\Sum{i=1}{p}\theta_i\theta_i^T; \\ a_\config := \cfrac{2}{p\npoints}\left(\begin{array}{c}
				\Sum{i=1}{p}\theta_i\theta_i^Tz_{\config_i(1)} \\
				\vdots \\
				\Sum{i=1}{p}\theta_i\theta_i^Tz_{\config_i(\npoints)}
			\end{array}\right);\; b := \cfrac{1}{\npoints}\Sum{k=1}{\npoints}z_k^TAz_k.
		\end{split}
	\end{equation}
	Note in particular that only the linear component depends on $\config$.
\end{remark}

Finding the minimum of each quadratic $q_\config$ can be done in closed form, thanks to the computations of \ref{rem:quadratics}. This computational accessibility will be leveraged during our discussions on minimising $Y \longmapsto\SWpY(Y)$ (\ref{sec:Ep_crit_and_bcd}), wherein we shall present the Block Coordinate Descent method (\ref{alg:BCD}), which computes iteratively minima of quadratics in closed form.

\subsection{Consequences of the cell structure on the regularity of \texorpdfstring{$\SWpY$}{Sp} and \texorpdfstring{$\SWY$}{S}}

The cell decomposition presented in \ref{sec:cells} permits to show several additional regularity results.

\begin{prop}\label{prop:SWpY_quadratic_on_each_cell}$\SWpY$ is quadratic on each cell $\mathcal{C}_\config$, thus of class $\mathcal{C}^{\infty}$ on $\Reu{\config \in \mathfrak{S}_\npoints^p}{}\mathcal{C}_\config$, hence $\mathcal{C}^{\infty}$ a.e..
\end{prop}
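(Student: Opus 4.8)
The first two assertions are read off directly from the cell structure of \ref{sec:cells}. By Equation \ref{eqn:cell_quadratic}, on each cell $\mathcal{C}_\config$ the energy coincides with the quadratic $q_\config$, so $\SWpY$ is quadratic — in particular $\mathcal{C}^\infty$ — on every $\mathcal{C}_\config$; this gives the first claim with no further work. For the second claim I would observe that each $\mathcal{C}_\config$ is an open set (an open polyhedral cone, by the half-space description \ref{eqn:opt_sigma_config_L_linear}) and that the cells are pairwise disjoint, since $\config(Y)$ is by definition uniquely determined and equal to $\config$ on $\mathcal{C}_\config$ (if $Y \in \mathcal{C}_\config \cap \mathcal{C}_{\config'}$ then $\config = \config(Y) = \config'$). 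Hence the union $\Reu{\config \in \mathfrak{S}_\npoints^p}{}\mathcal{C}_\config$ is open, and in a neighbourhood of any of its points $\SWpY$ agrees with a single fixed quadratic $q_\config$. As smoothness is a local property, $\SWpY$ is $\mathcal{C}^\infty$ on the whole union.

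The last assertion is the only one requiring a little care: density of the union (noted in \ref{sec:cells}) does not by itself yield ``a.e.'', so I would instead show that the complement of $\Reu{\config \in \mathfrak{S}_\npoints^p}{}\mathcal{C}_\config$ is Lebesgue-negligible. A point $Y$ lies outside every cell only if $Y \notin \mathcal{U}$ or if some configuration $\config_i(Y) = \sort{Z}{\theta_i}\circ(\sort{Y}{\theta_i})^{-1}$ fails to be uniquely defined. The first case is confined to the diagonal subspaces $\{Y : y_k = y_l\}$ with $k \neq l$, which have codimension $d \geq 1$. The second case can only arise when the monotone one-dimensional matching is ambiguous, i.e. when the sorting permutation $\sort{Y}{\theta_i}$ is not unique, which forces a tie $\theta_i^T y_k = \theta_i^T y_l$ for some $i$ and some $k \neq l$. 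Since $\theta_i \neq 0$ (as $\theta_i \in \SS^{d-1}$), each set $\{Y : \theta_i^T(y_k - y_l) = 0\}$ is a hyperplane of $\R^D$, hence of zero Lebesgue measure. The complement of the union of cells is therefore contained in a finite union of measure-zero sets, and so is itself negligible; combined with the previous paragraph, this shows $\SWpY$ is $\mathcal{C}^\infty$ almost everywhere.

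There is no genuine obstacle here, the statement being a direct consequence of the constructions of \ref{sec:cells}. The single step to state carefully is the measure-zero argument: I must make sure the ambiguity of $\config(Y)$ is correctly localised to the finitely many tie-breaking hyperplanes governed by $\sort{Y}{\theta_i}$ (the $Z$-side sorting being fixed, it contributes no $Y$-dependent ambiguity), exactly as anticipated by the remark in \ref{sec:cells} that the sorting permutations are uniquely defined almost everywhere on $\R^{\npoints \times d}$.
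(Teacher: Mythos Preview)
Your proposal is correct and matches the paper's approach: the paper does not give an explicit proof of this proposition, treating it as an immediate consequence of the cell structure of \ref{sec:cells} (in particular \ref{eqn:cell_quadratic}, \ref{eqn:opt_sigma_config_L_linear}, and the remark that the sorting permutations are uniquely defined almost everywhere). Your write-up is in fact more careful than the paper, correctly noting that density alone does not yield the ``a.e.'' claim and supplying the hyperplane argument explicitly.
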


The formulation as an infimum of quadratics also allows us to prove that $\SWpY$ is semi-concave, which is an extremely useful property for optimisation.

\begin{prop}$\SWpY$ is $\frac{1}{\npoints}$-semi-concave, i.e. $\SWpY - \frac{1}{\npoints}\|\cdot\|_2^2$ is concave.\label{prop:SWpY_semi_concave}
\end{prop}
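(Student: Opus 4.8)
The plan is to exploit the representation of $\SWpY$ as a finite minimum of quadratics established in \ref{eqn:min_quadratics}, namely $\SWpY = \min_{\config \in \mathfrak{S}_\npoints^p} q_\config$, together with the explicit standard form of each $q_\config$ from \ref{rem:quadratics}. The crucial structural fact recorded there is that \emph{only the linear part depends on $\config$}: every quadratic $q_\config$ shares the same Hessian $B = \frac{2}{\npoints}\,\mathrm{diag}(A, \dots, A)$, where $A := \frac{1}{p}\sum_{i=1}^p \theta_i\theta_i^T$ and the block $A$ is repeated $\npoints$ times. Since the minimum of a finite family of concave functions is concave, it suffices to show that each individual map $q_\config - \frac{1}{\npoints}\|\cdot\|_2^2$ is concave.

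Because $q_\config$ is quadratic with Hessian $B$ and $\frac{1}{\npoints}\|\cdot\|_2^2$ has Hessian $\frac{2}{\npoints}I_D$, the function $q_\config - \frac{1}{\npoints}\|\cdot\|_2^2$ has constant Hessian $B - \frac{2}{\npoints}I_D$, so its concavity is equivalent to the spectral bound $B \preceq \frac{2}{\npoints}I_D$. This reduces the whole statement to a single eigenvalue estimate on $A$. First I would observe that each $\theta_i \in \SS^{d-1}$ is a unit vector, so $\theta_i\theta_i^T$ is an orthogonal projector onto a line and satisfies $\theta_i\theta_i^T \preceq I_d$. Then $A = \frac{1}{p}\sum_{i=1}^p \theta_i\theta_i^T$ is a convex combination of matrices bounded above by $I_d$, whence $A \preceq I_d$. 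Passing to the block-diagonal matrix gives $\mathrm{diag}(A,\dots,A) \preceq I_D$, and multiplying by $\frac{2}{\npoints}$ yields exactly $B \preceq \frac{2}{\npoints}I_D$.

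Combining the two observations concludes the argument: for every fixed $\config$ the Hessian $B - \frac{2}{\npoints}I_D$ is negative semi-definite, so each $q_\config - \frac{1}{\npoints}\|\cdot\|_2^2$ is concave, and therefore
$$
\SWpY - \frac{1}{\npoints}\|\cdot\|_2^2 = \min_{\config \in \mathfrak{S}_\npoints^p}\left(q_\config - \frac{1}{\npoints}\|\cdot\|_2^2\right)
$$
is concave as a pointwise minimum of concave functions, which is precisely the claimed $\frac{1}{\npoints}$-semi-concavity. I do not expect any serious obstacle here: the only substantive point is the spectral bound $A \preceq I_d$, and that follows immediately from the $\theta_i$ being unit vectors. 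The one subtlety worth a sentence is that the minimum is taken over the finite set $\mathfrak{S}_\npoints^p$, so no issue arises with the infimum of an infinite family failing to inherit concavity, and the identity $\SWpY = \min_\config q_\config$ holds globally on $\R^{\npoints \times d}$ (not merely on the union of cells), which is exactly what \ref{eqn:min_quadratics} provides.
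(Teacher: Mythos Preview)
Your proposal is correct and follows essentially the same approach as the paper: both rely on the representation $\SWpY = \min_{\config} q_\config$ from \ref{eqn:min_quadratics} with the shared Hessian $B$ from \ref{rem:quadratics}, reduce semi-concavity to the spectral bound $A \preceq I_d$, and conclude via the pointwise minimum of concave functions being concave. The only difference is organizational --- the paper splits off the common quadratic part and treats the remaining $\min_{\config}(-a_\config^T\,\cdot)+b$ as an infimum of affine maps, whereas you subtract $\frac{1}{\npoints}\|\cdot\|_2^2$ from each $q_\config$ before taking the minimum --- but the substance is identical. (One harmless side remark: your caveat about finite versus infinite families is unnecessary, since the pointwise infimum of \emph{any} family of concave functions is concave.)
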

\begin{proof}
	Using the notations from \ref{rem:quadratics}, $\SWpY(\mathrm{vec}(y)) = \frac{1}{2}\mathrm{vec}(y)^TB\mathrm{vec}(y) + \underset{\config \in \mathfrak{S}_\npoints^p}{\min}\ a_\config^T \mathrm{vec}(y) + b$. Now, $\mathrm{vec}(y) \longmapsto \underset{\config \in \mathfrak{S}_\npoints^p}{\min}\ a_\config^T \mathrm{vec}(y) + b$ is concave, as an infimum of affine functions. Furthermore
	$$\frac{1}{2}\mathrm{vec}(y)^TB\mathrm{vec}(y) - \frac{1}{\npoints}\|\mathrm{vec}(y)\|_2^2 = \cfrac{1}{\npoints}\Sum{k=1}{\npoints}y_k^T(A - I)y_k,$$
	and since $A \preceq I_d$, the equation above defines a concave function of $\mathrm{vec}(y)$.
\end{proof}

The semi-concavity of $\SWpY$ and point-wise convergence allows us to deduce the semi-concavity of $\SWY$:
\begin{prop}$\SWY$ is $\frac{1}{\npoints}$-semi-concave.\label{prop:SWY_semi_concave}
\end{prop}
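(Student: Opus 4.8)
The plan is to show that $g := \SWY - \frac{1}{\npoints}\|\cdot\|_2^2$ is concave, which is precisely the claimed semi-concavity. The key observation is that concavity passes to pointwise limits, and that by \ref{prop:SWpY_semi_concave} the Monte-Carlo approximations are already semi-concave with the same constant, for \emph{any} choice of directions (the proof of \ref{prop:SWpY_semi_concave} only uses $A = \frac1p\sum_i\theta_i\theta_i^T \preceq I_d$, which holds for any $\theta_i \in \SS^{d-1}$). Concretely, I would sample a sequence $(\theta_i)_{i\in\N}$ i.i.d. of law $\bbsigma$, and for each $p$ set $g_p := \SWpY(\cdot\,;(\theta_i)_{i\le p}) - \frac{1}{\npoints}\|\cdot\|_2^2$; then each $g_p$ is concave by \ref{prop:SWpY_semi_concave}, while by \ref{eqn:Sp_pwc} we have $g_p(Y)\to g(Y)$ almost surely, for each fixed $Y$.

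Next I would fix an arbitrary triple $Y_0, Y_1\in\R^{\npoints\times d}$ and $t\in[0,1]$, and write $Y_t := tY_0 + (1-t)Y_1$. Since a finite intersection of almost-sure events is almost sure, there is an event of probability one on which the three convergences $g_p(Y_0)\to g(Y_0)$, $g_p(Y_1)\to g(Y_1)$ and $g_p(Y_t)\to g(Y_t)$ hold simultaneously. On this event, concavity of each $g_p$ gives $g_p(Y_t) \ge t\,g_p(Y_0) + (1-t)\,g_p(Y_1)$ for every $p$, and letting $p\to\infty$ yields the deterministic inequality $g(Y_t) \ge t\,g(Y_0) + (1-t)\,g(Y_1)$. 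As this inequality no longer involves the random directions it holds unconditionally, and since $Y_0,Y_1,t$ were arbitrary, $g$ is concave, i.e. $\SWY$ is $\frac{1}{\npoints}$-semi-concave.

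There is no deep obstacle here, the substantive work having been done in \ref{prop:SWpY_semi_concave}; the only point requiring a little care is the simultaneity of the pointwise convergence at the three evaluation points, which is exactly what the finite-intersection remark handles. For completeness I would also record a randomness-free alternative: applying \ref{prop:SWpY_semi_concave} with a single direction (so $A = \theta\theta^T \preceq I_d$) shows that each $w_\theta$ is itself $\frac{1}{\npoints}$-semi-concave, and then $\SWY(Y) = \mathbb{E}_{\theta\sim\bbsigma}\left[w_\theta(Y)\right]$ is an average of $\frac{1}{\npoints}$-semi-concave functions; since integrating concave functions preserves concavity, this gives the result directly. Either route concludes the proof.
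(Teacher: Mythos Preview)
Your main argument is correct and is essentially identical to the paper's proof: write the concavity inequality for $\SWpY - \frac{1}{\npoints}\|\cdot\|_2^2$ at $Y_0,Y_1,Y_t$ and pass to the limit $p\to\infty$ using \ref{eqn:Sp_pwc}. You are merely more explicit than the paper about the finite-intersection argument ensuring simultaneous a.s.\ convergence at the three points; your randomness-free alternative via the $\frac{1}{\npoints}$-semi-concavity of each $w_\theta$ is a valid bonus route that the paper does not record.
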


\begin{proof}
	By \ref{prop:SWpY_semi_concave}, $\forall p \in \N^*,\; \SWpY$ is $\frac{1}{\npoints}$-semi-concave. Let $p \in \N^*,\; Y, Y'\in \R^{\npoints \times d}$ and $\lambda \in [0, 1]$. We have
	\begin{equation*}
		\begin{split}
			\SWpY((1-\lambda)Y + \lambda Y') - \frac{1}{\npoints}\|(1-\lambda)Y + \lambda Y'\|_F^2 \\\geq (1-\lambda)\SWpY(Y) + \lambda \SWpY(Y') - \frac{1}{\npoints}\left((1-\lambda)\|Y\|_F^2 + \lambda \|Y'\|_F^2\right).
		\end{split}
	\end{equation*}
	Taking the limit $p \longrightarrow +\infty$ in this inequality yields the $\frac{1}{\npoints}$-semi-concavity of $\SWY$.
\end{proof}

The cell formulation also allows us to show that $\SWpY$ is semi-algebraic, which means that it can be written using a finite number of polynomial expressions. This result induces strong optimisation results akin to semi-concavity for our purposes. 	\blue{We recall the definition of a semi-algebraic set (\cite{Wakabayashi_semialgebraic}, Definition 1). $S \subset \R^D$ is \textit{semi-algebraic} if it can be written $S = \Reu{n=1}{N}\Inter{m=1}{M}A_{n,m}$  where $(A_{n, m})_{\substack{n \in \llbracket 1, N \rrbracket \\ m \in \llbracket 1, M \rrbracket}}$ is a finite family of sets such that $A_{n,m} = \lbrace X \in \R^D\ |\ p_{n,m}(X) \geq 0\rbrace$ or $A_{n,m} = \lbrace X \in \R^D\ |\ p_{n,m}(X) = 0\rbrace$, with $p_{n,m}$ being $D$-variate polynomials with real coefficients. A \textit{semi-algebraic} function is a function whose graph is a semi-algebraic set.}

\begin{prop}$\SWpY$ is semi-algebraic.\label{prop:SWpY_semi_algebraic}
\end{prop}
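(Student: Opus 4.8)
The plan is to show that the graph of $\SWpY$ is a semi-algebraic subset of $\R^D \times \R$, where $D = \npoints \times d$. The key structural fact to exploit is the cell decomposition from \ref{sec:cells}: by \ref{eqn:min_quadratics}, $\SWpY = \min_{\config \in \mathfrak{S}_\npoints^p} q_\config$ is a pointwise minimum over the \emph{finite} set of configurations $\config \in \mathfrak{S}_\npoints^p$, and each $q_\config$ is a polynomial (in fact quadratic) function of $Y$ by \ref{rem:quadratics}. Since semi-algebraic functions are closed under finite minima, and polynomials are trivially semi-algebraic, this should give the result almost immediately, provided one is careful to work at the level of graphs as sets.

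First I would recall that the class of semi-algebraic sets is closed under finite unions and finite intersections, which is immediate from the definition quoted just before the statement (one can combine the defining finite families). Next I would verify that the graph of each quadratic $q_\config$ is semi-algebraic: the set
$$\{(Y, t) \in \R^D \times \R \ |\ t - q_\config(Y) = 0\}$$
is the zero set of a single $(D+1)$-variate polynomial, hence semi-algebraic by definition. The slightly more delicate point is to encode the minimum correctly. I would write the graph of $\SWpY$ as
$$\{(Y, t)\ |\ t = \min_\config q_\config(Y)\} = \bigcup_{\config \in \mathfrak{S}_\npoints^p} \Big( \{(Y,t)\ |\ t - q_\config(Y) = 0\} \cap \bigcap_{\config' \neq \config} \{(Y,t)\ |\ q_{\config'}(Y) - t \geq 0\} \Big),$$
which expresses the condition ``$t$ equals some $q_\config(Y)$ and no other $q_{\config'}(Y)$ lies strictly below it.'' Each bracketed set is a finite intersection of a zero-set and inequality-sets defined by polynomials $t - q_\config$ and $q_{\config'} - t$, hence semi-algebraic; and the whole is a finite union over $\mathfrak{S}_\npoints^p$, which is a finite index set. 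This matches the $\bigcup_n \bigcap_m A_{n,m}$ template of the quoted definition exactly.

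The main obstacle, such as it is, is purely a matter of bookkeeping rather than mathematical depth: one must make sure the set-builder description of the minimum is logically equivalent to the graph, i.e. that ``$t = \min_\config q_\config(Y)$'' is faithfully captured by ``$\exists \config$ with $t = q_\config(Y)$ and $t \leq q_{\config'}(Y)$ for all $\config'$.'' This equivalence is elementary since the minimum over a finite set is always attained. The only other point worth a sentence is that we rely on $\SWpY$ genuinely equalling this finite minimum \emph{everywhere} on $\R^{\npoints \times d}$ (not merely on the open cells), which is precisely the content of \ref{eqn:min_quadratics}; thus no separate argument is needed for the boundary configurations where $\config(Y)$ fails to be uniquely defined. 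I therefore expect the proof to be short, citing \ref{eqn:min_quadratics} and \ref{rem:quadratics} and the stated closure properties of semi-algebraic sets.
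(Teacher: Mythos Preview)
Your proof is correct and takes a genuinely different route from the paper's. The paper restricts first to the open cells $\mathcal{C}_\config$, writes the graph of $\SWpY$ over $\mathcal{U}_\Theta := \bigcup_\config \mathcal{C}_\config$ as $\bigcup_\config \{(X,y) : X \in \mathcal{C}_\config,\ y = q_\config(X)\}$, argues this is semi-algebraic because the cells are finite intersections of half-planes, and then passes to the full graph by taking the closure and invoking an external lemma (\cite{Wakabayashi_semialgebraic}, Lemma~4) that closures of semi-algebraic sets are semi-algebraic. Your argument bypasses both the cell restriction and the closure step by working directly with the global identity \ref{eqn:min_quadratics}, encoding ``$t = \min_\config q_\config(Y)$'' as a finite union of finite intersections of polynomial level sets. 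This is more elementary: it uses only $=$ and $\geq$ constraints, exactly matching the form of the definition quoted in the paper, and it requires no outside reference about closures. The paper's approach, by contrast, has to handle the strict inequalities defining $\mathcal{C}_\config$ and then appeal to a nontrivial structural fact about semi-algebraic sets. Both are valid; yours is shorter and more self-contained.
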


\begin{proof}
    We shall prove that the set $G := \left\lbrace (X, \SWpY(X))\ |\ X \in \mathcal{U}_\Theta\right\rbrace$ is semi-algebraic, where $\mathcal{U}_\Theta := \Reu{\config \in \mathfrak{S}_\npoints^p}{}\mathcal{C}_\config$. Observe that 
    $$G = \Reu{\config \in \mathfrak{S}_\npoints^p}{}\left\lbrace (X, y) \in \R^{D+1}, X \in \mathcal{C}_\config \text{ and } y = q_\config(X)\right\rbrace.$$
    
	The function $q_\config$ is quadratic, thus polynomial, and the cells  $\mathcal{C}_\config$ are intersections of a finite number of half planes, so we conclude that $G$ is semi-algebraic.

	The closure of $\mathcal{U}_\Theta$ verifies $\oll{\mathcal{U}_\Theta} = \R^D$, furthermore, since $\SWpY$ is continuous on $\R^D$ (by~\ref{thm:bonneel_diff}), the closure of $G$ is exactly the graph of $\SWpY$. Now by~\cite{Wakabayashi_semialgebraic}, Lemma 4, since $G$ is semi-algebraic, then $\oll{G}$ is also semi-algebraic. As a conclusion, $\SWpY$ is a semi-algebraic function.
\end{proof}

\subsection{Convergence of \texorpdfstring{$\SWpY$}{SWpY} to \texorpdfstring{$\SWY$}{SWY}}

We have already seen that $\SWpY(Y)$ converges to $\SWY(Y)$ almost surely when  $p\rightarrow +\infty$.
In practice, since we want to optimise through $\SWpY$ as a surrogate for $\SWY$, we would wish for the strongest possible convergence. Below, we show almost-sure \textit{uniform} convergence over any compact, which is substantially better than point-wise convergence. Note that this stronger mode of convergence is unfortunately still too weak to transport local optima properties.

\begin{theorem}[Uniform Convergence of $\SWpY$]\label{thm:Sp_cvu_S}\ Let $\mathcal{K} \subset \R^{\npoints\times d}$ compact. We have

	$\P\left(\|\SWpY-\SWY\|_{\ell^{\infty}(\mathcal{K})} \xrightarrow[p \rightarrow +\infty]{} 0\right) = 1,\;$ where for $f \in \mathcal{C}(\mathcal{K} , \R),\; \|f\|_{\ell^{\infty}(\mathcal{K})} := \underset{x \in \mathcal{K} }{\sup}\ |f(x)|$.
\end{theorem}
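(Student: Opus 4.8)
The plan is to combine the pointwise almost-sure convergence already recorded in \ref{eqn:Sp_pwc} with a uniform-in-$p$ equicontinuity supplied by \ref{prop:w_unif_locLip}, and to upgrade pointwise to uniform convergence through a standard density-plus-equicontinuity argument (an Arzel\`a--Ascoli type reasoning). The point is that the Lipschitz constant in \ref{prop:w_unif_locLip} depends neither on $\theta$ nor on the random draw, so the whole family $(\SWpY)_p$ together with $\SWY$ shares one modulus of continuity on $\mathcal{K}$.

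First I would fix a common Lipschitz constant on $\mathcal{K}$. Since $\mathcal{K}$ is compact, enclose it in a ball $B_{\|\cdot\|_{\infty, 2}}(0, R)$. Applying \ref{prop:w_unif_locLip} with $X = 0$ and $r = R$ shows that every $w_\theta$, for $\theta \in \SS^{d-1}$, is $L$-Lipschitz on $\mathcal{K}$ with $L := \kappa_R(0) = 2\npoints(R + \|Z\|_{\infty, 2})$, a constant independent of both $\theta$ and of the eventual draw of the $(\theta_i)$. Averaging over $\theta \sim \bbsigma_p$ and integrating over $\theta \sim \bbsigma$ (using \ref{eq:E_expectation_notation}) then yields that $\SWpY$ and $\SWY$ are themselves $L$-Lipschitz on $\mathcal{K}$, uniformly in $p$.

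Next I would pass from pointwise to uniform convergence. Let $\mathcal{D} = \{Y_j\}_{j \in \N}$ be a countable dense subset of $\mathcal{K}$. By \ref{eqn:Sp_pwc}, for each fixed $j$ the event $\{\SWpY(Y_j) \to \SWY(Y_j)\}$ has probability one; since a countable intersection of almost-sure events is almost sure, there is an event $\Omega_0$ of probability one on which $\SWpY(Y_j) \to \SWY(Y_j)$ simultaneously for all $j$. I claim that on $\Omega_0$ the convergence is in fact uniform on $\mathcal{K}$. Indeed, fix $\varepsilon > 0$ and set $\delta := \varepsilon/(3L)$; by compactness finitely many points $Y_{j_1}, \dots, Y_{j_N} \in \mathcal{D}$ form a $\delta$-net of $\mathcal{K}$. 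For arbitrary $Y \in \mathcal{K}$, choosing a net point $Y_{j_\ell}$ with $\|Y - Y_{j_\ell}\|_{\infty, 2} \le \delta$ and using the common $L$-Lipschitz bound on $\SWpY$ and $\SWY$ gives
\[
	|\SWpY(Y) - \SWY(Y)| \le 2L\delta + |\SWpY(Y_{j_\ell}) - \SWY(Y_{j_\ell})| \le \tfrac{2\varepsilon}{3} + \max_{1 \le m \le N}|\SWpY(Y_{j_m}) - \SWY(Y_{j_m})|.
\]
On $\Omega_0$ the finite maximum tends to $0$, so for $p$ large it is below $\varepsilon/3$, whence $\|\SWpY - \SWY\|_{\ell^\infty(\mathcal{K})} \le \varepsilon$ eventually. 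As $\varepsilon$ was arbitrary, $\|\SWpY - \SWY\|_{\ell^\infty(\mathcal{K})} \to 0$ on $\Omega_0$, which is the claim.

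The main obstacle is precisely this last passage: $\mathcal{K}$ is uncountable, so one cannot simply intersect the almost-sure pointwise events over all of $\mathcal{K}$ without losing the probability-one guarantee. The resolution is that equicontinuity --- here the \emph{uniform} (in $p$ and in $\theta$) Lipschitz bound from \ref{prop:w_unif_locLip} --- reduces the uncountable family to a countable dense skeleton, on which countable intersection is harmless. This is where the strength of \ref{prop:w_unif_locLip}, namely that $\kappa_r(X)$ depends on neither $\theta$ nor the random directions, is essential; a merely pointwise-in-$\theta$ Lipschitz estimate would not close the argument.
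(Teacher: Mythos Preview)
Your proof is correct and follows essentially the same route as the paper: both use the uniform-in-$\theta$ local Lipschitz bound from \ref{prop:w_unif_locLip} to obtain equicontinuity of $(\SWpY)_p$ on $\mathcal{K}$, restrict the almost-sure pointwise convergence \ref{eqn:Sp_pwc} to a countable dense subset, and then upgrade to uniform convergence via the standard density-plus-equicontinuity argument. The only cosmetic difference is that you spell out the $\varepsilon/3$ net argument explicitly, whereas the paper cites it as a classical result.
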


\begin{proof}
	We shall temporarily write $\SWpY(Y) = \SWpY(Y; \Theta)$ to illustrate the dependency on the random variable $\Theta := (\theta_i)_{i \in \N^*}$ on a probabilistic space $(\Omega, \mathcal{A}, \P)$ with values in $(\SS^{d-1})^\N$.
	By point-wise almost-sure convergence, for any fixed $Y \in \R^{\npoints\times d}$, there exists a $\P$-null set $\mathcal{N}_Y$ such that for every $\omega \in \Omega \setminus \mathcal{N}_Y$, the deterministic real number $\SWpY(Y; \Theta(\omega))$ converges to $\SWY(Y)$.
	Let $\mathcal{D} := \mathcal{K}  \cap \Q^{\npoints\times d}$, which is dense in $\mathcal{K} $ and countable. Let $\mathcal{N} := \Reu{Y \in \mathcal{D}}{}\mathcal{N}_Y$: $\mathcal{N}$ is $\P$-null as a countable union of $\P$-null sets.

	Fixing $\omega \in \Omega \setminus \mathcal{N}$, we have $\forall Y \in \mathcal{D}, \quad \SWpY(Y; \Theta(\omega)) \xrightarrow[p\rightarrow+\infty]{\quad} \SWY(Y)$, thus point-wise convergence on $\mathcal{D}$ of the (now) deterministic function $\SWpY(\cdot; \Theta(\omega))$ to $\SWY$. Now, a consequence of \ref{prop:w_unif_locLip} is that the family of functions $\left(Y\mapsto \SWpY( Y ; \Theta')\right)_{\Theta' \in (\SS^{d-1})^p}$ is equi-continuous on any compact (thus on $\mathcal{K}$). As a consequence, the point-wise convergence on $\mathcal{D}$ implies the uniform convergence of $\SWpY(\cdot; \Theta(\omega))$ to $\SWY$ on $\oll{\mathcal{D}} = \mathcal{K}$ (a detailed presentation of this classic result can be found in \cite{levy2012elements}, Proposition 3.2). This holds for any event $\omega \in \Omega \setminus \mathcal{N}$, with $\P(\Omega \setminus \mathcal{N}) = 1$, thus the uniform convergence is almost-sure.
\end{proof}

\blue{To complement this uniform almost-sure convergence, we prove a uniform Central Limit result on the error process $\sqrt{p}(\SWpY - \SWY)$ on a fixed compact set $\mathcal{K}$. This result provides insight on the law of the approximation error, uniformly with respect to the position $Y \in \mathcal{K}$.}

\blue{\begin{theorem}\label{thm:TCL_SW}
	Let $\mathcal{K} \subset \R^{\npoints\times d}$ be a compact and non-empty set. On this domain, we have the following uniform Central Limit convergence in distribution of the approximation error of the random process $\SWpY$:
	\begin{equation}\label{eqn:SW_TCL}
		\sqrt{p}(\SWpY - \SWY) \xrightarrow[p \longrightarrow +\infty]{\mathcal{L},\; \ell^{\infty}(\mathcal{K})} G,
	\end{equation}
	where the convergence is in law in the sense of $\ell^{\infty}(\mathcal{K})$, the space of bounded functions $z: \mathcal{K} \longrightarrow \R$ equipped with the uniform norm. The limit process $G$ is the centred Gaussian process on $\mathcal{K}$ of covariance 
	$$\C(G)[Y, Y'] = \Int{\SS^{d-1}}{}w_\theta(Y)w_\theta(Y')\dd\bbsigma(\theta) - \SWY(Y)\SWY(Y').$$
	This result implies the convergence in law of the uniform error
	\begin{equation}\label{eqn:SW_TCL_norm_inf}
		\sqrt{p}\|\SWpY - \SWY\|_{\ell^{\infty}(\mathcal{K})} \xrightarrow[p\longrightarrow+\infty]{\mathcal{L}} \|G\|_{\ell^{\infty}(\mathcal{K})}.
	\end{equation}
\end{theorem}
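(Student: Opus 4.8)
The plan is to recognise $\sqrt{p}(\SWpY - \SWY)$ as an empirical process and to prove that the associated class of functions is $\bbsigma$-Donsker, from which \ref{eqn:SW_TCL} follows at once. Recall from \ref{eq:E_expectation_notation} that $\SWpY(Y) = \bbsigma_p(f_Y)$ and $\SWY(Y) = \bbsigma(f_Y)$, where $f_Y := (\theta \mapsto w_\theta(Y))$ and $\bbsigma_p = \frac1p\sum_{i=1}^p \delta_{\theta_i}$ with $\theta_1, \theta_2, \ldots$ i.i.d.\ of law $\bbsigma$. Writing $\G_p := \sqrt p(\bbsigma_p - \bbsigma)$ for the empirical process, we have $\sqrt p(\SWpY(Y) - \SWY(Y)) = \G_p f_Y$ for every $Y \in \mathcal{K}$, so that \ref{eqn:SW_TCL} is precisely the assertion that the class $\mathcal{F} := \{ f_Y : Y \in \mathcal{K}\}$ is $\bbsigma$-Donsker, with limit the corresponding $\bbsigma$-Brownian bridge indexed by $\mathcal{F}$. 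I would follow the standard two-step route (see, e.g., van der Vaart--Wellner): first establish convergence of the finite-dimensional marginals to the right Gaussian law, then prove asymptotic tightness (equicontinuity) of $\G_p$ on $\mathcal{K}$.

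For the finite-dimensional marginals, observe that since $(Y, \theta) \mapsto w_\theta(Y)$ is continuous and both $\mathcal{K}$ and $\SS^{d-1}$ are compact, the envelope $F(\theta) := \sup_{Y \in \mathcal{K}} |f_Y(\theta)|$ is bounded, so $F \in L^2(\bbsigma)$ and every $f_Y$ has finite $\bbsigma$-variance. For any finite family $Y_1, \ldots, Y_k \in \mathcal{K}$, the vectors $(f_{Y_1}(\theta_i), \ldots, f_{Y_k}(\theta_i))$, $i = 1, \ldots, p$, are i.i.d.\ with finite covariance, so the multivariate central limit theorem gives
\begin{equation*}
\big(\G_p f_{Y_1}, \ldots, \G_p f_{Y_k}\big) \xrightarrow[p\to+\infty]{\mathcal{L}} \mathcal{N}(0, \Sigma), \quad \Sigma_{j,l} = \Int{\SS^{d-1}}{} w_\theta(Y_j) w_\theta(Y_l)\dd\bbsigma(\theta) - \SWY(Y_j)\SWY(Y_l),
\end{equation*}
which is exactly the finite-dimensional marginal of the centred Gaussian process $G$ with covariance $\C(G)$.

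The main work lies in the tightness step, where I would exploit \ref{prop:w_unif_locLip}. Since $\mathcal{K}$ is bounded, pick $r > 0$ with $\mathcal{K} \subset B_{\|\cdot\|_{\infty,2}}(0, r)$; then \ref{prop:w_unif_locLip} provides a constant $L := \kappa_r(0)$, \emph{independent of $\theta$}, such that $|f_Y(\theta) - f_{Y'}(\theta)| \le L\|Y - Y'\|_{\infty, 2}$ for all $Y, Y' \in \mathcal{K}$ and all $\theta \in \SS^{d-1}$. Thus $\mathcal{F}$ is Lipschitz-parametrised by the bounded, finite-dimensional index set $\mathcal{K} \subset \R^{D}$, with constant Lipschitz coefficient $m \equiv L \in L^2(\bbsigma)$. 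The classical bracketing bound for such classes then yields $N_{[]}(\varepsilon, \mathcal{F}, L^2(\bbsigma)) \lesssim N(\varepsilon/(2L), \mathcal{K}, \|\cdot\|_{\infty,2}) \lesssim (C/\varepsilon)^{D}$, so the bracketing-entropy integral $\int_0^1 \sqrt{\log N_{[]}(\varepsilon, \mathcal{F}, L^2(\bbsigma))}\,\dd\varepsilon$ is finite. The bracketing-entropy Donsker theorem then shows $\mathcal{F}$ is $\bbsigma$-Donsker, i.e.\ $\G_p \rightsquigarrow G$ in $\ell^\infty(\mathcal{K})$, which is \ref{eqn:SW_TCL}; the limit $G$ is a tight centred Gaussian process with covariance $\C(G)$ admitting a version with uniformly continuous paths. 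The genuine difficulty is precisely this equicontinuity control, but the uniform-in-$\theta$ Lipschitz estimate of \ref{prop:w_unif_locLip} together with the finite dimension of $\mathcal{K}$ makes the entropy bound immediate; the only remaining care concerns the usual measurability and outer-probability bookkeeping of empirical process theory. Finally, \ref{eqn:SW_TCL_norm_inf} follows from \ref{eqn:SW_TCL} by the continuous mapping theorem, since $z \mapsto \|z\|_{\ell^\infty(\mathcal{K})}$ is continuous on $\ell^\infty(\mathcal{K})$, so that $\sqrt p\,\|\SWpY - \SWY\|_{\ell^\infty(\mathcal{K})} = \|\G_p\|_{\ell^\infty(\mathcal{K})} \rightsquigarrow \|G\|_{\ell^\infty(\mathcal{K})}$.
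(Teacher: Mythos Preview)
Your proposal is correct and follows essentially the same route as the paper: both recognise $\sqrt{p}(\SWpY-\SWY)$ as the empirical process $\G_p$ indexed by $\mathcal{F}=\{f_Y:Y\in\mathcal{K}\}$, use the uniform-in-$\theta$ Lipschitz bound from \ref{prop:w_unif_locLip} to conclude that $\mathcal{F}$ is $\bbsigma$-Donsker (the paper invokes van der Vaart, Example~19.7 directly, whereas you spell out the underlying bracketing-entropy argument), and then apply the continuous mapping theorem for \ref{eqn:SW_TCL_norm_inf}. The only cosmetic difference is that the paper passes through $\ell^\infty(\mathcal{F})$ and then transfers to $\ell^\infty(\mathcal{K})$ via an isometry, while you identify the two index sets from the outset.
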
}

\blue{We provide the proof of \ref{thm:TCL_SW} in \ref{sec:proof_donsker}, along with a brief presentation of the Donsker class arguments at hand.

\begin{remark}
	Our Central Limit result from \ref{thm:TCL_SW} allows one to build (uniform) confidence intervals for the approximation $\SWpY \approx \SWY$ on any compact, but is of limited practical interest due to the complexity of estimating the Gaussian process $G$. Nevertheless, such confidence intervals provide additional theoretical insight on the Monte-Carlo approximation of the discrete SW distance.
\end{remark}}

\blue{Our result \ref{eqn:SW_TCL} complements a result by Xi and Niles-Weed \cite{xi2022distributional}, which shows the following distributional convergence of a related process which is a function of $\theta$: 
$$\mathbb{H}_\npoints := \lbrace \sqrt{\npoints}\left(\W_q^q(P_\theta\#\hat\mu_\npoints, P_\theta\#\hat\nu_\npoints) - \W_q^q(P_\theta\#\mu, P_\theta\#\nu)\right), \theta \in \SS^{d-1} \rbrace \xrightarrow[\npoints\longrightarrow+\infty]{\mathcal{L},\; \ell^{\infty}(\SS^{d-1})} \mathbb{H},$$
where $\mu, \nu$ are compactly supported probability measures, and $\hat\mu_\npoints, \hat\nu_\npoints$ are discrete empirical versions supported on $\npoints$ samples of respective laws $\mu, \nu$, and $\mathbb{H}$ is a centred Gaussian process on $\SS^{d-1}$.

The distributional convergence in \ref{eqn:SW_TCL} also complements a (quantified) convergence in probability by Xu and Huang \cite{xu2022central}. For $q\geq 1$ and $\mu,\nu \in \mathcal{P}_q(\R^d)$, let $M_q(\mu) := (\int \|x\|^q\dd\mu)^{1/q}$ and $L:=q\W_q^{q-1}(\mu, \nu)(M_q(\mu) + M_q(\nu))$. In \cite{xu2022central}, they prove (Proposition 4) that for any $\varepsilon, \delta > 0$, if $p \geq \frac{2L^2}{(d-1)\varepsilon^2}\log(\frac{2}{\delta})$, then
\begin{equation}\label{eqn:xu_huang_probability}
	\P(|\widehat{\SW}_{q,p}^q(\mu, \nu) - \SW_{q}^q(\mu, \nu)|\geq \varepsilon) \leq \delta,
\end{equation}
where $\widehat{\SW}_{q,p}^q(\mu, \nu)$ is the Monte-Carlo approximation of $\SW_{q}^q(\mu, \nu)$ with $p$ projections. Their result is of a different nature, since it deals with general measures in $\mu,\nu \in \mathcal{P}_q(\R^d)$ and does not study the process associated to moving the support of one of the measures. Point-wise, \ref{eqn:xu_huang_probability} from \cite{xu2022central} is a more general result than \ref{eqn:SW_TCL_norm_inf}, but the strength of our result comes from the study of the \textit{process} $\SWpY$, for which \ref{eqn:xu_huang_probability} is not informative in distribution, since it is a point-wise result. Furthermore, \ref{eqn:xu_huang_probability} is not tailored to our almost-sure uniform convergence case \ref{thm:Sp_cvu_S}.}

\subsection{Illustration in a simplified case}\label{sec:L2}

Let us illustrate $\SWY$ in a simple case, that was briefly presented in Bonneel et al.~\cite{bonneel2015sliced}, in order to grasp the difficulties at hand. This example is interesting for understanding the difficulty of performing computations with $\SWY$ and $\SWpY$. Let $z_1 = (0,-1)^T$ and $z_2 = (0,1)^T$. Instead of computing $\SWY(Y)$ for any $Y \in \R^{2\times 2}$, we simplify by assuming $Y = (y, -y)^T = (y_1, y_2)^T$. We will assume further $y\neq 0$ and write $y =(u,v)^T$. The interested reader may seek the computations in \ref{sec:L2_E}. With these notations, we can show that
\begin{equation}
	\SWY(Y) = \SW_2^2(\gamma_Y, \gamma_Z)=\frac{u^2+v^2}{2} + \frac{1}{2} - \frac{2}{\pi}\left(|u| + |v|\Arctan\left|\frac{v}{u}\right|\right).
\end{equation}
For $\W_2^2$, one may show (see \ref{sec:L2_W} for the computations) that $\W_2^2(\gamma_Y, \gamma_Z) = u^2 + (|v|-1)^2$ in this setting.
We compare $\SWY$ and $\W_2^2$ in ~\ref{fig:comp_sym}.
\begin{figure}
	\centering
	\begin{subfigure}[c]{0.4\textwidth}
		\centering
		\includegraphics[width=\linewidth]{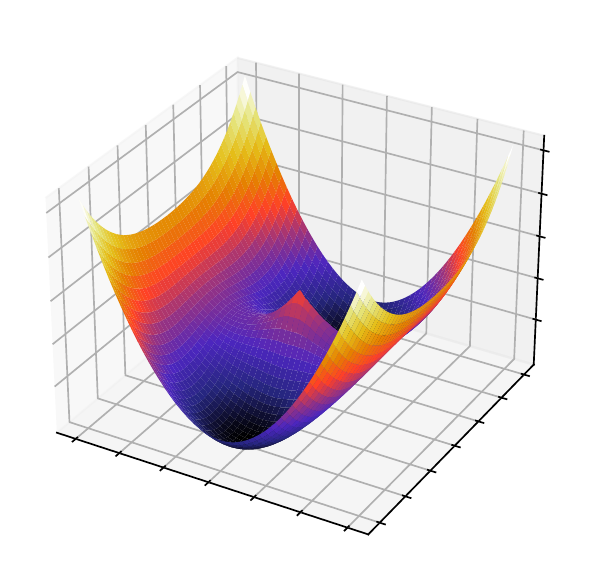}
		\subcaption{$Y \longmapsto\SWY(Y)$}
	\end{subfigure}
	\begin{subfigure}[c]{0.4\textwidth}
		\centering
		\includegraphics[width=\linewidth]{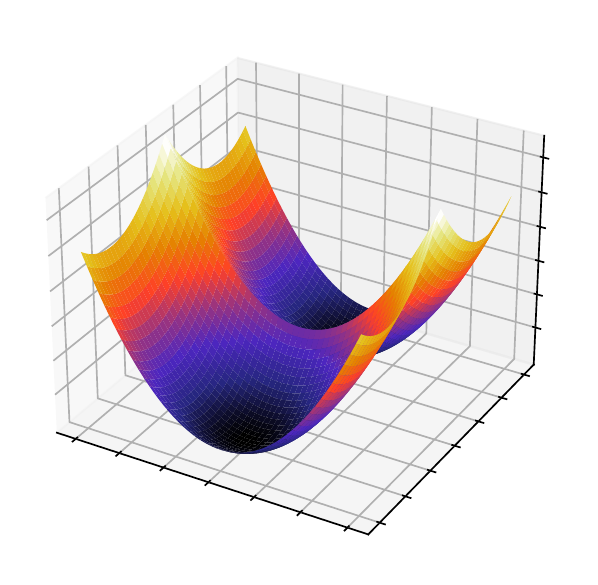}
		\subcaption{$Y \longmapsto\W_2^2(\gamma_Y, \gamma_Z) $}
	\end{subfigure}
	\caption{Comparison between Sliced Wasserstein (a) and Wasserstein (b) landscapes for 2-point discrete measures $Y = (y, -y)^T $ and $Z = (z_1,z_2)^T$ with $z_1 = (0,-1)^T$ and $z_2 = (0,1)^T$.}
	\label{fig:comp_sym}
\end{figure}

Notice differences in regularity. $\SWY$ is smooth on the open set $\mathcal{U}$ (defined in \ref{eqn:U}) of the $Y\in \R^{\npoints \times d}$ with distinct points (this is known in general,~\cite{bonneel2015sliced}), but is not differentiable anywhere in $\mathcal{U}^c$. Here this is clear at $(0, 0)$. Furthermore, $\SWY$ presents two saddle points, $(\pm\frac{2}{\pi}, 0)$. In \ref{sec:E_crit}, we shall study the critical points of $\SWY$ in full generality. Finally, $\W_2^2$ presents the typical landscape of the minimum of two quadratics.

We now move to computing $\SWpY$ in this setting. In the case $\npoints=2$, a significant simplification occurs since $\mathfrak{S}_2 = \lbrace I, (2, 1)\rbrace$, and we express a simple formula for the cells in the Appendix, see \ref{sec:L2_Ep}. We illustrate the cell structure in~\ref{fig:SWpY_sym_p3}.

\begin{figure}
	\centering
	\begin{tabular}{cccc}
		\begin{subfigure}[c]{0.22\textwidth}
			\centering
			\includegraphics[width=\linewidth]{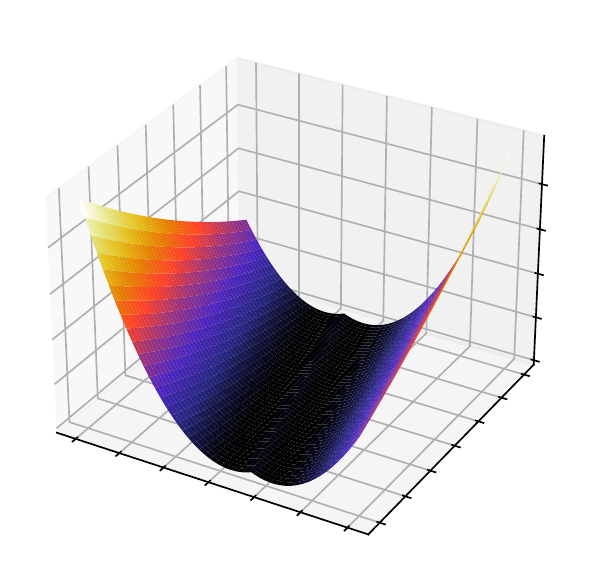}
		\end{subfigure}
		&\begin{subfigure}[c]{0.22\textwidth}
			\centering
			\includegraphics[width=\linewidth]{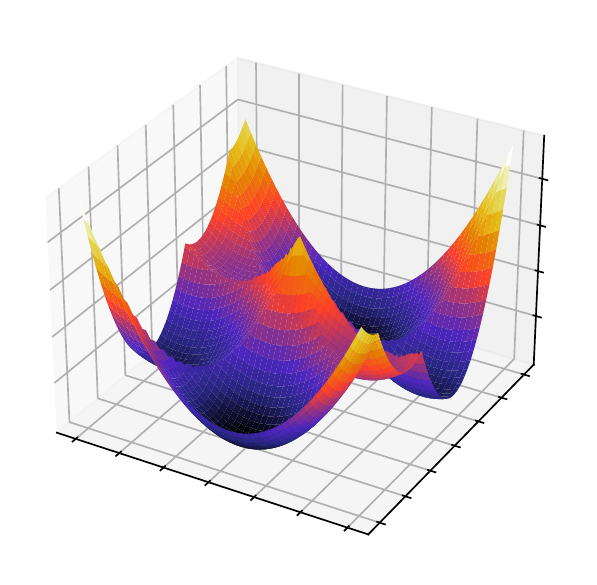}
		\end{subfigure}
		&\begin{subfigure}[c]{0.22\textwidth}
			\centering
			\includegraphics[width=\linewidth]{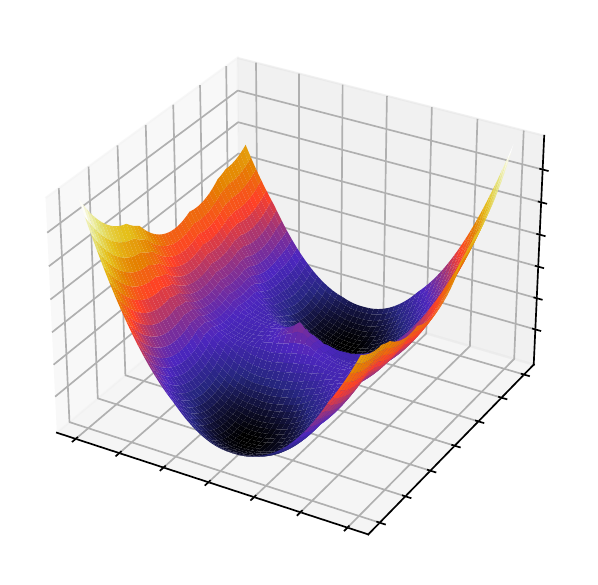}
		\end{subfigure}
		&\begin{subfigure}[c]{0.22\textwidth}
			\centering
			\includegraphics[width=\linewidth]{figures/SW_2points_sym_CMRmap.pdf}
		\end{subfigure}\\
		\begin{subfigure}[c]{0.15\textwidth}
			\centering
			\includegraphics[width=\linewidth]{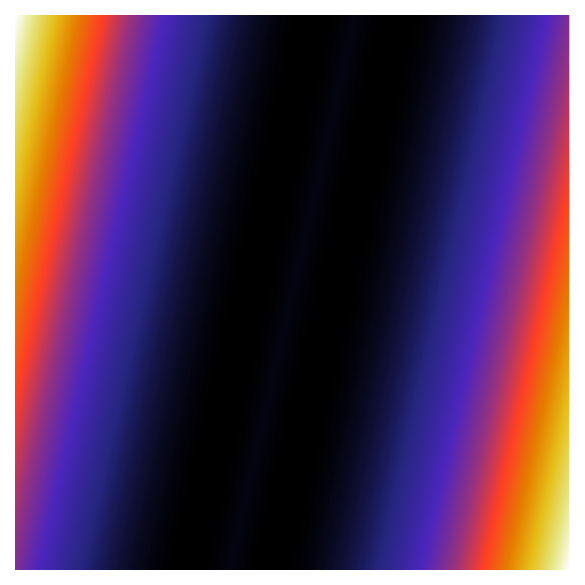}
			\subcaption{$\SWpY,\; p=1$}
		\end{subfigure}
		&\begin{subfigure}[c]{0.15\textwidth}
			\centering
			\includegraphics[width=\linewidth]{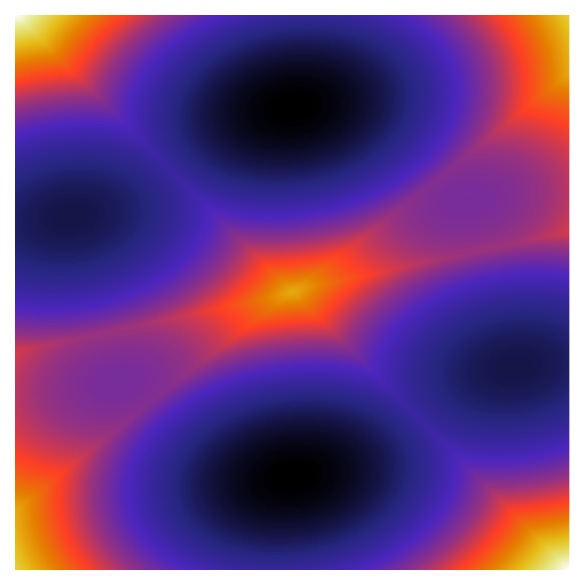}
			\subcaption{$\SWpY,\; p=3$}
		\end{subfigure}
		&\begin{subfigure}[c]{0.15\textwidth}
			\centering
			\includegraphics[width=\linewidth]{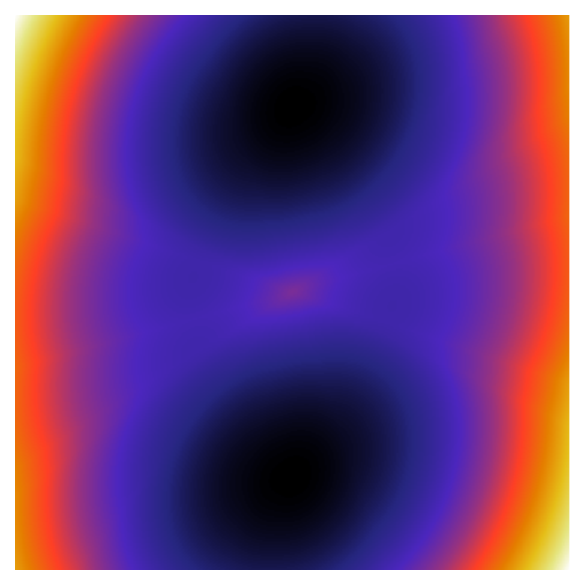}
			\subcaption{$\SWpY,\; p=10$}
		\end{subfigure}
		&\begin{subfigure}[c]{0.15\textwidth}
			\centering
			\includegraphics[width=\linewidth]{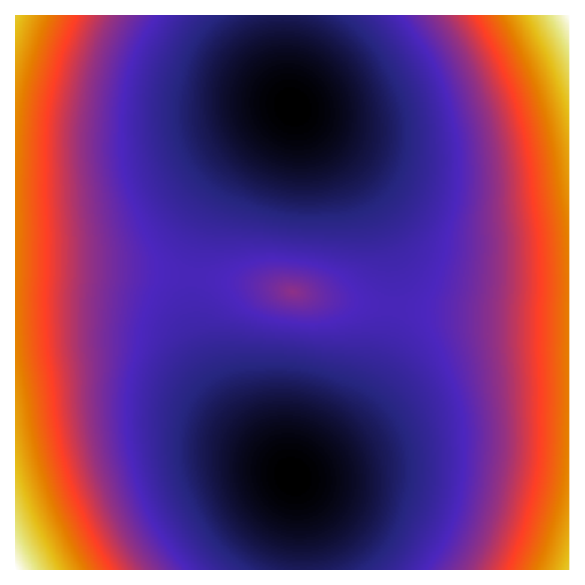}
			\subcaption{$\SWY$}
		\end{subfigure}
	\end{tabular}
	\caption{The landscape $\SWpY$ approaches $\SWY$ as $p$ increases, but introduces numerous strict local optima. Notice that when $p$ is too small ($p=1\leq d$ in particular), $\SWpY$ even introduces other global optima.}
	\label{fig:SWpY_sym_p3}
\end{figure}

Notice that as $p$ increases, the number of new strict local optima also increases, however their associated cells become very small, thus one may hope that the probability of ending up in a strict local optimum would decrease as $p$ increases. Specifically, in the heatmap visualisation, one may notice 6 large cells for $p=3$, and for $p=10$, two large cells corresponding to the global optima, and 8 small cells which may present local optima. This observation suggests that as $p\longrightarrow +\infty$, the total size of cells containing local optima decreases, and thus the probability of a numerical scheme converging to a local optimum decreases as well. Moreover, it is clear for the landscape $\SWpY$ with $p=3$ that the critical points (points of differentiability with a null gradient) are exactly the minima of the cell quadratics. Remark that a cell may not contain the minimum of its quadratic, which is why we will refer to cells containing their minimum as "stable" (as is the case for all cells in $p=3$ illustration, but seemingly not for $p=10$). 

As is suggested by \ref{fig:SWpY_sym_p3}, even with a large number of projections $p$ compared to the dimension $d$, the presence of strict local optima may prevent numerical solvers from converging to the global optimum $\gamma_Y = \gamma_Z$. This practical concern motivates the study of the landscapes $\SWY$ and $\SWpY$, which is the topic of \ref{sec:crit}.

	\section{Properties of the Optimisation Landscapes of \texorpdfstring{$\SWY$}{S} and \texorpdfstring{$\SWpY$}{Sp}}\label{sec:crit}

The goal of this section is to study the respective landscapes of $\SWY$ and $\SWpY$, their critical points and the links between them. 

\subsection{Optimising \texorpdfstring{$\SWY$}{S}}

\subsubsection{Global optima of \texorpdfstring{$\SWY$}{S}}

As its name suggests, the SW distance is indeed a distance on $\mathcal{P}_2(\R^d)$ (this result can be proven in the same manner for the $q$-SW distances, for $q \geq 1$).

\begin{prop}[Bonnotte~\cite{bonnotte}, Theorem 5.1.2]\label{prop:SW_distance} SW is a distance on $\mathcal{P}_2(\R^d)$.
\end{prop}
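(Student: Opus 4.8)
The plan is to verify the four axioms of a metric for $\SW_2 := (\SW_2^2)^{1/2}$ on $\mathcal{P}_2(\R^d)$, deducing nonnegativity and symmetry immediately from the corresponding properties of the one-dimensional Wasserstein distance and concentrating the effort on the triangle inequality and the separation axiom. As a preliminary step I would record that $\SW_2(\mu,\nu)$ is well defined and finite: for $\mu,\nu\in\mathcal{P}_2(\R^d)$ the projected measures $P_\theta\#\mu$ and $P_\theta\#\nu$ have second moments bounded by those of $\mu$ and $\nu$ (since $|P_\theta x|\le\|x\|$ for $\theta\in\SS^{d-1}$), so the integrand $\theta\mapsto\W_2^2(P_\theta\#\mu,P_\theta\#\nu)$ is bounded and measurable on $\SS^{d-1}$, hence $\bbsigma$-integrable. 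Nonnegativity then follows since the integrand is nonnegative, and symmetry since $\W_2$ is symmetric on the line.

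The key observation for the triangle inequality is that $\SW_2$ is exactly the $L^2(\bbsigma)$-norm of the function $f_{\mu,\nu}:\theta\mapsto\W_2(P_\theta\#\mu,P_\theta\#\nu)$. Given $\mu,\nu,\rho\in\mathcal{P}_2(\R^d)$, I would first use that $\W_2$ is a genuine distance on $\mathcal{P}_2(\R)$ to obtain the pointwise bound $f_{\mu,\rho}(\theta)\le f_{\mu,\nu}(\theta)+f_{\nu,\rho}(\theta)$ for every $\theta$, and then apply Minkowski's inequality in $L^2(\bbsigma)$:
$$\|f_{\mu,\rho}\|_{L^2(\bbsigma)}\le\|f_{\mu,\nu}+f_{\nu,\rho}\|_{L^2(\bbsigma)}\le\|f_{\mu,\nu}\|_{L^2(\bbsigma)}+\|f_{\nu,\rho}\|_{L^2(\bbsigma)},$$
which is precisely $\SW_2(\mu,\rho)\le\SW_2(\mu,\nu)+\SW_2(\nu,\rho)$. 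It is worth emphasising that taking the square root (so that $\SW_2$ is the $L^2$ norm rather than its square) is exactly what makes this Minkowski argument go through.

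It remains to show $\SW_2(\mu,\nu)=0\iff\mu=\nu$. The implication $\mu=\nu\Rightarrow\SW_2(\mu,\nu)=0$ is immediate. For the converse, $\SW_2(\mu,\nu)=0$ forces $\int_{\SS^{d-1}}\W_2^2(P_\theta\#\mu,P_\theta\#\nu)\,\dd\bbsigma(\theta)=0$; since the integrand is nonnegative it vanishes for $\bbsigma$-a.e.\ $\theta$, and because $\W_2$ separates points on $\mathcal{P}_2(\R)$ we obtain $P_\theta\#\mu=P_\theta\#\nu$ for $\bbsigma$-a.e.\ $\theta$. The hard part is then to upgrade "equality of almost all one-dimensional projections" to "equality of the measures", and I would carry this out via characteristic functions: for $t\in\R$ the characteristic function of $P_\theta\#\mu$ at $t$ equals $\hat\mu(t\theta)$, so equality of projections yields $\hat\mu(t\theta)=\hat\nu(t\theta)$ for all $t\in\R$ and $\bbsigma$-a.e.\ $\theta$. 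Since a full-$\bbsigma$-measure subset of $\SS^{d-1}$ is dense (the uniform measure has full support) and characteristic functions are continuous, a limiting argument extends the identity to every $\theta$, hence $\hat\mu=\hat\nu$ on all of $\R^d$; injectivity of the Fourier transform on finite measures then gives $\mu=\nu$. This Cram\'er--Wold / Fourier-injectivity step is the one genuine obstacle, the remaining axioms being formal once the $L^2(\bbsigma)$-norm structure of $\SW_2$ is made explicit.
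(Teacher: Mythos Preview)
Your proof is correct and follows the standard argument; in fact the paper does not supply its own proof of this proposition but simply cites Bonnotte~\cite{bonnotte}, Theorem 5.1.2, whose proof proceeds exactly along the lines you sketch (trivial nonnegativity and symmetry, Minkowski in $L^2(\bbsigma)$ for the triangle inequality, and the Cram\'er--Wold / characteristic-function argument for separation). There is nothing to correct.
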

As a consequence, the global optima of $\SWY$ are exactly the points $Y^*$ such that $\gamma_{Y^*} = \gamma_{Z}$, or said otherwise the points such that $(y_1^*, \cdots, y_\npoints^*)$ is a permutation of $(z_1, \cdots, z_\npoints)$.

\subsubsection{Critical points of \texorpdfstring{$\SWY$}{SWY}}\label{sec:E_crit}

A first step in studying the landscape $\SWY$ is to determine its critical points, which we define as the set of points $Y$ where $\SWY$ is differentiable and $\nabla \SWY(Y) = 0$.
Thanks to \ref{thm:bonneel_diff}, these critical points can be shown to satisfy a fixed point equation. 

\begin{corollary}[Equation characterising the critical points of $\SWY$]\label{cor:crit_points_S}\
  Let $Y \in \mathcal{U}$ (defined in \ref{eqn:U}). For $(k,l) \in \llbracket 1, \npoints \rrbracket$, define $\Theta_{k,l}^{Y,Z} := \left\lbrace \theta \in \SS^{d-1}\ |\ \sort{Z}{\theta} \circ (\sort{Y}{\theta})^{-1} (k) = l \right\rbrace \subset \SS^{d-1}$ and $S_{k,l}^{Y,Z} := d\Int{\Theta_{k,l}^{Y,Z}}{}\theta \theta^T \dd \bbsigma \in S_d^+(\R)$.
  $Y$ is a critical point of $\SWY$ iif $Y$ satisfies
	\begin{equation}\label{eqn:crit_fixed_point}
		\forall k \in \llbracket 1, \npoints \rrbracket, \; y_k = \Sum{l=1}{\npoints}S_{k,l}^{Y,Z}z_l.
	\end{equation}
\end{corollary}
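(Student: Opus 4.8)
The plan is to read off the fixed-point equation directly from the gradient formula in \ref{thm:bonneel_diff}. Since $Y \in \mathcal{U}$, that theorem guarantees $\SWY$ is $\mathcal{C}^1$ at $Y$, so $Y$ being a critical point is equivalent to the vanishing of every partial gradient $\partial \SWY/\partial y_k$. First I would write this condition out using \ref{eqn:gradS}: dropping the harmless factor $2/\npoints$, for each $k$ the point $Y$ is critical if and only if
\[
\Int{\SS^{d-1}}{}\theta\theta^T\Big(y_k - z_{\sort{Z}{\theta}\circ(\sort{Y}{\theta})^{-1}(k)}\Big)\dd\bbsigma(\theta) = 0,
\]
and then I would split the integral into its $y_k$-part and its $z$-part and evaluate each separately.

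For the $y_k$-part, the key computation is $\Int{\SS^{d-1}}{}\theta\theta^T\dd\bbsigma(\theta) = \frac{1}{d}I_d$. I would justify this by rotational invariance of $\bbsigma$: the matrix $M := \Int{\SS^{d-1}}{}\theta\theta^T\dd\bbsigma$ satisfies $RMR^T = M$ for every rotation $R$ (via the change of variable $\theta \mapsto R\theta$), so $M$ commutes with the standard representation of $SO(d)$ on $\R^d$; irreducibility of this representation forces $M = cI_d$, and taking the trace gives $c\,d = \Int{\SS^{d-1}}{}\|\theta\|_2^2\dd\bbsigma = 1$, hence $c = 1/d$. Thus the first term is $\frac{1}{d}y_k$. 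For the $z$-part, I would partition $\SS^{d-1}$ (up to a $\bbsigma$-null set) into the sets $\Theta_{k,l}^{Y,Z}$ indexed by $l \in \llbracket 1,\npoints\rrbracket$: for fixed $k$, the matched index $\sort{Z}{\theta}\circ(\sort{Y}{\theta})^{-1}(k)$ equals exactly one value $l$ for $\bbsigma$-almost every $\theta$, and on $\Theta_{k,l}^{Y,Z}$ the integrand point is the constant $z_l$. Pulling $z_l$ out of each piece gives
\[
\Int{\SS^{d-1}}{}\theta\theta^T z_{\sort{Z}{\theta}\circ(\sort{Y}{\theta})^{-1}(k)}\dd\bbsigma(\theta) = \Sum{l=1}{\npoints}\left(\Int{\Theta_{k,l}^{Y,Z}}{}\theta\theta^T\dd\bbsigma\right)z_l = \frac{1}{d}\Sum{l=1}{\npoints}S_{k,l}^{Y,Z}z_l,
\]
by the very definition of $S_{k,l}^{Y,Z}$.

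Combining the two terms, the critical-point condition becomes $\frac{1}{d}y_k = \frac{1}{d}\sum_{l}S_{k,l}^{Y,Z}z_l$ for every $k$, and cancelling the factor $1/d$ yields exactly \ref{eqn:crit_fixed_point}. Every step is an equality, so both implications are obtained simultaneously. As a consistency check one notes that, because the $\Theta_{k,l}^{Y,Z}$ partition the sphere, $\sum_l S_{k,l}^{Y,Z} = d\Int{\SS^{d-1}}{}\theta\theta^T\dd\bbsigma = I_d$, so each line of \ref{eqn:crit_fixed_point} presents $y_k$ as a matrix-weighted, identity-summing combination of the $z_l$ — a genuine fixed-point relation. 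I expect the only delicate point to be the measure-theoretic one: checking that the $\Theta_{k,l}^{Y,Z}$ are measurable and cover $\SS^{d-1}$ up to a $\bbsigma$-null set. This rests on the fact that for $Y \in \mathcal{U}$ the permutation $\sort{Y}{\theta}$ is uniquely defined for $\bbsigma$-almost every $\theta$ (ties occur only for $\theta \in (y_i-y_j)^\perp$, a finite union of great subspheres, hence $\bbsigma$-negligible), and similarly for $\sort{Z}{\theta}$, the matched point $z_l$ remaining well-defined even when $Z$ has repeated entries.
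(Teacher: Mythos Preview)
Your proof is correct and follows essentially the same approach as the paper's: start from the gradient formula \ref{eqn:gradS}, partition $\SS^{d-1}$ into the sets $\Theta_{k,l}^{Y,Z}$, and use $\int_{\SS^{d-1}}\theta\theta^T\dd\bbsigma = I_d/d$ to read off the fixed-point equation. You simply provide more detail than the paper (the Schur-lemma justification of the covariance identity and the measurability discussion), but the argument is the same.
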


\begin{proof}
	Let $k \in \llbracket 1, \npoints \rrbracket$. We have $\SS^{d-1} = \Reu{l=1}{\npoints}\Theta_{k,l}^{Y, Z}$, where the union is disjoint, therefore one may write
	\begin{align*}
		\cfrac{\partial \SWY}{\partial y_k}(Y) &= \cfrac{2}{\npoints}\Int{\SS^{d-1}}{}\theta \theta^T (y_k- z_{\sort{Z}{\theta} \circ (\sort{Y}{\theta})^{-1}(k)})  \dd \bbsigma(\theta) \\
		&=  \cfrac{2}{\npoints}\Sum{l=1}{\npoints}\Int{\Theta_{k,l}}{}\theta \theta^T (y_k - z_l) \dd \bbsigma(\theta) = \cfrac{2}{d\npoints} y_k - \cfrac{2}{d\npoints}\Sum{l=1}{\npoints}S_{k,l}^{Y,Z}z_l,
	\end{align*}
where we have used $\Int{\SS^{d-1}}{}\theta \theta^T \dd \bbsigma(\theta) = I/d$ in the last equality.
	Equating the partial differential to 0 yields~\ref{eqn:crit_fixed_point}.
\end{proof}
Equation~\ref{eqn:crit_fixed_point} shows that the critical points can be written as combinations of the points $(z_l)$, "weighted" by the normalised conditional covariance matrices $S_{k,l}^{Y,Z} = d\mathbb{E}_{\theta \sim \bbsigma}\left[\mathbbold{1}(\theta \in \Theta_{k,l}^{Y,Z})\theta\theta^T\right]$.
With $\Psi := \app{\mathcal{U}}{\R^{\npoints\times d}}{Y}{\left(\begin{array}{c}
	\Sum{l=1}{\npoints}z_l^TS_{1,l}^{Y,Z} \\
	\vdots \\
	\Sum{l=1}{\npoints}z_l^TS_{\npoints,l}^{Y,Z}
\end{array}\right)}$,
Equation \ref{eqn:crit_fixed_point} writes as a fixed-point equation $Y = \Psi(Y)$.

Further notice that $\Psi$ cannot be properly defined on $\mathcal{U}^c$, for instance if $\npoints=2$, and if $Y = (y,y)$, the two possible sorting choices $\sort{Y}{\theta} \in \lbrace (1, 2), (2, 1)\rbrace$ yield two different values for $\Psi(Y)$ (the first value is the second with the indices exchanged).
We show below that $\Psi$ is continuous on $\mathcal{U}$. Unfortunately, $\Psi$ cannot be extended to the whole space $\R^{\npoints \times d}$, since the restrictions $\Psi|_{C_\config}$ may have distinct limits at the borders of the cells.

\begin{prop}[Regularity of $\Psi$]\label{prop:Fcont}\  $\Psi$ is continuous on $\mathcal{U}$ (defined in \ref{eqn:U}).
\end{prop}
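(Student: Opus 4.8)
The plan is to reduce continuity of $\Psi$ to continuity of each normalised conditional covariance matrix $Y\mapsto S_{k,l}^{Y,Z}$ on $\mathcal{U}$. Since the $z_l$ are fixed and $\Psi(Y)_k = \Sum{l=1}{\npoints} S_{k,l}^{Y,Z}z_l$ is a finite linear combination of these matrices, continuity of the maps $Y\mapsto S_{k,l}^{Y,Z}$ immediately gives continuity of $\Psi$. So I fix $Y_0 \in \mathcal{U}$, take an arbitrary sequence $(Y^{(m)})$ in $\mathcal{U}$ converging to $Y_0$, and write $S_{k,l}^{Y,Z}$ as the integral of $\theta\theta^T$ over $\Theta_{k,l}^{Y,Z}$. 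Using $\|\theta\theta^T\|_F = \|\theta\|_2^2 = 1$ and the triangle inequality for the (Bochner) integral, I obtain
\[
\left\|S_{k,l}^{Y^{(m)},Z} - S_{k,l}^{Y_0,Z}\right\|_F \le d\Int{\SS^{d-1}}{}\left|\mathbbold{1}_{\Theta_{k,l}^{Y^{(m)},Z}}(\theta) - \mathbbold{1}_{\Theta_{k,l}^{Y_0,Z}}(\theta)\right|\dd\bbsigma(\theta) = d\,\bbsigma\!\left(\Theta_{k,l}^{Y^{(m)},Z}\triangle\Theta_{k,l}^{Y_0,Z}\right),
\]
so it suffices to prove that the measure of this symmetric difference tends to $0$.

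The core is an almost-everywhere pointwise convergence of indicators followed by dominated convergence. Set $N := \Reu{i\neq j}{}\{\theta\in\SS^{d-1} : \theta^T(y^0_i - y^0_j)=0\}$ together with the analogous set built from the $z$'s (a $\bbsigma$-null set when the $z_l$ are pairwise distinct; the degenerate case is treated at the end). Because $Y_0 \in \mathcal{U}$, each $y^0_i - y^0_j \neq 0$, so each such set is a proper great subsphere and $\bbsigma(N)=0$. For $\theta \notin N$, the projected coordinates $(\theta^T y^0_i)_i$ are pairwise distinct, hence $\sort{Y_0}{\theta}$ is the unique sorting permutation and the consecutive projected values are separated by a strictly positive gap. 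Since $\theta^T y^{(m)}_i \to \theta^T y^0_i$, for $m$ large enough (depending on $\theta$) the same strict ordering persists, i.e. $\sort{Y^{(m)}}{\theta} = \sort{Y_0}{\theta}$; as $\sort{Z}{\theta}$ does not depend on $Y$, the composition $\sort{Z}{\theta}\circ(\sort{Y^{(m)}}{\theta})^{-1}$ coincides with $\sort{Z}{\theta}\circ(\sort{Y_0}{\theta})^{-1}$, so $\mathbbold{1}_{\Theta_{k,l}^{Y^{(m)},Z}}(\theta) = \mathbbold{1}_{\Theta_{k,l}^{Y_0,Z}}(\theta)$ for all large $m$. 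Thus the integrand converges to $0$ for $\bbsigma$-a.e. $\theta$, and being dominated by the constant $1$, dominated convergence yields $\bbsigma(\Theta_{k,l}^{Y^{(m)},Z}\triangle\Theta_{k,l}^{Y_0,Z})\to 0$. As the sequence was arbitrary, $Y\mapsto S_{k,l}^{Y,Z}$ is continuous at $Y_0$, and hence $\Psi$ is continuous on $\mathcal{U}$.

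The delicate point I expect to be the main obstacle is that the index $m$ beyond which $\sort{Y^{(m)}}{\theta}=\sort{Y_0}{\theta}$ depends on $\theta$ through the size of the gap between consecutive projected values, which degenerates to $0$ as $\theta$ approaches $N$; there is no radius of stability uniform in $\theta$. I sidestep this by fixing one sequence and invoking dominated convergence, which only needs pointwise a.e. convergence of the indicators rather than any uniform control. The sole structural input is $Y_0 \in \mathcal{U}$, which makes the bad directions $N$ a $\bbsigma$-null set; this is precisely why $\Psi$ does not extend continuously beyond $\mathcal{U}$ (for $Y_0 \notin \mathcal{U}$ the set where the sorting is ambiguous may carry positive mass, as the $\npoints=2$, $y_1=y_2$ example illustrates). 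Finally, if $Z$ itself has coincident points, the ambiguity in $\sort{Z}{\theta}$ only permutes indices $l$ sharing the same $z_l$; grouping such indices leaves $\Sum{l=1}{\npoints}S_{k,l}^{Y,Z}z_l$ unchanged, so the argument carries over verbatim after this regrouping.
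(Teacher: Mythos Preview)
Your proof is correct and takes a genuinely different route from the paper. Both arguments begin by reducing to continuity of $Y\mapsto S_{k,l}^{Y,Z}$, but diverge from there. The paper works quantitatively: it introduces, for each $\varepsilon>0$, the set $\Theta_\varepsilon(Y)$ of directions $\theta$ along which \emph{every} perturbation $\delta Y\in B(0,\varepsilon)$ preserves the sorting permutation, splits the integral accordingly, and then invokes two lemmas from Bonneel et al.\ (one stating that $\sort{Y}{\theta}=\sort{Y+\delta Y}{\theta}$ on $\Theta_\varepsilon(Y)$, the other that $\bbsigma(\Theta_\varepsilon(Y)^c)\le C\varepsilon$). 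This yields the explicit bound $\|G(Y+\delta Y)-G(Y)\|_{\mathrm{op}}\le 2C\varepsilon$, i.e.\ a local Lipschitz estimate rather than mere continuity. Your argument is purely qualitative and self-contained: you fix a sequence, observe that for $\bbsigma$-a.e.\ $\theta$ the sorting eventually stabilises (with a threshold depending on $\theta$), and let dominated convergence do the work. What you gain is a shorter, elementary proof with no external lemmas; what you give up is the quantitative modulus. Your explicit acknowledgement that the stabilisation index is non-uniform in $\theta$, and that dominated convergence is precisely what circumvents this, is the right diagnosis.
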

\begin{proof}
  It is sufficient to prove the continuity of $G := Y \rightarrow S_{k,l}^{Y,Z}$ on $ \mathcal{U}$, for  $k,l$ fixed.
	Let $Y \in \mathcal{U}$ and $\varepsilon > 0$. Define 
	\begin{equation}
		\Theta_\varepsilon(Y) := \left\lbrace \theta \in \SS^{d-1} \ |\ \forall \delta Y \in B(0, \varepsilon),\ \left(\theta^T y_{\sort{Y}{\theta}(k)} + \theta^T \delta y_{\sort{\delta Y}{\theta}(k)}\right)_{k \in \llbracket 1, \npoints \rrbracket} \in \mathcal{U}_{\npoints, 1} \right\rbrace,
	\end{equation}
	
	with $\mathcal{U}_{\npoints, 1}$ the open set of lists $(x_1, \cdots, x_\npoints) \in \R^{\npoints}$ with distinct entries.
	By Bonneel et al.~\cite{bonneel2015sliced}, Appendix A, Lemma 2, $\forall \theta \in \Theta_\varepsilon(Y),\; \forall \delta Y \in B(0, \varepsilon),\; \sort{Y}{\theta} = \sort{Y+\delta Y}{\theta}$. Let $\varepsilon$ small enough such that $\forall \delta Y \in B(0,\varepsilon),\; Y+\delta Y \in \mathcal{U}$. Let $\delta Y \in B(0,\varepsilon)$. Separating the integral yields:
	\begin{align*}
		G(Y+\delta Y) &= \Int{\Theta_{k,l}^{Y+\delta Y,Z}}{}\theta\theta^T \dd\bbsigma(\theta) \\ 
		&=\Int{\Theta_{k,l}^{Y+\delta Y,Z} \bigcap \Theta_\varepsilon(Y)}{}\theta\theta^T \dd\bbsigma(\theta) + \Int{\Theta_{k,l}^{Y+\delta Y,Z} \bigcap \Theta_\varepsilon(Y)^c}{}\theta\theta^T \dd\bbsigma(\theta).
	\end{align*}
	Using the fact that $\Theta_{k,l}^{Y+\delta Y,Z} \bigcap \Theta_\varepsilon(Y) = \Theta_{k,l}^{Y,Z} \bigcap \Theta_\varepsilon(Y)$, and denoting $\|\cdot\|_{\mathrm{op}}$ the $\|\cdot\|_2$-induced operator norm on $\R^{d \times d}$, we get
	\begin{align*}
		G(Y+\delta Y) - G(Y) &= \Int{\Theta_{k,l}^{Y+\delta Y,Z} \bigcap \Theta_\varepsilon(Y)^c}{}\theta\theta^T \dd\bbsigma(\theta) - \Int{\Theta_{k,l}^{Y,Z} \bigcap \Theta_\varepsilon(Y)^c}{}\theta\theta^T \dd\bbsigma(\theta), \\
		\|G(Y+\delta Y) - G(Y)\|_{\mathrm{op}} &\leq \Int{\Theta_{k,l}^{Y+\delta Y,Z} \bigcap \Theta_\varepsilon(Y)^c}{}\left\|\theta\theta^T\right\|_{\mathrm{op}} \dd\bbsigma + \Int{\Theta_{k,l}^{Y,Z} \bigcap \Theta_\varepsilon(Y)^c}{}\left\|\theta\theta^T\right\|_{\mathrm{op}} \dd\bbsigma \\
		&\leq 2\Int{ \Theta_\varepsilon(Y)^c}{}1 \dd\bbsigma = 2\bbsigma(\Theta_\varepsilon(Y)^c).
	\end{align*}
	By Bonneel et al.~\cite{bonneel2015sliced}, Appendix A, Lemma 3, there exists a constant $C$ such that $\bbsigma(\Theta_\varepsilon(Y)^c) \leq C \varepsilon$, which  proves the continuity of $G$ on $\mathcal{U}$.
\end{proof}

\subsection{Optimising \texorpdfstring{$\SWpY$}{Sp}}

\subsubsection{Global optima of \texorpdfstring{$\SWpY$}{Sp}}

We saw in~\ref{prop:SW_distance} that SW is a distance. Unfortunately, its discretised version $\SWMC$ is only a pseudo-distance: \blue{non-negativity, symmetry and the triangular inequality still hold, but separation fails.}

For generic measures, a measure-theoretic way of seeing this is through characteristic functions. Given $\mu, \nu \in \mathcal{P}_2(\R^d)$ and $(\theta_1, \cdots, \theta_p) \in (\SS^{d-1})^p$, the condition $\SWMC(\mu, \nu) = 0$ is equivalent to $\forall i \in \llbracket 1, p \rrbracket,\; \forall t \in \R,\; \phi_\mu(t\theta_i) = \phi_\nu(t\theta_i)$, where $\phi_\mu$ (resp. $\phi_\nu$) is the characteristic function of $\mu$ (resp. $\nu$). 
This condition only constrains the characteristic functions on $p$ radial lines, and Bochner or P\'olya-type criteria may be considered to find a characteristic function $\phi$ which equals $\phi_\mu$ on these lines but differs on a non-null set.

The discrete case pertains more to our setting. As shown in~\cite{tanguy2023reconstructing}, for $p$ large enough, almost-sure separation holds. This result can be proven by leveraging the geometrical consequences of the constrains $P_{\theta_i}\#\gamma_Y = P_{\theta_i}\#\gamma_Z$, and determining the a.s. solution set using random affine geometry.

\begin{theorem}[\cite{tanguy2023reconstructing}, Theorem 4]\label{thm:SW_insufficient_projections}
	Let {$\gamma_Z := \Sum{l=1}{\npoints}b_l\delta_{z_l}$}, where the $(z_l)$ are fixed and distinct.	Assuming $\theta_1, \cdots, \theta_p \sim \bbsigma^{\otimes p}$ \blue{and $\npoints \geq 2$}, we have
	\begin{itemize}
		\item if $p \leq d$, there exists $\bbsigma$-a.s. an infinity of measures $\gamma \neq \gamma_Z \in \mathcal{P}_2(\R^d)$ s.t. $\widehat{\mathrm{SW}}_p(\gamma, \gamma_Z) = 0$.

		\item if $p > d$, we have $\bbsigma$-almost surely $\lbrace\gamma_Z\rbrace = \underset{\gamma \in \mathcal{P}_2(\R^d)}{\argmin}\; \widehat{\mathrm{SW}}_p(\gamma, \gamma_Z)$.
	\end{itemize}
\end{theorem}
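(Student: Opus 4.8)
The plan is to analyse the zero set of $\SWMC(\cdot, \gamma_Z)$ directly via the projection constraints. Since $\SWMC(\gamma, \gamma_Z) = \frac{1}{p}\sum_{i=1}^p \W_2^2(P_{\theta_i}\#\gamma, P_{\theta_i}\#\gamma_Z) \geq 0$ as an average of non-negative terms, the condition $\SWMC(\gamma, \gamma_Z) = 0$ is equivalent to $P_{\theta_i}\#\gamma = P_{\theta_i}\#\gamma_Z$ for every $i \in \llbracket 1, p \rrbracket$ (using that $\W_2$ is a genuine distance on $\mathcal{P}_2(\R)$). Each such equality is a constraint forcing the one-dimensional projected measure of $\gamma$ along $\theta_i$ to coincide with the (known) projection of $\gamma_Z$. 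I would first reduce the problem to understanding the solution set $\{\gamma : \forall i,\ P_{\theta_i}\#\gamma = P_{\theta_i}\#\gamma_Z\}$, and then show that its size undergoes a phase transition at $p = d$.

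For the case $p \le d$, the plan is to exhibit explicitly an infinite family of distinct competitor measures. Since $P_{\theta_i}\#\gamma_Z$ only records the positions $(\theta_i^T z_l)_l$ with their weights, one has freedom to move mass in directions orthogonal to $\mathrm{Span}(\theta_1, \dots, \theta_p)$, which is $\bbsigma$-a.s. of dimension $d - p \geq 0$; when $p < d$ this orthogonal complement is non-trivial, and even when $p = d$ one can use permutation/relabelling ambiguities of the matched points to build a measure $\gamma \neq \gamma_Z$ satisfying all projection constraints. Concretely I would take a point of $\gamma_Z$ and translate it along a direction orthogonal to all the $\theta_i$ (for $p<d$), producing uncountably many valid $\gamma$; the genericity (a.s. statement) guarantees the $\theta_i$ are in general position so that such an orthogonal direction exists and the translated configuration still projects correctly.

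For the harder case $p > d$, the plan is to prove that the only measure in $\mathcal{P}_2(\R^d)$ satisfying all $p$ projection constraints is $\gamma_Z$ itself. The natural approach is the random affine geometry argument flagged in the text: the constraints $P_{\theta_i}\#\gamma = P_{\theta_i}\#\gamma_Z$ pin down, for each direction, the multiset of projected coordinates of the support of $\gamma$; I would argue that $\gamma$ must also be a uniform measure on $\npoints$ points (matching weights), and that the point locations are over-determined once $p > d$ generic directions are prescribed. The key step is to show that the intersection of the affine constraints cutting out admissible support points collapses to the singleton $\{z_1, \dots, z_\npoints\}$ almost surely: with $p > d$ generically-oriented projections, the system of coordinate equations $\theta_i^T y = \theta_i^T z_l$ (across the correct matching) has the unique solution $y = z_l$, because $d+1$ or more directions in general position span enough linear information to recover a point in $\R^d$.

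The main obstacle I anticipate is the $p > d$ uniqueness argument, specifically controlling the combinatorial coupling between \emph{which} point of $\gamma$ projects to which projected coordinate of $\gamma_Z$ across the different directions $\theta_i$. A single projection only determines the sorted multiset of coordinates, not the labelling, so one must rule out the possibility that $\gamma$ ``shuffles'' mass consistently across all $p$ directions while remaining distinct from $\gamma_Z$; this is a genericity statement requiring that the event where some non-trivial reshuffling is simultaneously compatible with all $p$ projections has $\bbsigma^{\otimes p}$-measure zero. I would handle this by fixing each candidate matching pattern (a finite combinatorial object), showing the associated linear system forces $\gamma = \gamma_Z$ outside a null set of directions, and then taking a union over the finitely many patterns. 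Since the detailed random-geometry computation is carried out in the cited reference~\cite{tanguy2023reconstructing}, I would invoke that result rather than reproduce it, restricting my own argument to verifying that its hypotheses (distinct $z_l$, $\npoints \geq 2$, i.i.d. directions) are exactly those assumed here.
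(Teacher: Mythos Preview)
The paper does not prove this theorem itself; it is stated as a citation of \cite{tanguy2023reconstructing}, Theorem 4, with only a one-line gloss on the method (``leveraging the geometrical consequences of the constraints $P_{\theta_i}\#\gamma_Y = P_{\theta_i}\#\gamma_Z$, and determining the a.s.\ solution set using random affine geometry''). Your proposal does exactly the same thing---sketch the projection-constraint reduction and the random affine geometry heuristic, then invoke the cited reference---so it is aligned with the paper's treatment; the one loose end in your sketch is the $p=d$ case, where permutation/relabelling alone yields only finitely many competitors and you would still need e.g.\ a convex-combination argument to get infinitely many, but this is precisely the kind of detail you correctly defer to \cite{tanguy2023reconstructing}.
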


With a sufficient amount of projections, $\SWMC(\gamma_Y, \gamma_Z) = 0 \Rightarrow \gamma_Y = \gamma_Z$ (a.s.), hence when minimising $\SWMC(\gamma_Y, \gamma_Z)$ in $Y$, there is some hope of recovering $\gamma_Z$. Unfortunately, this does not guarantee that the (unique) solution will be attained numerically. This practical reality motivates the study of eventual local optima of $\SWpY$.

The computation of the critical points of $\SWpY$ can be done using the cell decomposition of~\ref{sec:cells}. %
We show that the critical points of $\SWpY$ are exactly the local optima of $\SWpY$, and correspond to "stable cells", which is to say cells that contain the minimum of their quadratic.

\subsubsection{Critical points of \texorpdfstring{$\SWpY$}{Sp} and cell stability}

The objective of this section is to confirm theoretically some of the intuitions provided by the illustrations of~\ref{sec:L2}, namely that the critical points of $\SWpY$ correspond to stable cells. Since the union of cells is exactly the differentiability set of $\SWpY$, any critical point $Y$ of $\SWpY$ is necessarily within a cell $\mathcal{C}_\config$. Since $\SWpY$ is quadratic on $\mathcal{C}_\config$, then a critical point $Y$ is the minimum of the cell's quadratic $q_\config$. As a consequence, the critical points of $\SWpY$ are exactly the "stable cell optima", i.e. the $Y \in \mathcal{U}$ (see the definition \ref{eqn:U}) such that $Y = \underset{Y' \in \R^{\npoints \times d}}{\argmin}\ q_{\config(Y)}(Y')$.

The following theorem shows that there are no local optima of $\SWpY$ outside of $\mathcal{U}$, and therefore that the set of local optima of $\SWpY$, the set of critical points of $\SWpY$ and the set of stable cell optima coincide. As previously, we define the set of critical points of $\SWpY$ as the set of points $Y$ where $\SWpY$ is differentiable and $\nabla \SWpY(Y) = 0$.

\begin{theorem}[The local optima of $\SWpY$ are within cells]\label{thm:Sp_crit_optloc_stable}\
 Assume that $(\theta_1, \cdots, \theta_p) \sim \bbsigma^{\otimes p}$, then the following results hold $\bbsigma$-almost surely.
  Let $Y \in \R^{\npoints\times d}$ a local optimum of $\SWpY$, then $\exists \config\in \mathfrak{S}_\npoints^p$ such that $ Y \in \mathcal{C}_\config$.
	As a consequence, we have the equality between the three sets:
	\begin{itemize}
		\item Local optima of $\SWpY$;
		\item Critical points of $\SWpY$;
		\item Stable cell optima: $\left\lbrace Y \in \mathcal{U}\ |\ Y = \underset{Y' \in \R^{\npoints \times d}}{\argmin}\ q_{\config(Y)}(Y') \right\rbrace$.
	\end{itemize}
\end{theorem}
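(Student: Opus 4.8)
The plan is to exploit the representation \ref{eqn:min_quadratics}, $\SWpY=\min_{\config\in\mathfrak{S}_\npoints^p}q_\config$, as a minimum of finitely many quadratics which, by \ref{rem:quadratics}, all share the same Hessian $B$ and differ only through their linear term $a_\config$. The heart of the argument is a first-order condition: if $Y_0$ is a local minimum, then \emph{every} active quadratic (those $\config$ with $q_\config(Y_0)=\SWpY(Y_0)$) has a vanishing gradient at $Y_0$. Indeed, since there are finitely many configurations and $\SWpY$ is continuous, in a neighbourhood of $Y_0$ one has $\SWpY=\min_{\config\in I}q_\config$ over the active set $I$, so the one-sided directional derivative is $\SWpY'(Y_0;h)=\min_{\config\in I}\nabla q_\config(Y_0)^T h$. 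A local minimum forces $\SWpY'(Y_0;h)\ge 0$ for every $h$, hence $\nabla q_\config(Y_0)^T h\ge 0$ for all active $\config$ and all $h$; applying this to $h$ and $-h$ yields $\nabla q_\config(Y_0)=0$ for every active $\config$. Using $q_\config(\mathrm{vec}(y))=\frac{1}{2}\mathrm{vec}(y)^TB\,\mathrm{vec}(y)-a_\config^T\mathrm{vec}(y)+b$, this reads $a_\config=B\,\mathrm{vec}(Y_0)$, so \emph{all} active configurations share the same linear coefficient.

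Next comes the genericity step, which I expect to be the main obstacle. I would show that, $\bbsigma^{\otimes p}$-almost surely, $a_\config\neq a_{\config'}$ for every pair of distinct configurations $\config\neq\config'$ (and, simultaneously, that the projections $(\theta_i^Tz_l)_l$ have no ties for each $i$). Fixing $\config\neq\config'$, there are indices $i_0,k_0$ with $\config_{i_0}(k_0)\neq\config'_{i_0}(k_0)$; writing $w:=z_{\config_{i_0}(k_0)}-z_{\config'_{i_0}(k_0)}\neq0$ (the $z_l$ being distinct), the $k_0$-th block of $a_\config=a_{\config'}$ reads $\sum_i\theta_i\theta_i^T(z_{\config_i(k_0)}-z_{\config'_i(k_0)})=0$. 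Freezing all $\theta_i$ with $i\neq i_0$, this becomes $(\theta_{i_0}^Tw)\theta_{i_0}=c$ for a fixed $c\in\R^d$, whose solution set on $\SS^{d-1}$ is either the great subsphere $w^\perp\cap\SS^{d-1}$ (if $c=0$) or at most two points (if $c\neq0$), hence $\bbsigma$-negligible for $d\ge2$. By Fubini the bad set is $\bbsigma^{\otimes p}$-null, and a finite union over the pairs $(\config,\config')$ and over the $z$-tie conditions stays null. On the complementary almost-sure event, a local minimum cannot carry two distinct active configurations, since those would share their linear coefficient; hence the one-dimensional optimal matching is unique in every direction.

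It remains to turn ``unique active configuration'' into membership in an open cell, and then to close the three-set chain. A unique active configuration means the $1$D optimal matching is unique for each $\theta_i$; combined with the absence of ties among the $(\theta_i^Tz_l)_l$, this forces the projected points $(\theta_i^Ty_{0,k})_k$ to be pairwise distinct for every $i$, since a coincidence $y_{0,j}=y_{0,k}$ or a tie $\theta_i^Ty_{0,j}=\theta_i^Ty_{0,k}$ would produce a second optimal matching. Thus $Y_0\in\mathcal{U}$ and $\config(Y_0)$ is well defined, i.e.\ $Y_0\in\mathcal{C}_{\config(Y_0)}$, which proves the first assertion. For the equality of the three sets I would argue $S\subseteq L\subseteq C\subseteq S$: a stable cell optimum lies in an \emph{open} cell on which $\SWpY=q_{\config(Y_0)}$, so a whole neighbourhood satisfies $\SWpY\ge\SWpY(Y_0)$, giving a local minimum ($S\subseteq L$); a local minimum lies in an open cell by the above, where $\SWpY$ is smooth with $\nabla\SWpY(Y_0)=\nabla q_{\config(Y_0)}(Y_0)=0$, hence a critical point ($L\subseteq C$); and a critical point sits in the differentiability set, which almost surely coincides with $\bigcup_\config\mathcal{C}_\config$ (on a boundary the active quadratics have distinct gradients, by the genericity step, so $\SWpY$ is non-differentiable there), where $\nabla q_{\config(Y_0)}(Y_0)=0$ together with convexity of $q_{\config(Y_0)}$ (as $A\preceq I$ gives $B\succeq0$) identifies $Y_0$ as a minimiser of its cell quadratic ($C\subseteq S$). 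The only delicate point is that when $p<d$ the Hessian $B$ is singular and each $q_\config$ has an affine set of minimisers, so ``stable cell optimum'' should be read as $Y_0$ \emph{belonging} to the argmin of its cell quadratic rather than being its unique element.
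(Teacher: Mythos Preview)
Your proof is correct and follows essentially the same route as the paper. The paper also shows that every active quadratic at a local minimum has vanishing gradient (arguing by contradiction via $\SWpY \le q_\config$ rather than through the directional-derivative formula for a finite minimum that you invoke), and then appeals to the almost-sure distinctness of the linear coefficients $a_\config$ to force a unique active configuration, hence membership in an open cell. Your version supplies more detail on the genericity step and on closing the three-set chain $S\subseteq L\subseteq C\subseteq S$, and you correctly flag the $p<d$ subtlety: the paper's proof invokes invertibility of $B$, which is unnecessary for the argument and actually fails when $p<d$; only the separation $a_\config\neq a_{\config'}$ is needed, exactly as you observe.
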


\begin{proof}

	Let $Y \in \R^{\npoints\times d}$ a local optimum of $\SWpY$. Let $M := \lbrace \config \in \mathfrak{S}_\npoints^p\ |\ Y \in \overline{\mathcal{C}_\config} \rbrace$.

	Let $\config \in M$. Let us show that $\nabla q_\config(Y) = 0$ by contradiction: suppose $\nabla q_\config(Y) \neq 0$. For $t$ positive and small enough,
	\begin{align*}
		\SWpY(Y) &\leq \SWpY\left(Y - t\cfrac{\nabla q_\config(Y)}{\|\nabla q_\config(Y)\|}\right) \leq q_\config\left(Y - t\cfrac{\nabla q_\config(Y)}{\|\nabla q_\config(Y)\|}\right)\\ 
		&= q_\config(Y) - t \|\nabla q_\config(Y)\| + o(t) = \SWpY(Y) -t \|\nabla q_\config(Y)\| +o(t).
	\end{align*}
	Therefore, for $t>0$ sufficiently small, we have $\SWpY(Y) < \SWpY(Y),$ which is a contradiction. We now prove that $\# M = 1$.
	Using the notations of \ref{rem:quadratics}, for $\config \in M$, we have $\nabla q_\config(Y) = 0$, thus $B\vec{y} = a_\config$.
	For $(\theta_1, \cdots, \theta_p) \sim \bbsigma^{\otimes p}$, we have $\bbsigma$-almost surely that $B$ is invertible and that $\config \neq \config' \Longrightarrow a_\config \neq a_{\config'}$, thus $\bbsigma$-almost surely, $\# M = 1$, proving that in fact $Y$ belongs to $\mathcal{C}_\config$ and not to its boundary.

\end{proof}

\subsubsection{Closeness of critical points of \texorpdfstring{$\SWpY$}{Sp} and \texorpdfstring{$\SWY$}{S}}\label{sec:closeness}

In practice, all numerical optimisation methods converge towards a local optimum. %
One may wonder what is the link between the critical points of $\SWpY$, which we reach in practice, and the critical points of $\SWY$, among which are the theoretical solutions we would like to reach.

The following theorem shows that at the limit $p \rightarrow +\infty$, any sequence of critical points of $\SWpY$ become fixed points of $\Psi$ \ref{eqn:crit_fixed_point} in probability, which is to say that they exhibit similar properties to the critical points of $\SWY$.

\begin{theorem}[Approximation of the fixed-point equation]\label{thm:cv_fixed_point_distance}\

	For $p > d$, let $Y_p$ any critical point of $\SWpY$. Then we have the convergence in probability:
	\begin{equation}\label{eqn:cv_fixed_point_distance}
		Y_p - \Psi(Y_p) \xrightarrow[p \longrightarrow +\infty]{\P} 0.
	\end{equation}

	Specifically (see \ref{cor:simplified_condition_concentration}), in order to reach a precision of $\varepsilon$, we have $\|Y_p - \Psi(Y_p)\|_{\infty, 2} \leq \varepsilon$ with probability exceeding $1 - \eta$ if $p \geq \mathcal{O}\left(d^3\npoints\log(1/\eta)/\varepsilon^3\right)$ and $p \geq \mathcal{O}\left(d^3\npoints^2\log(1/\eta)/\varepsilon^2\right)$, omitting logarithmic multiplicative terms in $d$ and $\npoints$.
\end{theorem}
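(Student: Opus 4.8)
The plan is to convert the variational description of a critical point $Y_p$ of $\SWpY$ into an \emph{empirical} fixed-point identity, and then compare it to the population map $\Psi$ by a uniform law of large numbers. By \ref{thm:Sp_crit_optloc_stable}, $Y_p \in \mathcal{C}_\config$ with $\config = \config(Y_p)$ and $Y_p$ minimises the cell quadratic $q_\config$, so by \ref{rem:quadratics} the condition $\nabla q_\config(Y_p)=0$ reads blockwise $A\,y_k = \frac1p\Sum{i=1}{p}\theta_i\theta_i^T z_{\config_i(k)}$, where $A = \frac1p\Sum{i=1}{p}\theta_i\theta_i^T$. Since the event $\{\config_i(k)=l\}$ is exactly $\{\theta_i \in \Theta_{k,l}^{Y_p,Z}\}$ (both record that $\sort{Z}{\theta_i}\circ(\sort{Y}{\theta_i})^{-1}(k)=l$), I would group the sum by destination index and set $\widehat S_{k,l}(Y) := \frac1p\Sum{i=1}{p}\mathbbold{1}(\theta_i\in\Theta_{k,l}^{Y,Z})\theta_i\theta_i^T$, which turns the critical-point equation into $A\,y_k = \Sum{l=1}{\npoints}\widehat S_{k,l}(Y_p)\,z_l$, with $\Sum{l=1}{\npoints}\widehat S_{k,l}(Y)=A$.

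On the population side, \ref{cor:crit_points_S} gives $\Psi(Y_p)_k = \Sum{l=1}{\npoints}S_{k,l}^{Y_p,Z}z_l$ with $S_{k,l}^{Y,Z} = d\,\mathbb{E}_{\theta\sim\bbsigma}\!\left[\mathbbold{1}(\theta\in\Theta_{k,l}^{Y,Z})\theta\theta^T\right]$, and $\mathbb{E}_{\theta\sim\bbsigma}\!\left[\theta\theta^T\right] = I/d$, so $\widehat S_{k,l}(Y)$ is, up to the factor $d$, the empirical counterpart of $S_{k,l}^{Y,Z}$. For $p>d$ one has $A\succ0$ almost surely; writing $A^{-1} = dI + (A^{-1}-dI)$ in $y_k = A^{-1}\Sum{l=1}{\npoints}\widehat S_{k,l}(Y_p)z_l$ and using $\Sum{l=1}{\npoints}\widehat S_{k,l}(Y_p)z_l = A y_k$ together with $(A^{-1}-dI)A = -d(A-\tfrac1d I)$, I obtain the clean identity $y_k - \Psi(Y_p)_k = d\Sum{l=1}{\npoints}\big(\widehat S_{k,l}(Y_p)-\tfrac1d S_{k,l}^{Y_p,Z}\big)z_l - d\big(A-\tfrac1d I\big)y_k$. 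This yields $\|y_k-\Psi(Y_p)_k\|_2 \le d\|Z\|_{\infty,2}\Sum{l=1}{\npoints}\big\|\widehat S_{k,l}(Y_p)-\tfrac1d S_{k,l}^{Y_p,Z}\big\|_{\mathrm{op}} + d\,\|A-\tfrac1d I\|_{\mathrm{op}}\,\|y_k\|_2$, and since $\Sum{l}{}\mathrm{Tr}\,\widehat S_{k,l}=\mathrm{Tr}\,A=1$ one also gets $\|y_k\|_2 \le \|A^{-1}\|_{\mathrm{op}}\|Z\|_{\infty,2}$, which is bounded (of order $d\|Z\|_{\infty,2}$) with high probability.

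It then remains to drive both deviations to zero in probability. The term $\|A-\tfrac1d I\|_{\mathrm{op}}$ is the sample-covariance error of the i.i.d. bounded matrices $\theta_i\theta_i^T$ and is controlled by matrix Bernstein/Hoeffding, which simultaneously certifies $A\succ0$ and $\|A^{-1}\|_{\mathrm{op}}\lesssim d$ once $p \gtrsim d\log d$. The genuine obstacle is the first term: because $Y_p$ is itself a function of $\theta_1,\dots,\theta_p$, I cannot invoke a pointwise law of large numbers at $Y=Y_p$, and must instead establish the uniform control $\sup_Y \big\|\widehat S_{k,l}(Y)-\tfrac1d S_{k,l}^{Y,Z}\big\|_{\mathrm{op}} \to 0$. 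The structural fact that makes this feasible is that each $\Theta_{k,l}^{Y,Z}$ is a Boolean combination of the homogeneous half-spaces $\{\theta:\theta^T(y_i-y_j)\gtrless0\}$ and $\{\theta:\theta^T(z_i-z_j)\gtrless0\}$ (it records equality of the $\theta$-ranks of $y_k$ and $z_l$); hence, as $Y$ ranges over $\R^{\npoints\times d}$, these sets form a VC class whose dimension is polynomial in $\npoints$ and $d$. A uniform law of large numbers over this class, applied entrywise to the bounded multiplier $\theta\theta^T$, gives the uniform deviation with rate $\sqrt{\mathrm{VC}/p}$, and combining with the matrix concentration yields $\|Y_p-\Psi(Y_p)\|_{\infty,2}\to 0$ in probability.

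The main difficulty is exactly this uniform-over-$Y$ step, forced by the sample-dependence of $Y_p$. For the explicit thresholds of \ref{cor:simplified_condition_concentration} I would replace the abstract VC bound by an $\varepsilon$-net argument over the bounded set in which critical points live: at each net point a concentration inequality controls $\widehat S_{k,l}$, while for a nearby $Y'$ the increments $\|\widehat S_{k,l}(Y)-\widehat S_{k,l}(Y')\|_{\mathrm{op}}$ and $\|\tfrac1d S_{k,l}^{Y,Z}-\tfrac1d S_{k,l}^{Y',Z}\|_{\mathrm{op}}$ are both bounded by the (empirical resp. true) mass of directions lying near a sorting boundary, which by the estimate $\bbsigma(\Theta_\varepsilon(Y)^c)\le C\varepsilon$ established in the proof of \ref{prop:Fcont} is of order $\varepsilon$. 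Balancing the net resolution against the concentration level then produces the stated conditions on $p$; I expect the extra boundary-mass factor of order $\varepsilon$ to be precisely what degrades one of the two bounds to order $\varepsilon^3$, whereas the purely second-moment route gives the $\varepsilon^2$ bound.
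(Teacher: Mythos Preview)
Your proposal is sound and shares with the paper the opening identity $A\,y_k=\sum_l\widehat S_{k,l}(Y_p)\,z_l$ and the matrix--Hoeffding control of $A-\tfrac1d I$, but it diverges from the paper's argument in two notable respects.

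\emph{Decomposition and the origin of the rates.} The paper does not stop at your two-term identity. It factors $\widehat S_{k,l}=r_{k,l}B_{k,l}$ with $r_{k,l}=q_{k,l}/p$ and $B_{k,l}=q_{k,l}^{-1}\sum_{i\in I_{k,l}}\theta_i\theta_i^T$, then controls $r_{k,l}\to\bbsigma(\Theta_{k,l})$ by a Bernoulli--Chernoff bound and $B_{k,l}\to C_{k,l}$ by matrix Hoeffding, after splitting the $l$-sum according to whether $q_{k,l}$ is above a threshold $\oll q$ (so Hoeffding bites) or below a threshold $\ull q$ (so the prefactor $r_{k,l}$ already kills the term). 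The resulting five-term expansion $y_k^*=v+\delta v_1+\cdots+\delta v_4$ is what drives the explicit rates: the $\varepsilon^{-3}$ threshold comes from requiring $\oll q\gtrsim d^2\varepsilon^{-2}$ and $\ull q\lesssim \varepsilon p/(d\npoints)$ to be compatible (see \ref{cor:simplified_condition_concentration}), not from any boundary-mass estimate. So your guess about the mechanism behind the $\varepsilon^{-3}$ rate does not match the paper's.

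\emph{Dependence of $Y_p$ on the sample.} You correctly isolate the circularity---$\Theta_{k,l}^{Y_p,Z}$ depends on $Y_p$, which depends on the $\theta_i$---and resolve it by a uniform-over-$Y$ argument (VC structure of the rank-consistency regions, or an $\varepsilon$-net combined with the boundary-mass bound of \ref{prop:Fcont}). The paper takes no such step: \ref{thm:ystar_concentration} fixes a configuration $\config$ and applies \ref{lemma:hoeffding} to $(\theta_i)_{i\in I_{k,l}}$ as though they were i.i.d.\ uniform on $\Theta_{k,l}$, with neither a supremum over $Y$ nor a union bound over configurations. Your route is heavier and would likely loosen the constants, but it is the one that squarely addresses the sample-dependence you flagged; the paper's direct per-configuration calculation is what delivers the clean thresholds announced in the statement.
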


We provide the proof in \ref{p:cv_fixed_point_distance}, where we also estimate more precisely the convergence rate. The idea behind this result stems from computing the minima of the quadratics. Let $Y^* := \underset{Y}{\argmin}\ q_\config(Y)$, we have
\begin{equation}\label{eqn:next_opt_pos}
	y_k^* = A^{-1}\left(\cfrac{1}{p}\Sum{i=1}{p}\theta_i \theta_i^Tz_{\config_i(k)}\right) = \cfrac{A^{-1}}{p}\Sum{l \in \llbracket 1, \npoints \rrbracket}{}\Sum{\substack{i \in \llbracket 1, p \rrbracket \\ \config_i(k)=l}}{}\theta_i \theta_i^Tz_l,
\end{equation}
with $A = \cfrac{1}{p}\Sum{i=1}{p}\theta_i\theta_i^T$ which approaches the covariance matrix of $\theta \sim \bbsigma$, i.e. $I/d$.
Likewise, $\cfrac{1}{p}\Sum{\substack{i \in \llbracket 1, p \rrbracket \\ \config_i(k)=l}}{}\theta_i \theta_i^T$ can be seen as an empirical conditional covariance, and it approaches $S_{k,l}^{Y,Z} / d$.
We then apply matrix concentration inequalities to quantify the approximation error.

\subsubsection{Critical points of \texorpdfstring{$\SWpY$}{Ep} and Block Coordinate Descent}\label{sec:Ep_crit_and_bcd}

Leveraging on the cell structure of $\SWpY$, we present an algorithm alternatively solving for the transport matrices and for the positions. Writing $\U$ the set of valid transport plans between two uniform measures with $\npoints$ points, we minimise the following energy (with $(\theta_1,\cdots, \theta_p)$ fixed)
\begin{equation}\label{eqn:BCD_J}
	J := \app{\U^p \times \R^{\npoints \times d}}{\R_+}{(\pi^{(1)}, \cdots, \pi^{(p)}), Y}{\cfrac{1}{ p}\Sum{i=1}{p}\Sum{k=1}{\npoints}\Sum{l=1}{\npoints}(\theta_i^T y_k - \theta_i^T z_l)^2\pi_{k,l}^{(i)}}.
\end{equation}
Observe that minimising $J$ amounts to minimising $\SWpY$.

\begin{figure}
	\centering
	\begin{minipage}{.95\linewidth}
		\begin{algorithm}[H]
			\SetAlgoLined
			\KwData{Fixed axes $(\theta_1, \cdots, \theta_p) \in (\SS^{d-1})^p$, projections $(z_k^T \theta_i)_{k \in \llbracket 1, \npoints \rrbracket,\; i \in \llbracket 1, p \rrbracket}$.}
			\KwResult{Positions $Y \in \R^{\npoints \times d}$.}
			\textbf{Initialisation:} Draw $Y^{(0)} \in \R^{\npoints \times d}$\;
			\For{$t \in \llbracket 1, T_{\max} \rrbracket$}{
				Update the OT maps by solving $\pi^{(t)} \in \underset{\pi \in \U^p}{\argmin}\ J(\pi, Y^{(t-1)})$\;

				Update the positions by solving $Y^{(t)} = \underset{Y \in \R^{\npoints \times d}}{\argmin}\ J(\pi^{(t)}, Y)$\;

				\If{$\|Y^{(t)} - Y^{(t-1)}\|_{\infty, 2} < \varepsilon$}{
					Declare convergence and terminate.
				}
			}
			\caption{Minimising $\SWpY$ with Block-Coordinate Descent}
			\label{alg:BCD}
		\end{algorithm}
	\end{minipage}
\end{figure}

The computation in~\ref{alg:BCD}, line 3 is done using standard 1D OT solvers \cite{flamary2021pot}, and the update on the positions at line 4 can be computed in closed form \blue{(we provide the closed-form expression in~\ref{sec:closed_form_BCD} for the sake of reproducibility)}. BCD can be seen as a walk from cell to cell (see \ref{sec:cells}), as illustrated in \ref{fig:tessellation}. BCD moves from cell to cell and converges towards a stable cell optimum, and thus towards a local optimum of $\SWpY$ (since these two sets are equal by \ref{thm:Sp_crit_optloc_stable}). This behaviour is further studied in the experimental section. 

\begin{figure}[h]
	\centering
	\includegraphics[width=0.8\linewidth]{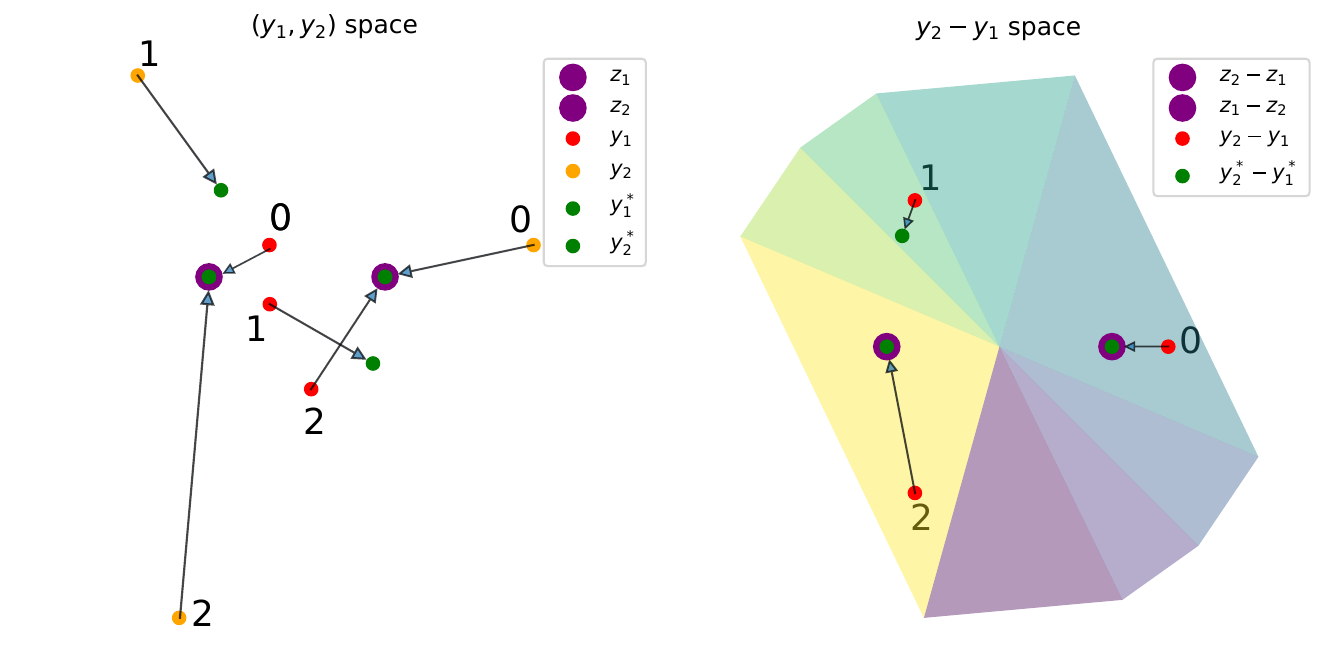}
	\caption{Illustration of the cell structure for $p=4$ in dimension 2 from a BCD viewpoint. On the left, we view different points $Y = (y_1, y_2)$ (in red and orange) and the minima of their respective quadratics: $(y_1^*, y_2^*)$, which should be compared to the original points $(z_1, z_2)$ in purple. On the right, we view the cell structure depending on the position of $y_2 - y_1 \in \R^2$, since the cell conditions only depend on this difference (see~\ref{eqn:2cell_polytope}). We can see that in this example all cells are stable, thus there are three strict local optima of $\SWpY$ in addition to the global optimum. The $(y_1, y_2)$ pair number 0 is sent to $(z_2, z_1)$, while the pair "1" is sent to a local optimum, and the pair "2" is sent to $(z_1, z_2)$.}
	\label{fig:tessellation}
\end{figure}

	\section{Stochastic Gradient Descent on \texorpdfstring{$\SWY$}{S} and \texorpdfstring{$\SWpY$}{Sp}}\label{sec:SGD}

As seen in \ref{sec:E_crit}, the optimisation properties of $\SWY$ and $\SWpY$
indicate that optimising their landscapes might prove difficult in practice. In
real-world applications, these landscapes (and especially $\SWY$, which is the
most used) are minimised using Stochastic Gradient Descent. Perhaps
unsurprisingly given the difficulties presented in \ref{sec:E_crit} and due to
the non-differentiable and non-convex properties of the landscapes, there has
been no attempt to prove the convergence of such SGD schemes in the literature
(to our knowledge). This section aims to bridge this knowledge gap, using recent
theoretical results on the convergence of \blue{constant-step} SGD schemes due
to Bianchi et al.~\cite{bianchi2022convergence}\bluetwo{, and using results on
decreasing-step SGD by Davis et al. \cite{davis2020stochastic}}. Related works
include Minibatch Wasserstein \cite{fatras2021minibatch} in particular Section 5
wherein they leverage another non-convex non-differentiable SGD convergence
framework from Majewski et al. \cite{majewski2018analysis} in order to derive
convergence results for minibatch gradient descent on the Wasserstein and
entropic Wasserstein distances. 

\blue{There are other frameworks than Bianchi et al.
\cite{bianchi2022convergence} \bluetwo{or Davis et al.
\cite{davis2020stochastic}} that one may consider in order to prove non-smooth,
non-convex convergence of SGD, in particular the work of Majewski et al.
\cite{majewski2018analysis}. Unfortunately, this work focuses on the case of
\textit{tame} functions, which is to say either \textit{Clarke regular}
functions (\cite{clarke1990optimization}), or \textit{stratifiable functions}
(\cite{bolte2007clarke}). It is not known whether $\SWY$ is Clarke regular (the
graph in \ref{fig:comp_sym} could intuitively point to the contrary, due to the
local shape in $-\|x\|$, and it is known that $-\|\cdot\|$ is not Clarke
regular). Likewise, it is not known whether $\SWY$ is stratifiable. Thankfully,
our regularity results from \ref{sec:energies} will allow us to show that $\SWY$
is \textit{path differentiable}, which is another (more general) regularity
class which is enough to apply the results from Bianchi et al.
\cite{bianchi2022convergence} \bluetwo{and Davis et al.
\cite{davis2020stochastic}.}}

\blue{Let us also mention the very recent work~\cite{Li:2023aa} (contemporary to
ours), which studies the convergence of stochastic gradient schemes
\bluetwo{with decreasing steps} and applied directly on probability
measures instead of point clouds. Working with absolutely continuous measures
$\mu$ and $\nu$, the authors consider a scheme of the form
\begin{equation}
\label{eq:li_sgd}
\mu^{(t+1)} = \left((1-\alpha^{(t)})I + \alpha^{(t)}T_{\theta^{(t)},\mu^{(t)},\nu}\right)\# \mu^{(t)},
\end{equation}
where the $\theta^{(t)}$ are i.i.d. drawn from the uniform measure on the
sphere, and where $T_{\theta^{(t)},\mu^{(t)},\nu}(x) := x +
\left(\tau^{(t)}((\theta^{(t)})^Tx)- (\theta^{(t)})^Tx\right)\theta^{(t)}$, with
$\tau^{(t)}$ the one-dimensional optimal transport map between the projected
measures $P_{\theta^{(t)}}\#\mu^{(t)}$ and $P_{\theta^{(t)}}\#\nu$ (this map is
uniquely defined on the support of $P_{\theta^{(t)}}\#\mu^{(t)}$ because
$\mu^{(t)}$ and $\nu$ are absolutely continuous). It is quite easy to see that
this scheme implements a stochastic gradient descent on $\mu\rightarrow\frac 1 2
\SW_2^2(\mu,\nu)$. Building on proof techniques
of~\cite{Backhoff-Veraguas:2022aa}, they show that if the sequence of learning
rates $(\alpha^{(t)})_k$ is decreasing with $\sum \alpha^{(t)} = +\infty$ and
$\sum (\alpha^{(k)})^2 < +\infty$, under some assumptions the sequence of
measures $(\mu^{(t)})$ converges to $\nu$ for $\W_2$. Unfortunately, extending
these methods to discrete measures is not straightforward. Indeed, the authors
of~\cite{Backhoff-Veraguas:2022aa} claim that although they believe their
results might hold for discrete measures, the difficulties of generalising their
proofs to the discrete case were too important to achieve satisfying proofs in
this case. }

Before presenting our core results and the necessary theoretical framework from Bianchi et al.~\cite{bianchi2022convergence}, we provide in~\ref{alg:SGD} the description of the SGD scheme used to minimise either $\SWY$ or $\SWpY$, i.e. for projections drawn with $\mu \in \lbrace \bbsigma, \bbsigma_p \rbrace$ respectively.
Starting with random initial points $Y^{(0)} \sim \nu$, at each step $t$, we draw a random projection $\theta^{(t+1)} \sim \bbsigma$ and compute an SGD iteration of step $\alpha^{(t)}$ in the direction of the gradient of $Y \longmapsto w_{\theta^{(t+1)}}(Y)$.
This scheme uses optionally an additive noise term controlled by a parameter $a$
(that can be set to $0$). \bluetwo{In \ref{sec:interpolated_SGD} to
\ref{sec:noised_sgd}, we shall study constant-step SGD schemes, and in
\ref{sec:decr_sgd}, we will focus on decreasing-step SGD.}

\begin{figure}[h]
	\centering
	\begin{minipage}{.95\linewidth}
		\begin{algorithm}[H]
			\SetAlgoLined \KwData{Learning rate sequence $(\alpha^{(t)})_{t\in
			\N}$, noise level $a \geq 0$, convergence threshold $\beta > 0$, and
			probability distribution $\mu$ on $\SS^{d-1}$.} \KwResult{Positions
			$Y \in \R^{\npoints \times d}$, assignment $\tau \in
			\mathfrak{S}_\npoints$.} \textbf{Initialisation:} Draw $Y^{(0)} \in
			\R^{\npoints \times d}$\; \For{$t \in \llbracket 0, T_{\max} - 1
			\rrbracket$}{ Draw $\theta^{(t+1)} \sim \mu$ and
			$\varepsilon^{(t+1)} \sim \mathcal{N}(0, I_{\npoints d})$.\\
				SGD update: \\$Y^{(t+1)} = Y^{(t)} - \alpha^{(t)}
				\left[\dr{Y}{}{} \W_2^2(P_{\theta^{(t+1)}}\#\gamma_Y,
				P_{\theta^{(t+1)}}\#\gamma_Z)\right]_{Y = Y^{(t)}} +
				\alpha^{(t)} a \varepsilon^{(t+1)}$

				\If{$\|Y^{(t+1)} - Y^{(t)}\|_{\infty, 2} < \beta$}{
					Declare convergence and terminate.
				}
			}
			\KwRet{$Y^{(t_{\mathrm{final}})}$ and the assignment $\tau$ of  $\W_2^2(P_{\theta^{(t_{\mathrm{final}})}}\#\gamma_{Y^{(t_{\mathrm{final}})}}, P_{\theta^{(t_{\mathrm{final}})}}\#\gamma_Z)$.}
			\caption{Minimising $\SWY$ or $\SWpY$ with Stochastic Gradient Descent}
			\label{alg:SGD}
		\end{algorithm}
	\end{minipage}
\end{figure}

\subsection*{Overview of Main Results}

In~\cite{bianchi2022convergence}, Bianchi et al. establish conditions under
which a constant-step SGD converges (in a certain sense), for a non-convex,
locally Lipschitz cost function. Observe that both $\SWY$ and $\SWpY$ are indeed
locally Lipschitz, as shown in~\ref{thm:locLip}. In \ref{sec:interpolated_SGD}
and \ref{sec:noised_sgd}, we verify the required conditions for $\SWY$ and
$\SWpY$ (with $p$ fixed projections), and prove results which can be broadly
summarised as follows:

\begin{theorem*}[\ref{thm:SGD_interpolated_cv}: Convergence of the interpolated SGD (without noise) for $\SWY$ and $\SWpY$]
	Given a sequence of SGD schemes $(Y_\alpha^{(t)})$ for $\SWY$ (resp. $\SWpY$) of steps $\alpha$, their associated piecewise affine interpolated schemes $(Y_\alpha)$ converge, in a weak sense as $\alpha \longrightarrow 0$, to the set of solutions of the differential inclusion equation $\dot{Y}(s) \in -\partial_C \SWY(Y(s))$ (resp. $\SWpY$), where $\partial_C \SWY$ denotes the Clarke differential of $\SWY$.

\end{theorem*}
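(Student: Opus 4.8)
The plan is to recast Algorithm~\ref{alg:SGD} (with noise level $a = 0$) as an instance of the abstract constant-step stochastic approximation scheme analysed by Bianchi et al.~\cite{bianchi2022convergence}, and then to verify their hypotheses with $f = \SWY$ (resp.\ $f = \SWpY$). In their notation the iteration reads $Y^{(t+1)} = Y^{(t)} - \alpha\, H(Y^{(t)}, \theta^{(t+1)})$, where the oracle is $H(Y, \theta) := \partial_Y w_\theta(Y)$ and the $\theta^{(t+1)}$ are i.i.d.\ of law $\mu \in \lbrace \bbsigma, \bbsigma_p \rbrace$. Their main result states that, under local Lipschitzness, path differentiability of $f$, an unbiasedness-and-integrability condition on $H$, and a stability (tightness) condition on the iterates, the piecewise-affine interpolation of $(Y^{(t)})$ is an asymptotic pseudo-trajectory of the differential inclusion $\dot Y \in -\partial_C f(Y)$, and that these interpolated processes converge, in the announced weak sense as $\alpha \to 0$, to the solution set of this inclusion. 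It therefore suffices to check each hypothesis in turn.

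Local Lipschitzness is exactly Theorem~\ref{thm:locLip}. The crucial regularity input is path differentiability. For $\SWpY$ this follows from Proposition~\ref{prop:SWpY_semi_algebraic} together with Theorem~\ref{thm:locLip}: a locally Lipschitz semi-algebraic (hence definable) function admits a chain rule along absolutely continuous curves, which is precisely path differentiability. For $\SWY$, which is only an expectation of such functions and need not be semi-algebraic, I would instead invoke semi-concavity (Proposition~\ref{prop:SWY_semi_concave}): being locally the sum of a smooth map and a concave one, a locally Lipschitz semi-concave function is path differentiable. Either route supplies the chain-rule property that the differential inclusion requires.

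The third step is the oracle condition $\mathbb{E}_{\theta \sim \mu}[H(Y,\theta)] \in \partial_C f(Y)$ together with an integrability bound. For fixed $Y \in \mathcal{U}$, the sorting permutations are unique for $\mu$-almost every $\theta$, so $H(Y,\cdot)$ is well-defined and measurable $\mu$-a.e. On the full-measure open set $\mathcal{U}$ (defined in \ref{eqn:U}), Theorem~\ref{thm:bonneel_diff} gives that $\SWY$ is $\mathcal{C}^1$ with $\nabla \SWY(Y) = \mathbb{E}_{\theta \sim \bbsigma}[\partial_Y w_\theta(Y)]$ (and analogously $\nabla \SWpY = \frac{1}{p}\sum_i \partial_Y w_{\theta_i}$ on the union of cells), so the oracle is an unbiased estimator of the true gradient, a singleton element of $\partial_C f(Y)$; on the complementary null set the Clarke subdifferential is set-valued and absorbs the remaining points consistently with the chain rule. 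The required linear-growth control of $\|H(\cdot,\theta)\|$, uniformly in $\theta$, follows from the explicit Lipschitz constant of Proposition~\ref{prop:w_unif_locLip}, which bounds $\|\partial_Y w_\theta(Y)\|$ by an affine function of $\|Y\|_{\infty,2}$.

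The remaining, and I expect most delicate, hypothesis is the stability of the iterates: the framework needs $(Y^{(t)})$ to remain (almost surely, or with high probability) in a compact set, uniformly controlled as $\alpha \to 0$. I would establish this through a coercivity/Lyapunov argument: the cell-quadratic structure \ref{eqn:cell_quadratic} shows that $f$ grows quadratically, $f(Y) \gtrsim \tfrac{1}{\npoints d}\|Y\|_F^2$ up to lower-order terms, so $f$ is coercive, while the semi-concave descent estimate bounds the drift of $f(Y^{(t)})$ along the scheme; combining the two should confine the iterates to a bounded region. This boundedness estimate, rather than the regularity verifications, is where the main technical effort lies, since it must hold uniformly in the step size $\alpha$. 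Once all hypotheses are secured, the statement follows as a direct application of the convergence theorem of~\cite{bianchi2022convergence}, the argument being identical for $\SWY$ (with $\mu = \bbsigma$) and for $\SWpY$ (with $\mu = \bbsigma_p$).
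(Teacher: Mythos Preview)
Your proposal misidentifies the hypotheses that the paper (and Bianchi et al.\ \cite{bianchi2022convergence}, Theorem~2) actually require for \emph{this} statement. The interpolated-SGD result relies on \ref{ass:A1}, \ref{ass:A2} and \ref{ass:A3} only; path differentiability (\ref{ass:A5}) and the Lyapunov/drift stability (\ref{ass:A4}) are reserved for the \emph{noised} scheme and are not invoked here. So two of the four ingredients you propose to check are superfluous for \ref{thm:SGD_interpolated_cv}, and the effort you flag as ``the main technical effort'' (the boundedness/stability argument) is not part of the proof at all.

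More importantly, you omit the one non-trivial assumption that \emph{is} required: \ref{ass:A3}, namely that the set $\Gamma$ of step sizes $\alpha$ for which the Markov kernel $P_\alpha$ preserves absolute continuity with respect to Lebesgue measure has $0$ in its closure. This is the actual technical heart of the paper's proof, handled by \ref{prop:SWY_Gamma}, which shows $\Gamma = \R_+^* \setminus \{\npoints/2\}$ by analysing the affine map $Y \mapsto Y(I - \tfrac{2\alpha}{\npoints}\theta\theta^T) + \tfrac{2\alpha}{\npoints} Z_\tau \theta\theta^T$ and checking it is invertible for $\alpha \neq \npoints/2$. Nothing in your oracle-unbiasedness or Lyapunov discussion substitutes for this; without it the application of \cite{bianchi2022convergence}, Theorem~2 is incomplete. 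Your verification of \ref{ass:A1} and \ref{ass:A2} via \ref{prop:w_unif_locLip} is fine and matches the paper, but the overall structure of the argument needs to be realigned with the correct assumption list.
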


If we instead consider a \textit{noised} SGD scheme (with noise magnitude $\alpha \times a,\; a>0$), we have a stronger convergence result:

\begin{theorem*}[\ref{thm:noised_SGD_cv}: Convergence of the noised SGD for $\SWY$ and $\SWpY$]
	Given a sequence of noised SGD schemes $(Y_\alpha^{(t)})$ for $\SWY$ (resp. $\SWpY$) of steps $\alpha$, they converge, in a weak sense as $\alpha \longrightarrow 0$, to the set of (Clarke) critical points of $\SWY$ (resp. $\SWpY$).

\end{theorem*}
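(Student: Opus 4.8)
The plan is to verify that both $\SWY$ and $\SWpY$ satisfy the hypotheses of the noised constant-step SGD convergence theorem of Bianchi et al.~\cite{bianchi2022convergence}, and then invoke it directly. The scheme of \ref{alg:SGD} with $a>0$ and additive Gaussian noise defines, for each fixed step $\alpha$, a Feller Markov chain on $\R^{\npoints \times d}$, and the target conclusion is that, as $\alpha \longrightarrow 0$, the invariant measures of these chains concentrate on the set of Clarke critical points $\{Y : 0 \in \partial_C \SWY(Y)\}$ (resp. $\SWpY$). I would organise the verification around four ingredients required by the framework.

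First, local Lipschitzness is already granted by \ref{thm:locLip}. Second, I would establish that the stochastic oracle is admissible: the update direction is $\nabla w_{\theta^{(t+1)}}$ for $\theta^{(t+1)}\sim \mu$, and I must show that $\mathbb{E}_{\theta\sim\mu}[\nabla w_\theta(Y)]$ is a Clarke subgradient of the corresponding energy. On the differentiability set $\mathcal{U}$ this is exactly the gradient formula of \ref{thm:bonneel_diff} (and its finite-sum analogue for $\SWpY$), so the oracle is unbiased there; off $\mathcal{U}$ the expectation of a measurable selection of $\partial_C w_\theta$ must be shown to land in $\partial_C \SWY$, which is the interchange of expectation and subdifferentiation in the conservative-field sense. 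Third --- and this is the crux --- I would prove that $\SWY$ and $\SWpY$ are \emph{path differentiable}, since their Clarke regularity is unavailable (cf.\ the $-\|\cdot\|$-type behaviour discussed above). For $\SWpY$ this follows from semi-algebraicity (\ref{prop:SWpY_semi_algebraic}) via the theory of definable functions \cite{davis2020stochastic}; for $\SWY$ I would instead use semi-concavity (\ref{prop:SWY_semi_concave}), writing it locally as a smooth quadratic minus a convex function and invoking the stability of path differentiability under this decomposition (convex and $\mathcal{C}^1$ functions being path differentiable). Fourth, I would supply the confinement condition: since $\gamma_Z$ is fixed, $\SWY$ is coercive and serves as a Lyapunov function, bounding the chains' excursions uniformly in $\alpha$ and guaranteeing tightness of the family of invariant measures.

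With these in hand, the differential inclusion $\dot Y \in -\partial_C \SWY(Y)$ of \ref{thm:SGD_interpolated_cv} has a well-defined flow, and the long-run analysis of Bianchi et al.\ applies: the cluster points of the invariant measures are invariant for the flow. The role of the nondegenerate additive Gaussian noise (the $a>0$ term) is decisive here, as it renders the Markov kernels irreducible with densities, which upgrades the conclusion from ``supported on the Birkhoff centre'' (as in the interpolated, noiseless case) to ``supported on the set of equilibria of the inclusion'', i.e.\ the Clarke critical set. Translating this support statement back to the iterates yields the claimed weak convergence of the noised SGD to the Clarke critical points of $\SWY$ (resp.\ $\SWpY$).

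I expect the main obstacle to be the path-differentiability step together with the admissibility of the oracle: one must reconcile the fact that each slice $w_\theta$ is only piecewise-smooth (non-differentiable on a $\theta$-dependent hyperplane arrangement) with the requirement that $\mathbb{E}_\theta[\nabla w_\theta]$ be a conservative selection for the averaged energy. The semi-concavity route neatly sidesteps the unknown Clarke regularity, but verifying that the Bonneel gradient formula extends to a valid conservative field across cell boundaries --- where the sorting permutations $\sort{Y}{\theta}$ change --- is where the technical weight of the argument lies.
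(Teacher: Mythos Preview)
Your overall strategy---verify the hypotheses of Bianchi et al.'s noised-SGD theorem and invoke it---is exactly the paper's approach, and your treatment of path differentiability (semi-concavity for $\SWY$, semi-algebraicity for $\SWpY$) matches the paper's \ref{prop:semi_concave_chain_rule} almost verbatim. So the architecture is right.

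Two places where your outline diverges from the paper and where it would need tightening. First, the confinement step: you propose to use coercivity of $\SWY$ itself as a Lyapunov function. The paper instead verifies Bianchi et al.'s drift condition (their Assumption~4) with the explicit choice $V=\|\cdot\|_F^2$ and a direct computation of $P_\alpha V(Y)-V(Y)$ (\ref{prop:drift_SGD_SWY}, \ref{prop:drift_SGD_SWpY}). This is not cosmetic: the required inequality is $P_\alpha V(Y)\le V(Y)-\beta(\alpha)p(Y)+C\beta(\alpha)\mathbbold{1}_{\overline B(0,R)}(Y)$ uniformly in small $\alpha$, and with $V=\SWY$ you would have to control $\SWY$ after an SGD step plus Gaussian noise, which reintroduces the very non-smoothness you are trying to avoid. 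The quadratic $V$ makes the expansion elementary and lets the $S_{k,l}^{Y,Z}\preceq I_d$ bounds do the work.

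Second, you omit what the paper calls Assumption~3: that the one-step kernel $P_\alpha$ preserves absolute continuity w.r.t.\ Lebesgue for $\alpha$ in a set whose closure contains~$0$. The paper checks this explicitly (\ref{prop:SWY_Gamma}) by an affine change of variables on each permutation branch. This is the mechanism that actually dispatches your ``oracle admissibility'' worry: rather than proving $\mathbb{E}_\theta[\partial_C w_\theta]\subset\partial_C\SWY$ across cell boundaries, the Bianchi framework only needs that $\varphi$ is an \emph{a.e.}\ gradient and that the chain a.s.\ avoids the non-differentiability set---which follows once absolute continuity is preserved. So the technical weight does not lie where you anticipate; it lies in the drift computation and in checking $\Gamma\ni 0^+$.
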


These results rely on the notion of Clarke differentiability, which generalises differentiability to non smooth functions as soon as these functions are locally Lipschitz ({\em i.e.} Lipschitz in a neighbourhood of each point). More precisely, for such a function $f:\mathbb{R}^d \rightarrow \mathbb{R}$, its Clarke sub-differential at $x$ is defined as the convex hull of the limits of gradients of $f$
$$\partial_C f(x) := \mathrm{conv}\left\{v \in \R^d:\; \exists (x_i) \in (\mathcal{D}_f)^\N: x_i \xrightarrow[i \longrightarrow +\infty]{} x\ \mathrm{and} \; \nabla f(x_i) \xrightarrow[i \longrightarrow +\infty]{} v\right\},$$
where $\mathcal{D}_f$ denotes the set of differentiability of $f$, whose
complementary is of Lebesgue measure 0 by Rademacher's theorem, since $f$ is
locally Lipschitz. This notion of differentiability coincides with the classical
one for differentiable functions, and with the usual sub-differential for convex
functions. Clarke \textit{critical points} of $f$ are points $x$ such that $0\in
\partial_C f(x) $.

\bluetwo{In the case of decreasing-step SGD, in \ref{sec:decr_sgd} we leverage
the results of \cite{davis2020stochastic} to prove the following result, under
typical assumptions on the decreasing steps $(\alpha^{(t)})$.
\begin{theorem*}[\ref{thm:cv_decreasing_lr}: Convergence of decreasing-step
	noised SGD for $\SWY$ and $\SWpY$] Consider $(Y^{(t)})$ a trajectory of
	noised decreasing-step SGD for $\mu\in \{\bbsigma, \bbsigma_p\}$
	respectively, assume that it is almost-surely bounded. Then the sequence
	$F(Y^{(t)})$ is almost-surely convergent for $F\in \{\SWY, \SWpY\}$
	respectively, and almost-surely, any subsequential limit of $(Y^{(t)})$
	belongs to the set of Clarke critical points of $F$. \end{theorem*}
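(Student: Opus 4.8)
The plan is to reduce the statement to a direct application of the convergence theorem of Davis et al.~\cite{davis2020stochastic} for the stochastic subgradient method, by verifying each of its hypotheses for $F \in \{\SWY, \SWpY\}$ with the respective sampling law $\mu \in \{\bbsigma, \bbsigma_p\}$. Their framework requires: (i) $F$ locally Lipschitz; (ii) $F$ \emph{path differentiable}, so that $\partial_C F$ is a conservative field and $F$ is a Lyapunov function for the differential inclusion $\dot Y \in -\partial_C F(Y)$; (iii) the steps $(\alpha^{(t)})$ obeying the Robbins--Monro conditions $\sum_t \alpha^{(t)} = +\infty$ and $\sum_t (\alpha^{(t)})^2 < +\infty$ (the ``typical assumptions''); (iv) almost-sure boundedness of the iterates (assumed in the statement); (v) a stochastic oracle which, conditionally on the past, decomposes as a measurable selection of $\partial_C F(Y^{(t)})$ plus a martingale-difference noise of controlled conditional second moment; and (vi) a weak-Sard/tameness condition guaranteeing that the set of Clarke-critical \emph{values} of $F$ has empty interior, which is precisely what upgrades ``limit points are critical'' into the almost-sure convergence of $F(Y^{(t)})$.

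For the regularity inputs, local Lipschitzness of both energies is exactly~\ref{thm:locLip}. The decisive point, path differentiability, I would obtain from semi-concavity rather than from tameness: by~\ref{prop:SWY_semi_concave} and~\ref{prop:SWpY_semi_concave}, the function $g := F - \frac{1}{\npoints}\|\cdot\|_F^2$ is concave, so $F = g + \frac{1}{\npoints}\|\cdot\|_F^2$. A convex function is Clarke regular, hence path differentiable, and path differentiability is stable under negation (the Clarke subdifferential changes sign while the chain-rule identity is preserved), so $g$ is path differentiable; the quadratic term is $\mathcal{C}^\infty$, hence path differentiable; and path differentiability is stable under sums since the associated conservative fields add. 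Thus $F$ is path differentiable and $\partial_C F$ is a conservative gradient for $F$, yielding conditions (i) and (ii).

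For the stochastic inputs, the descent direction produced at step $t$ of~\ref{alg:SGD} is $\nabla_Y w_{\theta^{(t+1)}}(Y^{(t)}) - a\,\varepsilon^{(t+1)}$, with $\theta^{(t+1)} \sim \mu$ and $\varepsilon^{(t+1)}$ a centred Gaussian independent of the past. I would write this direction as a conditional mean plus a martingale difference: since the Gaussian term is mean zero, the conditional mean is $\mathbb{E}_{\theta \sim \mu}[\nabla_Y w_\theta(Y^{(t)})]$, which on $\mathcal{U}$ (see~\ref{eqn:U}) equals $\nabla F(Y^{(t)})$ --- this is differentiation under the integral as in~\ref{thm:bonneel_diff} when $\mu = \bbsigma$, and the gradient of the cell quadratic (\ref{sec:cells}) when $\mu = \bbsigma_p$ --- and there $F$ is $\mathcal{C}^1$, so $\nabla F(Y^{(t)})$ is the unique element of $\partial_C F(Y^{(t)})$. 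The additive Gaussian noise is what makes this clean: conditionally on the past each iterate is Gaussian, hence admits a density, so almost surely $Y^{(t)} \in \mathcal{U}$ for $t \geq 1$ and the oracle is well defined with the correct conditional mean. The martingale-difference part (the $\theta$-sampling fluctuation together with $a\,\varepsilon^{(t+1)}$) has bounded conditional second moment along the trajectory: boundedness (iv) together with~\ref{prop:w_unif_locLip} bounds $\nabla_Y w_\theta$ uniformly in $\theta$, and the Gaussian term has finite variance. With (iii) assumed, conditions (iii)--(v) hold.

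The main obstacle I anticipate is condition (vi) for the true energy $\SWY$. For $\SWpY$ it is immediate: $\SWpY$ is semi-algebraic by~\ref{prop:SWpY_semi_algebraic}, hence tame, so its critical-value set has empty interior and the full ODE-method conclusions of~\cite{davis2020stochastic} apply verbatim. For $\SWY$, semi-algebraicity is unavailable because $\SWY$ is an integral over the sphere; here one must argue that the joint map $(\theta, Y) \mapsto w_\theta(Y)$ is semi-algebraic and that the parametric integral $Y \mapsto \int_{\SS^{d-1}} w_\theta(Y)\,\dd\bbsigma(\theta)$ remains definable in an o-minimal structure (via results on integration of definable functions), which restores the weak-Sard property. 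The delicate bookkeeping is reconciling this definability argument with the oracle analysis on the null set $\mathcal{U}^c$, where $\SWY$ fails to be classically differentiable; the semi-concave decomposition above is exactly what keeps the conservative-field structure under control in that regime and prevents the non-differentiability of $\SWY$ from breaking the Lyapunov descent.
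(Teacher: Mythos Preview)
Your overall strategy matches the paper's: invoke Theorem~4.2 of Davis et al.~\cite{davis2020stochastic}, with local Lipschitzness from \ref{thm:locLip}, path-differentiability via semi-concavity (exactly the argument of \ref{prop:semi_concave_chain_rule}), and the step-size, boundedness, and martingale-difference hypotheses handled as you describe. The only substantive divergence is your condition~(vi), which is also where your proposal has a genuine gap.

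You propose to establish the Sard-type hypothesis for $\SWY$ by showing that the parametric integral $Y \mapsto \int_{\SS^{d-1}} w_\theta(Y)\,\dd\bbsigma(\theta)$ is definable in an o-minimal structure. But o-minimal structures are \emph{not} closed under parametric integration, so the claim that $\SWY$ ``remains definable \dots\ via results on integration of definable functions'' does not follow: one would at minimum need Comte--Lion--Rolin-type machinery, pass to a larger structure such as $\R_{\mathrm{an},\exp}$, and verify its hypotheses --- none of which is automatic here. The paper avoids all of this. The Davis et al.\ hypothesis it checks (labelled D.1) is that the \emph{non-critical points} of $F$ be dense in $\R^D$, and for $\SWY$ this drops out of the fixed-point characterisation~\ref{cor:crit_points_S}: summing~\ref{eqn:crit_fixed_point} over $k$ and using $\sum_l S_{k,l}^{Y,Z}=I_d$ forces every smooth critical point into the affine hyperplane $\sum_k y_k = \sum_k z_k$, while non-differentiability points lie in $\mathcal{U}^c$; the complement of these two strict subspaces is dense. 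For $\SWpY$, D.1 is immediate from the piecewise-quadratic cell structure~(\ref{prop:SWpY_quadratic_on_each_cell}). So the step you flagged as the main obstacle is in fact a one-line linear-algebra observation, and the definability route is both unnecessary and, as sketched, incomplete.
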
}

\subsection{Theoretical framework}\label{sec:bianchi_setting}

In the following, we briefly present the theoretical framework of Bianchi et al.~\cite{bianchi2022convergence}. They consider a function $f: \R^D \times \Theta \longrightarrow \R$, locally Lipschitz continuous in the first variable (for each $\theta$), and $\mu$ a probability measure on $\Theta \subset \R^d$. Since $f$ is locally Lipschitz in the first variable, the gradient $\nabla f(\cdot,\theta)$ of $f(\cdot,\theta)$ (w.r.t. the first variable) can be defined almost everywhere on $\R^D$, and any function $\varphi: \R^D \times \Theta \longrightarrow \R^D$ such that $\lambda \otimes \mu$ a.e., $\varphi = \nabla f$ is called an almost-everywhere gradient of $f$ (see \cite{bianchi2022convergence}, Definition 1). Let $F := \blue{Y} \longrightarrow \int_\Theta f(Y, \theta)\dd\mu(\theta)$. A SGD scheme of step $\alpha > 0$ for $F$ is a sequence $(Y^{(t)})$ of the form:
\begin{equation}\label{eq:SGD_Bianchi}
Y^{(t+1)} = Y^{(t)} - \alpha \varphi(Y^{(t)}, \theta^{(t+1)}),\quad \left(Y^{(0)}, (\theta^{(t)})_{t \in \N}\right) \sim \nu \otimes \mu^{\otimes \N},
\end{equation}
where $\nu$ is the distribution of the initial position $Y^{(0)}$, which we shall assume to be absolutely continuous w.r.t. the Lebesgue measure.

Within this framework, we can define an SGD scheme for $\SWY$ and $\SWpY$. The function $w_\theta$ (Equation~\ref{def:wtheta}) plays the role of $f$. We know from~\ref{prop:w_unif_locLip} that $w_\theta$ is locally Lipschitz (uniformly in $\theta$), hence differentiable almost everywhere, and that at these points of differentiability, using~\cite{bonneel2015sliced} (Appendix A, "proof of differentiability"), the derivative of $w_{\theta}$ in $Y$ is
\begin{equation}\label{eqn:ae_grad}
	\varphi(Y, \theta) := \left[\cfrac{2}{\npoints}\theta\theta^T\left(y_k - z_{\sort{Z}{\theta} \circ (\sort{Y}{\theta})^{-1}(k)}\right)\right]_{k \in \llbracket 1, \npoints \rrbracket},
\end{equation}
which corresponds to the definition of an almost-everywhere gradient as proposed by~\cite{bianchi2022convergence}. Moreover, $\varphi$ can be extended everywhere by choosing the sorting permutations arbitrarily when there is ambiguity.
Within this framework, given a step $\alpha > 0$, and an initial position $Y^{(0)} \sim \nu$, the fixed-step SGD iterations~\ref{eq:SGD_Bianchi} can be applied to  $F = \SWY$ by choosing $\mu = \bbsigma$ or to $F = \SWpY$ by choosing $\mu = \bbsigma_p := \frac{1}{p}\sum_i^p\delta_{\theta_i}$. We assume that $\Span(\theta_i)_{i \in \llbracket 1, p \rrbracket} = \R^d$, which is satisfied $\bbsigma$-almost surely if $(\theta_i)_{i \in \llbracket 1, p \rrbracket} \sim \bbsigma^{\otimes p}$, since $p > d$.

\subsection{Convergence of piecewise affine interpolated SGD schemes on \texorpdfstring{$\SWY$}{E} and \texorpdfstring{$\SWpY$}{Ep}}\label{sec:interpolated_SGD}

The \textit{piecewise-affine interpolated SGD scheme} associated to a discrete SGD scheme $(Y_\alpha^{(t)})$ of step $\alpha$ is defined as:
$$Y_\alpha(s) = Y_\alpha^{(t)} + \left(\frac{s}{\alpha} - t\right)(Y_\alpha^{(t+1)} - Y_\alpha^{(t)}),\quad \forall s \in [t\alpha, (t+1)\alpha[, \quad \forall t \in \N.$$
We consider the space of absolutely continuous curves from $\R_+$ to $\R^D$, denoted $\mathcal{C}_{\mathrm{abs}}(\R_+, \R^D)$, and endow it with the metric of uniform convergence on all segments:
\begin{equation}
d_c(Y, Y') := \Sum{k \in \N^*}{}\cfrac{1}{2^k}\min\left(1, \underset{s \in [0, k]}{\max}\|Y(s) - Y'(s)\|_{\infty, 2}\right).
\label{eq:metric_uc}
\end{equation}
We will show that when the step decreases, the interpolated processes approach the set of solutions of a differential inclusion equation. To that end, we define the set of absolutely continuous curves that start within a given compact $\mathcal{K}$ of $\R^D$ and are a.e. solutions of the differential inclusion:
\begin{equation}\label{eqn:Clarke_DI}
	S_{-\partial_C F}(\mathcal{K}) := \left\lbrace Y \in \mathcal{C}_{\mathrm{abs}}(\R_+, \R^D)\ |\  \ull{\forall} s\in \R_+,\; \dot{Y}(s) \in -\partial_C F(Y(s));\; Y(0) \in \mathcal{K} \right\rbrace,
\end{equation}
where $\ull{\forall}$ denotes "for almost every". Bianchi et al.~\cite{bianchi2022convergence} present three conditions under which they prove the convergence (in a certain weak sense) of interpolated SGD schemes on $F$. For the sake of self-containedness, we reproduce them here and verify them successively. Recall that for our two respective applications, $f(Y, \theta) = w_\theta(Y)$, $\mu \in \lbrace \bbsigma,\ \bbsigma_p\rbrace$ and $F \in \lbrace \SWY,\ \SWpY\rbrace$.

\begin{assumption}\label{ass:A1}\
	\begin{itemize}
		\item[i)] There exists $\kappa: \R^D \times \SS^{d-1} \longrightarrow \R_+$ measurable such that each $\kappa(X, \cdot)$ is $\mu$-integrable, and:
		$$\exists \varepsilon > 0,\; \forall Y, Y' \in B(X, \varepsilon),\; \forall \theta \in \SS^{d-1},\; |f(Y, \theta) - f(Y', \theta)|\leq \kappa(X, \theta)\|Y-Y'\|. $$
		\item[ii)] There exists $X \in \R^D$ such that $f(X, \cdot)$ is $\mu$-integrable.
	\end{itemize}
\end{assumption}

Since $f$ is the same in both cases, we can satisfy \ref{ass:A1} for both schemes simultaneously. The (quantified) uniformly locally Lipschitz property of the $w_\theta$ (\ref{prop:w_unif_locLip}) allows us to verify \ref{ass:A1}, by letting $r := 1$ and $\kappa(X, \theta) := \kappa_1(X)$. \ref{ass:A1} ii) is immediate since for \textit{all} $Y \in \R^D,\; \theta \longmapsto w_\theta(Y)$ is continuous, therefore $\bbsigma-L^1$ and $\bbsigma_p-L^1$.

\begin{assumption}\label{ass:A2} The function $\kappa$ of \ref{ass:A1} verifies:
	\begin{itemize}
		\item[i)] There exists $c \geq 0$ such that $\forall X\in \R^D,\; \Int{\SS^{d-1}}{}\kappa(X, \theta)\dd\mu(\theta) \leq c(1+\|X\|)$.
		\item[ii)] For every $\mathcal{K}$ compact of $\R^D,\; \underset{X \in \mathcal{K}}{\sup}\ \Int{\SS^{d-1}}{}\kappa(X, \theta)^2\dd\mu(\theta) <+\infty$.
	\end{itemize}
\end{assumption}

The choice $\kappa(X, \theta) := \kappa_1(X)$ (independent on $\theta$, and as defined in \ref{prop:w_unif_locLip}) satisfies \ref{ass:A2}. We now consider the Markov kernel associated to the SGD schemes, denoting the Borel sets $\mathcal{B}(\R^D)$:
$$P_\alpha : \app{\R^D \times \mathcal{B}(\R^D)}{[0, 1]}{Y, B}{\Int{\SS^{d-1}}{}\mathbbold{1}_B(Y - \alpha \varphi(Y, \theta))\dd \mu(\theta)}.$$
With $\lambda$ denoting the Lebesgue measure on $\R^D$, let 
$$\Gamma := \left\lbrace \alpha \in\ ]0, +\infty[\ |\ \forall \rho \ll \lambda,\ \rho P_\alpha \ll \lambda\right\rbrace.$$ 
We will verify the following assumption for both schemes:

\begin{assumption}\label{ass:A3} The closure of $\Gamma$ contains 0.
\end{assumption}

\blue{In \ref{prop:SWY_Gamma}, we prove a stronger result, namely that $\Gamma$ contains $]0, \frac{\npoints}{2}[$, which allows us to simply take learning rates $0 < \alpha < \frac{\npoints}{2}$, instead of having to specify $\alpha \in \Gamma$. As a weaker alternative, \ref{ass:A3} could also be verified by noticing that for any $\theta \in \SS^{d-1}$, the function $w_\theta$ is of class $\mathcal{C}^2$ almost everywhere (as detailed in \ref{sec:cells}), which allows us to apply \cite{bianchi2022convergence} Proposition 4 and shows that \ref{ass:A3} holds.}

\begin{prop}\label{prop:SWY_Gamma} For schemes \ref{eq:SGD_Bianchi} applied to $\SWY$ or $\SWpY$,
  $\Gamma\ = \R_+^{*} \setminus \{ \frac n 2 \}$. %
\end{prop}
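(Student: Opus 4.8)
The plan is to analyse the one-step transition map directly. Writing $T_{\alpha,\theta}(Y) := Y - \alpha\varphi(Y,\theta)$ with $\varphi$ the almost-everywhere gradient \ref{eqn:ae_grad}, Fubini's theorem gives $\rho P_\alpha = \int_{\SS^{d-1}}(T_{\alpha,\theta})_\#\rho\,\dd\mu(\theta)$ for every $\rho$, so that $\rho P_\alpha \ll\lambda$ as soon as each pushforward $(T_{\alpha,\theta})_\#\rho$ is absolutely continuous. By the cell structure of \ref{sec:cells}, on each region where the matching permutation $\config_\theta := \sort{Z}{\theta}\circ(\sort{Y}{\theta})^{-1}$ is locally constant, $\varphi(\cdot,\theta)$ is affine and $T_{\alpha,\theta}$ acts block-wise as $y_k\longmapsto M_\alpha\, y_k + \tfrac{2\alpha}{\npoints}\theta\theta^T z_{\config_\theta(k)}$, where $M_\alpha := I_d - \tfrac{2\alpha}{\npoints}\theta\theta^T$. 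The whole problem thus reduces to understanding $M_\alpha$.

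For the inclusion $\R_+^*\setminus\{\npoints/2\}\subseteq\Gamma$ I would compute $\det M_\alpha$. Since $\theta\theta^T$ is the rank-one orthogonal projector onto $\R\theta$, the matrix $M_\alpha$ has eigenvalue $1 - \tfrac{2\alpha}{\npoints}$ in the direction $\theta$ and eigenvalue $1$ on $\theta^\perp$, so $\det M_\alpha = 1 - \tfrac{2\alpha}{\npoints}$ and the full block-wise Jacobian of $T_{\alpha,\theta}$ on each cell equals $(1 - \tfrac{2\alpha}{\npoints})^{\npoints}$. For $\alpha\neq\npoints/2$ this is nonzero, so $T_{\alpha,\theta}$ restricts to an affine isomorphism on each cell; such a map has null-preserving preimages, hence for $\lambda(B)=0$ we get $\lambda(T_{\alpha,\theta}^{-1}(B))=0$ and therefore $(T_{\alpha,\theta})_\#\rho(B)=\rho(T_{\alpha,\theta}^{-1}(B))=0$ whenever $\rho\ll\lambda$. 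Integrating over $\theta$ gives $\rho P_\alpha(B)=0$, so $\alpha\in\Gamma$. In particular $(0,\npoints/2)\subseteq\Gamma$, which already suffices to verify \ref{ass:A3}.

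For the reverse inclusion $\npoints/2\notin\Gamma$, observe that at $\alpha=\npoints/2$ the block matrix $M_{\npoints/2}=I_d - \theta\theta^T$ degenerates to the orthogonal projection onto $\theta^\perp$, killing the $\theta$-direction in every block. Consequently $\theta^T T_{\npoints/2,\theta}(Y)_k = \theta^T z_{\config_\theta(k)}$, so the image of $T_{\npoints/2,\theta}$ is contained in the $\lambda$-null set $N_\theta := \{Y'\in\R^{\npoints\times d} : P_\theta\#\gamma_{Y'}=P_\theta\#\gamma_Z\}$, an intersection of $\npoints$ hyperplanes. For $\SWpY$, where $\mu=\bbsigma_p$ is supported on finitely many directions, $\rho P_{\npoints/2}=\tfrac1p\sum_{i=1}^p(T_{\npoints/2,\theta_i})_\#\rho$ is carried by the finite, hence $\lambda$-null, union $\bigcup_i N_{\theta_i}$; being nonzero it cannot be absolutely continuous, so $\npoints/2\notin\Gamma$ in this case.

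The main obstacle is the reverse inclusion for $\SWY$, where $\mu=\bbsigma$ averages the singular measures $(T_{\npoints/2,\theta})_\#\rho$ over a continuum of directions, which could a priori restore absolute continuity. To control this I would study the global map $\Phi:\SS^{d-1}\times\R^{\npoints\times d}\longrightarrow\R^{\npoints\times d}$, $\Phi(\theta,Y):=T_{\npoints/2,\theta}(Y)$, for which $\rho P_{\npoints/2}=\Phi_\#(\bbsigma\otimes\rho)$, and show that $\Phi$ fails to be a submersion on its support, so that by the co-area formula its image, and hence $\rho P_{\npoints/2}$, concentrates on a $\lambda$-null set. Concretely, $\partial_Y\Phi$ contributes only the rank $\npoints(d-1)$ coming from the projections onto $\theta^\perp$ in each block, while the derivative in $\theta$ can raise the rank by at most $\min(d-1,\npoints)$; establishing that the total rank of $D\Phi$ stays strictly below $\npoints d$, together with a clean description of the carrying null set, is the delicate step of the argument.
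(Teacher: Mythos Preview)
Your argument for the inclusion $\R_+^{*}\setminus\{\npoints/2\}\subseteq\Gamma$ is essentially the paper's proof: both reduce the question to the invertibility of $I_d-\tfrac{2\alpha}{\npoints}\theta\theta^T$, whose determinant $1-\tfrac{2\alpha}{\npoints}$ vanishes only at $\alpha=\npoints/2$. The only cosmetic difference is that you work cell-by-cell (fixing the sorting permutation locally), whereas the paper bounds $\rho P_\alpha(B)$ above by a sum over \emph{all} permutations $\tau\in\mathfrak{S}_\npoints$, replacing the piecewise-affine map $T_{\alpha,\theta}$ by a finite family of globally affine maps; this avoids having to argue that the cell boundaries are $\lambda$-null, but the idea is identical.

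Where you diverge from the paper is in attempting the reverse inclusion $\npoints/2\notin\Gamma$. The paper's proof in fact establishes only the forward inclusion, despite the statement being written as an equality; since the goal is merely to verify \ref{ass:A3} (that $0\in\overline{\Gamma}$), the inclusion $]0,\npoints/2[\subset\Gamma$ is all that is needed, and the paper stops there. Your argument for $\mu=\bbsigma_p$ is correct and goes beyond the paper. For $\mu=\bbsigma$, your proposed submersion/Sard strategy is reasonable but, as you yourself note, incomplete: the rank bound $n(d-1)+\min(d-1,n)$ you sketch does not obviously fall below $nd$ in all regimes (e.g.\ try $d>n$), and the pushforward $\Phi_\#(\bbsigma\otimes\rho)$ could in principle be absolutely continuous even though each fibre $(T_{\npoints/2,\theta})_\#\rho$ is singular. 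So this step would require substantially more work, but it is also not something the paper actually claims to prove.
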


\begin{proof}
	Let $\mu \in \lbrace \bbsigma,\; \bbsigma_p\rbrace$. Recall the line-by-line notation $Y = (y_1, \cdots, y_\npoints)^T \in \R^{\npoints \times d}$. We also denote $Z_\tau := (z_{\tau(1)}, \cdots, z_{\tau(\npoints)})^T$ for $\tau \in \mathfrak{S}_\npoints$. Let $\rho \ll \lambda$ and $B \in \mathcal{B}(\R^D)$ such that $\lambda(B) = 0$. We have, with $\alpha' := 2\alpha/\npoints$:
	\begin{align*}\rho P_\alpha(B) &= \Int{\R^D}{}\Int{\SS^{d-1}}{}\mathbbold{1}_B\left(Y(I - \alpha'\theta\theta^T) + \alpha'Z_{\sort{Z}{\theta} \circ (\sort{Y}{\theta})^{-1}}\theta\theta^T\right)\dd \mu(\theta)\dd\rho(Y)\\
		&\leq \Sum{\tau \in \mathfrak{S}_\npoints}{}\ \Int{\R^D}{}\Int{\SS^{d-1}}{}\mathbbold{1}_B\left(Y(I - \alpha'\theta\theta^T) + \alpha'Z_{\tau}\theta\theta^T\right)\dd \mu(\theta)\dd\rho(Y)\\
		&= \Sum{\tau \in \mathfrak{S}_\npoints}{}\ \Int{\SS^{d-1}}{}I_\tau(\theta)\dd \mu(\theta),
	\end{align*}
	where $I_\tau(\theta) := \Int{\R^D}{}\mathbbold{1}_B\left(Y(I - \alpha'\theta\theta^T) + \alpha'Z_{\tau}\theta\theta^T\right)\dd\rho(Y)$, and where the last line is obtained by applying Tonelli's theorem. Let $\tau \in \mathfrak{S}_\npoints$ and $\theta \in \SS^{d-1}$. We now assume $\alpha' \neq 1$, which is to say $\alpha \neq \npoints/2$.
	We operate the affine change of variables $X =\phi(Y):= Y(I - \alpha'\theta\theta^T) + \alpha'Z_{\tau}\theta\theta^T$, which is invertible for $\alpha'\neq 1$.
	We have 
	$$I_\tau(\theta) = \Int{\R^D}{}\mathbbold{1}_B(\phi(Y))\dd \rho(Y) = \Int{\R^D}{}\mathbbold{1}_B(X)\dd\phi \# \rho(X) = \phi\#\rho(B).$$
	Now since $\phi$ is affine and invertible, $\phi \#\rho \ll \lambda$, thus $\phi \#\rho(B) = 0$, and finally $\rho P_\alpha(B) = 0$. This proves that $\rho P_\alpha \ll \lambda$ for $\alpha > 0$ differing from $\npoints/2$.
\end{proof}

Now that we have verified \ref{ass:A1}, \ref{ass:A2} and \ref{ass:A3}, we can apply \cite{bianchi2022convergence}, Theorem 2 to $\SWY$ and $\SWpY$. Let $0 < \alpha_0 < \npoints/2$.

\begin{theorem}[\cite{bianchi2022convergence}, Theorem 2 applied to $\SWY$ and $\SWpY$: convergence of the interpolated SGD scheme]\label{thm:SGD_interpolated_cv}
	Let $(Y_\alpha^{(t)}), \alpha \in ]0, \alpha_0], t \in \N$ a collection of SGD sequences associated to~\ref{eq:SGD_Bianchi} applied to $\SWY$ or $\SWpY$. Consider $(Y_\alpha)$ their associated piecewise affine interpolations. For any $\mathcal{K}$ compact of $\R^D$ and any $\varepsilon > 0$, we have for $F \in \lbrace \SWY,\ \SWpY\rbrace$ and $\mu \in \lbrace \bbsigma,\ \bbsigma_p\rbrace$ respectively
	\begin{equation}
		\underset{\substack{\alpha \longrightarrow 0 \\ \alpha \in\ ]0, \alpha_0]}}{\lim}\ \nu \otimes \mu^{\otimes\N}\left( d_c(Y_\alpha, S_{-\partial_C F}(\mathcal{K})) > \varepsilon\right) = 0,
	\end{equation}
where $d_c$ is the metric of uniform convergence defined in~\ref{eq:metric_uc}.
\end{theorem}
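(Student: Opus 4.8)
The plan is to read \ref{thm:SGD_interpolated_cv} as a direct instantiation of Theorem~2 of Bianchi et al.~\cite{bianchi2022convergence}, so that the proof reduces to matching our objects to their abstract framework and citing the hypotheses already established above. Concretely, I would set $f(Y,\theta) := w_\theta(Y)$ (see \ref{def:wtheta}), take $\mu \in \lbrace \bbsigma, \bbsigma_p \rbrace$, and note that $F = \mathbb{E}_{\theta \sim \mu}[w_\theta]$ is then exactly $\SWY$ or $\SWpY$ by \ref{eq:E_expectation_notation}. The field $\varphi$ of \ref{eqn:ae_grad} is precisely the descent direction used in \ref{alg:SGD} with noise level $a = 0$, so the iterates $(Y_\alpha^{(t)})$ have the abstract form \ref{eq:SGD_Bianchi}. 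Since $f$ is identical in both cases, the two schemes can be treated simultaneously.

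Next I would invoke the three assumptions of \cite{bianchi2022convergence}, all of which have been verified in the preceding discussion. Assumptions \ref{ass:A1} and \ref{ass:A2} follow from the quantified uniform local Lipschitz estimate of \ref{prop:w_unif_locLip} with the $\theta$-independent choice $\kappa(X,\theta) := \kappa_1(X)$: integrability, the sub-linear growth bound, and the local square-integrability bound are then immediate, $\kappa_1$ being a fixed continuous function of $X$ and $\mu$ a probability measure. Assumption \ref{ass:A3} is exactly \ref{prop:SWY_Gamma}, which gives $\Gamma = \R_+^* \setminus \lbrace \npoints/2 \rbrace$, whence $0 \in \overline{\Gamma}$. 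With A1--A3 in hand, Theorem~2 of \cite{bianchi2022convergence} delivers the stated weak convergence of the interpolated processes to $S_{-\partial_C F}(\mathcal{K})$ for the metric $d_c$ of \ref{eq:metric_uc}; here the differential inclusion \ref{eqn:Clarke_DI} and the Clarke subdifferential $\partial_C F$ are well posed because $F$ is locally Lipschitz by \ref{thm:locLip}.

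The one point genuinely requiring care is that $\varphi$ qualifies as an \emph{almost-everywhere gradient} in the precise sense of \cite{bianchi2022convergence} (Definition~1), i.e. that $\varphi = \nabla f$ holds $\lambda \otimes \mu$-almost everywhere on $\R^D \times \SS^{d-1}$. I would establish this by fixing $\theta$ and observing that the sorting permutations $\sort{Y}{\theta}$ and $\sort{Z}{\theta}$ are uniquely determined, and locally constant in $Y$, off the $\lambda$-null set of configurations with a tie $\theta^T y_i = \theta^T y_j$; on the complementary full-measure set $w_\theta$ is smooth with gradient \ref{eqn:ae_grad}, as computed in \cite{bonneel2015sliced}. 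A Tonelli argument then promotes this slice-wise statement to $\lambda \otimes \mu$-almost-everywhere equality, which is the main (and essentially only) obstacle in the reduction. The absolute continuity of the initial law $\nu$ required by the framework is assumed in \ref{sec:bianchi_setting}, closing the verification.
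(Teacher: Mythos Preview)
Your proposal is correct and follows exactly the paper's approach: the theorem is stated as a direct application of Theorem~2 in \cite{bianchi2022convergence}, and the ``proof'' consists entirely of the verification of Assumptions~\ref{ass:A1}--\ref{ass:A3} carried out in the paragraphs preceding the statement, via \ref{prop:w_unif_locLip} (with the $\theta$-independent choice $\kappa(X,\theta)=\kappa_1(X)$) and \ref{prop:SWY_Gamma}. Your explicit treatment of the almost-everywhere gradient condition is slightly more detailed than the paper's, which dispatches it in \ref{sec:bianchi_setting} by citing \cite{bonneel2015sliced}; otherwise the arguments coincide.
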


It is to be understood that when the SGD step decreases, the interpolated schemes converge towards the set of solutions of the differential inclusion related to the continuous SGD equation. This convergence is weak: the distance to this set approaches 0 in probability, and $S_{-\partial_C F}(\mathcal{K})$ is a set of solutions which we do not know how to compute, however we can study some theoretical properties of the solutions given a suitable starting point $Y(0)$, see \ref{rem:flows}.

\begin{remark}\label{rem:flows}
	For $\SWY$, if the initial position $Y^{(0)}$ belongs to a maximal connected component $\mathcal{V}$ of the differentiability set $\mathcal{U}$ (which is open), then consider the \textit{gradient flow} differential equation
	\begin{equation}\label{eqn:SWY_flow}
		\cfrac{\partial \gamma}{\partial t}(Y, s) = -\nabla \SWY(\gamma(Y, s)),\quad  \gamma(Y, 0) = Y,\quad \gamma(Y, s) \in \mathcal{V}.
	\end{equation}
	Since $\SWY$ is of class $\mathcal{C}^1$ on $\mathcal{V}$ (by \ref{thm:bonneel_diff}), with $\nabla\SWY$ Lipschitz (locally would suffice), standard flow results show that there exists a unique solution $\gamma(Y, \cdot)$ for any $Y \in \mathcal{V}$ defined on some interval $]a_Y, b_Y[ \subset \R$, which defines a continuous function $\gamma: \mathcal{D} \longrightarrow \mathcal{V}$, with $\mathcal{D} = \lbrace (s, Y) \in \R \times \mathcal{V}\ |\ s \in ]a_Y, b_Y[] \rbrace$. Since in our case, we consider a \textit{gradient} flow, and since for any $c \in \R,$ the set $A_c := \lbrace Y \in \mathcal{V}\ |\ \SWY(Y) \leq c \rbrace$ is compact, in fact the flows $\gamma(Y, s)$ are defined for $s \in [0, +\infty[$. Furthermore, if a sequence $(\gamma(Y, s_m))_{m\in \N}$ were to converge to a limit $Y^{\infty}$, then one would have $Y^{\infty} \in \mathcal{V}$ and $\nabla\SWY(Y^{\infty}) = 0$. Our work does not show that the set $\mathcal{Z}_\mathcal{V} := \lbrace Y \in \mathcal{V}\ |\ \nabla \SWY(Y) = 0\rbrace$ of critical points of $\SWY$ is finite, however if that were the case, then more standard euclidean gradient flow results show that for any $Y \in \mathcal{V},\: \exists Y^\infty \in \mathcal{Z}_\mathcal{V}:\; \gamma(Y, s) \xrightarrow[s \longrightarrow +\infty]{} Y^\infty$.

	Note that given a learning rate $\alpha > 0$, an SGD scheme \ref{eq:SGD_Bianchi} applied to $\SWY$ and starting in $\mathcal{V}$ has no reason to stay in $\mathcal{V}$, and we unfortunately do not have equality with a discretised version of the gradient flow. However, thanks to Bianchi et al. \cite{bianchi2022convergence} Theorem 1, the trajectories stay almost-surely in differentiability points of $\SWY$ and $w_\theta$, and thus almost-surely, $\varphi(Y^{(t)}, \theta^{(t+1)}) = \nabla w_{\theta^{(t+1)}}(Y^{(t)})$.
\end{remark}

\subsection{Convergence of Noised SGD Schemes on \texorpdfstring{$\SWY$}{E} and \texorpdfstring{$\SWpY$}{Ep}}\label{sec:noised_sgd}
In order to prove stronger convergence results we need to consider noised variants of our SGD schemes. Consider $\varepsilon \sim \eta := \mathcal{N}(0, I_D)$ an independent noise, our schemes become:
\begin{equation}\label{eqn:SWY_noisedSGD}
	Y^{(t+1)} = Y^{(t)} - \alpha \varphi(Y^{(t)}, \theta^{(t+1)}) + \alpha a\varepsilon^{(t+1)},\quad (Y^{(0)}, (\theta^{(t)})_{t \in \N}, (\varepsilon^{(t)})_{t \in \N}) \sim \nu \otimes \mu^{\otimes \N} \otimes \eta^{\otimes \N}
\end{equation}

where $\mu= \bbsigma$ for $\SWY$ and $\bbsigma_p$ for $\SWpY$.
We follow the method from \cite{bianchi2022convergence}, which suggests that adding a small perturbation (that decreases with the step size) allows us to verify additional suitable assumptions. Note that this modification does not impact our verification of the previous assumptions 1 through 3. Bianchi et al. introduce the following assumption:

\begin{assumption}\label{ass:A4}
	there exists $V,p: \R^D \rightarrow \R_+$ and $\beta : \R_+^* \rightarrow \R_+^*$ measurable, as well as $C \geq 0$, such that for any $\alpha \in \Gamma\ \cap\ ]0, \alpha_0]$:
	\begin{itemize}
		\item[i)] $\exists R(\alpha) > 0,\; \delta(\alpha) >0,\; \exists \rho(\alpha)$ a probability measure on $\R^D$, such that:
		$$\forall Y \in \oll{B}(0, R),\; \forall A \in \mathcal{B}(\R^D),\; P_\alpha(Y, A) \geq \delta \rho(A).$$
		\item[ii)] $\underset{Y \in \oll{B}(0, R)}{\sup}\ V(Y) < +\infty$ and $\underset{Y \in B(0, R)^c}{\inf}\ p(Y) >0$, with:
		$$\forall Y \in \R^D,\; P_\alpha V(Y) \leq V(Y) - \beta(\alpha)p(Y) + C\beta(\alpha)\mathbbold{1}_{\oll{B}(0,R)}(Y).$$
		\item[iii)] $p(Y) \xrightarrow[Y \longrightarrow \infty]{} +\infty$.
	\end{itemize}
\end{assumption}

Thanks to Bianchi et al. \cite{bianchi2022convergence}, Proposition 5, this noised setting implies immediately \ref{ass:A4} i), for \textit{any} choice of $R>0$. They also suggest more restrictions on $f$ that imply \ref{ass:A4} ii) and iii), which our use case does not satisfy. We shall verify \ref{ass:A4} ii) and iii) for $\SWY$ and $\SWpY$ separately, but using similar methods. Beforehand, let us remark that the Markov kernel associated to~\ref{eqn:SWY_noisedSGD} is determined by the following action on measurable functions $g: \R^D \longrightarrow \R$:
$$P_\alpha g(Y) = \Int{\SS^{d-1} \times \R^D}{}g(Y - \alpha\varphi(Y, \theta)+ \alpha a X)\dd\mu(\theta)\dd\eta(X).$$

\begin{prop}[Drift property for noised SGD on $\SWY$]\label{prop:drift_SGD_SWY}\
	Let $V := \|\cdot\|_F^2, \; \alpha_0 < 1$ and $p(Y) := \cfrac{2}{d\npoints}(1-\alpha_0)\Sum{k=1}{\npoints}\|y_k\|_2^2$. There exists $R > 0$ and $C \geq 0:$
	$$\forall \alpha \in\ ]0, \alpha_0],\;\forall Y \in \R^D,\; P_\alpha V(Y) \leq V(Y) - \alpha p(Y) + C\alpha\mathbbold{1}_{\oll{B}(0,R)}(Y).$$
	Therefore, \ref{ass:A4} is satisfied for \ref{eqn:SWY_noisedSGD} when $\mu = \bbsigma$.
\end{prop}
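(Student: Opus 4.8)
The plan is to verify Assumption \ref{ass:A4} ii) and iii) directly by computing the action of the Markov kernel $P_\alpha$ on the Lyapunov function $V := \|\cdot\|_F^2$. Writing $\Phi(Y,\theta,X) := Y - \alpha\varphi(Y,\theta) + \alpha a X$ for the one-step map of the noised scheme \ref{eqn:SWY_noisedSGD}, I would start from
$$P_\alpha V(Y) = \Int{\SS^{d-1}\times \R^D}{}\|Y - \alpha\varphi(Y,\theta) + \alpha a X\|_F^2 \dd\bbsigma(\theta)\dd\eta(X),$$
and expand the square. Since $\eta = \mathcal{N}(0, I_D)$ is centred and independent of $\theta$, the cross term involving $X$ integrates to zero and the $\|X\|_F^2$ term contributes a constant $\alpha^2 a^2 D$. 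This leaves
$$P_\alpha V(Y) = \|Y\|_F^2 - 2\alpha\Int{\SS^{d-1}}{}Y\cdot\varphi(Y,\theta)\dd\bbsigma(\theta) + \alpha^2\Int{\SS^{d-1}}{}\|\varphi(Y,\theta)\|_F^2\dd\bbsigma(\theta) + \alpha^2 a^2 D.$$
The first-order term is the crux: using the expression \ref{eqn:ae_grad} for $\varphi$, the inner product $y_k\cdot \varphi_k(Y,\theta) = \frac{2}{\npoints}(\theta^Ty_k)\theta^T(y_k - z_{\cdots})$, and integrating $\theta\theta^T$ over the sphere against $I/d$ recovers a term of the form $\frac{2}{d\npoints}\|Y\|_F^2$ minus a cross term linear in $Y$ (by Cauchy--Schwarz and $\int\theta\theta^T\dd\bbsigma = I/d$). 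This is exactly the mechanism producing the claimed $p(Y) = \frac{2}{d\npoints}(1-\alpha_0)\sum_k\|y_k\|_2^2$, where the factor $(1-\alpha_0)$ absorbs the sign-indefinite contribution of the second-order $\alpha^2$ term into the leading negative drift (this is why one requires $\alpha_0 < 1$).

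The key estimate is therefore to bound the quadratic term $\alpha^2\int\|\varphi(Y,\theta)\|_F^2\dd\bbsigma$ and the cross term against $z$. Since $\|\varphi(Y,\theta)\|_F^2 \le \frac{4}{\npoints^2}\sum_k\|y_k - z_{\cdots}\|_2^2$ (using $\|\theta\theta^T\|_{\mathrm{op}} = 1$), this grows at most quadratically in $\|Y\|_F$ with a linear-in-$\|Y\|$ remainder controlled by $\|Z\|_{\infty,2}$. Collecting terms, I expect to reach a bound of the shape
$$P_\alpha V(Y) \le V(Y) - \alpha\,\frac{2}{d\npoints}(1-\alpha_0)\|Y\|_F^2 + \alpha\big(c_1\|Y\|_F + c_2\big)$$
for constants $c_1, c_2$ depending on $\npoints, d, a, D$ and $\|Z\|_{\infty,2}$ but not on $\alpha$ (after dividing the $\alpha^2$ terms by a single $\alpha \le \alpha_0$). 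The affine remainder $c_1\|Y\|_F + c_2$ is dominated by the quadratic drift $-\alpha p(Y)$ for $\|Y\|_F$ large, so there exists $R>0$ outside of which $P_\alpha V(Y) \le V(Y) - \alpha p(Y)$, and inside $\oll{B}(0,R)$ the slack is absorbed into the constant $C\alpha\mathbbold{1}_{\oll{B}(0,R)}$. Condition \ref{ass:A4} iii) is then immediate since $p(Y) = \frac{2}{d\npoints}(1-\alpha_0)\|Y\|_F^2 \to +\infty$ as $\|Y\|\to\infty$, and ii) follows from $\sup_{\oll{B}(0,R)}V < +\infty$ together with $\inf_{B(0,R)^c}p > 0$.

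The main obstacle I anticipate is controlling the first-order drift term cleanly: the summand $z_{\sort{Z}{\theta}\circ(\sort{Y}{\theta})^{-1}(k)}$ depends on $\theta$ through the sorting permutations, so one cannot naively pull it out of the integral. The saving grace is that $\|z_{\cdots}\|_2 \le \|Z\|_{\infty,2}$ uniformly in $\theta$ and $k$, so all contributions involving $z$ can be bounded by $\|Z\|_{\infty,2}$ via Cauchy--Schwarz without needing to track the permutation explicitly; only the pure $\|Y\|_F^2$ term, which comes from $\int\theta\theta^T\dd\bbsigma = I/d$ and is permutation-independent, needs to be computed exactly. This decouples the delicate combinatorial dependence from the quadratic coercive term and lets the estimate go through.
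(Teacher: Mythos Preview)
Your overall strategy---expand $P_\alpha V$, separate the noise contribution, and split into a first-order drift term and a second-order $\alpha^2$ term---is exactly what the paper does. There is, however, a genuine gap in your treatment of the second-order term $\alpha^2\int\|\varphi\|_F^2\dd\bbsigma$.

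Your pointwise bound $\|\varphi(Y,\theta)\|_F^2 \le \frac{4}{\npoints^2}\sum_k\|y_k - z_{\cdots}\|_2^2$ uses only $\|\theta\theta^T\|_{\mathrm{op}}=1$ and therefore discards the $1/d$ factor that sphere-integration should provide. With this bound the leading quadratic contribution of the second-order term is $\frac{4\alpha\alpha_0}{\npoints^2}\|Y\|_F^2$, not $\frac{4\alpha\alpha_0}{d\npoints}\|Y\|_F^2$. The combined quadratic drift then has coefficient $-\frac{4}{\npoints}\bigl(\frac{1}{d}-\frac{\alpha_0}{\npoints}\bigr)$, which is negative only when $\alpha_0<\npoints/d$; if $\npoints<d$ the hypothesis $\alpha_0<1$ is no longer sufficient and you do not recover the stated $p(Y)=\frac{2}{d\npoints}(1-\alpha_0)\|Y\|_F^2$.

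The paper recovers the missing $1/d$ by partitioning $\SS^{d-1}$ into the sets $\Theta_{k,l}^{Y,Z}$ of \ref{cor:crit_points_S}, on each of which $z_{\sort{Z}{\theta}\circ(\sort{Y}{\theta})^{-1}(k)}$ equals the constant $z_l$; the second-order integral then rewrites as $\frac{4\alpha^2}{d\npoints^2}\sum_{k,l}(y_k-z_l)^T S_{k,l}^{Y,Z}(y_k-z_l)$, and $S_{k,l}^{Y,Z}\preceq I_d$ yields the required $\frac{4\alpha\alpha_0}{d\npoints}\|Y\|_F^2$. Alternatively, your own remark in the last paragraph already contains the fix: the $(\theta^Ty_k)^2$ part of $\varphi_k^T\varphi_k=\frac{4}{\npoints^2}(\theta^T(y_k-z_{\cdots}))^2$ is permutation-independent and integrates exactly to $\|y_k\|^2/d$, so you should compute the leading quadratic of the second-order term exactly as well, not only in the first-order term.
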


\begin{proof}
	Let $Y \in \R^D,\; \alpha \in ]0, 1[$ and $\Delta(Y) := P_\alpha V(Y) - V(Y)$. We expand the square, then leverage the fact that $\eta$ is centred, and decompose\blue{, writing $\varphi_k := \varphi(Y, \theta)_{k, \cdot}$}:
	$$\Delta(Y) = \alpha^2 a^2\npoints d + \underbrace{\alpha^2 \Sum{k=1}{\npoints}\Int{\SS^{d-1}}{}\blue{\varphi_k^T\varphi_k}\dd \bbsigma(\theta)}_{\Delta_1(Y)} \underbrace{-2\alpha\Sum{k=1}{\npoints}\Int{\SS^{d-1}}{}\blue{y_k^T\varphi_k}\dd \bbsigma(\theta)}_{\Delta_2(Y)} .$$
	We have $\blue{\varphi_k^T\varphi_k} = \cfrac{4}{\npoints^2}(y_k - z_{\sort{Z}{\theta} \circ (\sort{Y}{\theta})^{-1}(k)})^T \theta \theta^T (y_k - z_{\sort{Z}{\theta} \circ (\sort{Y}{\theta})^{-1}(k)})$.
	Then recall that for all $\theta \in \Theta_{k,l}^{Y,Z}$, $ z_{\sort{Z}{\theta} \circ (\sort{Y}{\theta})^{-1}(k)} = z_l$. It follows that %
	\begin{align*}\Delta_1(Y) &= \cfrac{4\alpha^2}{\npoints^2}\Sum{(k,l) \in \llbracket 1, \npoints \rrbracket^2}{}\Int{\Theta_{k,l}^{Y,Z}}{}(x_k-z_l)^T\theta\theta^T(y_k-z_l)\dd\bbsigma(\theta) \\
		&= \cfrac{4\alpha^2}{d\npoints^2}\Sum{(k,l) \in \llbracket 1, \npoints \rrbracket^2}{}(y_k-z_l)^TS_{k,l}^{Y,Z}(y_k-z_l)\\
		&\leq  \cfrac{4\alpha\alpha_0}{d\npoints^2}\Sum{(k,l) \in \llbracket 1, \npoints \rrbracket^2}{} \|y_k-z_l\|_2^2
		\leq \cfrac{4\alpha\alpha_0}{d\npoints}\left(\Sum{k=1}{\npoints}(\|y_k\|_2^2 + 2\|Z\|_{\infty, 2}\|y_k\|_2) + \npoints \|Z\|_{\infty, 2}^2 \right),
	\end{align*}
	where we used the inequality $S_{k,l}^{Y, Z} \preceq I_d$.

	Now for $\Delta_2$, we have $\blue{y_k^T\varphi_k} = \frac{2}{\npoints}(\theta^T y_k) \theta^T(y_k - z_{\sort{Z}{\theta} \circ (\sort{Y}{\theta})^{-1}(k)})$, hence
	\begin{align*}\Delta_2(Y) &= -\cfrac{4\alpha}{d\npoints}\Sum{(k,l) \in \llbracket 1, \npoints \rrbracket^2}{}y_k^TS_{k,l}^{Y,Z}(y_k - z_l) \\
		&=-\cfrac{4\alpha}{d\npoints}\Sum{k=1}{\npoints}\|y_2\|^2 + \cfrac{4\alpha}{d\npoints}\Sum{(k,l) \in \llbracket 1, \npoints \rrbracket^2}{}y_k^TS_{k,l}^{Y,Z}z_l\\
    	&\leq -\cfrac{4\alpha}{d\npoints}\Sum{k=1}{\npoints}\|y_2\|^2 + \cfrac{4\alpha}{d}\|Z\|_{\infty, 2}\Sum{k=1}{\npoints}\|y_k\|_2,
    \end{align*}
    since $\Sum{l=1}{\npoints}S_{k,l}^{Y, Z} = I_d$ and $S_{k,l}^{Y, Z} \preceq I_d$. Finally,
	\begin{align*}\Delta(Y) &\leq \alpha\bigg[- 		\underbrace{\cfrac{4}{d\npoints}(1-\alpha_0)\Sum{k=1}{\npoints}\|y_k\|_2^2}_{q(Y)}\\
		&+\underbrace{\alpha_0a^2\npoints d + \cfrac{4\alpha_0}{d}\|Z\|_{\infty, 2}^2 + \cfrac{4}{d}\|Z\|_{\infty, 2}\Sum{k=1}{\npoints}\|y_k\|_2 + \cfrac{8\alpha_0}{d\npoints}\|Z\|_{\infty, 2}\Sum{k=1}{\npoints}\|y_k\|_2}_{r(Y)}\bigg].
	\end{align*}
	Now since $\cfrac{r(Y)}{q(Y)} \xrightarrow[\|Y\| \longrightarrow +\infty]{} 0$, there exists $R>0$ such that for $Y \in \R^D$ such that $\|Y\|_{\infty, 2} > R$, we have $r(Y) \leq q(Y)/2$. In that case, we have $\Delta(Y) \leq  \alpha(-q(Y) + q(Y)/2) = -\alpha q(Y)/2$. For $Y \in \R^D$ such that $\|Y\|_{\infty, 2}\leq R$, we have $\Delta(Y) \leq \alpha r(Y) \leq \alpha\underset{\|Y\|_{\infty, 2} \leq R}{\max}r(Y) =: C\alpha$ ($C$ exists since $r$ is continuous on the compact $\oll{B}(0, R)$.)
	This proves that for any $Y \in \R^D,\; \Delta(Y) \leq -\alpha q(Y)/2 + C\alpha\mathbbold{1}_{\oll{B}(0, R)}(Y)$.
	\end{proof}

We now turn to the scheme for  $\SWpY$. Let $A := \cfrac{1}{p}\Sum{j=1}{p}\theta_j\theta_j^T$, and consider $\lambda_{\min}(A)$ its smallest eigenvalue. Note that $\lambda_{\min}(A) > 0$, since we assumed $\Span(\theta_j)_{j \in \llbracket 1, p \rrbracket} = \R^d$.

\begin{prop}[Drift property for noised SGD on $\SWpY$]\label{prop:drift_SGD_SWpY}\
		Let $V := \|\cdot\|_F^2, \; \alpha_0 < \npoints$ and $q(Y) := \cfrac{2}{\npoints}\left(1-\cfrac{\alpha_0}{\npoints}\right)\lambda_{\min}(A)\Sum{k=1}{\npoints}\|y_k\|_2^2$. There exists $R > 0$ and $C \geq 0:$

	$$\forall \alpha \in\ ]0, \alpha_0],\;\forall Y \in \R^D,\; P_\alpha V(Y) \leq V(Y) - \alpha q(Y) + C\alpha\mathbbold{1}_{\oll{B}(0,R)}(Y).$$

	Therefore, \ref{ass:A4} is satisfied for \ref{eqn:SWY_noisedSGD} when $\mu = \bbsigma_p$.
\end{prop}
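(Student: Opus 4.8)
The plan is to mirror the proof of \ref{prop:drift_SGD_SWY} essentially verbatim, the only structural change being that the spherical integral $\Int{\SS^{d-1}}{}\theta\theta^T\dd\bbsigma = I/d$ is now replaced by the empirical second-moment matrix $A = \frac1p\Sum{j=1}{p}\theta_j\theta_j^T$, whose smallest eigenvalue $\lambda_{\min}(A)$ (positive by the spanning assumption) plays the role previously held by $1/d$. Concretely, I set $\Delta(Y) := P_\alpha V(Y) - V(Y)$ and expand $\|Y - \alpha\varphi(Y,\theta) + \alpha a X\|_F^2$ inside the kernel $P_\alpha g(Y) = \Int{\SS^{d-1}\times\R^D}{} g(Y - \alpha\varphi(Y,\theta) + \alpha a X)\dd\bbsigma_p(\theta)\dd\eta(X)$. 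Since $\eta = \mathcal{N}(0, I_D)$ is centred, the noise cross-term vanishes and its square contributes $\alpha^2 a^2\npoints d$, so that (writing $\varphi_k := \varphi(Y,\theta)_{k,\cdot}$)
\[
\Delta(Y) = \alpha^2 a^2\npoints d + \alpha^2\mathbb{E}_{\theta\sim\bbsigma_p}\Big[\Sum{k=1}{\npoints}\varphi_k^T\varphi_k\Big] - 2\alpha\,\mathbb{E}_{\theta\sim\bbsigma_p}\Big[\Sum{k=1}{\npoints}y_k^T\varphi_k\Big] =: \alpha^2 a^2\npoints d + \Delta_1(Y) + \Delta_2(Y).
\]

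Next I would compute both terms using $\varphi(Y,\theta_i)_k = \frac2\npoints\theta_i\theta_i^T(y_k - z_{\config_i(k)})$ together with the idempotence $(\theta_i\theta_i^T)^2 = \theta_i\theta_i^T$ and $\frac1p\Sum{i}{}\theta_i\theta_i^T = A$. Expanding the quadratics and isolating the purely quadratic contributions in $y_k$, the terms $\frac1p\Sum{i}{} y_k^T\theta_i\theta_i^T y_k = y_k^T A y_k$ appear in both $\Delta_1$ and $\Delta_2$ and combine into
\[
\frac{4\alpha^2}{\npoints^2}\Sum{k=1}{\npoints}y_k^T A y_k - \frac{4\alpha}{\npoints}\Sum{k=1}{\npoints}y_k^T A y_k = -\frac{4\alpha}{\npoints}\Big(1-\frac{\alpha}{\npoints}\Big)\Sum{k=1}{\npoints}y_k^T A y_k.
\]
All remaining contributions are at most affine in the $\|y_k\|_2$: the cross term of $\Delta_2$ is bounded using $\|\theta_i\theta_i^T\|_{\mathrm{op}} = 1$ and $\|z_l\|_2 \le \|Z\|_{\infty,2}$ by $\frac{4\alpha}{\npoints}\|Z\|_{\infty,2}\Sum{k}{}\|y_k\|_2$, while the cross and constant terms of $\Delta_1$ carry an extra factor $\alpha \le \alpha_0$; I would collect all of them, together with $\alpha^2 a^2\npoints d \le \alpha\alpha_0 a^2\npoints d$, into a single term $\alpha\,r(Y)$ with $r$ affine in the $\|y_k\|_2$.

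The crucial estimate is then the upper bound on the combined quadratic form. Using $\alpha \le \alpha_0 < \npoints$ (so that $1 - \alpha/\npoints \ge 1 - \alpha_0/\npoints > 0$) and $y_k^T A y_k \ge \lambda_{\min}(A)\|y_k\|_2^2$, it is at most
\[
-\frac{4\alpha}{\npoints}\Big(1-\frac{\alpha_0}{\npoints}\Big)\lambda_{\min}(A)\Sum{k=1}{\npoints}\|y_k\|_2^2 = -2\alpha\,q(Y),
\]
whence $\Delta(Y) \le -2\alpha q(Y) + \alpha r(Y)$. Since $q$ is positive definite quadratic in the $\|y_k\|_2$ (because $\lambda_{\min}(A) > 0$) while $r$ is affine, $r(Y)/q(Y)\to 0$ as $\|Y\|_{\infty,2}\to\infty$, so there is $R > 0$ with $r(Y) \le q(Y)$ outside $\oll{B}(0,R)$, giving $\Delta(Y) \le -\alpha q(Y)$ there; on the compact ball $\oll{B}(0,R)$, continuity of $r$ and $q$ supplies a constant $C$ with $\Delta(Y) \le -\alpha q(Y) + C\alpha$. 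This is the claimed drift, and with $q(Y)\to+\infty$ it verifies \ref{ass:A4} ii) and iii). The only genuine subtlety — and the exact place where $\alpha_0 < \npoints$ and the factor $\lambda_{\min}(A)$ enter essentially — is the sign and size control of the combined quadratic form: the $\alpha^2$ self-term from $\Delta_1$ must be dominated by the $-\alpha$ descent term from $\Delta_2$ with a factor-$2$ margin to absorb the affine remainder $r$, which is precisely what $1 - \alpha_0/\npoints > 0$ guarantees.
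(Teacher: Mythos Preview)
Your proposal is correct and follows essentially the same approach as the paper's proof: expand $P_\alpha V(Y)-V(Y)$, use that the noise is centred, isolate the combined quadratic contribution $-\frac{4\alpha}{\npoints}\big(1-\frac{\alpha_0}{\npoints}\big)\Sum{k}{}y_k^TAy_k$ coming from $\Delta_1$ and $\Delta_2$, lower-bound $y_k^TAy_k$ by $\lambda_{\min}(A)\|y_k\|_2^2$, and conclude with the same $r/q\to 0$ argument as in \ref{prop:drift_SGD_SWY}. The only cosmetic difference is that the paper bounds $\Delta_1$ and $\Delta_2$ separately (replacing $\alpha^2\le\alpha\alpha_0$ in $\Delta_1$ first) before combining, whereas you combine the exact quadratic parts first and then apply $\alpha\le\alpha_0$; the resulting inequality $\Delta(Y)\le \alpha(-2q(Y)+r(Y))$ is identical.
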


We leverage the same strategy as \ref{prop:drift_SGD_SWY}, yet the technicalities of the upper-bounds differ.

\begin{proof}
	Let $Y \in \R^D$ and $\alpha \in\ ]0, \alpha_0]$. We expand the squares and use that $\eta$ is centred:
	\begin{align*}
		\Delta(Y) &:= P_\alpha V(Y) - V(Y) \\
		&= \alpha^2a^2\npoints d + \underbrace{\alpha^2\cfrac{1}{p}\Sum{j=1}{p}\Sum{k=1}{\npoints}\Sum{i=1}{d}\varphi(Y, \theta_j)_{k,i}^2}_{\Delta_1(Y)} \underbrace{-2\alpha\cfrac{1}{p}\Sum{j=1}{p}\Sum{k=1}{\npoints}\Sum{i=1}{d}y_{k,i}\varphi(Y, \theta_j)_{k,i}}_{\Delta_2(Y)}.
	\end{align*}
    On the one hand,
	\begin{align*}\Delta_1(Y) &= \cfrac{4\alpha^2}{p\npoints^2}\Sum{j=1}{p}\Sum{k=1}{\npoints}(y_k - z_{\sort{Z}{\theta_j} \circ (\sort{Y}{\theta_j})^{-1}(k)})^T\theta_j \theta_j^T(y_k - z_{\sort{Z}{\theta_j} \circ (\sort{Y}{\theta_j})^{-1}(k)})\\
		&\leq \cfrac{4\alpha\alpha_0}{\npoints^2}\left(\npoints \|Z\|_{\infty, 2}^2 + \Sum{k=1}{\npoints}\left(y_k^TAy_k + 2\|Z\|_{\infty, 2}\|y_k\|_2\right)\right).
	\end{align*}
	Similarly, $\Delta_2(Y) \leq -\cfrac{4\alpha}{\npoints}\Sum{k=1}{\npoints}y_k^TAy_k + \cfrac{4\alpha}{\npoints}\|Z\|_{\infty, 2}\Sum{k=1}{\npoints}\|y_k\|_2$. 
	Let $$q_0(Y) := \cfrac{4}{\npoints}\left(1 - \cfrac{\alpha_0}{\npoints}\right)\lambda_{\min}(A)\Sum{k=1}{\npoints}\|y_k\|_2^2,$$ $$r(Y) := \alpha_0a^2\npoints d + \cfrac{4\alpha_0}{\npoints}\|Z\|_{\infty, 2}^2 + \left(\cfrac{8\alpha_0}{\npoints^2} + \cfrac{4}{\npoints}\right)\|Z\|_{\infty, 2}\Sum{k=1}{\npoints}\|y_k\|_2.$$
	We have $\Delta(Y) \leq \alpha(-q_0(Y) + r(Y))$, and we can conclude using the same method as \ref{prop:drift_SGD_SWY}.
\end{proof}
Finally, we require the fairly natural assumption that $F$ admits a "chain rule".
\begin{assumption}\label{ass:A5}\ \\
	For any $Y \in \mathcal{C}_{\mathrm{abs}}(\R_+, \R^D),\; \ull{\forall} s > 0,\; \forall V \in \partial_CF(Y(s)),\; V^T \dot Y(s) = (F \circ Y)'(s)$.
\end{assumption}

In order to satisfy \ref{ass:A5}, we will use the following result:
\begin{prop}Any  $F: \R^D \longrightarrow \R$ locally Lipschitz and semi-concave admits a chain rule for the Clarke sub-differential, and thus satisfies \ref{ass:A5}.\label{prop:semi_concave_chain_rule}
\end{prop}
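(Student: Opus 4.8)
The plan is to reduce the problem to the concave case by peeling off the smooth quadratic. By definition of semi-concavity there is a constant $c \geq 0$ (here $c = 2/\npoints$) such that $G := F - \tfrac{c}{2}\|\cdot\|_2^2$ is concave, so that $F = G + h$ with $h := \tfrac{c}{2}\|\cdot\|_2^2$ of class $\mathcal{C}^{\infty}$. Since $h$ is strictly differentiable everywhere, Clarke's sum rule holds with \emph{equality}, $\partial_C F(X) = \partial_C G(X) + \{\nabla h(X)\}$, which is exactly the point where smoothness of $h$ is used. Thus any $V \in \partial_C F(Y(s))$ decomposes uniquely as $V = W + \nabla h(Y(s))$ with $W \in \partial_C G(Y(s))$, and it suffices to establish the chain rule separately for $G$ and for $h$ and then add.

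For the concave part I would first recall that for a concave $G$ the Clarke subdifferential coincides with the superdifferential: $W \in \partial_C G(X)$ implies $G(X') \leq G(X) + \langle W, X' - X\rangle$ for all $X'$. Given an absolutely continuous curve $Y \in \mathcal{C}_{\mathrm{abs}}(\R_+, \R^D)$, note that $G \circ Y$ is absolutely continuous (composition of a locally Lipschitz map with an absolutely continuous curve, using that $Y([0,k])$ is compact), hence differentiable for almost every $s$. Fix such an $s$ that is also a point of differentiability of $Y$, and let $W \in \partial_C G(Y(s))$. Applying the superdifferential inequality with $X' = Y(s+t)$, dividing by $t$ and letting $t \to 0^+$ gives $(G \circ Y)'(s) \leq \langle W, \dot{Y}(s)\rangle$; letting $t \to 0^-$ reverses the inequality, so
\begin{equation*}
	(G \circ Y)'(s) = \langle W, \dot{Y}(s)\rangle \quad \text{for a.e. } s > 0,\ \forall W \in \partial_C G(Y(s)).
\end{equation*}

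It then remains to combine the two pieces. At almost every $s$ (wherever $Y$ is differentiable) the ordinary chain rule for the $\mathcal{C}^1$ function $h$ yields $\langle \nabla h(Y(s)), \dot{Y}(s)\rangle = (h \circ Y)'(s)$. Adding this to the concave identity and using the decomposition $V = W + \nabla h(Y(s))$ gives, for almost every $s > 0$ and every $V \in \partial_C F(Y(s))$, the equality $\langle V, \dot{Y}(s)\rangle = (G \circ Y)'(s) + (h \circ Y)'(s) = (F \circ Y)'(s)$, which is precisely \ref{ass:A5}. I expect the only genuinely delicate steps to be the justification of the exact Clarke sum rule (relying on strict differentiability of $h$) and the identification of $\partial_C G$ with the concave superdifferential; the sandwich argument itself is elementary and is exactly where semi-concavity does the work, matching the notion of \emph{path differentiability} used in the framework of \cite{bianchi2022convergence,davis2020stochastic}.
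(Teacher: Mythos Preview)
Your proof is correct and takes a genuinely different route from the paper. The paper argues by citation: semi-concavity plus local Lipschitz gives Clarke regularity of $-F$ (Vial~\cite{vial1983strong}, Proposition~4.5), hence path differentiability of $F$ (Bolte--Pauwels~\cite{bolte2021conservative}, Proposition~2), hence the chain rule (Bolte et al.~\cite{bolte2021conservative}, Corollary~2). You instead carry out the argument by hand: split $F = G + h$ with $G$ concave and $h$ quadratic, invoke the exact Clarke sum rule $\partial_C F = \partial_C G + \{\nabla h\}$ (valid because $h$ is strictly differentiable), identify $\partial_C G$ with the concave superdifferential, and run the two-sided difference-quotient sandwich to obtain $(G\circ Y)'(s) = \langle W,\dot Y(s)\rangle$ at almost every $s$. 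The smooth piece is trivial, and summing gives the chain rule for $F$.

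The two delicate points you flag are exactly the right ones, and both are standard: the sum rule with equality is Clarke~\cite{clarke1990optimization}, Corollary~1 to Proposition~2.3.3, and the identification $\partial_C G = -\partial(-G)$ for concave $G$ follows from Clarke regularity of convex functions. Your argument is more self-contained and makes the role of semi-concavity transparent (it is precisely the supergradient inequality that allows the sandwich). The paper's approach is shorter on the page but black-boxes the mechanism inside the cited results; it also situates the statement in the broader framework of path differentiability, which is what~\cite{bianchi2022convergence} and~\cite{davis2020stochastic} actually consume. Either proof is fine here.
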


\begin{proof}
	Let $F: \R^D \longrightarrow \R$ locally Lipschitz and semi-concave. By Vial
	(1983)~\cite{vial1983strong}, Proposition 4.5, this implies that $-F$ is
	Clarke regular. Then, by Bolte and Pauwels \cite{bolte2021conservative},
	Proposition 2, the fact that $-F$ is Clarke regular implies that $F$ is path
	differentiable, and thus admits a chain rule, by Bolte et al.
	\cite{bolte2021conservative}, Corollary 2.
\end{proof}
Since $\SWY$ is semi-concave (\ref{prop:SWY_semi_concave}) and locally Lipschitz, \ref{prop:semi_concave_chain_rule} allows us to verify \ref{ass:A5} for \ref{eqn:SWY_noisedSGD}. We may follow the same line of thought for $\SWpY$, or alternatively we may use the fact that it is semi-algebraic (\ref{prop:SWpY_semi_algebraic}). By Bolte and Pauwels (2021),~\cite{bolte2021conservative}, Proposition 2, this implies that $\SWpY$ is path differentiable. Then by Bolte and Pauwels~\cite{bolte2021conservative}, Corollary 2, path differentiability implies having a chain rule for the Clarke sub-differential, which is verbatim~\cite{bianchi2022convergence}, \ref{ass:A5}. We now have all the assumptions for~\cite{bianchi2022convergence}, Theorem 3:

\begin{theorem}[Applying~\cite{bianchi2022convergence}, Theorem 3: convergence of noised SGD schemes to a critical point]\label{thm:noised_SGD_cv}\
	Consider a collection of noised SGD schemes $(Y_\alpha^{(t)})$, associated to~\ref{eqn:SWY_noisedSGD}, respectively for $F \in \lbrace \SWY,\ \SWpY\rbrace$, with steps $\alpha \in\ ]0, \alpha_0]$, with $\alpha_0 < 1$. Let $\mathcal{Z}$ the set of Clarke critical points of $F$, i.e. $\mathcal{Z} := \left\lbrace Y \in \R^D\ |\ 0 \in \partial_CF(Y) \right\rbrace$. For $\mu \in \lbrace \bbsigma,\ \bbsigma_p\rbrace$ respectively, we have:
	$$\forall \varepsilon > 0,\; \underset{t \longrightarrow +\infty}{\oll{\lim}}\ \nu \otimes \mu^{\otimes \N}\otimes \eta^{\otimes \N}\left(d(Y_\alpha^{(t)}, \mathcal{Z}) > \varepsilon\right) \xrightarrow[\substack{\alpha \longrightarrow 0\\ \alpha \in ]0, \alpha_0]}]{} 0.$$
\end{theorem}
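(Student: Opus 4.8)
The plan is to derive the statement as a direct application of \cite{bianchi2022convergence}, Theorem 3, whose conclusion is exactly the displayed convergence once its five hypotheses, our Assumptions \ref{ass:A1}--\ref{ass:A5}, are verified for $f = w_\theta$, $F \in \{\SWY, \SWpY\}$ and $\mu \in \{\bbsigma, \bbsigma_p\}$ respectively. Since most of these assumptions have already been checked in the preceding sections, the proof reduces to collecting them and confirming that the additive Gaussian perturbation of \ref{eqn:SWY_noisedSGD} and the range $\alpha \in\ ]0, \alpha_0]$ with $\alpha_0 < 1$ are simultaneously compatible with every hypothesis.

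First I would recall that Assumptions \ref{ass:A1}, \ref{ass:A2} and \ref{ass:A3} were established in \ref{sec:interpolated_SGD} for the noiseless scheme, using the uniform local Lipschitz bound of \ref{prop:w_unif_locLip} (taking $\kappa(X, \theta) := \kappa_1(X)$) and \ref{prop:SWY_Gamma} for the characterisation $\Gamma = \R_+^* \setminus \{\npoints/2\}$. The noise term $\alpha a \varepsilon^{(t+1)}$ modifies neither $f$, nor $\mu$, nor the absolute continuity of the kernel, so \ref{ass:A1}--\ref{ass:A3} carry over unchanged to the noised scheme.

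Next I would assemble Assumption \ref{ass:A4}. Its minorisation part i) holds for any $R > 0$ because the non-degenerate Gaussian noise makes $P_\alpha(Y, \cdot)$ dominate a fixed probability measure, via \cite{bianchi2022convergence}, Proposition 5. Its drift parts ii) and iii) are precisely the content of \ref{prop:drift_SGD_SWY} (for $\mu = \bbsigma$) and \ref{prop:drift_SGD_SWpY} (for $\mu = \bbsigma_p$), with Lyapunov function $V = \|\cdot\|_F^2$, rate $\beta(\alpha) = \alpha$, and a coercive $p(Y) \propto \sum_k \|y_k\|_2^2$ that tends to $+\infty$, which gives \ref{ass:A4} iii). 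Finally, Assumption \ref{ass:A5} follows from \ref{prop:semi_concave_chain_rule}: $\SWY$ is locally Lipschitz (\ref{thm:locLip}) and semi-concave (\ref{prop:SWY_semi_concave}), hence admits a Clarke chain rule, and the same holds for $\SWpY$ either by its semi-concavity or, equivalently, through path differentiability deduced from its semi-algebraicity (\ref{prop:SWpY_semi_algebraic}).

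The only genuine obstacle is the bookkeeping on $\alpha_0$: the drift estimate for $\SWY$ requires $\alpha_0 < 1$, that for $\SWpY$ requires $\alpha_0 < \npoints$, and the kernel absolute continuity requires $]0, \alpha_0] \subset \Gamma$, i.e. $\alpha_0 \neq \npoints/2$. The hard part is simply to notice that imposing $\alpha_0 < 1$ (hence $\alpha_0 < \npoints/2$ as soon as $\npoints \geq 2$) meets all three constraints at once, so every hypothesis holds uniformly over $\alpha \in\ ]0, \alpha_0]$. With \ref{ass:A1}--\ref{ass:A5} verified, \cite{bianchi2022convergence}, Theorem 3 then yields the stated convergence of $Y_\alpha^{(t)}$ to the set $\mathcal{Z}$ of Clarke critical points in each of the two cases.
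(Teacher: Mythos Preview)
Your proposal is correct and follows essentially the same approach as the paper: the statement is obtained by collecting the verifications of Assumptions \ref{ass:A1}--\ref{ass:A5} already carried out in \ref{sec:interpolated_SGD} and \ref{sec:noised_sgd}, then invoking \cite{bianchi2022convergence}, Theorem 3. Your explicit bookkeeping on $\alpha_0$ (why $\alpha_0 < 1$ suffices for all constraints when $\npoints \geq 2$) is a useful clarification that the paper leaves implicit.
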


It is to be understood that the euclidean distance between any sub-sequential limit of $(Y_\alpha^{(t)})_t$ and set of Clarke critical points $\mathcal{Z}$ approaches 0 in probability as the step size decreases. The distance $d$ in the Theorem refers to the $\|\cdot\|_2$-induced distance between the point $Y_\alpha^{(t)}\in \R^D$ and the set $\mathcal{Z} \subset \R^D$.

\blue{Computing the set of Clarke critical points of $\SWY$ remains an open problem, and seems out of reach considering the difficulty of the simpler problem of computing the points where $\SWY$ is differentiable and $\nabla \SWY = 0$ (see the discussion in \ref{sec:E_crit}). The difficulty lies at the boundaries of $\mathcal{U}$ (see \ref{eqn:U}), where there is no longer unicity of the sorting permutations of $(\theta^T x_k)_{k=1,\dots,n}$ and $(\theta^T z_l)_{l=1,\dots,n}$ for $\bbsigma$-almost-every $\theta \in \SS^{d-1}$. Computing the Clarke sub-differential at such points in closed form and determining the associated critical points seems out of reach since there is already no closed form for smooth critical points \ref{cor:crit_points_S}.}

For $\SWpY$, the set of Clarke critical points strictly contains the set of
critical points established in \ref{thm:Sp_crit_optloc_stable}. \blue{In
general, the set of Clarke critical points that lie outside of the set of
differentiability $\widetilde{\mathcal{Z}} := \mathcal{Z} \cap
\left(\cup_\config \mathcal{C}_\config\right)^c$ is not empty, yet by
\ref{thm:Sp_crit_optloc_stable} it cannot contain a local optimum, and thus only
contains saddle points. We believe that these saddle points will in practice
never be the limit of our noised SGD trajectories, since intuition suggests that
a trajectory attaining such a point at a certain time will find a decreasing
direction almost-surely. Showing such a result rigorously is out of the scope of
this paper since this question is still an active field in simpler smooth
cases~\cite{jin2017escape,jin2021nonconvex}. } \bluetwo{More precisely, the
minimisation of generic non-smooth non-convex functions $F$ is also still
actively studied. For instance, \cite{davis2022proximal} and
\cite{bianchi2023stochastic} investigate conditions to avoid convergence to
certain saddle points, for randomly initialised deterministic (and potentially
noised) proximal methods for semi-convex functions under novel strict saddle
conditions. Another related reference is \cite{davis2023active}, which studies
the non-convergence of noised sub-gradient descent to saddle points.} We
illustrate in \ref{fig:SWpY_sym_p3_Clarke} the Clarke critical points of $\SWpY$
for $p=3$, on the numerical example of \ref{sec:L2}.

\begin{figure}
	\centering
	\begin{subfigure}[c]{0.45\textwidth}
		\centering
		\includegraphics[width=\linewidth]{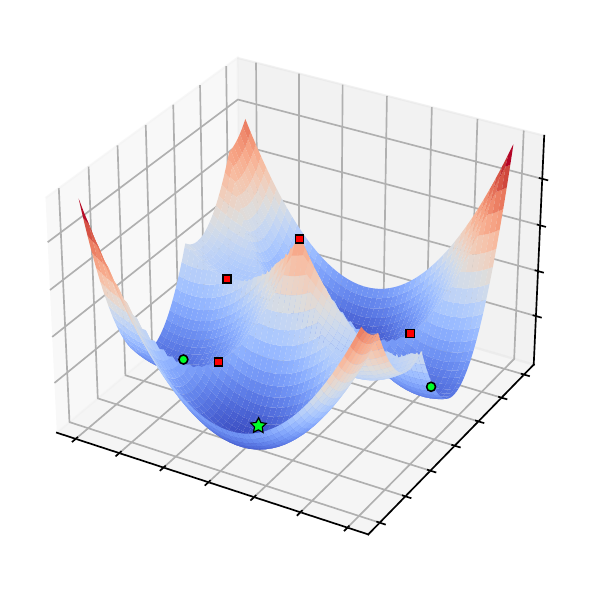}
	\end{subfigure}
	\begin{subfigure}[c]{0.45\textwidth}
		\centering
		\includegraphics[width=\linewidth]{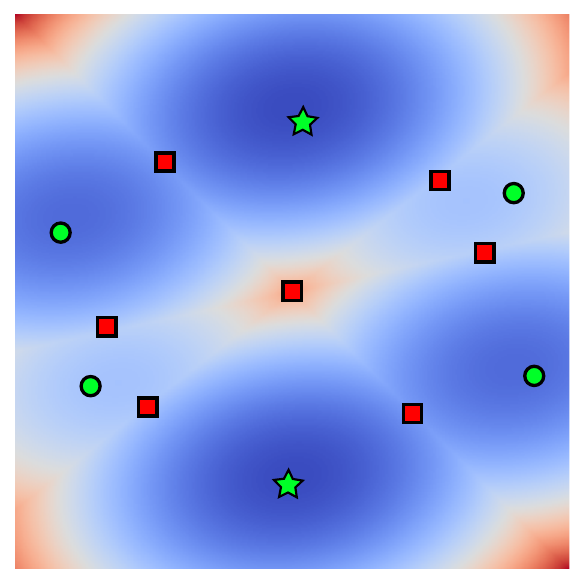}
	\end{subfigure}
	\caption{The stars, circles and squares are the Clarke critical points of $x \mapsto \SWpY(X = (-x, x)^T),\; x \in \R^2$ for $p=3$. The squares do not correspond to local optima of $\SWpY$, and are unlikely to be reached numerically. The circles and stars correspond to local optima of $\SWpY$: the stars correspond to the global optima and satisfy the desired results $\SWpY = 0$, while the circles are strict local optima. }
	\label{fig:SWpY_sym_p3_Clarke}
\end{figure}
In full generality (without the symmetry restriction, and with larger parameters $p, \npoints, d$), \blue{one may expect that} the Clarke critical points will have a similar structure.

\subsection{Discussion on result generalisation}

\textbf{Batching.} One may consider a variant in which at each step $t$, one draws a random batch of $b$ directions independently from a measure $\mu$ over $\SS^{d-1}$ ($\mu \in \lbrace \bbsigma, \bbsigma_p\rbrace$, for our purposes). Algorithmically, one does the following SGD scheme:
\begin{equation}\label{eqn:batched_SGD}
	Y^{(t+1)} = Y^{(t)} - \cfrac{\alpha}{b}\Sum{j=1}{b} \varphi(Y^{(t)}, \theta_j^{(t+1)}),\quad (Y^{(0)}, (\theta_j^{(t)})_{\substack{j \in \llbracket 1, b \rrbracket \\ t \in \N}}) \sim \nu \otimes (\mu^{\otimes b})^{\otimes \N}.
\end{equation}
In order to fit our theoretical framework (see \ref{sec:bianchi_setting}), we define 
$$g(Y, (\theta_1, \cdots, \theta_b)) := \cfrac{1}{b}\Sum{j=1}{b}f(Y, \theta_j).$$
Furthermore, the a.e. gradient of $g$ becomes $\psi(\cdot, (\theta_1, \cdots, \theta_b)) := \cfrac{1}{b}\Sum{j=1}{b}\varphi(\cdot, \theta_j)$ instead of $\varphi(\cdot, \theta^{(t)})$. The function over which \ref{eqn:batched_SGD} performs SGD is:
\begin{align*}
  G(Y)&= \Int{\SS^{d-1}}{}g(Y, \theta_1, \cdots, \theta_b)\dd\mu^{\otimes b}(\theta_1, \cdots, \theta_p) \\
  &= \Int{\SS^{d-1}}{}\cfrac{1}{b}\Sum{j=1}{b}f(Y, \theta_j)\dd\mu^{\otimes b}(\theta_1, \cdots, \theta_p) \\
  &= \Int{\SS^{d-1}}{}f(Y, \theta)\dd\mu(\theta) = F(Y).
\end{align*}
One may check easily that if Assumptions 1 through 5 of \ref{sec:bianchi_setting} are satisfied for $(f, F)$, then they are satisfied for $(g, G)$. As a consequence, all our results can be adapted without any difficulty to the batched setting.

\textbf{Barycentres.} If one were to replace $\SWY$ with the barycentre energy $\SWYbar$ \ref{eqn:bar}, the sample loss would become 
$$g(Y, \theta) = \Sum{j=1}{J}\lambda_jf_j(Y, \theta_j),\; \text{where}\; f_j(Y, \theta) := \W_2^2(P_\theta\#\gamma_Y, P_\theta\#\gamma_{Z^{(j)}}).$$
By sum, all of the previous results will hold, with the only technical point
being path differentiability, which is stable by sum
(\cite{bolte2021conservative}, Corollary 4). Note that this extension is also
valid for a Monte-Carlo approximation of $\SWY$, replacing $\SWY$ with $\SWpY$
in the barycentre formulation.

\subsection{A Result for Decreasing Learning Rates}\label{sec:decr_sgd}

\bluetwo{In \cite{davis2020stochastic}, Davis et al. show the convergence of
\textit{decreasing-step} noised SGD of a function $F$ under certain conditions.
Our goal is to apply their Theorem 4.2 to $F\in \{\SWY, \SWpY\}$ with the
following SGD scheme:
\begin{align}\label{eqn:noised_sgd_decreasing_lr}
	Y^{(t+1)} &= Y^{(t)} - \alpha^{(t)} \varphi(Y^{(t)}, \theta^{(t+1)}) + \alpha^{(t)} a\varepsilon^{(t+1)}, \\
	& (Y^{(0)}, (\theta^{(t)})_{t \in \N}, (\varepsilon^{(t)})_{t \in \N}) \sim \nu \otimes \mu^{\otimes \N} \otimes \eta^{\otimes \N}, \nonumber
\end{align}
where as before, $\mu \in \{\bbsigma, \bbsigma_p\}$ for $\SWY, \SWpY$
respectively, $\nu$ is the distribution of the initial position $Y^{(0)}$, $\eta
:= \mathcal{N}(0, I_D)$ is the noise distribution (it could be chosen more
generally, but we attempt to stay close to our previous formalism for
simplicity), and finally the learning rate sequence $(\alpha^{(t)})$ verify:
$$\forall t\in \R,\; \alpha^{(t)}\geq 0,\quad
\Sum{t=0}{+\infty}\alpha^{(t)}=+\infty,\; \text{and} \;
\Sum{t=0}{+\infty}(\alpha^{(t)})^2 < +\infty.$$

\begin{theorem}\label{thm:cv_decreasing_lr} Consider $(Y^{(t)})$ a trajectory of
	\ref{eqn:noised_sgd_decreasing_lr} for $\mu\in \{\bbsigma, \bbsigma_p\}$
	respectively, assume that it is almost-surely bounded. Then the sequence
	$F(Y^{(t)})$ is almost-surely convergent for $F\in \{\SWY, \SWpY\}$
	respectively, and almost-surely, any subsequential limit of $(Y^{(t)})$
	belongs to the set of Clarke critical points of $F$.
\end{theorem}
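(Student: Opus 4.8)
The plan is to deduce the statement directly from Davis et al.~\cite{davis2020stochastic}, Theorem 4.2, by checking its hypotheses for both $F = \SWY$ (with $\mu = \bbsigma$) and $F = \SWpY$ (with $\mu = \bbsigma_p$). Their framework governs a recursion $Y^{(t+1)} = Y^{(t)} - \alpha^{(t)} g^{(t)}$ in which the conditional mean $\E{g^{(t)} \mid \mathcal{F}_t}$ is a Clarke subgradient of the objective at $Y^{(t)}$, the remainder being a martingale-difference noise; it concludes, under a chain rule for $F$, Robbins--Monro stepsizes, almost-surely bounded iterates and a conditional second-moment bound on the noise, that $F(Y^{(t)})$ converges almost surely and that every cluster point of $(Y^{(t)})$ is a Clarke critical point. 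Our task is to recast \ref{eqn:noised_sgd_decreasing_lr} into this mould and verify each ingredient.

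First I would put the recursion in the right form by setting $g^{(t)} := \varphi(Y^{(t)}, \theta^{(t+1)}) - a\varepsilon^{(t+1)}$, so that \ref{eqn:noised_sgd_decreasing_lr} reads $Y^{(t+1)} = Y^{(t)} - \alpha^{(t)} g^{(t)}$, and then decompose $g^{(t)} = \nabla F(Y^{(t)}) + \xi^{(t)}$ with $\xi^{(t)} := g^{(t)} - \nabla F(Y^{(t)})$. With $\mathcal{F}_t := \sigma(Y^{(0)}, (\theta^{(s)}, \varepsilon^{(s)})_{s \leq t})$, the variables $\theta^{(t+1)} \sim \mu$ and $\varepsilon^{(t+1)} \sim \mathcal{N}(0, I_D)$ are independent of $\mathcal{F}_t$ and $\varepsilon^{(t+1)}$ is centred, whence $\E{g^{(t)} \mid \mathcal{F}_t} = \E{\varphi(Y^{(t)}, \theta) \mid \mathcal{F}_t} = \nabla F(Y^{(t)})$, the last identity being the differentiation-under-the-integral formula of \ref{thm:bonneel_diff} (resp. the cellwise gradient for $\SWpY$), which holds whenever $Y^{(t)}$ is a point of differentiability of $F$. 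That differentiability holds almost surely for every $t$ follows by induction on the law of $Y^{(t)}$: the base law $\nu$ is absolutely continuous w.r.t.\ Lebesgue, and the inductive step preserves absolute continuity, since conditionally on $\mathcal{F}_t$ and $\theta^{(t+1)}$ the iterate $Y^{(t+1)}$ is either an unchanged copy of $Y^{(t)}$ (if $\alpha^{(t)} = 0$) or a non-degenerate Gaussian shift (if $\alpha^{(t)} > 0$ and $a > 0$). As $F$ is locally Lipschitz (\ref{thm:locLip}), its non-differentiability set is Lebesgue-null by Rademacher, so almost surely every $Y^{(t)}$ avoids it; there $\nabla F(Y^{(t)})$ is the unique element of $\partial_C F(Y^{(t)})$. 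Consequently $(\xi^{(t)})$ is a martingale-difference sequence and $\E{g^{(t)} \mid \mathcal{F}_t} \in \partial_C F(Y^{(t)})$.

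Next I would verify the remaining hypotheses. The chain rule, which is the decisive regularity requirement of Davis et al., is exactly the path differentiability of $F$ established in \ref{prop:semi_concave_chain_rule}, via semi-concavity (\ref{prop:SWY_semi_concave}) for $\SWY$ and via semi-algebraicity (\ref{prop:SWpY_semi_algebraic}) for $\SWpY$, both combined with local Lipschitzness. For the noise moment bound, the explicit expression \ref{eqn:ae_grad} gives $\|\varphi(Y, \theta)\|_{\infty, 2} \leq \frac{2}{\npoints}(\|Y\|_{\infty, 2} + \|Z\|_{\infty, 2})$ uniformly in $\theta$, so that, with $\E{\|\varepsilon^{(t+1)}\|_2^2} = D$, one obtains $\E{\|\xi^{(t)}\|_2^2 \mid \mathcal{F}_t} \leq C(1 + \|Y^{(t)}\|_2^2)$; on the almost-surely bounded trajectory assumed in the statement this quantity is almost-surely uniformly bounded, which meets Davis et al.'s noise condition. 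The stepsize conditions $\sum_t \alpha^{(t)} = +\infty$ and $\sum_t (\alpha^{(t)})^2 < +\infty$ and the boundedness of $(Y^{(t)})$ are part of the hypotheses. Invoking \cite{davis2020stochastic}, Theorem 4.2, then yields that $F(Y^{(t)})$ converges almost surely and that every subsequential limit of $(Y^{(t)})$ is a Clarke critical point of $F$, for $F \in \{\SWY, \SWpY\}$ respectively.

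The main obstacle is not any individual estimate but the identification of the expected search direction with a genuine Clarke subgradient: the equality $\E{\varphi(Y^{(t)}, \theta) \mid \mathcal{F}_t} = \nabla F(Y^{(t)}) \in \partial_C F(Y^{(t)})$ is valid only at points of differentiability, so the argument hinges on the injected Gaussian noise keeping the iterates, almost surely, off the Lebesgue-negligible non-differentiability set (and simultaneously making $(\xi^{(t)})$ an honest martingale difference). This is also the juncture at which the almost-everywhere gradient $\varphi$ must be reconciled with the conservative field underlying the chain rule, so that the critical points delivered by Davis et al.'s theorem are indeed the Clarke critical points $\{Y : 0 \in \partial_C F(Y)\}$ asserted in the statement.
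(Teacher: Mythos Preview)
Your approach is the same as the paper's: both deduce the result from Theorem~4.2 of \cite{davis2020stochastic} by checking its hypotheses. You verify the Robbins--Monro stepsize condition (C.1), almost-sure boundedness (C.2), the martingale-difference and second-moment structure (C.3), and the chain rule via path differentiability (D.2), matching the paper; your handling of the subgradient identification---keeping iterates almost surely at differentiability points through the injected Gaussian noise so that $\E{\varphi(Y^{(t)},\theta)\mid\mathcal{F}_t}=\nabla F(Y^{(t)})\in\partial_C F(Y^{(t)})$---is in fact more explicit than the paper's one-line remark on C.3.

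However, you omit one hypothesis the paper identifies and verifies: assumption D.1, that the set of non-Clarke-critical points of $F$ is dense in $\R^D$. For $\SWY$ the paper argues via \ref{cor:crit_points_S} that any smooth critical point satisfies the linear constraint $\sum_k y_k = \sum_k z_k$ (summing \ref{eqn:crit_fixed_point} over $k$ and using $\sum_k S_{k,l}^{Y,Z} = I_d$), while points of non-differentiability lie in $\mathcal{U}^c$; hence the Clarke critical set is contained in a union of two proper affine subspaces, whose complement is dense. For $\SWpY$ the nontrivial-quadratic cell structure of \ref{prop:SWpY_quadratic_on_each_cell} gives the same conclusion. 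Since the paper lists D.1 among the required hypotheses of Theorem~4.2, leaving it unverified is a gap in your argument.
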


\begin{proof}
	We verify assumptions C.1, C.2, C.3 and D.1, D.2 of
	\cite{davis2020stochastic}, allowing us to apply Theorem 4.2. To begin with,
	the sequence $(\alpha^{(t)})$ was chosen to verify assumption C.1, and
	\ref{thm:cv_decreasing_lr} explicitly assumes C.2. Regarding C.3, the simple
	choice of independent noise verifies the martingale difference assumption
	trivially. 

	For D.1, we need to show that the set of non-critical points of $\SWY$ and
	$\SWpY$ are dense in $\R^D$. For $\SWY$, we can use \ref{cor:crit_points_S},
	which implies that critical points $Y$ of $\SWY$ necessarily verify $\sum
	y_k = \sum_k z_k,$ since $\sum_kS_{k,l}^{Y,Z}=I_d$, or are points of
	non-differentiability, which implies belonging to $\mathcal{U}$ (see
	\ref{eqn:U}) In particular, critical points are necessarily within a union
	of two strict subspaces of $\R^D$, whose complementary is dense in $\R^D$.
	For $\SWpY$, the property is easily verified using its decomposition into
	(non-trivial) quadratics on cells \ref{prop:SWpY_quadratic_on_each_cell}.

	For D.2, we leverage \cite{davis2020stochastic} Lemma 5.2 along with the
	fact that $\SWY$ and $\SWpY$ are path-differentiable (\ref{sec:noised_sgd}),
	which shows that D.2 is verified.
\end{proof}

The assumption that the trajectories are almost-surely bounded can be seen as a
discrete version of \cite{Li:2023aa} Assumption 4.4: A1), which Li and
Moosmüller require to prove the convergence of their decreasing-step SGD scheme
for SW between absolutely continuous measures. While this assumption is
theoretical costly, numerically we observe that the measure support $Y$ remains
bounded. Nevertheless, lifting this assumption would be of substantial
mathematical interest.}

	\section{Numerical Experiments}\label{sec:xp}

This section illustrates the optimisation properties of $\SWY$ and $\SWpY$
with several numerical experiments. \ref{sec:xp_bcd} studies the optimisation of
$\SWpY$ using the BCD algorithm described in \ref{alg:BCD}, which offers insights on the
cell structure of $\SWpY$ (\ref{sec:cells}). \ref{sec:traj_sgd} focuses on
stochastic gradient descent \ref{alg:SGD} and showcases various SGD trajectories
on $\SWY$ and $\SWpY$ for different learning rates, noise levels or numbers of
projections, as well as the Wasserstein error along iterations. All the
convergence curves shown throughout our experiments also showcase margins of
error, computed by repeating the experiments several times, and corresponding to
the 30\% and 70\% quantiles of the experiment.

In order to assess the quality of a position $Y^{(t)}$, perhaps the most germane
metric is the Wasserstein distance: $\W_2^2(\gamma_{Y^{(t)}}, \gamma_Z)$, which
is why we will study the 2-Wasserstein error of BCD and SGD trajectories in this section.
Unfortunately, this metric is not quite comparable for different dimensions $d$,
notably because $\|(1, \cdots, 1)\|_2^2 = d$. We shall attempt to compensate
this phenomenon by using $\frac{1}{d}\W_2^2(\gamma_{Y^{(t)}}, \gamma_Z)$
instead, which makes the metric more comparable for measures on spaces of
different dimensions.

\subsection{Empirical study of Block Coordinate Descent on \texorpdfstring{$\SWpY$}{Ep}}\label{sec:xp_bcd}

In this section, we shall focus on studying the optimisation properties of the
$\SWpY$ landscape using the BCD algorithm (\ref{alg:BCD}). This method leverages
the cell structure of $\SWpY$ (see \ref{sec:cells}), by moving from cell to cell
by computing the minimum of their associated quadratics (see the discussion in
\ref{sec:Ep_crit_and_bcd}). By \ref{thm:Sp_crit_optloc_stable}, all local optima
of $\SWpY$ are stable cell optima, i.e. fixed points of the BCD, which summarises
briefly the ties between BCD and the optimisation properties of $\SWpY$. As for
the numerical implementation, \ref{alg:BCD} was implemented in Python with
Numpy \cite{harris2020array} using the closed-form formulae for the
updates.

\subsubsection{Illustration in 2D}\label{sec:traj_bcd}

\paragraph{Dataset and implementation details.} 
We start by setting a simple 2D measure $\gamma_Z$ with a support of only
two points represented with stars in \ref{fig:traj_bcd_seed0}. The measure weights are taken as uniform.
We fix sequences of $p$ projections $(\theta_1,
\cdots, \theta_p)$ for $p\in \lbrace 3, 10, 30, 100\rbrace$ respectively. We
then draw 100 BCD schemes with different initial positions $Y^{(0)} \in
\R^{2\times 2}$, drawn with independent standard Gaussian entries. We take a
stopping criterion threshold of $10^{-5}$ (see \ref{alg:BCD}), and limit to 500
iterations. 

\begin{figure}
	\centering
	\includegraphics[width=\linewidth]{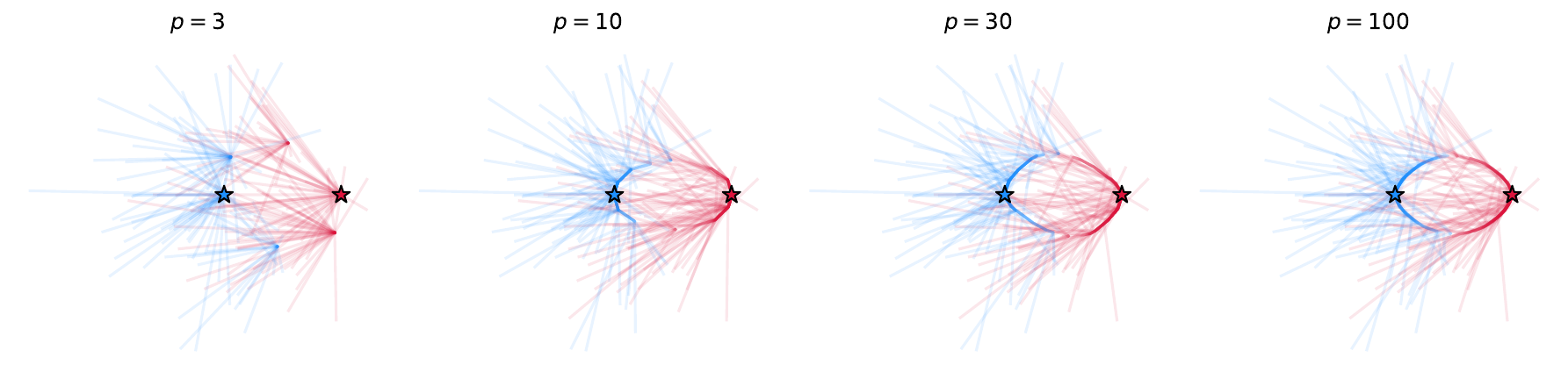}
	\caption{BCD on $\SWpY$ with different initial positions $Y^{(0)}$, with fixed projections (first sample). Each of the two points of the trajectory $Y^{(t)} = (y_1^{(t)}, y_2^{(t)})$ is coloured with respect to the point of the original measure $\gamma_Z$ to which they converge.}
	\label{fig:traj_bcd_seed0}
\end{figure}

In the case
$p=3$, we observe on \ref{fig:traj_bcd_seed0} \blue{four} points which correspond to strict local optima, and the
schemes appear to have a comparable probability of converging towards each of
them. Note that these points are essentially the same as the ones represented in
\ref{fig:SWpY_sym_p3} for $p=3$, but that they depend on the projection sample. 
Between the two projection realisations, we
observe that these local optima change locations. The cases $p \in \lbrace 10,
30, 100\rbrace$ also exhibit strict local optima, however they appear to be
decreasingly likely to be converged towards. For $p=30$ and $p=100$, notice that
most trajectories end up on the same ellipsoid arcs towards the solution $Z$,
and further remark that these arcs strongly resembles the trajectories of SGD
schemes on $\SWY$ for small learning rates (see \ref{fig:traj_sgd_E_lr_multiple}
in \ref{sec:xp_sgd}).

\subsubsection{Wasserstein convergence of BCD schemes on \texorpdfstring{$\SWpY$}{Ep}}

\paragraph{Final Wasserstein error of BCD Schemes.} For a dimension $d \in \lbrace 10, 30, 100\rbrace$ and $\npoints=20$ points, the original measure $\gamma_Z,\; Z \in \R^{\npoints \times d}$ is sampled once for all with independent standard Gaussian entries. Then, for varying numbers of projections $p$, we draw a starting position $Y^{(0)} \in \R^{\npoints \times d}$ with entries that are uniform on $[0, 1]$; and draw $p$ projections as input to the BCD algorithm. We set the stopping criterion threshold as $\varepsilon=10^{-5}$ and the maximum iterations to 1000. In order to produce \ref{fig:BCD_its}, we record the normalised 2-Wasserstein discrepancy $\frac{1}{d}\W_2^2(\gamma_{Y^{(T)}}, \gamma_Z)$ at the final iteration $T$ for 10 realisations for each value of $p$ and $d$.

\begin{figure}
	\centering
	\begin{subfigure}[c]{0.3\textwidth}
		\centering
		\includegraphics[width=\linewidth]{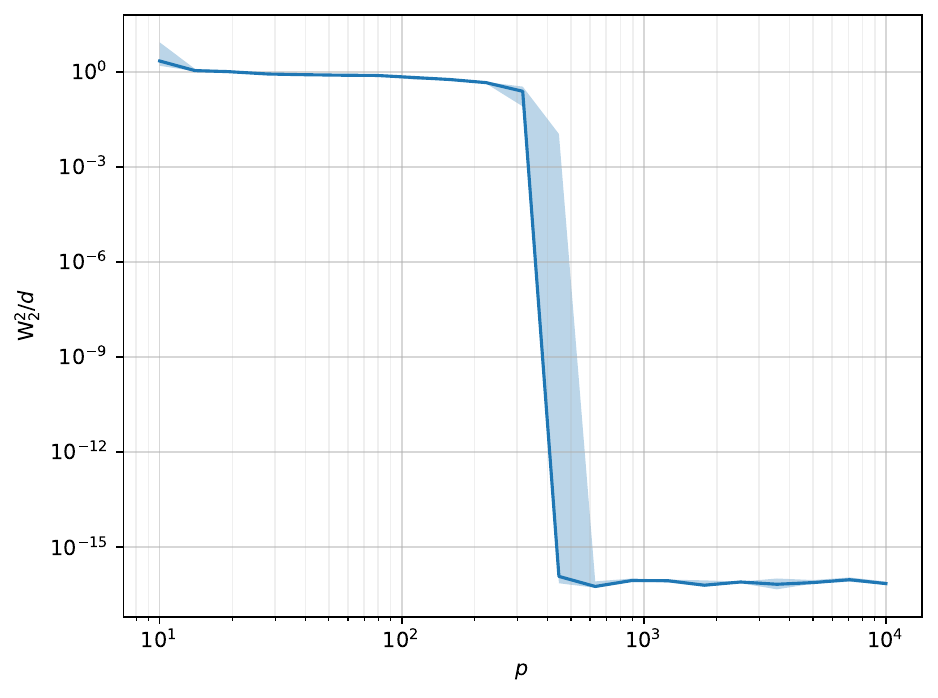}
		\subcaption{$d=10$}
	\end{subfigure}
	\begin{subfigure}[c]{0.3\textwidth}
		\centering
		\includegraphics[width=\linewidth]{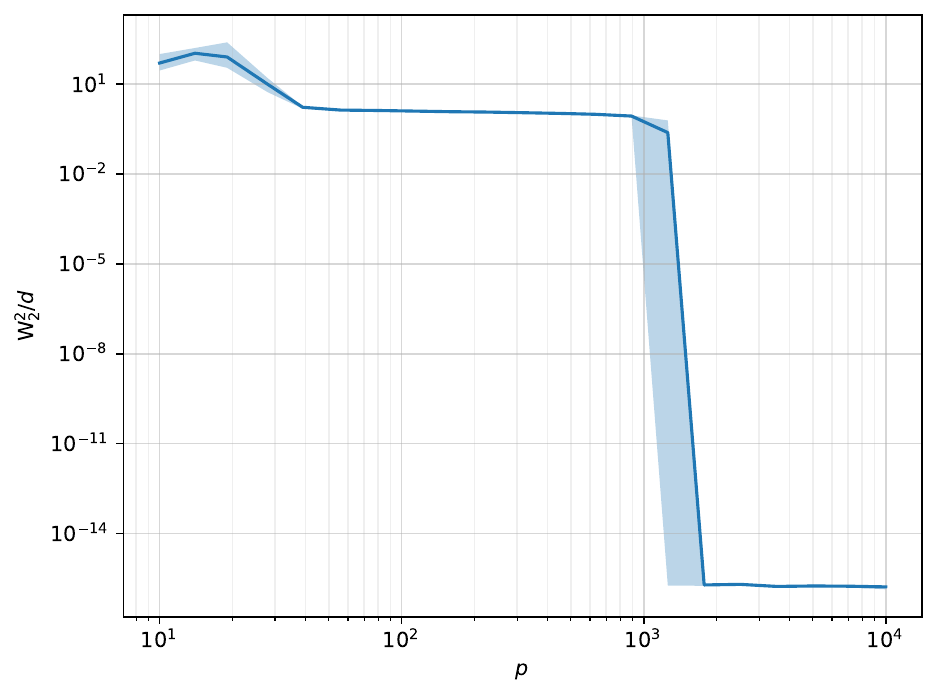}
		\subcaption{$d=30$}
	\end{subfigure}
	\begin{subfigure}[c]{0.3\textwidth}
		\centering
		\includegraphics[width=\linewidth]{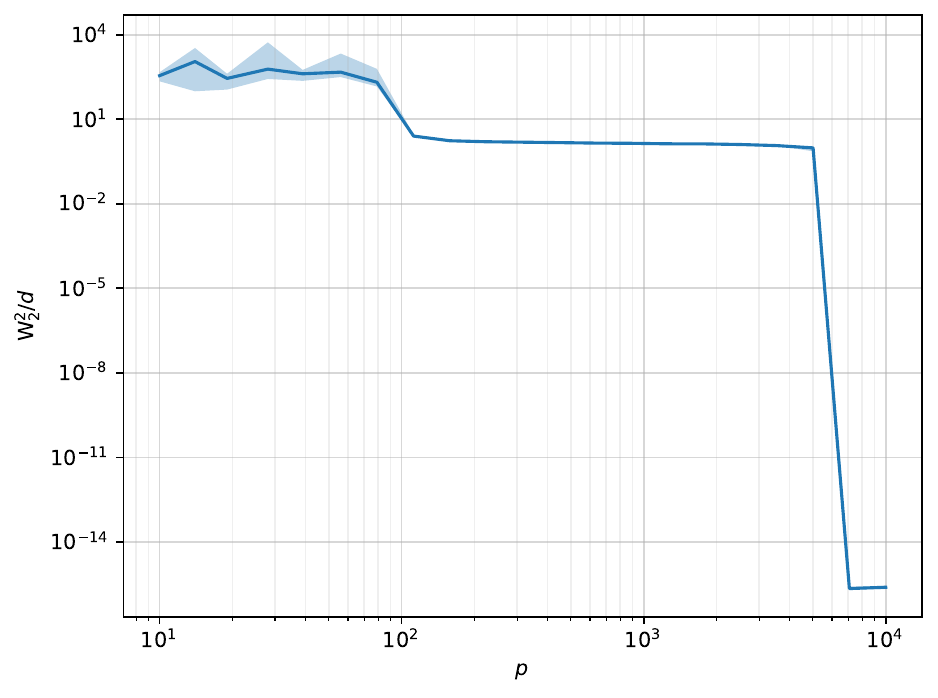}
		\subcaption{$d=100$}
	\end{subfigure}
	\caption{We consider BCD schemes with different amounts of projections $p$, and with an original measure $\gamma_Z$ comprised of $\npoints=10$ points in dimension $d \in \lbrace 10, 30, 100 \rbrace$, which is fixed as a standard Gaussian realisation for each value of $d$. The stopping threshold was chosen as $\varepsilon = 10^{-5}$, and we plot the final Wasserstein errors $\frac{1}{d}\W_2^2(\gamma_{Y^{T}}, \gamma_Z)$ at the final iteration $T$. For each set of values for the parameters, we perform 10 realisations with different initialisations $Y^{(0)}$ (drawn with uniform $[0, 1]$ entries), and different projections $(\theta_1, \cdots, \theta_p)$.}
	\label{fig:BCD_its}
\end{figure}

As a first estimation of the difficulty of optimising $\SWpY$, we consider the evolution - as $p$ increases - of final $\W_2^2$ errors of BCD schemes. The results of the experiments presented in \ref{fig:BCD_its} suggests the existence of a phase transition between an insufficient and a sufficient amount of projections. For instance, in the case $d=10$, there appears to be a cutoff around $p=400$, under which all the BCD realisations converge towards strict local optima, and past which we observe convergence up to numerical precision.

\paragraph{Probability of convergence of BCD schemes.} We can investigate further this empirical cutoff phenomenon by estimating the probability of convergence of a BCD algorithm. This probability is loosely related to the difficulty of optimising the landscape $\SWpY$, since a high probability of BCD convergence indicates either a small number of strict local optima, or that their corresponding cells are extremely small and seldom reached in practice. For varying numbers of projections $p$ and dimensions $d$, we run 100 realisations of BCD schemes. Each sample draws a target measure $\gamma_Z,\; Z \in \R^{\npoints \times d}$ with independent standard Gaussian entries and $\npoints=10$ points, as well as its initialisation $Y^{(0)} \in \R^{\npoints \times d}$ with entries that are uniform on $[0, 1]$ and $p$ projections. Every BCD scheme has a stopping threshold of $\varepsilon=10^{-5}$ and a maximum of 1000 iterations. We consider that a sample scheme has converged (towards the global optimum $\gamma_Z$) if $\frac{1}{d}\W_2^2(\gamma_{Y^{(T)}}, \gamma_{Z}) < 10^{-5}$, which allows us to compute an empirical probability of convergence for each value of $(p, d)$.

\begin{figure}
	\centering
	\begin{subfigure}[c]{0.45\textwidth}
		\centering
		\includegraphics[width=\linewidth]{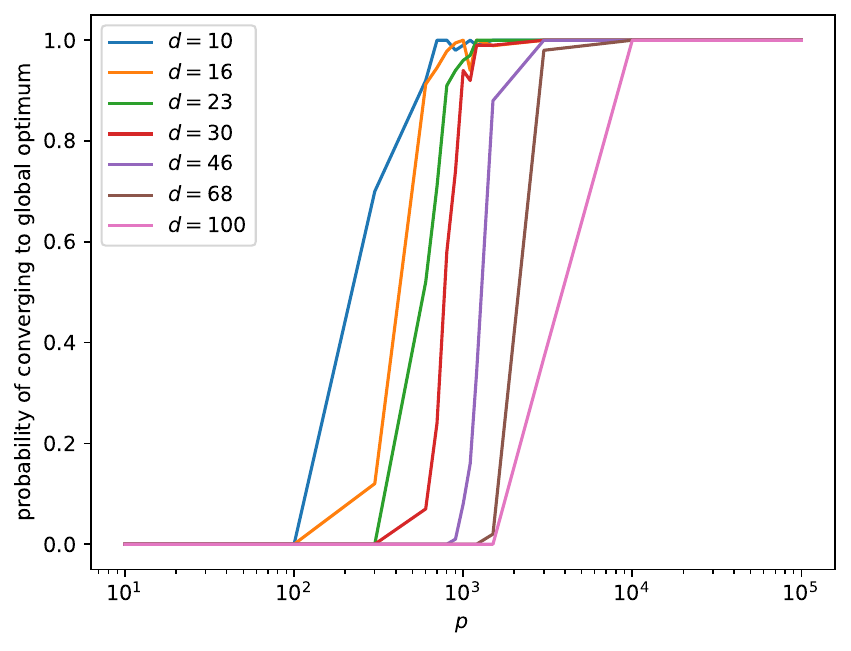}
	\end{subfigure}
	\begin{subfigure}[c]{0.45\textwidth}
		\centering
		\includegraphics[width=\linewidth]{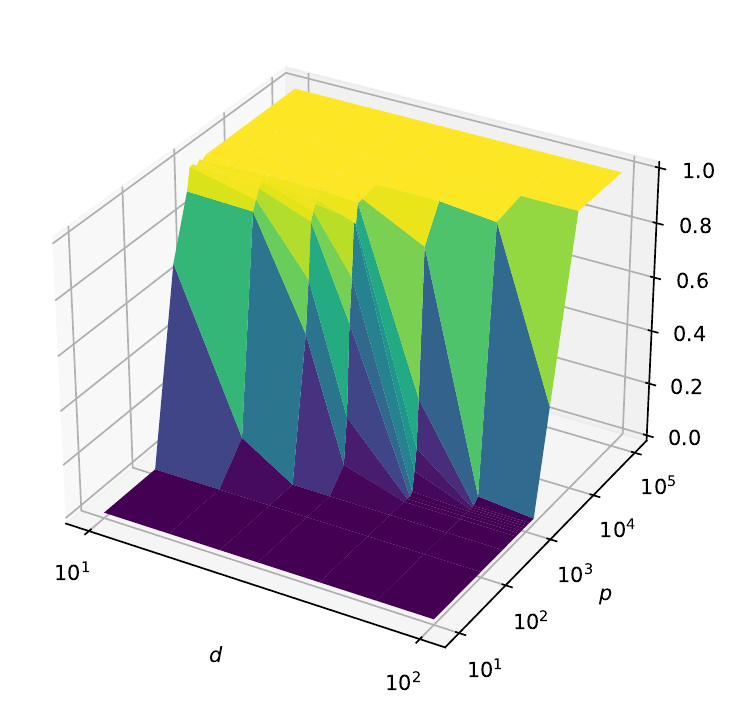}
	\end{subfigure}
	\caption{Given a number of projections $p$, we run 100 BCD trials with different initial positions (with entries drawn as uniform on $[0, 1]$), projections and target measure supported by $Z \in \R^{\npoints \times d}$, with $\npoints=10$ points in different dimensions $d \in [10, 100]$, where $Z$ is drawn with independent standard Gaussian entries. At the final iteration $T$, we determine whether the optimum is global by a threshold criterion: $\frac{1}{d}\W_2^2(\gamma_{Y^{(T)}}, \gamma_Z) < 10^{-5}$ and compute an empirical probability of convergence.}
	\label{fig:cv_proba}
\end{figure}

The findings in \ref{fig:cv_proba} indicate that the $\W_2^2$ error cutoffs from \ref{fig:BCD_its} have a probabilistic counterpart: the probability of converging to a global optimum transitions from almost 0 to almost 1 relatively suddenly (in the logarithmic scale). We can conjecture that this drop in optimisation difficulty is tied to the number of iterations needed for the convergence of SGD schemes on $\SWY$, especially given the similar behaviour for the $\W_2^2$ error in \ref{fig:SGD_its_E_alpha}.

\subsection{Empirical study of SGD on \texorpdfstring{$\SWY$}{E} and
\texorpdfstring{$\SWpY$}{E}}\label{sec:xp_sgd}

\paragraph{General numerical implementation.} In order to perform gradient
descent on $\SWY$ or $\SWpY$, we compute the gradient \ref{eqn:ae_grad} using
{Pytorch}'s 
\cite{pytorch} Stochastic Gradient Descent optimiser, which back-propagates
gradients through the loss $w_\theta:= Y \mapsto\W_2^2(P_{\theta}\#\gamma_{Y},
P_{\theta}\#\gamma_{Z})$, which we compute using the 1D Wasserstein solver from
{Python Optimal Transport} \cite{flamary2021pot}.

\subsubsection{Illustration in 2D}\label{sec:traj_sgd}

\paragraph{2D dataset and implementation details.} We define a 2D spiral dataset with the measure $\gamma_Z,\; Z = (z_1, \cdots, z_{10})^T\in \R^{10\times 2}$ with $z_k = \frac{2k}{10}\left(\cos\left(2k\pi / 10\right), \sin\left(2k\pi / 10\right)\right)^T,$ and $k \in \llbracket 1, 10 \rrbracket$. The initial position $Y^{(0)}$ is fixed and remains the same across realisations. For schemes on $\SWY$, the projections $\theta^{(t)} \sim \bbsigma$ are fixed beforehand and are the same across experiments. For every realisation of a scheme on $\SWpY,\; p$ unique projections $(\theta_1, \cdots, \theta_p)$ are drawn, then the projections $(\theta^{(t)})$ for the iterations  are drawn from these $p$ fixed projections. For noised schemes, the only variable that is drawn at every sample is the noise $(\varepsilon^{(t)})$. Note that the associated energy landscapes are extremely similar to those illustrated in \ref{sec:L2} and in particular in \ref{fig:SWpY_sym_p3}.

\begin{figure}
	\centering
	\includegraphics[width=\linewidth]{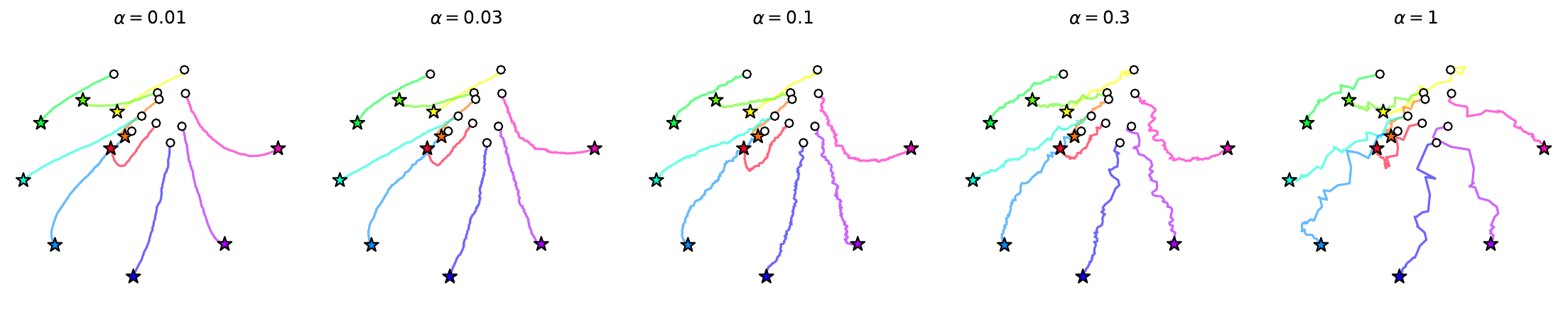}
	\caption{SGD trajectories on $\SWY$ for different learning rates $\alpha$.
	All the trajectories are computed using the same projection sequence
	$(\theta^{(t)})$.}
	\label{fig:traj_sgd_E_lr}
\end{figure}

\begin{figure}
	\centering
	\includegraphics[width=\linewidth]{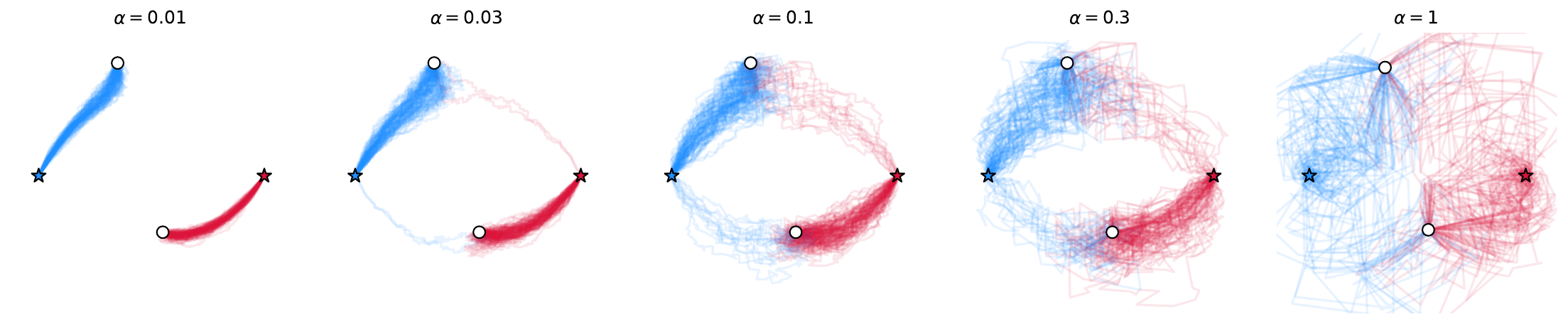}
	\caption{SGD trajectories on $\SWY$ for different learning rates $\alpha$.
	For each value of $\alpha$, 100 samples are drawn with different projections
	$(\theta^{(t)})$, and for each realisation, each of the two points of the
	trajectory is coloured with respect to the point of the original measure
	$\gamma_Z$ (represented by stars) to which they converge. The initial
	position $Y^{(0)}$ is represented by circles.}
	\label{fig:traj_sgd_E_lr_multiple}
\end{figure}

\ref{fig:traj_sgd_E_lr} and \ref{fig:traj_sgd_E_lr_multiple} illustrate the
convergence of SGD schemes on $\SWY$ towards the original measure $\gamma_Z$,
for different learning rate $\alpha$ (provided that $\alpha$ is under a divergence threshold).
\ref{thm:SGD_interpolated_cv} allowed us only to expect a convergence to a
\textit{solution of a Clarke Differential Inclusion} on $\SWY$
\ref{eqn:Clarke_DI}, yet in practice we seem to have convergence to a global
optimum. Furthermore, \ref{thm:SGD_interpolated_cv} shows that the interpolated
SGD trajectories are approximately solutions of the DI $\dot{X}(t) \in
-\partial_C\SWY(X(t))$, which, assuming that the trajectory stays in
$\mathcal{U}$, amounts to $\dot{X}(t) + \nabla \SWY (X(t)) = 0$, which is
exactly the Euclidean Gradient Flow of $\SWY$, as discussed in more detail in
\ref{rem:flows}. This illustration suggests that the SGD schemes approach the
gradient flow \ref{eqn:SWY_flow} as $\alpha \longrightarrow 0$, whereas
\ref{thm:SGD_interpolated_cv} predicts a (weak) convergence towards the set of
solutions of the DI \ref{eqn:Clarke_DI}, which is equal to the gradient flow
provided that the initial position $Y^{(0)}$ belongs to the differentiability set
of $\SWY$ (see \ref{rem:flows} for details). Note that higher learning rates
lead to a "noisier" trajectory, which may impede upon the quality of the
assignment. This shows that there is a trade-off: lower values of $\alpha$ allow
for a better approximation of the (or a) gradient flow of $\SWY$ and potentially
a more precise final position $Y$ and assignment $\tau$, however a larger value
of $\alpha$ yields a substantially faster convergence.

\begin{figure}
	\centering
	\includegraphics[width=\linewidth]{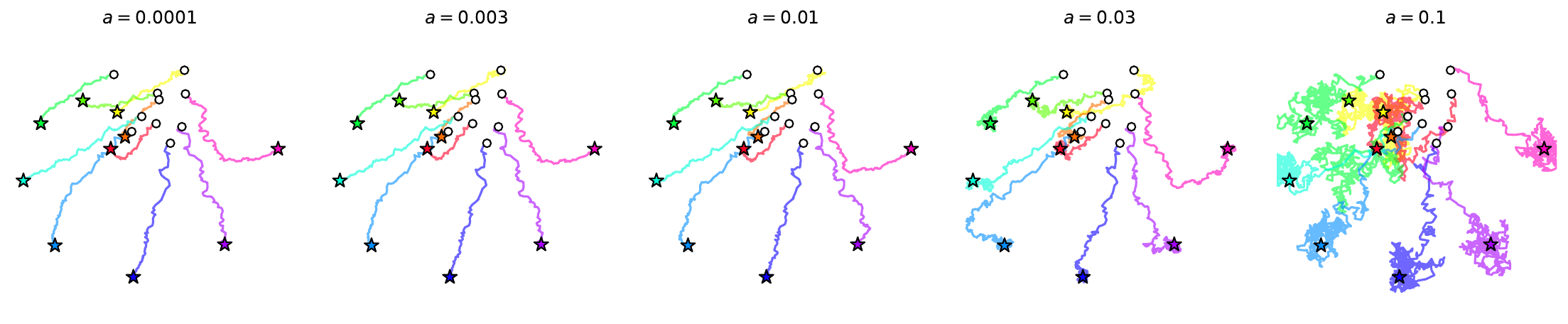}
	\caption{SGD trajectories on $\SWY$ for different noise levels $a$. All the
	trajectories are computed using the same projection sequence
	$(\theta^{(t)})$. The learning rate is fixed at $\alpha = 0.3$.}
	\label{fig:traj_sgd_E_noise}
\end{figure}

\ref{fig:traj_sgd_E_noise} presents a case where noised SGD schemes on $\SWY$ "converge" whatever the noise level to a global optimum of $\SWY$. Note that the additive noise causes the scheme to oscillate around a solution, with a movement akin to Brownian motion with a scale tied to $\alpha a$. \ref{thm:noised_SGD_cv} shows that such schemes converge (as the step approaches 0) to \textit{Clarke critical points} of $\SWY$, which could theoretically be a saddle point of strict local optimum. In this experiment,  we observe convergence to a global optimum.

\begin{figure}
	\centering
	\includegraphics[width=\linewidth]{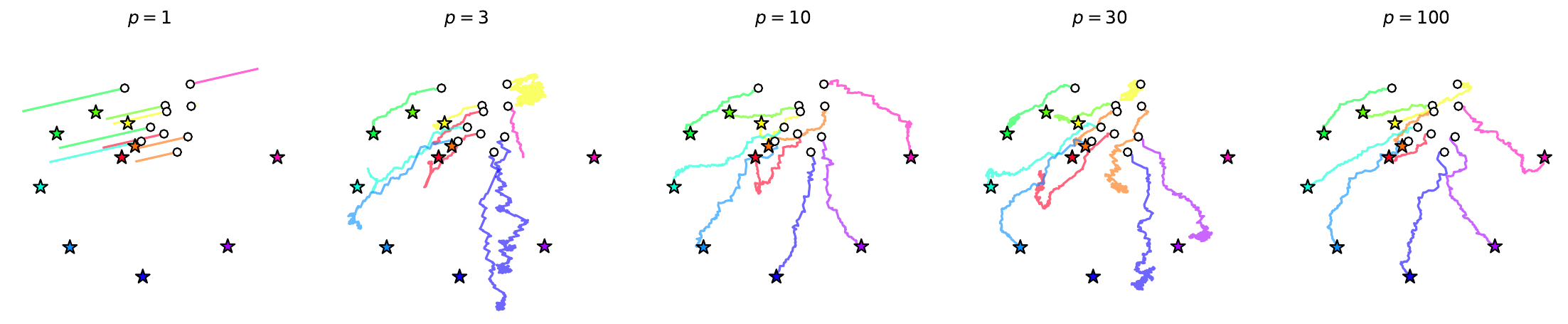}
	\caption{SGD schemes on $\SWpY$ for different number of projections $p$. The learning rate is fixed at $\alpha = 0.3$.}
	\label{fig:traj_sgd_Ep}
\end{figure}

\begin{figure}
	\centering
	\includegraphics[width=\linewidth]{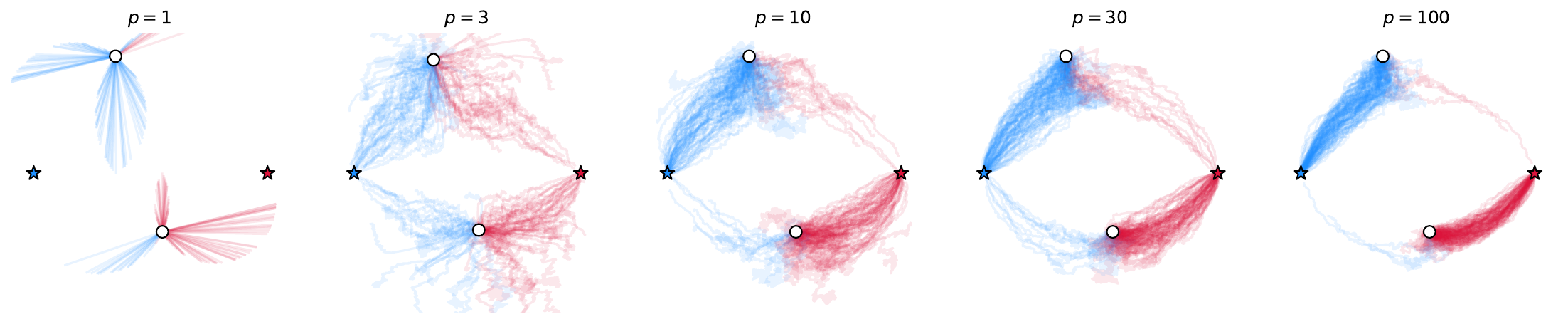}
	\caption{SGD schemes on $\SWpY$  for different number of projections $p$. For each value of $p$, 100 samples are drawn with different projections $(\theta_1, \cdots, \theta_p)$. For each realisation, each of the two points of the trajectory is coloured with respect to the point of the original measure $\gamma_Z$ (represented by stars) to which they converge. The initial position $Y^{(0)}$ is represented by circles. The learning rate is fixed at $\alpha = 0.03$.}
	\label{fig:traj_sgd_Ep_multiple}
\end{figure}

\ref{fig:traj_sgd_Ep} illustrates that SGD schemes on $\SWpY$ may converge to strict local optima, which is to be expected, given how numerous they may be (see the discussion in \ref{sec:L2} and \ref{fig:SWpY_sym_p3} therein). For $p=1$, entire lines are local optima, and for $p=3$ and $p=30$, we also observe convergence to strict local optima. Notice that for a large value of $p$ such as $p=100 \gg d=2$, we have similar trajectories in \ref{fig:traj_sgd_Ep_multiple} compared to the $\SWY$ counterpart in \ref{fig:traj_sgd_E_lr_multiple} ($\alpha=0.03$). This observation suggests a stronger property than our results on the approximation of $\SWY$ by $\SWpY$: uniform convergence in \ref{thm:Sp_cvu_S} and a weak link between critical points \ref{thm:cv_fixed_point_distance}. To be precise, this illustration could allow one to hope for a result on the high probability for the proximity of SGD schemes on $\SWpY$ and on $\SWY$ as $p \longrightarrow +\infty$, perhaps with conditions on the sequence of projections $(\theta^{(t)})$.

\subsubsection{Wasserstein convergence of SGD schemes on \texorpdfstring{$\SWY$}{E} and \texorpdfstring{$\SWpY$}{Ep}}

\paragraph{SGD on $\SWY$.}\label{para:numerical_setup_sgd_E} The original measure $\gamma_Z,\; Z \in \R^{\npoints \times d}$ is sampled once for all with independent standard Gaussian entries. For each value of the parameter of interest (the learning rate $\alpha$ or the dimension $d$ respectively), 10 realisations of the SGD schemes are computed with a different initial position $Y^{(0)}$, drawn with independent entries uniform on $[0, 1]$, and different projections $(\theta^{(t)})$. The SGD stopping criterion threshold (see~\ref{alg:SGD}) is set as negative, in order to always end at the maximum number of iterations, $10^6$. For the experiment with varying learning rates $\alpha$, we consider measures with $\npoints=20$ points in dimension $d=10$. For the experiment with varying dimensions $d$, we still take $\npoints=20$ and use the learning rate $\alpha=10$.

\begin{figure}
	\centering
	\begin{subfigure}[c]{0.45\textwidth}
		\centering
		\includegraphics[width=\linewidth]{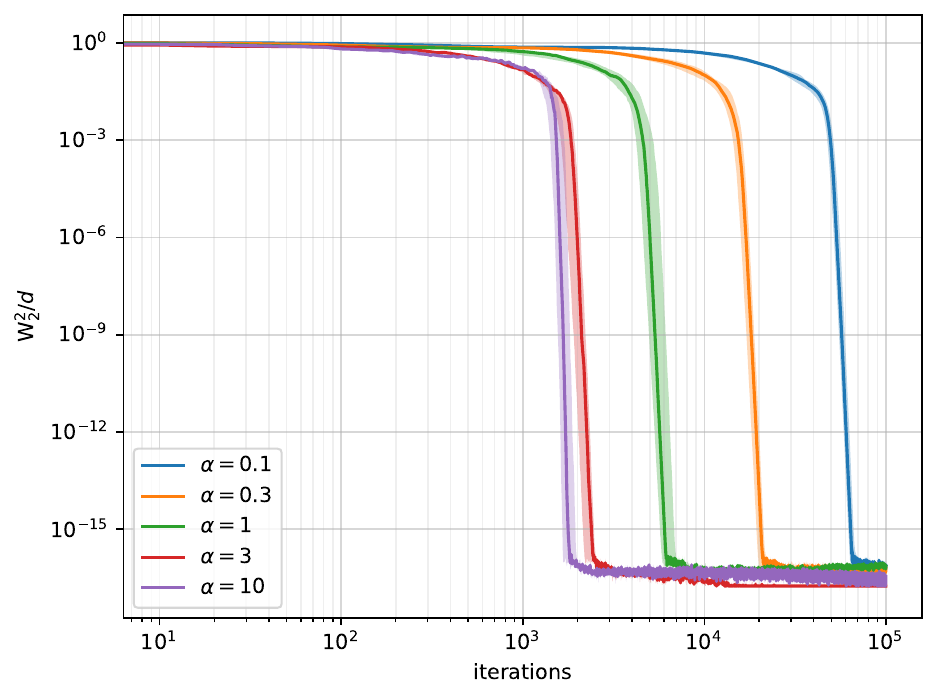}
	\end{subfigure}
	\begin{subfigure}[c]{0.45\textwidth}
		\centering
		\includegraphics[width=\linewidth]{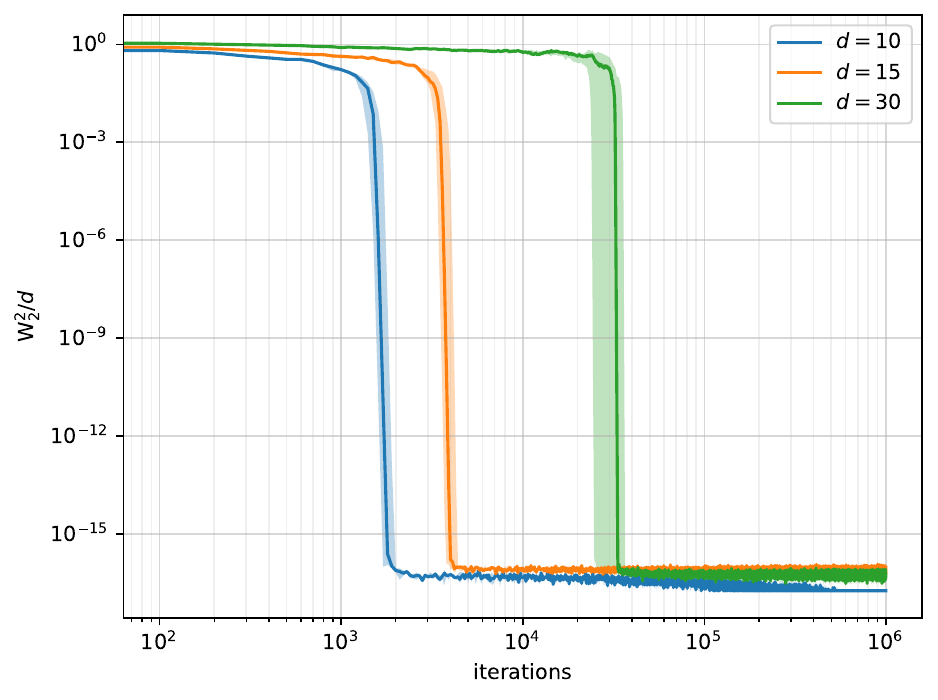}
	\end{subfigure}
	\caption{Wasserstein error $\frac{1}{d}\W_2^2(\gamma_{Y^{(t)}}, \gamma_Z)$ for SGD iterations $Y^{(t)}$ on $\SWY$, given a fixed measure $\gamma_Z,\; \R^{\npoints \times d}$ with $\npoints=20$ points. Left: different learning rates $\alpha$ for points in dimension $d=10$. Right: different dimensions with $\alpha=10$ (right).}
	\label{fig:SGD_its_E_alpha}
\end{figure}

In \ref{fig:SGD_its_E_alpha}, we observe that the SGD schemes converge towards the true measure $\gamma_Z$ up to numerical precision, which corresponds to a stronger convergence than the one predicted by~\ref{thm:SGD_interpolated_cv}. The number of iterations needed for convergence obviously depends on the learning rate $\alpha$, which notably can be chosen larger than $\npoints/2$, which is a case that does not fall under the conditions for \ref{thm:SGD_interpolated_cv}. However, in this particular experiment, the SGD schemes diverged as soon as $\alpha \geq 30$, which could suggest that limiting oneself to $\alpha \preceq \npoints$ is reasonable. The dimension $d$ increases significantly the number of iterations required for convergence, furthermore we observe a transition from high $\W_2^2$ error to low error, which is relatively sudden in logarithmic space. These first studies invites an in-depth analysis of the amount of iterations needed to reach convergence, which we propose in \ref{fig:SGD_its_for_cv}. The final $\frac{1}{d}\W_2^2$ error does not seem to depend significantly on the dimension $d$, which provides empirical grounds for the $1/d$ normalisation choice.

\paragraph{Noised SGD on $\SWY$.} \ref{fig:SGD_its_E_noise} shows the Wasserstein error $\frac{1}{d}\W_2^2(\gamma_{Y^{(t)}}, \gamma_Z)$ for the noised SGD iterations on $\SWY$. The numerical setup is the same as above, with the addition of the noise $a\alpha\varepsilon^{(t)}$ at each iteration, where $\varepsilon^{(t)}$ has independent standard Gaussian entries, $a$ is the noise level and $\alpha$ is the learning rate (set to $\alpha=10$). This noise is drawn differently for each SGD scheme. For the experiment with different dimensions, the noise level is taken as $a=10^{-4}$.

\begin{figure}
	\centering
	\begin{subfigure}[c]{0.45\textwidth}
		\centering
		\includegraphics[width=\linewidth]{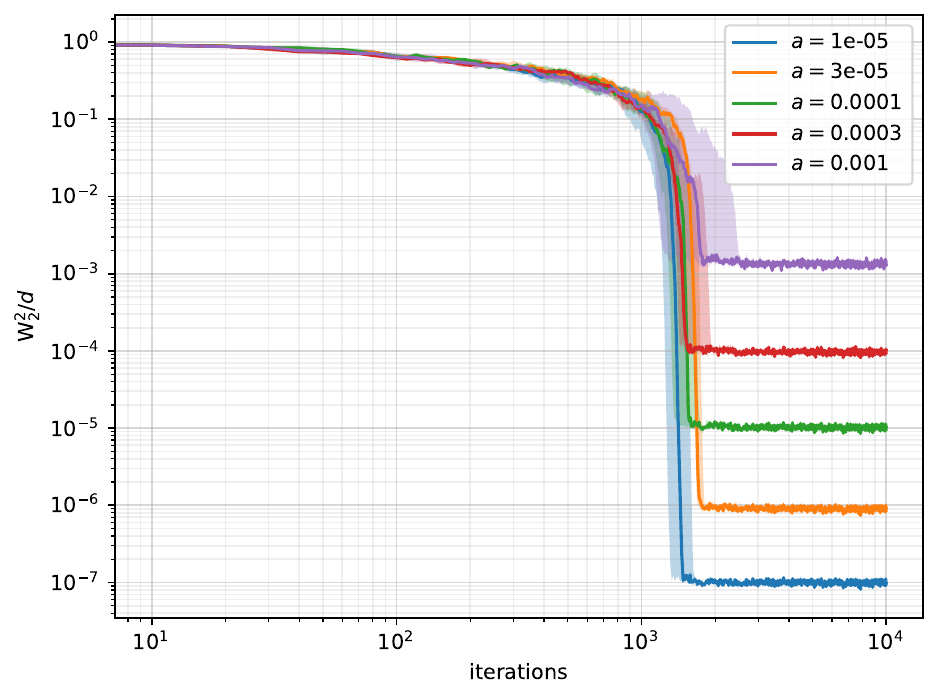}
	\end{subfigure}
	\begin{subfigure}[c]{0.45\textwidth}
		\centering
		\includegraphics[width=\linewidth]{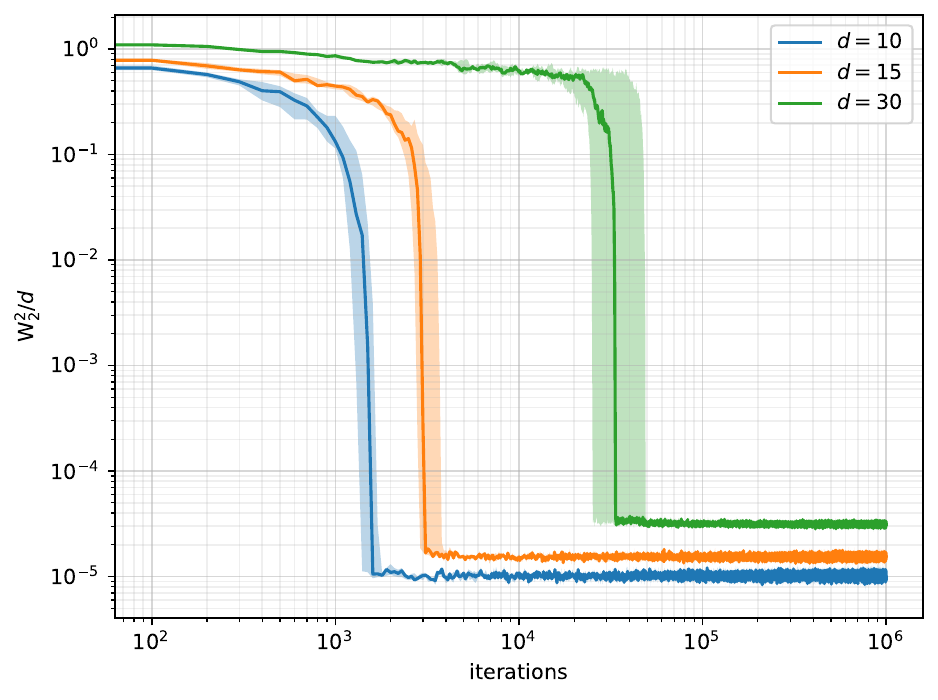}
	\end{subfigure}
	\caption{Wasserstein error $\frac{1}{d}\W_2^2(\gamma_{Y^{(t)}}, \gamma_Z)$ for noised SGD iterations $Y^{(t)}$ on $\SWY$, given a fixed measure $\gamma_Z,\; \R^{\npoints \times d}$ with $\npoints=20$ points. The noise is additive standard Gaussian, scaled by the learning rate $\alpha=10$ times the noise level $a$. Left: different noise levels $a$ for points in dimension $d=10$. Right: different dimensions with $a=10^{-4}$.}
	\label{fig:SGD_its_E_noise}
\end{figure}

The noised SGD scheme errors oscillate around a certain level which depends on the noise level, as the trajectories from \ref{fig:traj_sgd_E_noise} suggest: we observed Brownian-like motion around the target points. Note that the error begins falling drastically past the same iteration threshold, albeit with a higher variance across samples for higher noise levels. At a fixed noise level, the final $\frac{1}{d}\W_2^2$ still depends on the noise level, despite the $1/d$ normalisation. Empirically, the final $\W_2^2$ error seems to be smaller than the noise level $a$, which is reassuring since the noise is entry-wise of law $\mathcal{N}(0, a^2\alpha^2)$, where $\alpha$ is the learning rate. 

\paragraph{SGD on $\SWpY$.} \ref{fig:SGD_its_Ep} also illustrates the Wasserstein error along iterations but this time for  $\SWpY$.
The general SGD setup and initial measure $\gamma_Z$ remain unchanged compared to the schemes on $\SWY$ (with also a learning rate of $\alpha=10$ in particular). In order to handle the projections $(\theta^{(t)})$, for each sample we draw $p$ independent projections $(\theta_1, \cdots, \theta_p)$, then select the $(\theta^{(t)})$ by drawing uniformly amongst these $p$ projections. Given this sequence of projections $(\theta^{(t)})$, the SGD algorithm is then exactly the same as for $\SWY$.

\begin{figure}
	\centering
	\begin{subfigure}[c]{0.45\textwidth}
		\centering
		\includegraphics[width=\linewidth]{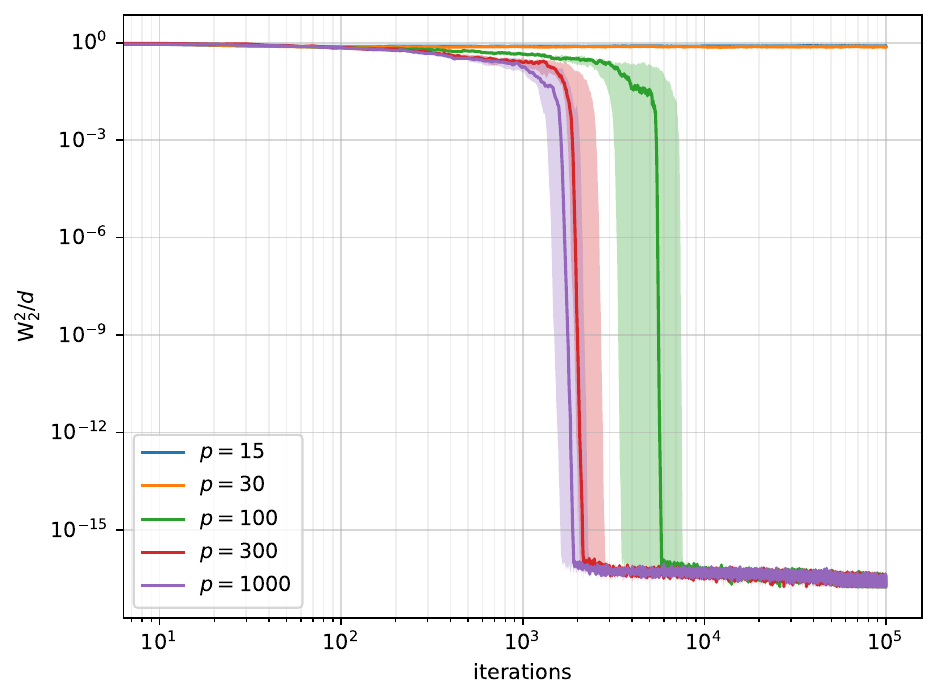}
	\end{subfigure}
	\begin{subfigure}[c]{0.45\textwidth}
		\centering
		\includegraphics[width=\linewidth]{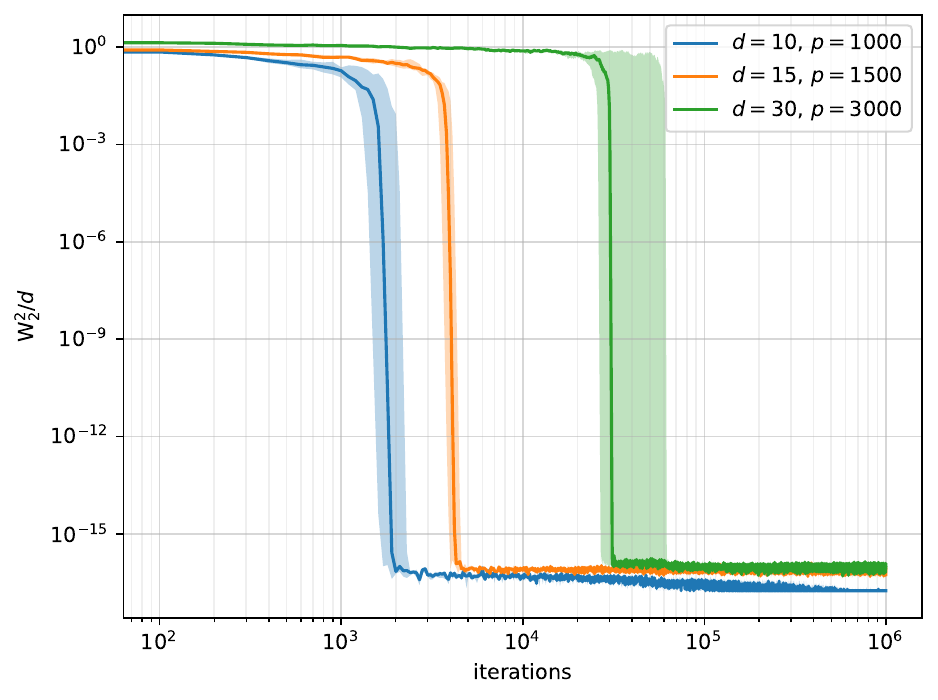}
	\end{subfigure}
	\caption{Wasserstein error $\frac{1}{d}\W_2^2(\gamma_{Y^{(t)}}, \gamma_Z)$ for SGD iterations $Y^{(t)}$ on $\SWpY$, given a fixed measure $\gamma_Z,\; \R^{\npoints \times d}$ with $\npoints=20$ points. The $p$ projections in $\SWpY$ are drawn randomly for each sample. Left: different noise levels $a$ for points in dimension $d=10$. Right: different dimensions with $\alpha=10$.}
	\label{fig:SGD_its_Ep}
\end{figure}

For SGD schemes on $\SWpY$ with small values of projections $p$, we do not have
convergence to $\gamma_{Y^{(t)}} = \gamma_Z$. Intuitively, this could be
understood as the approximation $\SWpY \approx \SWY$ being too rough, allowing
for an excessive amount of numerically attainable strict local optima. This is
illustrated in \ref{fig:SWpY_sym_p3} in a simple case: with $p=3$ in dimension
2, the landscape presents numerous strict local optima that lie within large
basins. However, it is notable that for $p$ large enough ($p \geq 10d = 100$),
we \textit{do} observe convergence to  $\gamma_{Y^{(t)}} = \gamma_Z$ up to
numerical precision. This convergence happens in fewer iterations as $p$
increases, and with a smaller variance with respect to the projection samples.
This suggests a stronger mode of convergence of $\SWpY$ towards $\SWY$, as
hinted at before in \ref{fig:SWpY_sym_p3} and \ref{fig:traj_sgd_Ep_multiple}. 

\paragraph{Quantifying the impact of the dimension.} For different values of the number of points $\npoints$ and the dimension $d$, we run 10 samples of SGD on $\SWY$ for an original measure $\gamma_Z$ drawn with standard Gaussian entries (re-drawn for each sample this time). The SGD schemes are done without additive noise, and with a learning rate of $\alpha=10$. In order to save computation time, the SGD stopping threshold is taken as $\beta = 10^{-5}$ (see \ref{alg:SGD}). For each sample, the initial position $Y^{(0)}$ is drawn with entries that are uniform on $[0, 1]$. Our goal is to estimate the number of iterations required for the convergence of the SGD schemes: to this end, we define convergence as the first step $t$ such that $\frac{1}{d}\W_2^2(\gamma_{Y^{(t)}}, \gamma_Z) < 10^{-5}$.

\begin{figure}
	\centering
	\includegraphics[width=0.7\linewidth]{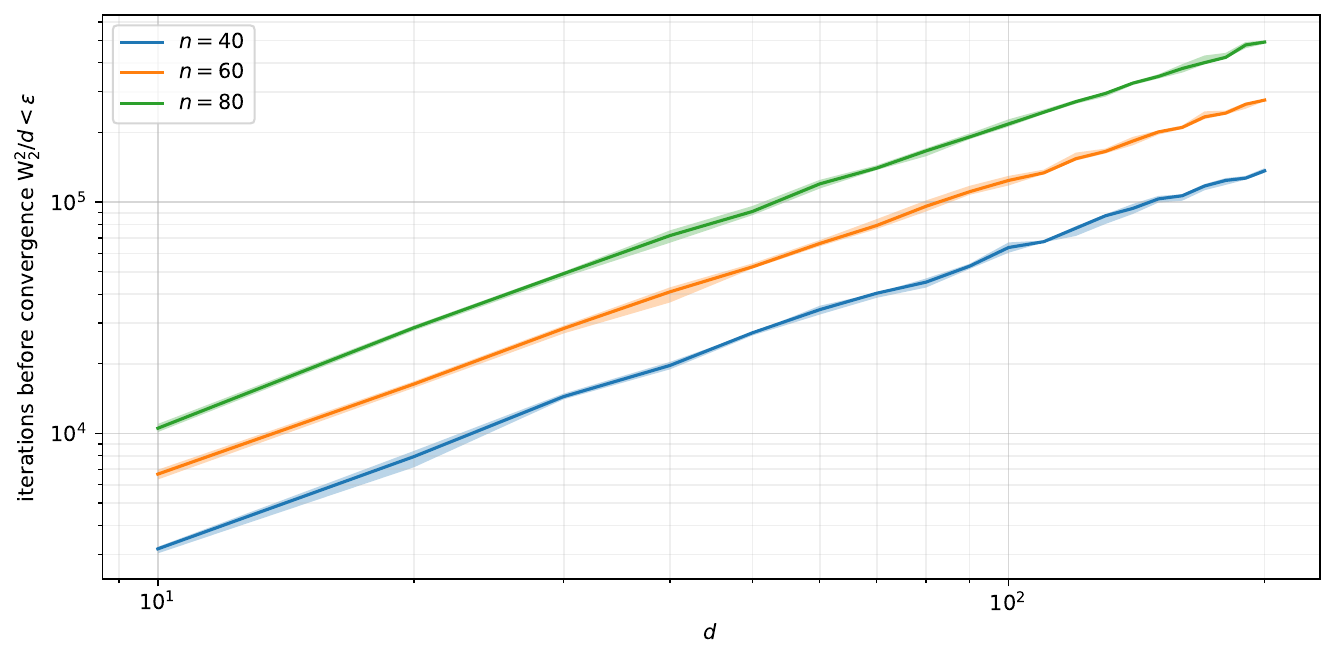}
	\caption{Amount of iterations required for convergence of a SGD scheme
	$Y^{(t)}$ of learning rate $\alpha=10$ on $\SWY$. Here convergence is
	defined as the first step $t$ such that $\frac{1}{d}\W_2^2(\gamma_{Y^{(t)}},
	\gamma_Z) < 10^{-5}$. For each set of parameters (number of points
	$\npoints$ and dimension $d$ ), 10 trials are done with $\gamma_Z,\; Z\in
	\R^{\npoints \times d}$ drawn at random (uniform $[0, 1]^{\npoints \times
	d}$).}
	\label{fig:SGD_its_for_cv}
\end{figure}

\ref{fig:SGD_its_for_cv} (cautiously) suggests that the number of iterations required for convergence its proportional to $d^{1.25}$ (where convergence means that $\frac{1}{d}\W_2^2$ falls below $\varepsilon$). Note that the exponent on $d$ does not seem to depend on $\npoints$. Obviously, the factor in front of $d^{1.25}$ depends on the number of points $\npoints$, the learning rate $\alpha$ and the convergence threshold $\varepsilon$. This superlinear rule remains fairly prohibitive for large Machine Learning models, which can typically have $d$ and $\npoints$ both in excess of $10^6$.

	\section{Conclusion and Outlook}

Throughout this paper, we have investigated the properties of the Sliced Wasserstein (SW) distance between discrete measures, namely the function $\SWY: Y \longmapsto\SW_2^2(\gamma_Y, \gamma_Z)$, where $Y$ and $Z$ are supports with $\npoints$ points in dimension $d$. Due to the intractability of the expectation in $\SWY$, we introduced its Monte-Carlo empirical counterpart $\SWpY$, computed as an average over $p$ directions. In \ref{sec:energies}, we showed and reminded regularity results on $\SWY$ and $\SWpY$: they are locally-Lipschitz and differentiable on certain open sets of full measure. Leveraging the fact that $\SWpY$ is piece-wise quadratic, we showed additional regularity results, and finally showed that the convergence of $\SWpY$ to $\SWY$ (as $p\longrightarrow+\infty$) is almost-surely uniform on any fixed compact. \ref{sec:crit} furthers the study of the optimisation landscapes at hand by presenting properties of the critical points of $\SWY$ and $\SWpY$ (points of differentiability will null gradient), and a convergence of such points of $\SWpY$ to those of $\SWY$ as $p\longrightarrow+\infty$ (in a certain sense). In \ref{sec:SGD}, we put these theoretical results in a more practical context by showing that one can apply the SGD convergence results of \cite{bianchi2022convergence} to our optimisation landscapes. Finally, we illustrate and study these convergence results in \ref{sec:xp} through numerical experiments.

Further work would be welcome on the cells of $\SWpY$ (see \ref{sec:cells}), in particular the law of their size given a fixed configuration $\config$ and their probability of being stable are still open problems, and would have strong consequences in practical applications such as the convergence of BCD (\ref{alg:BCD}). The main difficulty stems from the link between statistical properties of the cells to the so-called Gaussian Orthant Probabilities, which can be broadly defined as the probability of a non-standard Gaussian Vector to be in the positive quadrant $\R_+^d$. This probability is unfortunately not tractable in high dimensions, and its estimation is a field of research in itself \cite{gaussian_orthant}.

Another core limitation of our work concerns the practicality of our results on SGD convergence (\ref{sec:SGD}). Firstly, typical applications use more advanced optimisation methods, such as SGD with momentum or ADAM, which our theory does not encompass yet. Secondly, as mentioned in the introduction, practical applications actually minimise \textit{through} $\SWY$, which is to say a loss function $F: u \longmapsto \SW_2^2(T_u\#\mu, \nu)$ with respect to the parameters $u$ of a model $x \longmapsto T_u(x)$ of the input data $x\sim \mu$. Minimising $F$ through SGD (stochastically on the projections $\theta\sim\bbsigma$, the input data $x\sim\mu$ and the true data $y\sim\nu$) is beyond the scope of this paper, and we leave this generalisation for future work.

\subsection*{Acknowledgements}

\blue{We thank Anna Korba and Quentin Mérigot for their helpful discussions and
comments on the optimisation results. We would also like to thank Antoine
Chambaz for his valuable assistance with empirical processes.} \bluetwo{Finally,
we are grateful for the numerous suggestions of anonymous reviewers that
substantially improved this paper.}

\blue{This research was funded, in part, by the Agence nationale de la recherche (ANR), through the SOCOT project (ANR-23-CE40-0017), and the PEPR PDE-AI project (ANR-23-PEIA-0004).}

	\bibliography{ecl}
	\bibliographystyle{amsplain}
	
	\appendix
	\section{}

\blue{\subsection{Proof of the Central Limit Theorem for Discrete SW}\label{sec:proof_donsker}

Let $c_{\mathcal{K}} := \sup_{Y\in \mathcal{}K}\|Y\|_{\infty, 2}$. Consider the class of functions $\mathcal{F} := \lbrace f_Y |\ Y \in \mathcal{K} \rbrace$, where we define $f_Y:= \theta \longmapsto\W_2^2(P_\theta\#\gamma_Y, P_\theta\#\gamma_Z)$ to fit the notation style of empirical processes. By \ref{prop:w_unif_locLip}, we have for any $\theta \in \SS^{d-1}$ and $Y, Y' \in \mathcal{K}$:
$$|f_Y(\theta) - f_{Y'}(\theta)| \leq  2\npoints (2c_{\mathcal{K}} + c_{\mathcal{K}} + \|Z\|_{\infty, 2})\|Y-Y'\|_{\infty, 2},$$
where we chose the neighbourhood $B_{\|\cdot\|_{\infty, 2}}(0, 2c_{\mathcal{K}}) \supset \mathcal{K}$. In particular, the Lipschitz constant $\kappa := 2\npoints (3c_{\mathcal{K}} + \|Z\|_{\infty, 2})$ is $\bbsigma$-integrable (in this case, it is constant in $\theta$), which allows us to apply Example 19.7 from \cite{van2000asymptotic}, which implies that the family $\mathcal{F}$ is $\bbsigma$-Donsker.

To recall the definition of the property that $\mathcal{F}$ is $\bbsigma$-Donsker, we recall some standard concepts and notations from empirical processes (\cite{van2000asymptotic} Section 19.2), within our specific case of application. For $f$ a function from $\SS^{d-1}$ to $\R$ that is $\bbsigma$-square-integrable, we introduce the real random variable
$$\bbsigma_p f := \cfrac{1}{p}\Sum{i=1}{p}f(\theta_i),\; (\theta_1, \cdots, \theta_p) \sim \bbsigma^{\otimes p},$$
which is meant as a Monte-Carlo approximation of the true expectation $\bbsigma f := \int_{\SS^{d-1}}f(\theta)\dd\bbsigma(\theta)$. We shall study the distribution of the scaled approximation error, defined as the real random variable $\G_pf := \sqrt{p}(\bbsigma_p f - \bbsigma f)$. The central limit theorem shows that $\G_pf$ converges in law to a centred Gaussian (in our case the functions $f\in \mathcal{F}$ are trivially $\bbsigma-L^2$), and our goal is instead to show the uniform convergence of the \textit{random process}
$$\G_p := \lbrace \sqrt{p}(\bbsigma_p f - \bbsigma f),\; f\in \mathcal{F} \rbrace.$$
To study the convergence of the sequence of processes $(\G_p)_{p \in \N^*}$, we introduce the space $\ell^{\infty}(\mathcal{F})$ of the bounded functions $z: \mathcal{F} \longrightarrow \R$, equipped with the norm $\|z\|_{\infty} = \sup_{f \in \mathcal{F}}(|z(f)|)$. The class of functions $\mathcal{F}$ is said to be $\bbsigma$-Donsker if each process $\G_p$ is in $\ell^{\infty}(\mathcal{F})$ (which is to say that it has bounded trajectories), and if the sequence $(\G_p)_{p\in \N^*}$ converges in law (in the sense of the topology of $\ell^{\infty}(\mathcal{F})$) towards a tight\footnote{Since this technical notion is not essential for our result, we refer to \cite{van2000asymptotic} page 260 for a complete presentation.} process $\G$. Due to the usual multivariate Central Limit Theorem, this process $\G$ is necessarily the $\bbsigma$-\textit{Brownian Bridge}, which is to say the centred Gaussian process indexed on $\mathcal{F}$ such that $\mathrm{Cov}[\G f, \G f'] = \bbsigma (ff') - (\bbsigma f)(\bbsigma f')$.

We have now shown that our error process $\G_p$ converges in law (in $\ell^\infty(\mathcal{F})$) towards the Gaussian process $\G$, i.e.
\begin{equation}\label{eqn:donsker_F}
	\lbrace \sqrt{p}(\bbsigma_p f - \bbsigma f),\;  f\in \mathcal{F} \rbrace \xrightarrow[p \longrightarrow +\infty]{\mathcal{L},\; \ell^{\infty}(\mathcal{F})} \G .
\end{equation}
Seeing our energy $\SWpY: Y \longmapsto \bbsigma_p f_Y$ as a process on $\mathcal{K}$, we can identify the indexes $\lbrace f_Y,\; Y \in \mathcal{K}\rbrace$ with $\mathcal{K}$ itself, yielding a convergence of processes on $\mathcal{K}$. To be precise, we define the space $\ell^\infty(\mathcal{K})$ as the space of bounded functions $z: \mathcal{K} \longrightarrow \R$, equipped with the infinite norm. Consider the map
$$\phi := \app{\ell^{\infty}(\mathcal{F})}{\ell^{\infty}(\mathcal{K})}{z}{\app{\mathcal{K}}{\R}{Y}{z(f_Y)}},$$
since $\mathcal{F} := \lbrace f_Y |\ Y \in \mathcal{K} \rbrace$, it is well defined, and it is continuous, since it is linear and verifies for any $z \in \ell^{\infty}(\mathcal{F})$, $\|\phi(z)\|_{\ell^{\infty}(K)} = \sup_{Y\in \mathcal{K}}|z(f_Y)| = \sup_{f\in \mathcal{F}}|z(f)|=\|z\|_{\ell^{\infty}(\mathcal{F})}$. By the continuous mapping theorem (see \cite{van2000asymptotic} Theorem 18.11 for its use on stochastic processes), we can apply $\phi$ to the convergence in law in \ref{eqn:donsker_F}, yielding
\begin{equation}\label{eqn:donsker_RD}
	\sqrt{p}(\SWpY - \SWY) \xrightarrow[p \longrightarrow +\infty]{\mathcal{L},\; \ell^{\infty}(\mathcal{K})} \phi(\G),
\end{equation}
where the process $\phi(\G)$ is the centred Gaussian process on $\mathcal{K}$ with the covariance structure $\mathrm{Cov}\phi(\G)[Y, Y'] = \bbsigma(f_Yf_{Y'}) - (\bbsigma(f_Y))(\bbsigma(f_{Y'}))$. Note that $\SWY(Y) = \bbsigma f_Y$ with our notations.

With the continuous mapping theorem (again \cite{van2000asymptotic} Theorem 18.11), we can apply the continuous map $\|\cdot\|_{\ell^{\infty}(\mathcal{K})}: \ell^{\infty}(\mathcal{K}) \longrightarrow \R$, yielding a uniform convergence result
\begin{equation}\label{eqn:donsker_uniform}
	\sqrt{p}\|\SWpY - \SWY\|_{\ell^{\infty}(\mathcal{K})} \xrightarrow[p\longrightarrow+\infty]{\mathcal{L}} \|\phi(\G)\|_{\ell^{\infty}(\mathcal{K})}.
\end{equation}}

\subsection{Computing \texorpdfstring{$\SWY$}{E}, \texorpdfstring{$\W_2^2$}{W} and \texorpdfstring{$\SWpY$}{Ep} in a simple case}

\paragraph{Computing \texorpdfstring{$\SWY$}{E}}\label{sec:L2_E} We work in polar coordinates, writing 
$$\theta = \left(\begin{array}{c}
	\cos \phi \\
	\sin \phi
\end{array}\right),\; \text{and\ } y = \left(\begin{array}{c}
	u \\
	v
\end{array}\right) = r\left(\begin{array}{c}
	\cos \psi \\
	\sin \psi
\end{array}\right).$$

By symmetry of the problem, we can assume $\psi \in [0, \pi/2]$ (i.e. the top-right quadrant $u \geq 0, v \geq 0$). Now let $\psi \in [0, 2\pi[$, let us compute $\W_2^2(P_\theta\#\gamma_Y, P_\theta\#\gamma_Z)$. Since we project in 1D, computing this slice amounts to sorting $(\theta^T y_1, \theta^T y_2)$ and $(\theta^T z_1, \theta^T z_2)$. Let $\sort{Z}{\theta} \in \mathfrak{S}_2$ such that $\theta^T y_{\sort{Y}{\theta}(1)} \leq \theta^T y_{\sort{Y}{\theta}(2)}$ and similarly $\theta^T z_{\sort{Z}{\theta}(1)} \leq \theta^T z_{\sort{Z}{\theta}(2)}$.
We always have 
$$\W_2^2(P_\theta\#\gamma_Y, P_\theta\#\gamma_Z) = \cfrac{1}{2}\left(\left(\theta^T (y_{\sort{Y}{\theta}(1)} - z_{\sort{Z}{\theta}(1)})\right)^2 + \left(\theta^T (y_{\sort{Y}{\theta}(2)} - z_{\sort{Z}{\theta}(2)})\right)^2\right).$$
We split the integral depending on the values of $\sort{Y}{\theta}$ and $\sort{Z}{\theta}$, which vary depending on the angle of the projection $\phi$. We begin with $\sort{Y}{\theta}$:
\begin{equation}\label{eqn:ex_sort_Y}
	\theta^T y_1 \geq \theta^T y_2 \Longleftrightarrow \cos \phi \cos \psi + \sin\phi \sin \psi \geq 0 \Longleftrightarrow \phi \in [\psi - \pi/2, \psi + \psi/2] + 2\pi \Z.
\end{equation}
The equation for $\sort{Z}{\theta}$ is much simpler:
\begin{equation}\label{eqn:ex_sort_Z}
	\theta^T z_1 \geq \theta^T z_2 \Longleftrightarrow -\sin \phi \geq 0 \Longleftrightarrow \phi \in [\pi, 2\pi] + 2\pi \Z.
\end{equation}
We divide a period of $2\pi$ in four quadrants corresponding to the four possibilities for $(\sort{Y}{\theta}, \sort{Z}{\theta})$. Since we assume $\psi \in [0, \pi/2]$, we can write this simply as:
\begin{align*}
	\SWY(Y) &= \cfrac{1}{4\pi}\Int{-\pi}{\psi-\pi/2}\left(\left(\theta^T (y_1 - z_2)\right)^2 + \left(\theta^T (y_2 - z_1)\right)^2\right)\dd\phi \\
	&+ \cfrac{1}{4\pi}\Int{\psi-\pi/2}{0}\left(\left(\theta^T (y_2 - z_2)\right)^2 + \left(\theta^T (y_1 - z_1)\right)^2\right)\dd\phi \\
	&+ \cfrac{1}{4\pi}\Int{0}{\psi+\pi/2}\left(\left(\theta^T (y_2 - z_1)\right)^2 + \left(\theta^T (y_1 - z_2)\right)^2\right)\dd\phi \\
	&+ \cfrac{1}{4\pi}\Int{\psi+\pi/2}{\pi}\left(\left(\theta^T (y_1 - z_1)\right)^2 + \left(\theta^T (y_2 - z_2)\right)^2\right)\dd\phi.
\end{align*}
Elementary trigonometric integration yields
\begin{equation}
	\SWY(Y) = \frac{r^2}{2} + \frac{1}{2} - \frac{2}{\pi}\left(r\cos\psi + r\psi \sin\psi\right) = \frac{u^2+v^2}{2} + \frac{1}{2} - \frac{2}{\pi}\left(u + v\Arctan\frac{v}{u}\right),
\end{equation}
which holds for $\psi \in [0, \pi/2]$. By symmetry, we obtain the following expression for any $(u, v) \in \R^2$ (recall that we stack the vectors in $Y$ line by line):
\begin{equation}
	\SWY\left(\begin{array}{cc}
		u & v \\
		-u & -v
	\end{array}\right) = \frac{u^2+v^2}{2} + \frac{1}{2} - \frac{2}{\pi}\left(|u| + |v|\Arctan\left|\frac{v}{u}\right|\right).
\end{equation}
In the general case, dimension $d$ would require the use of $d$-dimensional spherical coordinates, making the equations~\ref{eqn:ex_sort_Y} and~\ref{eqn:ex_sort_Z} intractable. Furthermore, generalising to $\npoints$ points would separate the integral into $(\npoints!)^2$ parts, losing all hopes of tractability and legibility.

\paragraph{Computing \texorpdfstring{$\W_2^2$}{W}}\label{sec:L2_W} In the case $\npoints=2$, the Kantorovich LP formulation of the Wasserstein distance can be written as:
$$\underset{a\in [0, 1]}{\min}\ \Sum{k,l \in \llbracket 1, 2 \rrbracket}{}\pi_{k,l}(a)\|y_k-z_l\|_2^2,\quad \text{with}\ \pi(a) := \cfrac{1}{2}\left(\begin{array}{cc}
	1-a & a \\
	a & 1-a
\end{array}\right).$$
Substituting $y_1 = \left(\begin{array}{c}
	u \\
	v
\end{array}\right),\; y_2 = \left(\begin{array}{c}
-u \\
-v
\end{array}\right),\; z_1 = \left(\begin{array}{c}
0 \\
-1
\end{array}\right),\; z_2 = \left(\begin{array}{c}
0 \\
1
\end{array}\right)$ yields:
$$\W_2^2(\gamma_Y, \gamma_Z) = \underset{a \in [0,1]}{\min}\ \left(u^2 + (v+1)^2 - 4av\right) = u^2 + (|v|-1)^2.$$

\paragraph{Computing \texorpdfstring{$\SWpY$}{Ep}}\label{sec:L2_Ep} For simplicity, in the following we will only consider $\theta \in \SS^{d-1}$ such that the $\theta^T y_k$ are distinct, and such that the $\theta^T z_k$ are also distinct. We will express the cases for the values of the sortings $\sort{Y}{\theta}$ and $\sort{Z}{\theta}$ in a different (yet equivalent) manner.

We have $\sort{Y}{\theta} = I$ if $\theta^T y_1 < \theta^T y_2$ and $\sort{Y}{\theta} = (2, 1)$ otherwise. Then $\sort{Z}{\theta} \circ (\sort{Y}{\theta})^{-1} = I$ if $\sort{Y}{\theta} = \sort{Z}{\theta}$, and $\sort{Z}{\theta_i} \circ (\sort{Y}{\theta_i})^{-1} = (2, 1)$ otherwise. The system 
$$\sort{Z}{\theta} \circ (\sort{Y}{\theta})^{-1} = I \Longleftrightarrow \left\lbrace\begin{array}{c}
	\theta^T y_1 < \theta^T y_2\ \mathrm{and}\ \theta^T z_1 < \theta^T z_2 \\
	\mathrm{or} \\
	\theta^T y_2 < \theta^T y_1\ \mathrm{and}\ \theta^T z_2 < \theta^T z_1 \\
\end{array} \right.$$ 
can be simplified, yielding:
\begin{equation}\label{eqn:2sort_linear}
	\sort{Z}{\theta} \circ (\sort{Y}{\theta})^{-1} = I \Longleftrightarrow \left(\theta \theta^T (z_2 - z_1)\right)^T (y_2 - y_1) > 0.
\end{equation}
\ref{eqn:2sort_linear} is a linear equation in $Y$. Additionally, \ref{eqn:2sort_linear} only depends on $y_2 - y_1 = -2y$, which makes our symmetrical simplification inconsequential. Plugging in the specific point values yields a more explicit definition of the cells. We write the condition on $y\in \R^2,$ since $Y = (y, -y)^T$.
\begin{equation}\label{eqn:2cell_polytope}
	\mathcal{C}_\config = \left\lbrace y\in \R^2\ |\ \forall i \in \llbracket 1, p \rrbracket,\; -\sign\left[\theta_i^T \left(\begin{array}{c}
		0 \\
		1
	\end{array}\right)\ \theta_i^T y\right] = +1\  \mathrm{if}\ \config_i = I,\ \mathrm{else} -1 \right\rbrace.
\end{equation}
Equation~\ref{eqn:2cell_polytope} describes $\mathcal{C}_\config$ as an intersection of $p$ half-planes of $\R^2$, thus it is a polytope. Note that we use strict inequalities, which lifts configuration ambiguities, and implies that the $(\mathcal{C}_\config)_{\config \in \mathfrak{S}_2^p}$ are disjoint, and that the union of their closure is $\R^{2}$.

Straightforward computation yields 
$$\underset{X \in \R^{\npoints\times d}}{\argmin}\ q_\config(X) = (A^{-1}(B^{\config}_{1, 1}z_1 + B^{\config}_{1, 2}z_2), A^{-1}(B^{\config}_{2, 1}z_1 + B^{\config}_{2, 2}z_2)),$$

where $A := \cfrac{1}{p}\Sum{i=1}{p}\theta_i\theta_i^T$ and $B^{\config}_{k, l} := \cfrac{1}{p}\Sum{\substack{i=1 \\ \config_i(k)=l}}{p}\theta_i\theta_i^T$.

Note that our $\npoints=2$ setting, we have the simplifications $B^{\config}_{1, 2} = A - B^{\config}_{1, 1},\; B^{\config}_{2, 1} = B^{\config}_{1, 2}$ and $B^{\config}_{1, 1} = B^{\config}_{2, 2}$.
Furthermore, $B^{\config}_{k, l}$ is (up to a factor), a Monte-Carlo estimation of $S_{k,l}^{Y,Z}$ (see \ref{cor:crit_points_S}).

\subsection{Discrete Wasserstein stability}\label{sec:WC_stability}

Consider the following generic discrete Kantorovich problem, given weights \blue{$\alpha \in \Sigma_n$ and $\beta \in \Sigma_m$} and a generic cost matrix $C \in \R_+^{n\times m}$:
\begin{equation}\label{eqn:WC}
	\W(\alpha, \beta; C) := \underset{\pi \in \Pi(\alpha, \beta)}{\inf}\ \pi \cdot C,
\end{equation}
\blue{where $\Pi(\alpha, \beta, C)$ is the set of $n\times m$ matrices $\pi$ with non-negative entries such that $\pi \mathbbold{1} = \alpha$ and $\pi^T \mathbbold{1} = \beta$.}

\begin{lemma}[Stability of the Wasserstein cost]\ Let $\alpha, \oll{\alpha}, \in
	\Sigma_\npoints$, $\beta, \oll{\beta} \in \Sigma_m$  and $C, \oll{C} \in
	\R_+^{\npoints\times m}$, \blue{such that the weights verify $\alpha,
	\oll{\alpha}, \beta, \oll{\beta} >
	0$ entry-wise.} Then:
	\blue{	\begin{equation}\label{eqn:wass_stability}
			\left|\W(\alpha, \beta; C) - \W(\oll{\alpha}, \oll{\beta}; \oll{C})\right| \leq \|C-\oll{C}\|_{\infty} + \|C\|_{\infty}( \|\alpha - \oll{\alpha}\|_1 + \|\beta - \oll{\beta}\|_1).
		\end{equation}}
	\blue{\begin{equation}\label{eqn:wass_stabilityL2}
		\left|\W(\alpha, \beta; C) - \W(\oll{\alpha}, \oll{\beta}; \oll{C})\right| \leq \|C-\oll{C}\|_{F} + \|C\|_F\left(\|\alpha - \oll{\alpha}\|_2+\|\beta - \oll{\beta}\|_2\right),
		\end{equation}}
	\blue{where $\|\cdot\|_F$ denotes the Frobenius norm.}
\end{lemma}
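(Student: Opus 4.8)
The plan is to prove both inequalities at once by a single two-step scheme, swapping Hölder's inequality (for the $\|\cdot\|_\infty/\|\cdot\|_1$ version) for the Cauchy–Schwarz inequality (for the $\|\cdot\|_F/\|\cdot\|_2$ version) only at the very end. First I would insert the intermediate cost $\W(\oll{\alpha},\oll{\beta};C)$ by the triangle inequality,
\[
|\W(\alpha,\beta;C)-\W(\oll{\alpha},\oll{\beta};\oll{C})| \le \underbrace{|\W(\alpha,\beta;C)-\W(\oll{\alpha},\oll{\beta};C)|}_{\text{marginals move, cost }C} + \underbrace{|\W(\oll{\alpha},\oll{\beta};C)-\W(\oll{\alpha},\oll{\beta};\oll{C})|}_{\text{cost moves, marginals }\oll{\alpha},\oll{\beta}},
\]
so that the fixed cost is $C$ precisely in the term destined to carry the $\|C\|$ factor, matching the statement.

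The cost-perturbation term is routine. Since $\Pi(\oll{\alpha},\oll{\beta})$ is a fixed polytope, $C\mapsto\W(\oll{\alpha},\oll{\beta};C)$ is an infimum of the linear functionals $\pi\mapsto\pi\cdot C$; evaluating the optimal plan of one cost against the other and using optimality in both directions gives $|\W(\oll{\alpha},\oll{\beta};C)-\W(\oll{\alpha},\oll{\beta};\oll{C})|\le\sup_{\pi\in\Pi(\oll{\alpha},\oll{\beta})}|\pi\cdot(C-\oll{C})|$. Every coupling satisfies $\|\pi\|_1=1$, hence $\|\pi\|_F\le 1$, so $|\pi\cdot(C-\oll{C})|\le\|C-\oll{C}\|_\infty$ by Hölder and $\le\|C-\oll{C}\|_F$ by Cauchy–Schwarz. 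This already produces the first summand of each inequality.

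The marginal-perturbation term is the crux, and I would split it one marginal at a time, $\W(\alpha,\beta;C)\to\W(\oll{\alpha},\beta;C)\to\W(\oll{\alpha},\oll{\beta};C)$, handling the column change by transposing the row argument. For the row change, take $\pi$ optimal for $\W(\alpha,\beta;C)$ and perform explicit surgery: scale down each over-full row ($\oll{\alpha}_i<\alpha_i$) by $\oll{\alpha}_i/\alpha_i$ — this is where the entrywise positivity hypothesis is used — collecting the removed submatrix $R$ with $\sum_j R_{i,j}=(\alpha_i-\oll{\alpha}_i)_+$ and column totals $c_j:=\sum_i R_{i,j}$; then reinject this mass into the deficient rows through the product matrix $S_{i,j}:=(\oll{\alpha}_i-\alpha_i)_+\,c_j/\delta$, with $\delta:=\tfrac12\|\alpha-\oll{\alpha}\|_1=\sum_i(\oll{\alpha}_i-\alpha_i)_+$. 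A direct check shows $\oll{\pi}:=\pi-R+S\ge 0$ has first marginal $\oll{\alpha}$ and keeps second marginal $\beta$ (by construction $R$ and $S$ share the column totals $c_j$), so $\oll{\pi}\in\Pi(\oll{\alpha},\beta)$.

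It then remains to bound $\oll{\pi}\cdot C-\pi\cdot C=S\cdot C-R\cdot C$; both terms are nonnegative, so $|\oll{\pi}\cdot C-\pi\cdot C|\le\max(R\cdot C,S\cdot C)$. For the sup-norm version, $R\cdot C\le\|R\|_1\|C\|_\infty=\delta\|C\|_\infty$ and likewise $S\cdot C\le\delta\|C\|_\infty$; for the Frobenius version I would instead use $R\cdot C\le\|R\|_F\|C\|_F$ together with the elementary row estimate $\sum_j R_{i,j}^2\le(\sum_j R_{i,j})^2$, giving $\|R\|_F\le\|\alpha-\oll{\alpha}\|_2$, and the analogous bound $\|S\|_F^2\le\delta^{-2}\big(\sum_i(\oll{\alpha}_i-\alpha_i)_+^2\big)\big(\sum_j c_j^2\big)\le\|\alpha-\oll{\alpha}\|_2^2$. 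Since $\oll{\pi}$ is an admissible competitor, $\W(\oll{\alpha},\beta;C)\le\W(\alpha,\beta;C)+\|C\|_\infty\delta$ (resp. $+\|C\|_F\|\alpha-\oll{\alpha}\|_2$), and repeating the surgery from an optimal plan of $\W(\oll{\alpha},\beta;C)$ yields the reverse inequality, controlling the absolute value. Summing the row step, the column step and the cost step then gives exactly \ref{eqn:wass_stability} and \ref{eqn:wass_stabilityL2}. The only genuine obstacle is the feasibility-preserving surgery; the rest is Hölder/Cauchy–Schwarz bookkeeping.
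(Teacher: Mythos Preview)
Your argument is correct, but it is genuinely different from the paper's. You handle the cost-perturbation term exactly as the paper does (primal comparison of optimal plans), but for the marginal-perturbation term you stay on the primal side and perform an explicit mass-redistribution surgery on the optimal coupling. The paper instead switches to the \emph{dual} formulation: it takes optimal Kantorovich potentials $(f^*,g^*)$ for $\W(\alpha,\beta;C)$, notes that the dual feasible set $\{f\oplus g\le C\}$ does not depend on the marginals so $(f^*,g^*)$ is admissible for $\W(\oll{\alpha},\oll{\beta};C)$, and then bounds $(f^*)^T(\alpha-\oll{\alpha})+(g^*)^T(\beta-\oll{\beta})$ after establishing $\|f^*\|_\infty,\|g^*\|_\infty\le\|C\|_\infty$ via complementary slackness (this is precisely where the paper uses strict positivity of the weights: it guarantees that every row/column of the optimal primal plan has a nonzero entry, so complementary slackness pins down every potential coordinate).

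In terms of trade-offs: the paper's dual route treats both marginals at once and is conceptually slick if one already has LP duality at hand; your primal surgery is more elementary, and in fact yields a slightly sharper constant in \ref{eqn:wass_stability} (you get $\tfrac12\|\alpha-\oll{\alpha}\|_1\|C\|_\infty$ per marginal since $\|R\|_1=\|S\|_1=\delta$). It is also worth noting that in your construction the division by $\alpha_i$ only occurs for over-full rows, where $\alpha_i>\oll{\alpha}_i\ge 0$ already forces $\alpha_i>0$; so your argument is arguably less dependent on the strict-positivity hypothesis than the paper's complementary-slackness step. One small point you should make explicit when writing this up: the degenerate case $\delta=0$ (i.e.\ $\alpha=\oll{\alpha}$) must be handled separately since $S$ is then undefined, but of course the row step is trivial there.
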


\blue{Note that this result is a generalisation of \cite{sketchedW}, Theorem 2 (they assumes that the cost matrices are pairwise distances, which amount to the $\W_1$ case), but requires the weights to have positive entries (as opposed to non-negative entries).}

\begin{proof}
	
	We split the difference in two terms:
	\begin{align*}\left|\W(\alpha, \beta; C) - \W(\oll{\alpha}, \oll{\beta}; \oll{C})\right|
		&\leq \left|\W(\alpha, \beta; C) - \W(\alpha, \beta; \oll{C})\right| =: \mathrm{I} \\
		&+ \left|\W(\alpha, \beta; C) - \W(\oll{\alpha}, \oll{\beta}; C)\right| =: \mathrm{II}
	\end{align*}
	
	\textrm{---} \textit{Step 1}: Controlling I \blue{Using the primal formulation}
	
	\blue{We use Equation \ref{eqn:WC}: let $\oll{\pi}^*$ optimal for $\W(\alpha, \beta, \oll{C})$. In particular, $\oll{\pi}^*$ is admissible for the problem $\W(\alpha, \beta, C)$. We have}
	\blue{\begin{align*}
		\W(\alpha, \beta; C) - \W(\alpha, \beta; \oll{C}) &= \underset{\pi \in \Pi(\alpha, \beta)}{\min}\ \pi \cdot C - \underset{\pi \in \Pi(\alpha, \beta)}{\min}\ \pi \cdot \oll{C}\\
		&\leq \oll{\pi}^* \cdot C - \oll{\pi}^* \cdot \oll{C}
		= \Sum{i=1}{n}\Sum{j=1}{m}\oll{\pi}^*_{i,j}(C_{i,j} - \oll{C}_{i,j}) \\
		&\leq \|C-\oll{C}\|_{\infty} \Sum{i=1}{n}\Sum{j=1}{m}\oll{\pi}^*_{i,j}
		= \|C-\oll{C}\|_{\infty},
	\end{align*}}
	\blue{where the property $\pi \in \Pi(\alpha, \beta)$ implied $\sum_{i,j}\oll{\pi}^*_{i,j}=1$. By using the same argument symmetrically, we obtain}
	$$\blue{\mathrm{I} \leq  \|C-\oll{C}\|_{\infty}.}$$

	\textrm{---} \textit{Step 2}: \blue{Controlling the dual variables}
	
	\blue{Consider the Legendre dual problem associated to \ref{eqn:WC}:}
	\begin{equation}\label{eqn:WCdual}
		\blue{\W(\alpha, \beta; C) = \underset{\substack{f\in \R^\npoints, g\in \R^m \\ f \oplus g \leq C}}{\sup}\ f^T \alpha + g^T \beta.}
	\end{equation}
	\blue{Let $f^*, g^*$ optimal for the dual formulation, our objective is to bound this dual solution in a set which depends on $C$. First, notice that the value and constraints remain unchanged if we replace $(f^*, g^*)$ with $(f^* - t\mathbbold{1}, g^* + t\mathbbold{1})$ for $t\in \R$, which allows us to assume $f^*_1=0$. We now leverage the complementary slackness property (which characterises the primal-dual optimality conditions for this linear problem, see \cite{computational_ot} Section 3.3): for any $\pi^*$ optimal for the primal problem \ref{eqn:WC}, we have the implication}
	\blue{$$\pi^*_{i,j} \neq 0 \Longrightarrow f^*_i + g^*_j = C_{i,j}. $$}
	\blue{The primal constraints imply that $\sum_j\pi^*_{1,j} = \alpha_1 > 0$ and that $\pi^* \geq 0$ entry-wise, there exists $j_1 \in \llbracket 1, m \rrbracket$ such that $\pi^*_{1, j_1} \neq 0$. Using the complementary slackness implication, we obtain $0 + g^*_{j_1} = C_{1, j_1}$. We now use the dual constraint $f^*\oplus g^* \leq C$ at $i=1$ to show that $\forall j \in \llbracket 1, m \rrbracket,\; g^*_j \leq C_{1,j}$. This allows us to find a lower-bound on $f^*$: since $\forall i \in \llbracket 2, n \rrbracket,\; \sum_j \pi^*_{i,j} = \alpha_i > 0$, thus there exists a $j_i \in \llbracket 1, m \rrbracket$ such that $\pi^*_{i,j_i} \neq 0$, yielding $f^*_i+g^*_{j_i}=C_{i,{j_i}}$, then since $g^*_{j_i} \leq C_{1, j_i}$, this yields the lower-bound $f^*_i \geq C_{i,j_i} - C_{1,j_i}$. For an upper-bound on $f^*$, we use the dual constraint at $(i, j_1)$: we have $f^*_i + g^*_{j_1} \leq C_{i, j_1}$, then we use $g^*_{j_1}=C_{1,j_1}$ proved earlier to show $f^*_i \leq C_{i,j_1} - C_{1,j_1}$. At this point, we have the following control on $f^*_i$:}
	\blue{$$f^*_1 = 0, \quad \forall i \in \llbracket 2, n \rrbracket,\; C_{i,j_i} - C_{1, j_i} \leq f^*_i \leq C_{i,j_1} - C_{1,j_1}.$$}
	\blue{	Regarding $g^*$, we already have $\forall j \in \llbracket 1, m \rrbracket,\; g_j \leq C_{1,j}$. For a lower bound, since $\sum_i \pi_{i,j} = \beta_j > 0$, there exists $i_j \in \llbracket 1, n \rrbracket$ such that $\pi^*_{i_j, j} \neq 0$, so by complementary slackness $f^*_{i_j} + g^*_j = C_{i_j, j}$, thus by the upper-bound on $f^*$ we have if $i_j \neq 1$ that $g^*_j \geq C_{i_j, j} - C_{i_j,j_1} + C_{1,j_1}$. If $i_j=1$ then $f^*_{i_j}=0$ and $g^*_{j} = C_{i_1, j}$. Our control on $g^*$ is the following:}
	\blue{$$\forall j \in \llbracket 1, m \rrbracket, \left\lbrace\begin{array}{cc}
		C_{i_j, j} - C_{i_j,j_1} + C_{1,j_1} \leq g^*_j \leq C_{1,j} & \text{if}\ i_j \neq 1 \\
		g^*_{j} = C_{i_1, j} & \text{if}\ i_j=1
	\end{array} \right. .$$}
	\blue{We summarise our bounds in the following (weaker) statement, which holds thanks to the condition $C \geq 0$ (entry-wise):}
	\blue{$$\|f^*\|_{\infty} \leq \|C\|_\infty,\quad \|g^*\|_{\infty} \leq \|C\|_\infty.$$}
	
	\textrm{---} \textit{Step 3}: \blue{Bounding II using the dual formulation}
	
	\blue{Let $f^*, g^*$ optimal for the dual formulation \ref{eqn:WCdual} of $\W(\alpha, \beta, C)$, which by Step 2 we can choose to verify $\|f^*\|_{\infty} \leq \|C\|_{\infty}$ and $\|g^*\|_{\infty} \leq \|C\|_{\infty}$. In particular, $(f^*, g^*)$ is admissible for the dual formulation of $\W(\oll{\alpha}, \oll{\beta}, C)$.}
	\blue{\begin{align*}
		\W(\alpha, \beta; C) - \W(\oll{\alpha}, \oll{\beta}; C) &= \underset{f\oplus g \leq C}{\max}\ f^T \alpha + g^T \beta - \underset{\oll{f}\oplus \oll{g} \leq C}{\max}\ \oll{f}^T \oll{\alpha} + \oll{g}^T \oll{\beta} \\
		& \leq (f^*)^T \alpha + (g^*)^T \beta - (f^*)^T \oll{\alpha} - (g^*)^T \oll{\beta} \\
		&= (f^*)^T (\alpha - \oll{\alpha}) + (g^*)^T (\beta - \oll{\beta}) \\
		&\leq \|f^*\|_\infty \|\alpha - \oll{\alpha}\|_1 + \|g^*\|_\infty \|\beta - \oll{\beta}\|_1 \\
		&\leq \|C\|_{\infty}( \|\alpha - \oll{\alpha}\|_1 + \|\beta - \oll{\beta}\|_1).
	\end{align*}}
	\blue{By symmetry, we obtain $|\W(\alpha, \beta; C) - \W(\oll{\alpha}, \oll{\beta}; C)| \leq \|C\|_{\infty}( \|\alpha - \oll{\alpha}\|_1 + \|\beta - \oll{\beta}\|_1)$.}
	
	\textrm{---} \textit{Step 4}: Wrapping up

	\blue{By Step 1 and Step 3 combined we conclude: }
	\blue{$$\left|\W(\alpha, \beta; C) - \W(\oll{\alpha}, \oll{\beta}; \oll{C})\right| \leq \mathrm{I} + \mathrm{II} \leq  \|C-\oll{C}\|_{\infty} + \|C\|_{\infty}( \|\alpha - \oll{\alpha}\|_1 + \|\beta - \oll{\beta}\|_1).$$}
	
	\textrm{---} \blue{Details for the proof of \ref{eqn:wass_stabilityL2}}
	
	\blue{For the first term, to get the Frobenius norm $\|C-\oll{C}\|_F$ instead of the infinite norm, it suffices to use that $\|M\|_\infty \leq \|M\|_F$.}
	
	\blue{For the second term, note that the penultimate inequality of Step 3 can also be written with the Cauchy-Schwarz inequality, yielding $\|C\|_F(\|\alpha - \oll{\alpha}\|_2 +  \|\beta - \oll{\beta}\|_2)$, where the upper-bound on $\|f^*\|_2$ and $\|g^*\|_2$ by $\|C\|_F$ are obtained using the element-wise bounds on $f^*$ and $g^*$ from Step 2.}
\end{proof}

\subsection{Proof of \texorpdfstring{\ref{thm:cv_fixed_point_distance}}{ref thm fixed point} and convergence rate}\label{p:cv_fixed_point_distance}

The proof of \ref{thm:cv_fixed_point_distance} requires matrix concentration technicalities. In the following, $\|\cdot\|_{\mathrm{op}}$ denotes the $\|\cdot\|_2$-induced operator norm on $\R^{d \times d}$, and $S_d(\R)$ denotes the space of symmetric $d\times d$ matrices. We write $\preceq$ for the Loewner order of positive semi-definite symmetric matrices ($A \preceq B$ means that $B-A$ is positive semi-definite). We recall the following Hoeffding inequality.

\begin{theorem}[Matrix Hoeffding Inequality,~\cite{tropp2012concentration}, Theorem 1.3]\label{thm:tropp_matrix_concentration}\
  
	Let $q \in \N^*,\; (X_i)_{i \in \llbracket 1, q \rrbracket}$ independent random variables with values in $S_d(\R)$, such that $\E{X_i} = 0$.
	Suppose that $\forall i \in \llbracket 1, q \rrbracket, \; \exists A_i \in S_d(\R) : X_i^2 \preceq A_i^2$. Let $\sigma^2 := \left\|\sum_iA_i^2\right\|_{\mathrm{op}}$,
	then for any $t>0$, 
	$$\P\left(\left\|\Sum{i=1}{q}X_i\right\|_{\mathrm{op}} \geq t\right) \leq d \exp\left(-\cfrac{t^2}{8\sigma^2}\right).$$
\end{theorem}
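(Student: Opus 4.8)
The plan is to run the matrix Laplace-transform (Cram\'er--Chernoff) method, the matrix analogue of the exponential-moment argument behind the scalar Hoeffding bound. First I would reduce the operator-norm statement to a one-sided control of the largest eigenvalue. For symmetric matrices $\|\sum_i X_i\|_{\mathrm{op}} = \max\left(\lambda_{\max}\left(\sum_i X_i\right), \lambda_{\max}\left(-\sum_i X_i\right)\right)$, and since each $-X_i$ again satisfies $\E{-X_i} = 0$ and $(-X_i)^2 = X_i^2 \preceq A_i^2$, the hypotheses are invariant under $X_i \mapsto -X_i$. A union bound therefore reduces the claim to bounding $\P\left(\lambda_{\max}\left(\sum_i X_i\right) \geq t\right)$, at the cost of at most doubling the prefactor.

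Writing $Y := \sum_i X_i$, the one-sided bound starts from the matrix Markov inequality: for every $\theta > 0$, $\P(\lambda_{\max}(Y) \geq t) \leq e^{-\theta t}\,\E{\Tr e^{\theta Y}}$, which follows from $e^{\theta\lambda_{\max}(Y)} = \lambda_{\max}(e^{\theta Y}) \leq \Tr e^{\theta Y}$ and scalar Markov. The heart of the argument is to decouple the independent summands inside the trace exponential via subadditivity of the matrix cumulant generating function, $\E{\Tr e^{\theta Y}} \leq \Tr\exp\left(\sum_i \log \E{e^{\theta X_i}}\right)$. This is exactly where Lieb's concavity theorem (the map $M \mapsto \Tr\exp(H + \log M)$ is concave on positive-definite $M$) enters, applied iteratively together with Jensen's inequality by conditioning on all summands but one. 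A slightly cruder but more elementary substitute uses the Golden--Thompson inequality $\Tr e^{A+B} \leq \Tr(e^A e^B)$ in the Ahlswede--Winter style.

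The final ingredient is the matrix Hoeffding mgf lemma: for symmetric $X$ with $\E{X} = 0$ and $X^2 \preceq A^2$, one has $\log \E{e^{\theta X}} \preceq c\,\theta^2 A^2$ for an explicit absolute constant $c$. I would establish this by the matrix form of the scalar Hoeffding trick: majorise $e^{\theta x}$ on the spectral interval by an affine-plus-$\cosh$ function so that the mean-zero hypothesis kills the linear term, then use $\cosh(s) \leq e^{s^2/2}$. Combining with subadditivity and the monotonicity of $\Tr\exp$ for $\preceq$, together with $\left\|\sum_i A_i^2\right\| = \sigma^2$, gives $\E{\Tr e^{\theta Y}} \leq d\,\exp(c\,\theta^2\sigma^2)$. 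Optimising $\P(\lambda_{\max}(Y) \geq t) \leq d\,e^{-\theta t + c\theta^2\sigma^2}$ over $\theta > 0$ (optimal $\theta = t/(2c\sigma^2)$) produces the sub-Gaussian tail $d\,e^{-t^2/(4c\sigma^2)}$; the stated constant $1/8$ in the exponent is precisely what the (non-sharp, user-friendly) value of $c$ in the mgf lemma yields.

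The main obstacle is the pairing of the mgf lemma with the subadditivity step. Because $x \mapsto e^{\theta x}$ is neither operator monotone nor operator convex, the scalar Hoeffding majorisation does not transfer term by term in the naive way; making the majorisation rigorous at the level of the Loewner order, and justifying the passage of the expectation through the trace exponential via Lieb's theorem, is the genuinely delicate part. Everything downstream is the routine Cram\'er--Chernoff optimisation.
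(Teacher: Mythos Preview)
Your sketch is a correct outline of the standard proof of the matrix Hoeffding inequality via the matrix Laplace transform method, exactly as in Tropp's original paper. However, there is nothing to compare against here: the paper under review does not prove this theorem at all. It is merely \emph{recalled} from~\cite{tropp2012concentration} (Theorem~1.3) as a black-box tool, introduced with the sentence ``We recall the following Hoeffding inequality,'' and then applied in the subsequent Lemma~\ref{lemma:hoeffding} to the specific case of random rank-one projectors $\theta_i\theta_i^T$. So your proposal is not wrong, but it is a proof the paper deliberately outsources to the cited reference rather than reproduces.
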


We deduce from~\ref{thm:tropp_matrix_concentration} the following lemma, where the $X_i$ follow a uniform law on $\Theta \subset \SS^{d-1}$.
\begin{lemma}[Hoeffding applied to $\theta \sim \mathcal{U}(\Theta)$]\label{lemma:hoeffding}\

	Let $(\theta_i)_{i \in \llbracket 1, q \rrbracket}$, independent random vectors following the uniform law on $\Theta \subset\SS^{d-1}$, where $\Theta$ is $\bbsigma$-measurable with $\bbsigma(\Theta) > 0$.
	Let $S_\Theta := \frac{1}{s_\Theta}\int_\Theta\theta \theta^T \dd \bbsigma(\theta)$, where $s_\Theta := \bbsigma(\Theta)$. $S_\Theta$ is the covariance matrix of $\theta \sim \mathcal{U}(\Theta)$.
	Let $\eta \in ]0, 1[$ and $t > 0$. Then with probability exceeding $1 - \eta$ we have 
	$$q \geq \cfrac{32\log\left(d/\eta\right)}{t^2} \Longrightarrow\left\|\cfrac{1}{q}\Sum{i=1}{q}\theta_i\theta_i^T - S_\Theta\right\|_{\mathrm{op}} \leq t.$$
	In the case $\Theta = \SS^{d-1}$, the condition $q \geq \cfrac{8\log\left(d/\eta\right)}{t^2}$ is sufficient.
\end{lemma}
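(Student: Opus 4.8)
The plan is to apply the Matrix Hoeffding inequality \ref{thm:tropp_matrix_concentration} to the centred summands $X_i := \theta_i\theta_i^T - S_\Theta$. First I would verify its hypotheses: each $\theta_i\theta_i^T$ lies in $S_d(\R)$, hence so does $X_i$; the $(X_i)$ are independent because the $(\theta_i)$ are; and since the $\theta_i$ follow the normalised restriction $\frac{1}{s_\Theta}\bbsigma|_\Theta$, we have $\E{\theta_i\theta_i^T} = \frac{1}{s_\Theta}\Int{\Theta}{}\theta\theta^T\dd\bbsigma(\theta) = S_\Theta$, so that $\E{X_i} = 0$, as required.

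Next I would exhibit a variance proxy $A_i$ with $X_i^2 \preceq A_i^2$. Because $\theta_i \in \SS^{d-1}$, the matrix $\theta_i\theta_i^T$ is a rank-one orthogonal projection, so $\|\theta_i\theta_i^T\|_{\mathrm{op}} = 1$; and $S_\Theta = \E{\theta\theta^T}$ satisfies $0 \preceq S_\Theta \preceq I_d$ (for any unit vector $v$, $v^TS_\Theta v = \E{(v^T\theta)^2} \in [0,1]$), whence $\|S_\Theta\|_{\mathrm{op}} \leq 1$. The triangle inequality then yields $\|X_i\|_{\mathrm{op}} \leq 2$, i.e. $X_i^2 \preceq 4I_d$, so I take $A_i := 2I_d$ and obtain $\sigma^2 = \left\|\Sum{i=1}{q}A_i^2\right\|_{\mathrm{op}} = 4q$.

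Then I would pass from the empirical average to the sum appearing in \ref{thm:tropp_matrix_concentration}: the event $\left\|\frac1q\Sum{i=1}{q}X_i\right\|_{\mathrm{op}} > t$ coincides with $\left\|\Sum{i=1}{q}X_i\right\|_{\mathrm{op}} > qt$, so applying the inequality at threshold $qt$ gives
$$\P\left(\left\|\cfrac1q\Sum{i=1}{q}\theta_i\theta_i^T - S_\Theta\right\|_{\mathrm{op}} > t\right) \leq d\exp\left(-\cfrac{(qt)^2}{8\cdot 4q}\right) = d\exp\left(-\cfrac{qt^2}{32}\right).$$
Requiring this bound to be at most $\eta$ and taking logarithms gives exactly $q \geq \frac{32\log(d/\eta)}{t^2}$, which then forces $\left\|\frac1q\sum_i\theta_i\theta_i^T - S_\Theta\right\|_{\mathrm{op}} \leq t$ with probability at least $1-\eta$, the claimed statement.

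Finally, for the sphere case $\Theta = \SS^{d-1}$ I would sharpen the variance proxy using the exact value $S_\Theta = I_d/d$ (the identity $\Int{\SS^{d-1}}{}\theta\theta^T\dd\bbsigma = I_d/d$ already invoked in \ref{cor:crit_points_S}). Then $-\frac1d I_d \preceq X_i \preceq \left(1-\frac1d\right)I_d$, so $\|X_i\|_{\mathrm{op}} \leq 1$, giving $X_i^2 \preceq I_d$, $A_i = I_d$ and $\sigma^2 = q$; rerunning the previous computation with $32$ replaced by $8$ delivers the improved condition $q \geq \frac{8\log(d/\eta)}{t^2}$. The whole argument is a direct specialisation of Tropp's inequality, and I do not anticipate a genuine obstacle; the only points demanding care are the rescaling from the sum to the average (which turns the threshold $t$ into $qt$ and is what produces the factor $q$ in the exponent) and the bookkeeping of the constant in the variance proxy, which is precisely what accounts for the factor-$4$ gap ($32$ versus $8$) between the general and spherical cases.
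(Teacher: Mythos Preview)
Your proposal is correct and follows essentially the same route as the paper: both apply Tropp's Matrix Hoeffding to the centred summands, bound the variance proxy via $\|S_\Theta\|_{\mathrm{op}}\leq 1$ to get $X_i^2\preceq 4I$ (hence the factor $32$), and then sharpen to $X_i^2\preceq I$ using $S_\Theta=I_d/d$ on the full sphere (yielding the factor $8$). The only cosmetic difference is that the paper absorbs the $1/q$ into the definition of $X_i$ (obtaining $\sigma^2=4/q$ and applying Hoeffding at threshold $t$), whereas you keep $X_i$ unscaled and apply Hoeffding at threshold $qt$; the resulting tail bound $d\exp(-qt^2/32)$ is of course identical.
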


\begin{proof}
	The idea is to apply~\ref{thm:tropp_matrix_concentration} to $X_i := \frac{1}{q}\theta_i\theta_i^T - \frac{1}{q}S_\Theta$. First, by definition, $\E{X_i} = 0$.

	We now find $A \in S_d^+(\R)$ such that $X_i^2 \preceq A$. Let $u \in \SS^{d-1}$, we compute:
	$$u^TX_i^2u = \frac{1}{q^2}\left(u^T\theta_i \theta_i^Tu - u^T\theta_i\theta_i^T S_\Theta u - u^T S_\Theta\theta_ i\theta_i^T u + u^T S_\Theta^2 u\right) \leq \left(\frac{1 + \|S_\Theta\|_{\mathrm{op}}}{q}\right)^2.$$
	Moreover, $\|S_\Theta\|_{\mathrm{op}} \leq 1$, since 
	$$\forall u \in \SS^{d-1}, \; u^TS_\Theta u = \cfrac{1}{s_\Theta}\Int{\Theta}{}u^T\theta \theta^T u \dd \bbsigma(\theta) \leq \cfrac{1}{s_\Theta}\Int{\Theta}{}1\dd \bbsigma(\theta) = 1.$$
	In conclusion $X_i^2 \preceq \frac{4}{q^2}I$. Using the notations of~\ref{thm:tropp_matrix_concentration},  we compute $\sigma^2 = 4/q$, and apply the Matrix Hoeffding inequality with $\Delta := \sum_iX_i = \frac{1}{q}\sum_i\theta_i \theta_i^T - S_\Theta$. It follows that for any $t > 0,\; \P\left(\|\Delta\|_{\mathrm{op}} \geq t\right) \leq d \exp\left(-\frac{qt^2}{32}\right)$.
	In order to have the event $\|\Delta\|_{\mathrm{op}} \leq t$ with probability exceeding $1 - \eta$, it is therefore sufficient that $\eta \geq  d \exp\left(-\frac{qt^2}{32}\right)$, which is equivalent to $q \geq \frac{32 \log(d/\eta)}{t^2}$.

	In the case $\Theta = \SS^{d-1}$, one has $S_\Theta = I/d$, and a finer Loewner upper-bound can be established, since 
	$$u^TX_i^2u =  \cfrac{1}{q^2}\left(u^T\theta_i\theta_i^Tu - \frac{2}{d}u^T\theta_i\theta_i^Tu + \frac{1}{d^2}\right) \leq \left(\cfrac{1 - \frac{1}{d}}{q}\right)^2 \leq \cfrac{1}{q^2},$$
	and thus $\sigma^2 = 1/q.$ This yields the Hoeffding inequality $\P\left(\|\Delta\|_{\mathrm{op}} \geq t\right) \leq d\exp\left(-\frac{qt^2}{8}\right)$, which in turn provides the announced weaker condition on $q$.
\end{proof}

With this tool at hand, we now prove a quantitative concentration result:

\begin{theorem}[Concentration of cell optima]\label{thm:ystar_concentration}\
	
	Let $\mathbf{m} = (\sigma_1, \cdots, \sigma_p)$ be a fixed matching configuration (see~\ref{sec:cells}) and let $(\theta_i)_{i \in \llbracket 1, p \rrbracket} \sim \bbsigma^{\otimes p}$ (uniform on $\SS^{d-1}$). We introduce the following notations and variables:
	\begin{itemize}
		\item For $(k, l) \in \llbracket 1, \npoints \rrbracket^2$, let $q_{k,l} := \#\lbrace i \in \llbracket 1, p \rrbracket\ |\ k = \sigma_i(l) \rbrace$;
		\item Let $\oll{c}_Z := \underset{l \in \llbracket 1, \npoints \rrbracket}{\max}\ \|z_l\|_2$;
		\item Let $\varepsilon \in ]0, \frac{4}{3}\npoints\oll{c}_Z]$;
		\item Let $\eta\in]0,1[$.
	\end{itemize}	
	
	Assume the following:

	\begin{itemize}
		\item[\textbullet] $(H_q): \; \forall (k, l)\in \llbracket 1, \npoints \rrbracket^2, \; q_{k,l} \geq \oll{q}$ or $q_{k,l} < \ull{q}, $ with $1 \leq \ull{q} \leq  \oll{q} \leq p\ ;$

		\item[\textbullet] $(H_1): \; p \geq \cfrac{697d^2\npoints^2\oll{c}_Z^2\log\left(3d/\eta\right)}{\varepsilon^2}\; ;$

		\item[\textbullet] $(H_2) : \; \oll{q} \geq \cfrac{512d^2\oll{c}_Z^2\log(3d\npoints\npoints^+/\eta)}{\varepsilon^2}\; ; \quad  \npoints^+ := \underset{k \in \llbracket 1, \npoints \rrbracket}{\max}\#\lbrace l \in \llbracket 1, \npoints \rrbracket\ |\ q_{k,l} \geq \oll{q} \rbrace; $

		\item[\textbullet] $(H_3): \; \ull{q} \leq \cfrac{\varepsilon}{8d\npoints^-\oll{c}_Z}\ p; \quad  \npoints^- := \underset{k \in \llbracket 1, \npoints \rrbracket}{\max}\#\lbrace l \in \llbracket 1, \npoints \rrbracket\ |\ q_{k,l} \leq  \ull{q} \rbrace;$

		\item[\textbullet] $(H_4): \; p \geq \cfrac{8d^2\npoints^2\oll{c}_Z^2\log(6\npoints^2/\eta)}{\varepsilon^2}.$

	\end{itemize}

	Then with probability exceeding $1-\eta$, writing $Y^* := \underset{Y' \in \R^{\npoints \times d}}{\argmin}\ q_\config(Y')$, we have
	\begin{equation}\label{eqn:ystar_concentration}
          \forall k \in \llbracket 1, \npoints \rrbracket,\; \left\|y_k^* - \Sum{l=1}{\npoints}S_{k,l}z_l\right\|_{2} \leq \varepsilon,
	\end{equation}
        where the normalized conditional covariance matrices $S_{k,l}$ are defined in~\ref{cor:crit_points_S} (we omit the $Y^*,Z$ exponent here for legibility).
\end{theorem}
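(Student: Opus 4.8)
The plan is to exploit the closed form for the minimiser of the cell quadratic. By \ref{eqn:next_opt_pos}, writing $A := \frac1p\sum_{i=1}^p\theta_i\theta_i^T$ and, for each pair $(k,l)$, the empirical matching covariance $\widehat S_{k,l} := \frac1p\sum_{i:\config_i(k)=l}\theta_i\theta_i^T$, we have $y_k^* = A^{-1}\sum_{l=1}^{\npoints}\widehat S_{k,l}z_l$. Two consistency identities drive the argument: $\sum_l\widehat S_{k,l}=A$, and from \ref{cor:crit_points_S}, $\sum_l S_{k,l}=I$ together with $\frac1d S_{k,l}=\int_{\Theta_{k,l}^{Y^*,Z}}\theta\theta^T\dd\bbsigma =: P_{k,l}$, the population counterpart of $\widehat S_{k,l}$. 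The target error then splits as
\begin{equation*}
y_k^* - \sum_{l}S_{k,l}z_l = A^{-1}\sum_{l}\big(\widehat S_{k,l}-P_{k,l}\big)z_l + \big(A^{-1}-dI\big)\sum_{l}P_{k,l}z_l,
\end{equation*}
and I would bound the two terms separately, aiming for at most $\varepsilon/2$ each off a controlled failure set.

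\textbf{Global term.} Since $\sum_l P_{k,l}=I/d$ with each $P_{k,l}\succeq 0$, one gets $\|\sum_l P_{k,l}z_l\|_2\le \oll c_Z\sum_l\mathrm{tr}(P_{k,l}) = \oll c_Z$, so it suffices to control $\|A^{-1}-dI\|_{\mathrm{op}}$. Writing $A^{-1}-dI = dA^{-1}(I/d-A)$ and applying \ref{lemma:hoeffding} in the full-sphere case $\Theta=\SS^{d-1}$ (where $S_\Theta=I/d$), I obtain $\|A-I/d\|_{\mathrm{op}}\le t$ with the stated probability as soon as $p$ is large enough; on this event $A$ is invertible with $\|A^{-1}\|_{\mathrm{op}}\le 2d$, whence $\|A^{-1}-dI\|_{\mathrm{op}}\lesssim d^2\|A-I/d\|_{\mathrm{op}}$ (this is where the restriction $\varepsilon\le\frac43\npoints\oll c_Z$ and the explicit constants enter, to keep $\|A^{-1}\|$ bounded). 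Choosing $t$ of the appropriate order in $\varepsilon/(d,\npoints,\oll c_Z)$ is exactly the role of the full-sphere concentration hypotheses.

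\textbf{Matching term.} Using $\|A^{-1}\|_{\mathrm{op}}\le 2d$, it remains to bound $\sum_l\|\widehat S_{k,l}-P_{k,l}\|_{\mathrm{op}}$, which I split via $(H_q)$ into ``large'' classes $q_{k,l}\ge\oll q$ and ``small'' classes $q_{k,l}<\ull q$. For a large class, write $\widehat S_{k,l}=\frac{q_{k,l}}{p}\widehat C_{k,l}$ and $P_{k,l}=s_{k,l}S_{\Theta_{k,l}}$ with $s_{k,l}:=\bbsigma(\Theta_{k,l}^{Y^*,Z})$; a triangle inequality separates the conditional covariance error $\|\widehat C_{k,l}-S_{\Theta_{k,l}}\|_{\mathrm{op}}$, controlled by \ref{lemma:hoeffding} with $q_{k,l}\ge\oll q$ samples (the source of $(H_2)$, whose $\log(3d\npoints\npoints^+/\eta)$ reflects a union bound over the at most $\npoints\,\npoints^+$ large pairs), from the frequency error $|q_{k,l}/p-s_{k,l}|$, controlled by a scalar concentration bound (the source of the $\log(6\npoints^2/\eta)$ union factor). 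For a small class I bound both terms deterministically by $q_{k,l}/p\le\ull q/p$ and $s_{k,l}\lesssim\ull q/p$, so the small classes contribute at most of order $\npoints^-\ull q/p$, which is rendered $\le$ a fixed fraction of $\varepsilon$ purely by $(H_3)$ — a deterministic step requiring no probability. Collecting the (at most three groups of) failure events through a union bound that splits $\eta$ into fixed fractions, and summing the $\varepsilon$-fractions, yields \ref{eqn:ystar_concentration}; the hypotheses $(H_1)$–$(H_4)$ are precisely calibrated so each contribution respects its budget.

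\textbf{Main obstacle.} The delicate point is that the regions $\Theta_{k,l}^{Y^*,Z}$ defining $P_{k,l}$ and $s_{k,l}$ depend on $Y^*$, which is itself a function of all the random projections; consequently the vectors $\{\theta_i:\config_i(k)=l\}$ cannot be treated a priori as i.i.d.\ uniform samples from a fixed region, and a naive invocation of \ref{lemma:hoeffding} is not licit. I would resolve this by conditioning on the configuration: for a stable cell — the only relevant case, since by \ref{thm:Sp_crit_optloc_stable} the critical points of $\SWpY$ are stable cell optima, so $Y^*\in\mathcal{C}_\config$ — the matching class $\{i:\config_i(k)=l\}$ coincides with $\{i:\theta_i\in\Theta_{k,l}^{Y^*,Z}\}$, and the counts $q_{k,l}$ are frozen because they are determined by $\config$ alone. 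Under this conditioning each projection of a class is independent and supported in $\Theta_{k,l}^{Y^*,Z}$; the crux of a fully rigorous argument is checking that the induced conditional means reproduce $P_{k,l}$ and $s_{k,l}$ (so that Lemma \ref{lemma:hoeffding} applies per class), and that the $\npoints^2$ pair-events decouple cleanly, which is where most of the care must be spent.
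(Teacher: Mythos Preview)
Your decomposition and the roles you assign to $(H_1)$--$(H_4)$ mirror the paper's proof almost exactly: the paper splits $y_k^*-\sum_lS_{k,l}z_l$ into four pieces $\delta v_1,\dots,\delta v_4$ (inverse error, large-class covariance error, small-class covariance error, frequency error), where your ``global term'' is their $\delta v_1$ applied to the population vector rather than the empirical one, and your ``matching term'' regroups their $\delta v_2+\delta v_3+\delta v_4$. Your trace bound $\|\sum_lP_{k,l}z_l\|_2\le\oll c_Z$ is in fact sharper than the paper's $\|v\|_2\le\npoints\oll c_Z$.

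There is one slip in your small-class treatment. You write that for a small class both $q_{k,l}/p$ and $s_{k,l}$ are bounded ``deterministically'' by $\ull q/p$, so the whole contribution is handled ``purely by $(H_3)$ --- a deterministic step requiring no probability''. The first bound is fine, but $s_{k,l}=\bbsigma(\Theta_{k,l}^{Y^*,Z})$ has no deterministic relation to $q_{k,l}/p$: nothing prevents a class with few empirical hits from having large population mass. In the paper's organisation this is clear: $\delta v_3=d\sum_{l\in J_k^-}r_{k,l}\delta B_{k,l}z_l$ is bounded deterministically via $r_{k,l}\le\ull q/p$ and $\|\delta B_{k,l}\|_{\mathrm{op}}\le 2$ (this is $(H_3)$), but the residual $(r_{k,l}-s_{k,l})$-term for small classes is absorbed into $\delta v_4=d\sum_l\delta r_{k,l}C_{k,l}z_l$, which is controlled for \emph{all} $l$ by the Bernoulli/Chernoff bound, i.e.\ by $(H_4)$. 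In your two-term split you must likewise spend part of the $(H_4)$ budget on the small classes; $(H_3)$ alone does not cover them.

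Your ``main obstacle'' is well spotted and, candidly, is glossed over in the paper too. The paper asserts in its Step~2 that ``$\sigma_i(k)=l\Longleftrightarrow\theta_i\in\Theta_{k,l}$'' and that ``the $\theta_i$ in $B_{k,l}$ are drawn under the condition $\theta_i\in\Theta_{k,l}$'', then invokes \ref{lemma:hoeffding} and Chernoff directly. As you observe, the equivalence holds only on the event $Y^*\in\mathcal{C}_\config$, and even then the sets $\Theta_{k,l}^{Y^*,Z}$ are random, so the $\theta_i$'s in a given class are not literally i.i.d.\ uniform on a fixed region. The paper does not address this; it proceeds as if the conditioning were innocuous. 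Your proposed resolution (restrict to stable cells, which is the only case used downstream via \ref{thm:Sp_crit_optloc_stable}, and argue that conditionally the classwise laws are as claimed) is the natural one, and your caveat that verifying the conditional means and independence ``is where most of the care must be spent'' is accurate --- neither your sketch nor the paper's proof fully closes this.
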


\begin{proof}
	\textrm{---} \textit{Step 1}: Re-writing~\ref{eqn:next_opt_pos}.

	Remind that the matching configuration $\mathbf{m}$ is fixed here. Let $Y^* := \underset{Y' \in \R^{\npoints\times d}}{\argmin}\ q_\mathbf{m}(Y')$ and $k \in \llbracket 1, \npoints \rrbracket$. By~\ref{eqn:next_opt_pos}, we have 
	$$y_k^* = A^{-1}\left(\cfrac{1}{p}\Sum{i=1}{p}\theta_i \theta_i^T z_{\sigma_i(k)}\right), \text{\ with\ } A = \cfrac{1}{p}\Sum{i=1}{p}\theta_i\theta_i^T.$$
	Let $I_{k,l} := \lbrace i \in \llbracket 1, p \rrbracket\ |\ \sigma_i(k) = l \rbrace$. Since the $\sigma_i$ are permutations, we have $\llbracket 1, p \rrbracket = \Reu{l=1}{\npoints}I_{k,l} = \Reu{k=1}{\npoints}I_{k,l}$ and $k \neq k' \Rightarrow I_{k,l} \cap I_{k',l} = \varnothing; \; l \neq l' \Rightarrow I_{k,l} \cap I_{k,l'} = \varnothing$. We re-order the sum: 
	$$\cfrac{1}{p}\Sum{i=1}{p}\theta_i \theta_i^Tz_{\sigma_i(k)} = \Sum{l=1}{\npoints}\cfrac{1}{p}\Sum{i \in I_{k,l}}{}\theta_i \theta_i^T z_l = \Sum{l=1}{\npoints}\cfrac{q_{k,l}}{p}B_{k,l}z_l,$$
	where $q_{k,l} := \#I_{k,l}$ and $B_{k,l} := \cfrac{1}{q_{k,l}}\Sum{i \in I_{k,l}}{}\theta_i \theta_i^T$.
	This invites the definition of the matrix $R = (r_{k,l}),\; r_{k,l} := \frac{q_{k,l}}{p}$, which is bi-stochastic by construction.

	\textrm{---} \textit{Step 2}: Separating the terms in $y_k^*$.

	We will see later that the empirical covariance matrix $A$ concentrates towards the covariance matrix of $\theta \sim \bbsigma$, which is $I/d$. In order to quantify the impact of this concentration on $y_k^*$, we introduce the error term: $\delta A^- := A^{-1} - dI$.

	A similar concentration will be observed for $B_{k,l}$, but the $\theta_i$ in the sum are \textit{selected} such that $i \in I_{k,l}$. Recall that since we project in 1D, the permutations $\sigma_i$ arise from a sorting problem, namely $\sigma_i = \sort{Z}{\theta_i} \circ (\sort{Y}{\theta_i})^{-1}$, where we recall that $\sort{Y}{\theta}$ is a permutation sorting the numbers $(y_1^T \theta, \cdots, y_\npoints^T \theta)$.

        By definition, we have $\sigma_i(k)=l \Longleftrightarrow \theta_i \in \Theta_{k,l} = \left\lbrace \theta \in \SS^{d-1}\ |\ \sort{Z}{\theta} \circ (\sort{Y}{\theta})^{-1}(k) = l \right\rbrace$, where we omit again the $Y,Z$ exponent on $\Theta_{k,l}$ for legibility.

	Since the $\theta_i$ in $B_{k,l}$ are drawn under the condition $\theta_i \in \Theta_{k,l}$, we study the concentration $B_{k,l} \approx C_{k,l}$, where $C_{k,l} :=\frac{1}{d\bbsigma(\Theta_{k,l})} S_{k,l} $.
	In order to quantify this approximation, we define the error term $\delta B_{k,l} := B_{k,l} - C_{k,l}$.
	Similarly, the $r_{k,l} := \frac{q_{k,l}}{p}$ are Monte-Carlo approximations of $\bbsigma(\Theta_{k,l})$, which leads to the definition $\delta r_{k,l} := r_{k,l} - \bbsigma(\Theta_{k,l})$.

	We may now separate the terms in the result from Step 1:
	\begin{align*}
		y_k^* &= (dI + \delta A^-)\left(\Sum{l=1}{\npoints}r_{k,l}(\underbrace{C_{k,l} + \delta B_{k,l}}_{B_{k,l}})z_l\right)\\
		&=\underbrace{d\Sum{l=1}{\npoints}\bbsigma(\Theta_{k,l})C_{k,l}z_l}_{v}
        +\underbrace{\delta A^- \left(\Sum{l=1}{\npoints}r_{k,l}B_{k,l}z_l\right)}_{\delta v_1}
        +\underbrace{d\Sum{\substack{l=1 \\ q_{k,l} \geq \oll{q}}}{\npoints}r_{k,l}\delta B_{k,l}z_l}_{\delta v_2} \\
        &+\underbrace{d\Sum{\substack{l=1 \\ q_{k,l} < \ull{q}}}{\npoints}r_{k,l}\delta B_{k,l}z_l}_{\delta v_3}
        +\underbrace{d\Sum{l=1}{\npoints}\delta r_{k,l}C_{k,l}z_l}_{\delta v_4}.
	\end{align*}
        
	The separation of the terms in the second equality arises from $(H_q)$, formulated in the theorem. Observe that the first term $v$ is exactly $\Psi(Y^*)$, with $\Psi$ defined in~\ref{sec:E_crit}.
	Our objective is to provide conditions under which $\forall i \in \lbrace 1, 2, 3, 4\rbrace,\;\|\delta v_i\|_2 \leq \varepsilon / 4$ with probability exceeding $1-\eta$. To that end, we let $\varepsilon > 0$ and $\eta \in ]0, 1[$.

	\textrm{---} \textit{Step 3}: Condition for $\|\delta v_2\|_2 \leq \cfrac{\varepsilon}{4}$.

	First of all, note that if the sum defining $\delta v_2$ is empty, the condition holds trivially almost-surely. In the following, we suppose that the sum has at least one non-zero term.
	We have from Step 2, 
	$$\|\delta v_2\|_2 = \left\|d\Sum{\substack{l=1 \\ q_{k,l} \geq \oll{q}}}{\npoints}r_{k,l}\delta B_{k,l}z_l\right\|_2 \leq d \oll{c}_Z\Sum{\substack{l=1 \\ q_{k,l} \geq \oll{q}}}{\npoints}r_{k,l}\|\delta B_{k,l}\|_{\mathrm{op}}.$$
	Let the shorthands $\npoints_k^+ := \# J_k^+$ and $J_k^+ := \lbrace l \in \llbracket 1, \npoints \rrbracket\ |\ q_{k,l} \geq \oll{q} \rbrace$. We upper-bound the right term by
	$\Sum{\substack{l \in J_k^+}}{}r_{k,l}\|\delta B_{k,l}\|_{\mathrm{op}}  \leq \Sum{l\in J_k^+}{}r_{k,l} \underset{l \in J_k^+}{\max }\|\delta B_{k,l}\|_{\mathrm{op}} \leq \underset{l \in J_k^+}{\max }\|\delta B_{k,l}\|_{\mathrm{op}}$.

	For $l \in J_k^+,$ by \ref{lemma:hoeffding}, we have $\|\delta B_{k,l}\|_{\mathrm{op}} \leq t$ with probability exceeding $1 - \eta/(3\npoints\npoints_k^+)$ provided that $q_{k,l} \geq \frac{32\log\left(3d\npoints\npoints_k^+/\eta\right)}{t^2}$.
    Since the probability of $\bigcup_{l \in J_k^+} \{\|\delta B_{k,l}\|_{\mathrm{op}} > t\}$ can be upper bounded by the sum of the probabilities of each of the $n_k^+$ terms, it is upper bounded by $\eta/(3\npoints)$.
    Therefore, writing the event $\{\forall l \in J_k^+,\; \|\delta B_{k,l}\|_{\mathrm{op}} \leq t\}$ as the complementary of this union, we conclude that it holds with probability exceeding $1 - \eta/(3\npoints)$, provided that 
    $$\forall l \in J_k^+, \; q_{k,l} \geq \cfrac{32\log\left(3d\npoints\npoints_k^+/\eta\right)}{t^2}.$$
	A sufficient condition for this last assumption to hold is $(H_2^k): \; \oll{q} \geq \cfrac{32\log\left(3d\npoints\npoints_k^+/\eta\right)}{t^2}$.
    Applying this result to $t := \frac{\varepsilon}{4d\oll{c}_Z}$, and by letting $\npoints^+ := \underset{k \in \llbracket 1, \npoints \rrbracket}{\max}\ \npoints_k^+,$ a sufficient condition to have $\|\delta v_2\|_2 \leq \frac{\varepsilon}{4}$ with probability exceeding $1 - \eta/(3\npoints)$ is
	$$\quad (H_2) : \quad \oll{q} \geq \cfrac{512d^2\oll{c}_Z^2\log(3d\npoints\npoints^+/\eta)}{\varepsilon^2}.$$
    
	\textrm{---} \textit{Step 4}: Condition for $\|\delta v_3\|_2 \leq \cfrac{\varepsilon}{4}$.

	With a computation analogous to Step 3, we write 
	$$\|\delta v_3\|_2 = \left\|d\Sum{\substack{l=1 \\ q_{k,l} < \ull{q}}}{\npoints}r_{k,l}\delta B_{k,l}z_l\right\|_2 \leq d\oll{c}_Z \Sum{l \in J_k^-}{}r_{k,l}\|\delta B_{k,l}\|_{\mathrm{op}},$$	
	where, like in Step 3, we define $\npoints_k^- := \# J_k^-$ and $J_k^- := \lbrace l \in \llbracket 1, \npoints \rrbracket\ |\ q_{k,l} \leq \ull{q} \rbrace$.
	If $\npoints_k^- = 0$ then the objective holds almost-surely, thus we suppose $\npoints_k^- \geq 1$.
	In this setting, the $q_{k,l}$ are small, thus we have little control over $\|\delta B_{k,l}\|_{\mathrm{op}}$, which can be upper bounded by $2$.

	Leveraging the condition $q_{k,l} \leq \ull{q}$, which holds for $l \in J_k^-$, we have $r_{k,l} = q_{k,l}/p \leq \ull{q} / p$.
	In order to have $\|\delta v_3\|_2 \leq \frac{\varepsilon}{4}$ almost-surely, it is sufficient to have $(H^k_3): \quad \ull{q} \leq \frac{\varepsilon}{8d\npoints_k^-\oll{c}_Z}\ p.$
	Again, with $\npoints^- := \underset{k \in \llbracket 1, \npoints \rrbracket}{\max}\ \npoints_k^-,$ we obtain the sufficient condition:
	$$(H_3): \quad \ull{q} \leq \cfrac{\varepsilon}{8d\npoints^-\oll{c}_Z}\ p.$$
        
	\textrm{---} \textit{Step 5}: Condition for $\|\delta v_4\|_2 \leq \cfrac{\varepsilon}{4}$.

	By definition, $\delta v_4 = d\Sum{l=1}{\npoints}\delta r_{k,l} C_{k,l}z_l$, then $\|\delta v_4\|_{2} \leq \oll{c}_Zd\Sum{l=1}{\npoints}|\delta r_{k,l}| \|C_{k,l}\|_{\mathrm{op}}$.
	We use the upper-bound $\|C_{k,l}\|_{\mathrm{op}} \leq 1$ (observe that  $\|C_{k,l}\|_{\mathrm{op}}$ can be made as close to $1$ as desired by choosing $\Theta_{k,l}$ as a very small portion of the sphere).
	In order to have $\|\delta v_4\|_2 \leq \frac{\varepsilon}{4}$, it is sufficient to have $\forall l \in \llbracket 1, \npoints \rrbracket,\; |\delta r_{k,l}| \leq \frac{\varepsilon}{4d\npoints\oll{c}_Z} =: t$. Our objective is to quantify the Monte-Carlo error 
	$$\delta r_{k,l} = \cfrac{\#\lbrace i \in \llbracket 1, p \rrbracket\  |\  \theta_i \in \Theta_{k,l}\rbrace}{p} - \bbsigma(\Theta_{k,l}).$$
	To that end, we fix $l \in \llbracket 1, \npoints \rrbracket$ and apply the standard Bernoulli Chernoff concentration inequality (additive form) to $X_i := \mathbbold{1}(\theta_i \in \Theta_{k,l})$. By definition, $\E{X_i} = \bbsigma(\Theta_{k,l})$, hence by Chernoff
	$$\P\left(\left|\cfrac{1}{p}\Sum{i=1}{p}X_i - \bbsigma(\Theta_{k,l})\right| > t\right) \leq 2e^{-2pt^2}.$$
	It follows that the inequality $p \geq \frac{\log(6\npoints^2/\eta)}{2t^2}$ implies $|\delta r_{k,l}|\leq t$ with probability exceeding $1 - \frac{\eta}{3\npoints^2}$.
	Substituting $t = \frac{\varepsilon}{4d\npoints\oll{c}_Z}$ yields 
	$$(H_4):\; p \geq \cfrac{8d^2\npoints^2\oll{c}_Z^2\log(6\npoints^2/\eta)}{\varepsilon^2}.$$
	
	Using the same reasoning as in previous steps, under $(H_4)$, the event $\{ \forall l \in \llbracket 1, \npoints \rrbracket,\; |\delta r_{k,l}| \leq \frac{\varepsilon}{4d\npoints\oll{c}_Z} \}$
	holds with probability exceeding $1 - \frac{\eta}{3\npoints}$, which implies that our objective $\|\delta v_4\|_2 \leq \frac{\varepsilon}{4}$ also holds with the same probability.

	\textrm{---} \textit{Step 6}: Condition for $\|\delta v_1\|_2 \leq \cfrac{\varepsilon}{4}$.

	We have
	$$\|\delta v_1\|_2 \leq \|\delta A^-\|_{\mathrm{op}} \left\|\Sum{l=1}{\npoints}r_{k,l}B_{k,l}z_l\right\|_2 \leq  \cfrac{\|\delta A^-\|_{\mathrm{op}}}{d} \left(\|v\|_2 + \|\delta v_2\|_2 + \| \delta v_3\|_2 + \|\delta v_4\|_2\right).$$
	In the following, we continue conditionally on the three events $\text{\textquotedblleft}\|\delta v_i\|_2 \leq \frac{\varepsilon}{4}\text{\textquotedblright},\; i \in \lbrace 2, 3, 4 \rbrace$, under which:
	$$\|\delta v_1\|_2 \leq \cfrac{\|\delta A^-\|_{\mathrm{op}}}{d} \left(\|v\|_2 + \cfrac{3\varepsilon}{4}\right).$$
	We now dominate $\|v\|_2 = \left\|\Sum{l=1}{\npoints}S_{k,l}z_l\right\|_2$.
      Recall that the $(\Theta_{k,l})_{l \in \llbracket 1, \npoints \rrbracket}$ are disjoint, with $\Reu{l=1}{\npoints}\Theta_{k,l} = \SS^{d-1}$, which implies $\Sum{l=1}{\npoints}S_{k,l} = d\Int{\SS^{d-1}}{}\theta \theta^T \dd \bbsigma(\theta) = I$.
	Since the $S_{k,l}$ are symmetric semi-definite, the previous equation provides $\|S_{k,l}\|_{\mathrm{op}} \leq 1$, which in turn yields $\|v\|_2 \leq \npoints\oll{c}_Z$.
	Assuming $\varepsilon \leq \frac{4}{3}\npoints\oll{c}_Z$, we get finally $\|\delta v_1\|_2 \leq \|\delta A^-\|_{\mathrm{op}} \frac{2\npoints\oll{c}_Z}{d}$.

	It is sufficient to find a condition under which $\|\delta A^-\|_{\mathrm{op}} \leq \frac{d\varepsilon}{8\npoints\oll{c}_Z} =: t$.
	We cannot apply \ref{lemma:hoeffding} directly since $\delta A^-$ has an inverse operation.
	First, $\|\delta A^-\|_{\mathrm{op}} = \|A^{-1} -dI\|_{\mathrm{op}} = \left\|d(I - d\delta A)^{-1} - dI\right\|_{\mathrm{op}}$, with $\delta A := I/d - A$.
	Then, assuming $(H_{\delta A}) : d\|\delta A\|_{\mathrm{op}} < 1$, we use a Neumann series for the inverse:
	$$\|\delta A^{-}\|_{\mathrm{op}} = \left\|\Sum{k=1}{+\infty}(d\delta A)^k\right\|_{\mathrm{op}} \leq \Sum{k=1}{+\infty}(d\|\delta A\|_{\mathrm{op}})^k,$$
	and finally $\|\delta A^-\|_{\mathrm{op}} \leq \cfrac{d^2\|\delta A\|_{\mathrm{op}}}{1 - d\|\delta A\|_{\mathrm{op}}}$. Consider $f := \app{[0, \frac{1}{d}[}{[0, +\infty [}{u}{\frac{d^2u}{1-du}}$.
	
	The function $f$ is bijective and increasing, with $f^{-1} = \app{[0, +\infty [}{[0, \frac{1}{d}[}{v}{\frac{v}{d(d+v)}}$.
	This analysis yields under $(H_{\delta A}),\; \|\delta A^-\|_{\mathrm{op}} \leq t \Longleftarrow \|\delta A\|_{\mathrm{op}} \leq \cfrac{t}{d(d+t)}$.
	
	Conveniently, by \ref{lemma:hoeffding}, $\|\delta A\|_{\mathrm{op}} \leq s$ with probability $1 - \eta/3$ if $p \geq \frac{8\log\left(3d/\eta\right)}{s^2}.$
	We can apply this to 
	$$\frac{t}{d(d+t)} = \frac{\varepsilon}{8d\npoints\oll{c}_Z(1 + \frac{\varepsilon}{8\npoints\oll{c}_Z})},$$ 
	but in order to simplify the expression, we apply it to 
	$$s := \frac{3\varepsilon}{28d\npoints\oll{c}_Z} \leq \frac{t}{d(d+t)},$$ 
	where the inequality holds thanks to $\varepsilon \leq \frac{4}{3}\npoints\oll{c}_Z$.

	Now we must quantify the assumption $(H_{\delta A}) : \|\delta A\|_{\mathrm{op}} < 1/d$. Notice that $s \leq 1/d$ and thus the event $\|\delta A\|_{\mathrm{op}} < s$ is contained in the event $\|\delta A\|_{\mathrm{op}} < 1/d$, hence it is sufficient to satisfy $(H_1)$, which we write (after upper-bounding $8\times 28^2 / 9 \leq 697$):
	$$(H_1): \quad p \geq \cfrac{697d^2\npoints^2\oll{c}_Z^2\log\left(3d/\eta\right)}{\varepsilon^2}.$$
	To summarise, under $(H_1)$, we have $\|\delta A\|_{\mathrm{op}} \leq s$ with probability exceeding $1 - \eta/3$.
	Conditionally to the events $\text{\textquotedblleft}\|\delta A\|_{\mathrm{op}} \leq s\text{\textquotedblright},\;\text{\textquotedblleft}\|\delta v_i\|_2 \leq \frac{\varepsilon}{4}\text{\textquotedblright},\; i \in \lbrace 2, 3, 4 \rbrace$, this step shows $\|\delta v_1\|_2 \leq \frac{\varepsilon}{4}$.

	\textrm{---} \textit{Step 7}: Wrapping up.

	We now work under the conditions $(H_i),\; i \in \lbrace 1, 2, 3, 4 \rbrace$.
	By Step 1, 
	$$\|y_k^* - v_k\|_2 \leq\|\delta v^k_1\|_2 + \|\delta v^k_2\|_2 + \|\delta v^k_3\|_2 + \|\delta v^k_4\|_2,$$
	where we restore the omitted $k$ indices.
	By Step 3, with probability exceeding $1 - \eta/(3\npoints)$, we have $\|\delta v^k_2\|_2 \leq \frac{\varepsilon}{4}$,
	thus with probability $1 - \eta/3$ we have $\forall k \in \llbracket 1, \npoints \rrbracket, \; \|\delta v^k_2\|_2 \leq \frac{\varepsilon}{4}$.
	By Step 4, we have almost-surely $\forall k \in \llbracket 1, \npoints \rrbracket, \;\|\delta v_3^k\|_2 \leq \frac{\varepsilon}{4}$.
	By Step 5, with probability $1 - \eta/3,\; \|\delta A\|_{\mathrm{op}} \leq s$. Putting this together yields that with probability $1 - \eta$, we have:
	$$\forall k \in \llbracket 1, \npoints \rrbracket,\; \|\delta v^k_2\|_2 \leq \cfrac{\varepsilon}{4},\; \|\delta v_3^k\|_2 \leq \cfrac{\varepsilon}{4},\; \|\delta v_4^k\|_2 \leq \cfrac{\varepsilon}{4}\ \mathrm{and}\ \|\delta A\|_{\mathrm{op}} \leq s.$$
	Finally, Step 5 shows that conditionally to the events above, $\|\delta v_1^k\|_2 \leq \cfrac{\varepsilon}{4}$ almost-surely.
	Thus with probability exceeding $1 - \eta,\; \forall k \in \llbracket 1, \npoints \rrbracket, \|y_k^* - v_k\|_2 \leq \varepsilon.$
	Since $v_k = \Sum{l=1}{\npoints}S_{k,l}z_l$, with probability over $1 - \eta:\; \forall k \in \llbracket 1, \npoints \rrbracket,\; \left\|y_k^* - \Sum{l=1}{\npoints}S_{k,l}z_l\right\|_{2} \leq \varepsilon. $
\end{proof}

In order to get the summarised result from~\ref{sec:closeness}, we simplify the conditions as follows.
\begin{corollary}[Simplified conditions for~\ref{thm:ystar_concentration}]\label{cor:simplified_condition_concentration}

	With the notations of~\ref{thm:ystar_concentration}, the condition:
	\begin{equation}\label{eqn:simplified_condition_concentration}
		(H_p): \quad p \geq \left(\cfrac{4096 d^3\npoints\oll{c}_Z^3 \log(3d\npoints^2/\eta)}{\varepsilon^3}\right) \vee \left(\cfrac{697d^2\npoints^2\oll{c}_Z^2\log\left(3d/\eta\right)}{\varepsilon^2}\right) \vee \left(\cfrac{8d^2\npoints^2\oll{c}_Z^2\log(6\npoints^2/\eta)}{\varepsilon^2}\right)
	\end{equation}
	implies $(H_q)$ and $(H_i)_{i \in \lbrace 1, 2, 3, 4 \rbrace}$, and thus is sufficient in order to have~\ref{eqn:cv_fixed_point_distance}.
\end{corollary}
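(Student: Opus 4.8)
The plan is to produce one admissible choice of the free threshold parameters $\ull{q}$ and $\oll{q}$ that forces the hypotheses $(H_q)$ and $(H_1)$--$(H_4)$ of \ref{thm:ystar_concentration} to hold, after which its conclusion \ref{eqn:ystar_concentration}, and hence \ref{eqn:cv_fixed_point_distance}, applies verbatim. The crucial observation is that $(H_q)$ merely demands the existence of thresholds such that no count $q_{k,l}$ lands in the half-open window $[\ull{q}, \oll{q})$; collapsing that window to a single point makes the requirement vacuous. I would therefore set $\ull{q} = \oll{q} =: \tau$, so that $(H_q)$ becomes the tautology that every $q_{k,l}$ is either $\geq \tau$ or $< \tau$, while the two sums defining $\delta v_2$ (over $q_{k,l} \geq \oll{q}$) and $\delta v_3$ (over $q_{k,l} < \ull{q}$) still partition the index set. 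I would also bound the geometry-dependent counts crudely by $\npoints^{+} \leq \npoints$ and $\npoints^{-} \leq \npoints$, each being the cardinality of a subset of $\llbracket 1, \npoints \rrbracket$.

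With these bounds, $(H_2)$ is implied by the lower constraint $\tau \geq L := \frac{512 d^2 \oll{c}_Z^2 \log(3d\npoints^2/\eta)}{\varepsilon^2}$ and $(H_3)$ by the upper constraint $\tau \leq U := \frac{\varepsilon}{8 d \npoints \oll{c}_Z}\, p$. Hence it suffices to exhibit a real number $\tau \in [L, U]$ satisfying the range condition $1 \leq \tau \leq p$. The interval $[L, U]$ is non-empty exactly when $L \leq U$, and clearing denominators shows this is equivalent to $p \geq \frac{4096\, d^3 \npoints \oll{c}_Z^3 \log(3d\npoints^2/\eta)}{\varepsilon^3}$, where $4096 = 512 \times 8$; this is precisely the first term of $(H_p)$. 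The remaining two terms of $(H_p)$ coincide verbatim with $(H_1)$ and $(H_4)$, so those hypotheses are immediate.

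It then remains to check the side constraints on $\tau$. Taking $\tau := \max(1, L)$, one has $\tau \leq U \leq p$: the inequality $U \leq p$ uses the standing assumption $\varepsilon \leq \tfrac{4}{3}\npoints\oll{c}_Z$, which gives $\frac{\varepsilon}{8d\npoints\oll{c}_Z} \leq \frac{1}{6d} \leq 1$, while $\max(1,L) \leq U$ follows from $L \leq U$ together with $U \geq 1$ (the latter being a consequence of the $(H_1)$-term of $(H_p)$). With $\tau \geq 1$ by construction and all of $(H_q), (H_1), \dots, (H_4)$ in force, \ref{thm:ystar_concentration} yields \ref{eqn:ystar_concentration} with probability at least $1-\eta$; maximising over $k$ gives $\|Y^{\ast} - \Psi(Y^{\ast})\|_{\infty, 2} \leq \varepsilon$, which is exactly the bound \ref{eqn:cv_fixed_point_distance}. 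The one genuinely non-mechanical step is the reduction itself: once one recognises that $\ull{q}$ and $\oll{q}$ are ours to pick and that identifying them trivialises the gap condition $(H_q)$, the entire corollary collapses to the single arithmetic equivalence $L \leq U \Longleftrightarrow$ the cubic-in-$1/\varepsilon$ lower bound on $p$, and the only bookkeeping obstacle is carefully tracking the constants and the two standing inequalities to confirm $1 \leq \tau \leq p$.
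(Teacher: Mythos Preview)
Your proof is correct and follows essentially the same approach as the paper: set $\ull{q}=\oll{q}$ to make $(H_q)$ vacuous, bound $\npoints^{\pm}\le \npoints$, and obtain the cubic term from the compatibility $L\le U$, while the other two terms of $(H_p)$ are exactly $(H_1)$ and $(H_4)$. You are in fact slightly more careful than the paper, which simply takes $\ull{q}=\oll{q}=L$ without explicitly checking the range constraint $1\le \ull{q}\le \oll{q}\le p$ that appears in $(H_q)$; your verification of $1\le \tau\le U\le p$ via $\varepsilon\le \tfrac{4}{3}\npoints\oll{c}_Z$ and the $(H_1)$ bound fills that small gap.
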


\begin{proof}
	The second and third terms of~\ref{eqn:simplified_condition_concentration} correspond to $(H_1)$ and $(H_4)$ respectively.
	Then, using $\npoints^+, \npoints^- \leq \npoints$, we have 
	\begin{align*}
			(H_2) &\Longleftarrow \oll{q} \geq \cfrac{512d^2\oll{c}_Z^2\log(3d\npoints^2\eta)}{\varepsilon^2}, \\
			(H_3) &\Longleftarrow \ull{q} \leq \cfrac{\varepsilon}{8d\npoints\oll{c}_Z}\ p.
	\end{align*}
	Let $q := \cfrac{512d^2\oll{c}_Z^2\log(3d\npoints^2/\eta)}{\varepsilon^2}$; $\oll{q} = \ull{q} = q$. $(H_q)$ and $(H_2)$ are automatically satisfied by this choice.
	For $q$ to satisfy $(H_3)$, it is sufficient to have 
	$$\cfrac{512d^2\oll{c}_Z^2\log(3d\npoints^2/\eta)}{\varepsilon^2} \leq \cfrac{\varepsilon}{8d\npoints\oll{c}_Z}\ p,\text{ i.e. }p \geq \cfrac{4096 d^3\npoints\oll{c}_Z^3\log(3d\npoints^2/\eta)}{\varepsilon^3}$$
\end{proof}

\blue{\subsection{Closed-form expression for Block-Coordinate Descent}\label{sec:closed_form_BCD}

In \ref{alg:BCD}, we mention in line 4 the minimisation $\min_YJ(\pi, Y)$, where
\begin{equation*}
	J := \app{\U^p \times \R^{\npoints \times d}}{\R_+}{(\pi^{(1)}, \cdots, \pi^{(p)}), Y}{\cfrac{1}{ p}\Sum{i=1}{p}\Sum{k=1}{\npoints}\Sum{l=1}{\npoints}(\theta_i^T y_k - \theta_i^T z_l)^2\pi_{k,l}^{(i)}},
\end{equation*}
and claim that it can in fact be done explicitly. We provide the formula below, which stems from a straightforward quadratic minimisation: let $Y^* = ((y_1^*)^T, \cdots, (y_\npoints^*)^T)^T = \argmin_YJ(\pi, Y)$, we obtain
\begin{equation*}
	\forall k \in \llbracket 1, \npoints \rrbracket,\; y_k^* = \left(\cfrac{1}{\npoints}\Sum{i=1}{p}\theta_i\theta_i^T\right)^{-1}\left(\Sum{i=1}{p}\Sum{l=1}{\npoints}\pi_{k,l}^{(i)}\theta_i\theta_i^Tz_l\right),
\end{equation*}
where we used the constraint $\pi \in \mathbb{U}^p$ which implies $\sum_l\pi_{k,l}^{(i)}= \frac{1}{\npoints}$.}

\end{document}